  \newtheorem{myPro}{Property}
  \newtheorem{myLem}{Lemma}
  \newtheorem{myDef}{Definition}
  \newtheorem{myTheo}{Theorem}
  \DeclareMathOperator*{\argmin}{argmin}
  \crefname{equation}{Eq.}{Eqs.}
  \crefname{figure}{Fig.}{Figs.}
\begin{document}
%
% paper title
% Titles are generally capitalized except for words such as a, an, and, as,
% at, but, by, for, in, nor, of, on, or, the, to and up, which are usually
% not capitalized unless they are the first or last word of the title.
% Linebreaks \\ can be used within to get better formatting as desired.
% Do not put math or special symbols in the title.
\title{Global Optimal Path-Based Clustering Algorithm}
%
%
% author names and IEEE memberships
% note positions of commas and nonbreaking spaces ( ~ ) LaTeX will not break
% a structure at a ~ so this keeps an author's name from being broken across
% two lines.
% use \thanks{} to gain access to the first footnote area
% a separate \thanks must be used for each paragraph as LaTeX2e's \thanks
% was not built to handle multiple paragraphs
%
%
%\IEEEcompsocitemizethanks is a special \thanks that produces the bulleted
% lists the Computer Society journals use for "first footnote" author
% affiliations. Use \IEEEcompsocthanksitem which works much like \item
% for each affiliation group. When not in compsoc mode,
% \IEEEcompsocitemizethanks becomes like \thanks and
% \IEEEcompsocthanksitem becomes a line break with idention. This
% facilitates dual compilation, although admittedly the differences in the
% desired content of \author between the different types of papers makes a
% one-size-fits-all approach a daunting prospect. For instance, compsoc 
% journal papers have the author affiliations above the "Manuscript
% received ..."  text while in non-compsoc journals this is reversed. Sigh.

\author{Qidong~Liu and Ruisheng~Zhang,~\IEEEmembership{Member,~IEEE}
       % <-this % stops a space
%\thanks{Corresponding author: Ruisheng Zhang %E-mail:zhangrs@lzu.edu.cn.}
\IEEEcompsocitemizethanks{\IEEEcompsocthanksitem Q. Liu is with School of Electrical and Electronic Engineering, Nanyang Technological University, Singapore, and with School of Information Engineering, Zhengzhou University, China E-mail: qidong.liu@ntu.edu.sg; ieqdliu@zzu.edu.cn.
\IEEEcompsocthanksitem R. Zhang is with School of Information Science and Engineering, Lanzhou University, China E-mail: zhangrs@lzu.edu.cn.}% <-this % stops an unwanted space
}

\IEEEtitleabstractindextext{%
\begin{abstract}
Combinatorial optimization problems for clustering are known to be NP-hard. Most optimization methods are not able to find the global optimum solution for all datasets. To solve this problem, we propose a global optimal path-based clustering (GOPC) algorithm in this paper. The GOPC algorithm is based on two facts: (1) medoids have the minimum degree in their clusters; (2) the minimax distance between two objects in one cluster is smaller than the minimax distance between objects in different clusters. Extensive experiments are conducted on synthetic and real-world datasets to evaluate the performance of the GOPC algorithm. The results on synthetic datasets show that the GOPC algorithm can recognize all kinds of clusters regardless of their shapes, sizes, or densities. Experimental results on real-world datasets demonstrate the effectiveness and efficiency of the GOPC algorithm. In addition, the GOPC algorithm needs only one parameter, i.e., the number of clusters, which can be estimated by the decision graph. The advantages mentioned above make GOPC a good candidate as a general clustering algorithm. Codes are available at https://github.com/Qidong-Liu/Clustering.
\end{abstract}

% Note that keywords are not normally used for peerreview papers.
\begin{IEEEkeywords}
Clustering, Global optimization, Minimax distance, Minimum spanning tree, Path-based.
\end{IEEEkeywords}}

% make the title area
\maketitle

% To allow for easy dual compilation without having to reenter the
% abstract/keywords data, the \IEEEtitleabstractindextext text will
% not be used in maketitle, but will appear (i.e., to be "transported")
% here as \IEEEdisplaynontitleabstractindextext when the compsoc 
% or transmag modes are not selected <OR> if conference mode is selected 
% - because all conference papers position the abstract like regular
% papers do.
\IEEEdisplaynontitleabstractindextext
% \IEEEdisplaynontitleabstractindextext has no effect when using
% compsoc or transmag under a non-conference mode.

% For peer review papers, you can put extra information on the cover
% page as needed:
% \ifCLASSOPTIONpeerreview
% \begin{center} \bfseries EDICS Category: 3-BBND \end{center}
% \fi
%
% For peerreview papers, this IEEEtran command inserts a page break and
% creates the second title. It will be ignored for other modes.
\IEEEpeerreviewmaketitle

\IEEEraisesectionheading{\section{Introduction}\label{sec:introduction}}
% Computer Society journal (but not conference!) papers do something unusual
% with the very first section heading (almost always called "Introduction").
% They place it ABOVE the main text! IEEEtran.cls does not automatically do
% this for you, but you can achieve this effect with the provided
% \IEEEraisesectionheading{} command. Note the need to keep any \label that
% is to refer to the section immediately after \section in the above as
% \IEEEraisesectionheading puts \section within a raised box.

% The very first letter is a 2 line initial drop letter followed
% by the rest of the first word in caps (small caps for compsoc).
% 
% form to use if the first word consists of a single letter:
% \IEEEPARstart{A}{demo} file is ....
% 
% form to use if you need the single drop letter followed by
% normal text (unknown if ever used by the IEEE):
% \IEEEPARstart{A}{}demo file is ....
% 
% Some journals put the first two words in caps:
% \IEEEPARstart{T}{his demo} file is ....
% 
% Here we have the typical use of a "T" for an initial drop letter
% and "HIS" in caps to complete the first word.
\IEEEPARstart{C}{lustering} algorithms classify elements into categories, or clusters, on the basis of their similarity or distance~\cite{frey2007clustering}. As an important topic in exploratory data analysis and pattern recognition, many clustering algorithms have been proposed, such as $k$-means~\cite{macqueen1967some}, spectral clustering~\cite{ng2002spectral}, density based spatial clustering of applications with noise (DBSCAN)~\cite{ester1996density}, non-negative matrix factorization-based methods~\cite{Ding2010Convex}, etc.

Well-separated clusters mean that objects in the same group are more similar to each other than to those in different groups~\cite{steinbach2004challenges,jain1988algorithms,Wang2016Automatic}. In clustering algorithms, measuring the dissimilarity between any pair of points is very important. The most commonly used dissimilarity method is Euclidean distance. However, in many real-world applications of pattern classification and data mining, we are often confronted with high-dimensional features of the investigated data, which adversely affects clustering performance due to the curse of dimensionality~\cite{Zhao2017Robust,Hou2017Discriminative}. It is widely acknowledged that many real-world datasets stringently obey low-rank rules, which means that they are distributed on a manifold of a dimensionality that is often much lower than that of ambient space~\cite{Tu2014A,Roweis2000Nonlinear,Yang2017Discrete}. In low dimensional manifold data space, dissimilarity between two objects is established not only by direct comparison, but can be induced by the mediating objects between them. Based on the minimax distance~\cite{zahn1971graph,Chehreghani2017Efficient}, we propose a method that captures the manifold structure of low dimensional data in this study. The minimax distance can be described as follows:

Let $P_{i,j}$ denote the set of all paths from vertex $i$ to vertex $j$. For each path $p\in P_{i,j}$, if the edge weight is calculated by a dissimilarity measure (e.g., Euclidean distance), then the effective distance $d_{i,j}^p$ is the maximum edge weight along the path, i.e., 
\begin{equation}
d_{i,j}^p=\max \limits_{1\leq h<|p|}{e_{p[h],p[h+1]}}
\end{equation}
where $p[h]$ denotes the $h$th vertex along path $p$ and $e_{x,y}$ denotes the edge weight between $x$ and $y$. The minimax distance $d_{i,j}$ is the minimum $d_{i,j}^p$ between all paths, i.e.,
\begin{equation} \label{mms}
d_{i,j}=\min \limits_{p\in P_{i,j}}{d_{i,j}^p}.
\end{equation}
On the other hand, if the edge weight is calculated using a similarity measure (e.g., cosine), then
\[
d_{i,j}^p=\min \limits_{1\leq h<|p|}{e_{p[h],p[h+1]}}
\]
and
\[
d_{i,j}=\max \limits_{p\in P_{i,j}}{d_{i,j}^p}.
\]
Without loss of generality, we assume the edge weight is calculated using a dissimilarity measure for the remainder of this paper.

%\begin{equation} \label{mms}
%    d_{i,j}=\min \limits_{p\in P_{i,j}} {\{\max \limits_{1\leq h<|p|}{e_{p[h],p[h+1]}}\}}
%\end{equation} 

The unique path in the minimum spanning tree (MST) for the whole dataset from vertex $i$ to vertex $j$ is a minimax path from $i$ to $j$~\cite{zahn1971graph}. Thus, the minimax distance between any pair of objects can be computed by the MST. These kinds of algorithms are also known as MST-based clustering algorithms.

MST-based clustering algorithms begin by constructing an MST over a given weighted graph, and then an edge inconsistency measure partitions the tree into clusters~\cite{wang2013enhancing}. There are two well-known problems with MST-based clustering algorithms. First, two connected clusters may be regarded as two parts of one cluster. Second, a few objects that are far away from all other objects define a separate cluster. For the first problem, Chang and Yeung proposed a robust minimax distance that was also based on the MST but took the local density into account. The new dissimilarity was robust to noises and reflected the genuine distance. For the second problem, Fischer and Buhmann proposed path-based clustering by describing an agglomerative optimization method for the objective function, defined in Refs.~\cite{fischer2003path,fischer2003bagging}. Their method solves the second problem because the objective function sums the average dissimilarity per cluster weighted with the number of objects in that cluster. However, their method may fall into a local optimum. In this paper, we propose a global optimal path-based clustering (GOPC) algorithm that gives the global optimal solution for the objective function defined in \cref{criterion}.  

The remaining topics in this paper are organized as follows. Some related work is briefly reviewed in Section \ref{rw}. In Section \ref{sec2}, our algorithm is described in detail. Experimental results on synthetic datasets as well as real-world datasets are presented in Section \ref{sec3}. In Section \ref{is} we apply our algorithm to the domain of image segmentation. Finally, some concluding remarks are given in the last section.

\section{Related work}
\label{rw}
MST-based clustering was first proposed by Zahn~\cite{zahn1971graph}. In his works, the method first constructed an MST over the entire dataset and then proceeded to remove inconsistent edges. In most cases, the inconsistent edges were simply the longest edges (i.e., edges with large weight were the priority for removal). However, not all edges with large weight were inter-cluster edges. Outliers in the dataset made the longest edges an unreliable indication of cluster separation. For this problem, Chowdbury and Murthy~\cite{chowdhury1997minimal} proposed a new inconsistency measure based on finding valley regions. In Ref.~\cite{laszlo2005minimum}, Laszlo and Mukherjee put a constraint on the minimum cluster size. They considered an edge to be an inter-cluster edge only when edge removal resulted in two clusters with sizes both larger than the minimum cluster size. To be less sensitive to the outliers, Zhong et al. proposed a two-round MST-based clustering algorithm~\cite{zhong2010graph}. However, this algorithm was complicated because it needed too many parameters. After that, Zhong et al. proposed a split-and-merger hierarchical clustering method~\cite{zhong2011minimum}. In their work, MST was employed to guide the splitting and merging process. The drawback of this algorithm was its high time complexity. Computational efficiency is a major issue for large databases. Thus, Wang et al. proposed a fast MST-inspired clustering algorithm~\cite{wang2009divide} that performed much better than $O(n^2)$.

In addition to the algorithms mentioned above that classify the dataset into groups by removing inconsistent edges from MST, Fischer and Buhmann proposed path-based clustering by describing an agglomerative optimization method for the objective function defined in Refs.~\cite{fischer2003path,fischer2003bagging}. However, their algorithm could not guarantee the global optimal solution for all datasets~\cite{LIU201969}. Chang and Yeung proposed an algorithm to combine spectral clustering and path-based clustering (here denoted as RPSC)~\cite{chang2008robust}. Spectral clustering is a recently popular clustering algorithm that has demonstrated excellent performance on some clustering tasks involving highly non-spherical clusters~\cite{ng2002spectral}. The accuracy of such methods depends on the affinity matrix~\cite{jordan2004learning}. Most spectral clustering methods adopt the Gaussian kernel function as the similarity method~\cite{zhang2011local}. Nevertheless, computing the eigenvectors of the affinity matrix generally has a computational complexity of $O(n^3)$ that makes spectral clustering infeasible for large datasets, and the same is true for RPSC. 

Clustering is an unsupervised learning problem, as it classifies a dataset without using any prior knowledge~\cite{zhong2011minimum}. Thus, the number of parameters of a good algorithm should be as few as possible. The GOPC algorithm requires only one parameter (i.e., the number of clusters). The number of clusters is either given as an input parameter or computed by the algorithms themselves. Therefore, we also propose a method to estimate the number of clusters, like the method proposed by Alex and Alessandro~\cite{rodriguez2014clustering}. In their algorithm, the number of clusters arises intuitively through the decision graph.

\section{Proposed method}
\label{sec2}
Table~\ref{tab} lists the mathematical notation used in this paper.
\newcommand{\tabincell}[2]{\begin{tabular}{@{}#1@{}}#2\end{tabular}}
\begin{table}
	\centering
	\caption{Mathematical Notation}
	\label{tab}
	\begin{tabular}{c c}
		\hline
		Notation & Description \\
		\hline
		$k$ & The number of clusters.\\
		$C_t$ & The $t$th cluster.\\
		$m_t$ & The medoid of $C_t$.\\
		$U$ & $U=\{x_1,x_2,\cdots,x_n\}$. The whole dataset.\\
		$M$ & $M=\{m_1,m_2,\cdots,m_k\}$. All medoids.\\
		$x$ & $x\in U$. One of the objects in $U$.\\
		$\tau_t(x)$ & The nearest medoid to object $x$ in the $t$th epoch. \\
		\hline
	\end{tabular}
\end{table}
\subsection{Problem formulation}
\label{subsec2.1} 

Given a set of objects $U=\{x_1,x_2,\cdots,x_n\}$ that is characterized by the minimax distance, the goal of clustering is to find a mapping $\Phi: U\rightarrow \{1,2,\cdots,k\}$ that assigns each object to one of the $k$ groups. To measure the quality of the clustering solutions, the objective function in \cref{criterion} is used:

\begin{equation} \label{criterion}
E=\min \limits_{(m_1,m_2,\ldots,m_k)}\sum\limits_{t=1}\limits^k \sum\limits_{x\in C_t}d_{x,m_t}
\end{equation}
where $C_t$ denotes the $t$th cluster, $m_t$ is one of the objects in $C_t$, and $d_{x,m_t}$ denotes the minimax distance between object $x$ and $m_t$. Our method aims to find a set of objects $M=(m_1,m_2,\cdots,m_k)$ by which we can obtain the minimum value of $E$. Also, these objects are known as medoids.

\begin{myDef} \label{def2}
	A medoid is the object in a cluster whose average distance to all the objects in the cluster is minimal.
\end{myDef}

After obtaining the medoids, each object is assigned to its nearest medoid. If there is an object $o_i$ that has the same minimum distance to more than one medoid, assigning $o_i$ to any one of the medoids does not affect the value of $E$ in \cref{criterion}. Therefore, in the rest of the paper, we assume that an object $x$ belongs to a cluster $C_t$ only when it has the unique minimum distance to the corresponding medoid $m_t$. As for objects like $o_i$, they are temporarily considered as noise, and we will discuss how to deal with them in detail in Subsection~\ref{subsec3}. The entire clustering procedure is actually a $k$-medoids problem with respect to the minimax distance. Our contribution in this paper is to propose a method to obtain the global optimal solution.

\subsection{Properties of medoids}
\label{subsec2.2} 
For the clustering problem, any two objects have a pairwise distance. Thus, for $\forall z\in U$, the paths $x\to z$ and $z\to y$ can form a path from $x$ to $y$. According to the definition of minimax distance defined in \cref{mms}, the minimax distance is the minimum effective distance among all paths. Therefore, the minimax distance is an $ultrametric\; space$ because it satisfies the ultrametric property (\Cref{pom}) introduced by Hartigan \cite{Hartigan1967Representation} and Johnson \cite{Johnson1967Hierarchical}.

\begin{myPro} \label{pom}
	For $\forall(x,y,z)\in U$, we have $d_{x,y}\leq \max(d_{x,z},d_{y,z})$
\end{myPro}

The medoid is the most centrally located object in a cluster. \Cref{pom} makes the medoid special. Next we introduce an important lemma based on \Cref{pom}.

\begin{figure}
	\centering
	\includegraphics[width=0.4\textwidth]{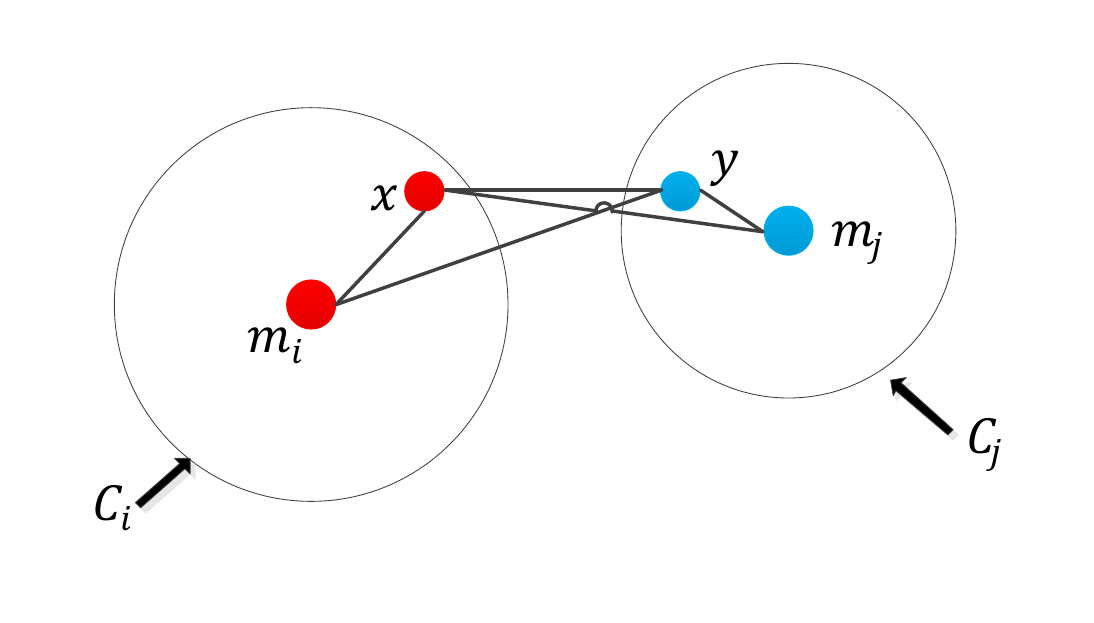}
	\caption{An example for \Cref{lem1}: $m_i$ and $m_j$ are the medoids of $C_i$ and $C_j$, respectively.}
	\label{fig1}
\end{figure}

\begin{myLem} \label{lem1}
	Given a cluster $C_i$ and its corresponding medoid $m_i$, $\forall x\in C_i$, $\forall y\notin C_i$, then
	\begin{equation} \label{eq3}
	d_{y, m_i}=d_{y, x}
	\end{equation}
	
	\begin{proof}
		Suppose $y\in C_j$, and $m_j$ is the medoid (see \cref{fig1}). All objects are assigned to the nearest medoid. Thus,
		\begin{equation} \label{eq4}
		d_{x,m_i}<d_{x,m_j},
		\end{equation}
		and
		\begin{equation} \label{eq5}
		d_{y,m_j}<d_{y,m_i}.
		\end{equation}
		According to \Cref{pom}, we have
		\begin{equation} \label{eq6}
		d_{y,m_i}\leq \max (d_{x,m_i}, d_{y,x}), 
		\end{equation}
		\begin{equation} \label{eq7}
		d_{x,m_j}\leq \max (d_{y,x},d_{y,m_j}),
		\end{equation}
		\begin{equation} \label{eq8}
		d_{y,x}\leq \max (d_{y,m_i},d_{x,m_i}).
		\end{equation}
		The relationship between $d_{y,x}$ and $d_{x,m_i}$ has two possibilities: $d_{y,x}\leq d_{x,m_i}$ and $d_{y,x}>d_{x,m_i}$.
		\begin{enumerate}
			\item If $d_{y,x}\leq d_{x,m_i}$:
			
			According to \cref{eq4} and \cref{eq7}, we have
			\[
			d_{x,m_i} < d_{x,m_j}\leq \max(d_{y,x},d_{y,m_j}).
			\]
			The above equation denotes that at least one of $d_{y,x}$ and $d_{y,m_j}$ is larger than $d_{x,m_i}$. Because $d_{y,x}\leq d_{x,m_i}$,
			\begin{equation}
			\label{eq9}
			d_{x,m_i}< d_{y,m_j}.
			\end{equation}
			According to \cref{eq5} and \cref{eq6}, we have
			\[
			d_{y,m_j} < d_{y,m_i}\leq \max(d_{x,m_i},d_{y,x}).			
			\]
			Because $\max(d_{x,m_i},d_{y,x})=d_{x,m_i}$, we get $d_{y,m_j}<d_{x,m_i}$, which contradicts with \cref{eq9}.
			
			\item If $d_{y,x}>d_{x,m_i}$, according to \cref{eq6} and \cref{eq8}, we get $d_{y,m_i}\leq d_{y,x}$ and $d_{y,x}\leq d_{y,m_i}$, respectively, which implies that $d_{y,m_i}=d_{y,x}$.
		\end{enumerate}
		Therefore, for $\forall x\in C_i$, $\forall y\notin C_i$, $d_{y,m_i}=d_{y,x}$.
	\end{proof}
\end{myLem}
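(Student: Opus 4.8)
The plan is to exploit the ultrametric inequality of \Cref{pom} in its sharpest form: in an ultrametric space, among any three points the two largest of the three pairwise distances coincide (every triangle is ``isosceles, with the two long sides equal''). Viewed through this lens, proving $d_{y,m_i}=d_{y,x}$ for the triple $\{x,y,m_i\}$ amounts to showing that these two are precisely the two largest distances, i.e.\ that $d_{x,m_i}$ is the shortest side. Intuitively this should hold because $x$ and $m_i$ lie in the same cluster $C_i$, so their minimax distance is controlled by short intra-cluster edges, whereas any path reaching the outside object $y$ must pay for the bottleneck edge that separates the two clusters.

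First I would instantiate \Cref{pom} on the triple $\{x,y,m_i\}$ to obtain the three inequalities that bound each of $d_{y,m_i}$, $d_{x,m_i}$, and $d_{y,x}$ by the maximum of the other two. Next I would bring in the assignment conditions: letting $m_j$ denote the medoid of the cluster containing $y$, the nearest-medoid rule gives $d_{x,m_i}<d_{x,m_j}$ and $d_{y,m_j}<d_{y,m_i}$. The key auxiliary step is a second application of \Cref{pom}, this time to a triple involving $m_j$ (bounding $d_{x,m_j}$ by $\max(d_{y,x},d_{y,m_j})$), which is what ties the inter-cluster distances together.

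With these ingredients in hand I would split on the position of $d_{y,x}$ relative to $d_{x,m_i}$. In the ``large'' case $d_{y,x}>d_{x,m_i}$ the conclusion falls out immediately: one ultrametric bound forces $d_{y,m_i}\le d_{y,x}$ and another forces $d_{y,x}\le d_{y,m_i}$, so the two are equal. The ``small'' case $d_{y,x}\le d_{x,m_i}$ is the one to eliminate, and I expect it to be the main obstacle. The strategy there is to chain the inequalities into a contradiction: from $d_{x,m_i}<d_{x,m_j}\le\max(d_{y,x},d_{y,m_j})$ together with $d_{y,x}\le d_{x,m_i}$ one concludes $d_{x,m_i}<d_{y,m_j}$, while the bound $d_{y,m_j}<d_{y,m_i}\le\max(d_{x,m_i},d_{y,x})=d_{x,m_i}$ yields the reverse inequality $d_{y,m_j}<d_{x,m_i}$, which is the sought contradiction.

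The delicate point, and the reason the claim is not a one-line consequence of the isosceles property, is that the bare triple $\{x,y,m_i\}$ carries no information about $x$ being ``close'' to its own cluster; that fact enters only through the medoid $m_j$ and the assignment inequality $d_{x,m_i}<d_{x,m_j}$. So the real content of the argument is recognising that the second ultrametric triangle (the one containing $m_j$) is exactly what is needed to rule out the degenerate configuration, after which the equality is forced.
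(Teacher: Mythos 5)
Your proposal is correct and follows essentially the same route as the paper's own proof: the same three ultrametric inequalities on $\{x,y,m_i\}$, the same auxiliary triangle through $m_j$ combined with the nearest-medoid inequalities, the same case split on $d_{y,x}$ versus $d_{x,m_i}$, and the same contradiction $d_{x,m_i}<d_{y,m_j}<d_{x,m_i}$ in the small case. Nothing is missing.
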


\begin{figure}
	\centering
	\includegraphics[width=0.45\textwidth]{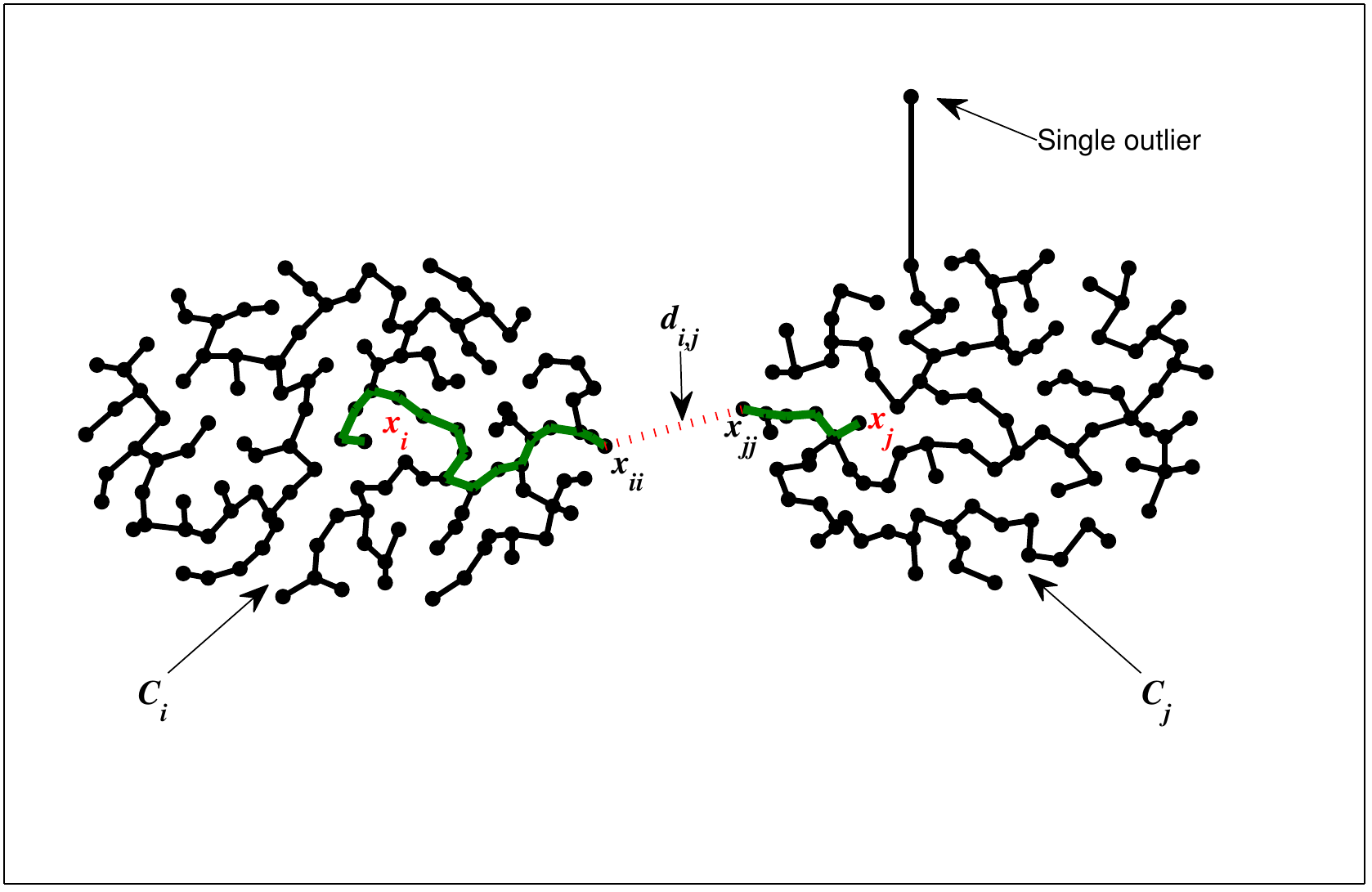}
	\caption{$C_i$ and $C_j$ are two separate clusters. The minimax similarity between $x_i$ and $x_j$ is the edge weight of $(x_{ii},x_{jj})$.}
	\label{mst}
\end{figure}

In the process of proving \Cref{lem1}, we find that $d_{y,x}$ is not only equal to $d_{y,m_i}$ but also larger than $d_{x,m_i}$. Next, we give a physical explanation for the above proof. Figure~\ref{mst} shows an example of a dataset in which the lines denote the MST graph. According to the theorem proposed by Zahn, the unique path in the MST from vertex $x_i$ to $x_j$ is a minimax path from $x_i$ to $x_j$ \cite{zahn1971graph}. Thus, the minimax path between $x_i$ and $x_j$ is the green path shown in \cref{mst}, and the minimax distance between $x_i$ and $x_j$ is the maximum distance along the minimax path, i.e., $d_{i,j}=d_{ii,jj}$. This indicates that, if the dissimilarity between $x_{ii}$ and $x_{jj}$ is large enough, there is a shorter minimax distance between two objects in one cluster than between either of the objects and any object in a different cluster.

Letting
\begin{equation}
\label{degree}
degree(x)=\sum \limits_{y\in U}d_{y,x}, 
\end{equation}
we get \Cref{P1}.
\begin{myTheo} \label{P1}
	Given a cluster $C_i$ and its corresponding medoid $m_i$, $\forall x\in C_i$, $x\neq m_i$, then $degree(m_i)\leq degree(x)$.
\end{myTheo}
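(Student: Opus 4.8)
The plan is to compare $degree(m_i)$ and $degree(x)$ by splitting the sum over the whole dataset in \cref{degree} into the part coming from inside the cluster $C_i$ and the part coming from outside it. First I would write, for an arbitrary $x\in C_i$,
\begin{equation}
degree(x)=\sum_{y\in C_i}d_{y,x}+\sum_{y\notin C_i}d_{y,x}.
\end{equation}
The point of this decomposition is that the two pieces are governed by two results that are already available: the out-of-cluster piece by \Cref{lem1}, and the in-cluster piece by \Cref{def2}.

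The crucial step is to observe that the external sum is identical for every object of the cluster. By \Cref{lem1}, for all $x\in C_i$ and all $y\notin C_i$ we have $d_{y,x}=d_{y,m_i}$, so
\begin{equation}
\sum_{y\notin C_i}d_{y,x}=\sum_{y\notin C_i}d_{y,m_i},
\end{equation}
a quantity that does not depend on which $x\in C_i$ is chosen. Consequently the external contribution to the degree is constant across the cluster and simply cancels when $degree(x)$ is compared with $degree(m_i)$; the only term that survives the comparison is the in-cluster sum.

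For the in-cluster sum I would invoke \Cref{def2} directly: the medoid $m_i$ minimizes the average distance (equivalently, since $|C_i|$ is fixed, the total distance) to the objects of $C_i$, so $\sum_{y\in C_i}d_{y,m_i}\le\sum_{y\in C_i}d_{y,x}$ for every $x\in C_i$. Subtracting $degree(m_i)$ from $degree(x)$ and using the cancellation of the external terms then yields
\begin{equation}
degree(x)-degree(m_i)=\sum_{y\in C_i}d_{y,x}-\sum_{y\in C_i}d_{y,m_i}\ge 0,
\end{equation}
which is exactly the claim. The argument is short once \Cref{lem1} is in hand, so I expect the main obstacle to be conceptual rather than computational: the degree is summed over the entire dataset $U$, whereas the medoid is defined only relative to its own cluster, and the real content is recognizing that \Cref{lem1} is precisely the bridge that renders the out-of-cluster distances irrelevant to the comparison.
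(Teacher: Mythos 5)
Your proposal is correct and follows essentially the same route as the paper's own proof: decompose $degree(x)$ into the in-cluster sum (controlled by \Cref{def2}) and the out-of-cluster sum (rendered constant across $C_i$ by \Cref{lem1}), then compare term by term. The paper writes these two pieces as $p(x)$ and $q(x)$ and concludes from $p(m_i)\leq p(x)$ and $q(m_i)=q(x)$, which is exactly your cancellation argument.
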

\begin{proof}
	According to \Cref{def2}, a medoid is the object in a cluster whose average distance to all objects in the cluster is minimal. Thus, we have $p(m_i)\leq p(x)$, where
	\[
	p(x)=\sum_{y\in C_i}{d_{y,x}}.
	\]
	We have proved in \Cref{lem1} that for $\forall x\in C_i$, $\forall y\notin C_i$, then $ d_{y,m_i}=d_{y,x}$. In other words, letting 
	$
	q(x)=\sum_{y\notin C_i}▒d_{y,x},
	$
	$\forall x\in C_i$, $x\neq m_i$, we have $q(m_i)=q(x)$.
	
	According to \cref{degree},
	\[
	\begin{split}
	degree(x)& =\sum\limits_{y\in U}d_{y,x} \\
	& =\sum \limits_{y\in C_i}▒d_{y,x}+ \sum \limits_{y\notin C_i} d_{y,x} \\
	& =p(x)+q(x) \\
	\end{split}
	\]
	Because $p(m_i)\leq p(x)$ and $q(m_i)=q(x)$, we have
	\[
	p(m_i)+q(m_i)\leq p(x)+q(x),
	\]
	which implies that
	\[
	degree(m_i)\leq degree(x).
	\]
\end{proof}

\Cref{P1} indicates that medoids have the minimum degree in their clusters. If more than one object in a cluster has the minimum degree, any of them can be taken as the medoid of the cluster. Next, we discuss the relationship between medoids.
Letting
\begin{equation}
\label{nn}
nn(x)=\mathop{\argmin} \limits_{y\in S(x)}d_{y,x},
\end{equation}
where $S(x)=\{y|degree(y)<degree(x)\}$. \textbf{If there is more than one object in $S(x)$ having the minimum distance to $x$, $nn(x)$ will be the one with the minimum degree}. There is one theorem for $nn(x)$, shown as follows:
\begin{myTheo} \label{P2}
	If $x \in M$, then $nn(x) \in M$.
\end{myTheo}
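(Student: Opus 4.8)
The plan is to argue by contradiction, using \Cref{lem1} to convert a statement about distances to the medoid $m_i$ into a statement about distances between two medoids, and \Cref{P1} to compare degrees. First I would set $x = m_i$, the medoid of cluster $C_i$, and write $y^\ast = nn(m_i)$. By the definition in \cref{nn}, $y^\ast \in S(m_i)$, so $degree(y^\ast) < degree(m_i)$, and $y^\ast$ minimizes $d_{\cdot, m_i}$ over $S(m_i)$. Let $C_j$ be the cluster containing $y^\ast$ and let $m_j$ be its medoid. The first step is to rule out $j = i$: if $y^\ast$ lay in $C_i$, then (since $y^\ast \neq m_i$, as $m_i \notin S(m_i)$) \Cref{P1} would give $degree(m_i) \le degree(y^\ast)$, contradicting $degree(y^\ast) < degree(m_i)$. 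Hence $j \neq i$, and in particular $m_i \notin C_j$.

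The crucial step is to apply \Cref{lem1} to the cluster $C_j$ with medoid $m_j$: taking the in-cluster object $y^\ast \in C_j$ and the outside object $m_i \notin C_j$, the lemma yields $d_{m_i, m_j} = d_{m_i, y^\ast}$, i.e.\ $m_j$ is exactly as close to $m_i$ in minimax distance as $y^\ast$ is. Moreover, applying \Cref{P1} inside $C_j$ gives $degree(m_j) \le degree(y^\ast) < degree(m_i)$, so $m_j$ also belongs to $S(m_i)$.

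Now I would complete the argument. Since $y^\ast = nn(m_i)$ realizes the minimum of $d_{\cdot,m_i}$ over $S(m_i)$ and $d_{m_j,m_i} = d_{y^\ast,m_i}$, both $m_j$ and $y^\ast$ attain this minimum distance. Suppose, for contradiction, that $y^\ast$ is not a medoid of $C_j$; then $degree(m_j) < degree(y^\ast)$ strictly, so among the distance-minimizers in $S(m_i)$ the object $m_j$ has strictly smaller degree than $y^\ast$. This contradicts the tie-breaking rule stated with \cref{nn}, which selects the minimizer of minimum degree. Therefore $degree(m_j) = degree(y^\ast)$; that is, $y^\ast$ shares the minimum degree in $C_j$, so by the remark following \Cref{P1} it may itself be taken as the medoid of $C_j$, giving $nn(m_i) = y^\ast \in M$.

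The main obstacle is recognizing that \Cref{lem1} is the right instrument to equate $d_{y^\ast,m_i}$ with $d_{m_j,m_i}$; without this identity one cannot put $m_j$ into direct competition with $y^\ast$ as a nearest neighbor, and the degree comparison from \Cref{P1} would be useless. A secondary subtlety is the careful handling of the tie-breaking rule: the conclusion $nn(m_i)\in M$ ultimately rests on equal minimum degree making $y^\ast$ a legitimate medoid, which is exactly the non-uniqueness of medoids noted after \Cref{P1}.
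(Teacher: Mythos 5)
Your proof is correct and follows essentially the same route as the paper's: rule out $nn(x)$ lying in $x$'s own cluster via \Cref{P1}, use \Cref{lem1} to show the medoid $m_j$ of $nn(x)$'s cluster ties $nn(x)$ in distance to $x$, and derive a contradiction with the minimum-degree tie-breaking rule in \cref{nn}. If anything, you are slightly more careful than the paper in handling the case $degree(m_j)=degree(nn(x))$, which the paper glosses over by asserting a strict inequality; your appeal to the remark after \Cref{P1} (any minimum-degree object may serve as the medoid) closes that gap cleanly.
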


\begin{proof}
	According to the definition of $nn(x)$, we obtain $degree(nn(x))<degree(x)$. Because $x\in M$, $x$ has the minimum degree in its cluster. As a result, $nn(x)$ and $x$ don't belong to one cluster. Suppose $nn(x)\notin M$ and $m$ is the medoid of the cluster that $nn(x)$ belongs to. We obtain 
	\begin{equation}
	\label{eq12}
	degree(m)\leq degree(nn(x)).
	\end{equation}
	\Cref{lem1} shows that any object outside a cluster has the same distance to objects in the cluster, so $d_{x,m}=d_{x,nn(x)}$. Object $nn(x)$ has the minimum degree among all the candidates that have the minimum distance to $x$, so $degree(nn(x))<degree(m)$, which contradicts \cref{eq12}. Thus, $nn(x)\in M$.
\end{proof}

\subsection{GOPC algorithm}
\label{subsec2.3}

The GOPC algorithm aims to find a set of medoids by which we can obtain the minimum value of $E$ for the objective function defined in \cref{criterion}. 
According to \Cref{P1}, the object with the minimum degree in the entire dataset must be one of the medoids. Therefore, we take this object to be the first medoid $m_1$. Next, we propose an iterative method to find the remaining $k-1$ medoids in turn. The GOPC algorithm is depicted in \Cref{gop}.
\floatname{algorithm}{Algorithm}
\renewcommand{\algorithmicrequire}{\textbf{Input:}}
\renewcommand{\algorithmicensure}{\textbf{Output:}}
\begin{algorithm}
	\caption{Global optimal path-based clustering (GOPC) algorithm}
	\begin{algorithmic}[1]
		\Require $U$ is the dataset; $k$ is the number of clusters.
		\Ensure $M$ denotes the set of all medoids.
		\State Calculate the minimax distance by Prim's algorithm.
		\For {each $x\in U$}
		\State Calculate $degree(x)$; // According to \cref{degree}.
		\EndFor
		\State $S\gets sort(degree)$ // sorted by degree in ascending order.
		\For {each $x\in U$}
		\State Calculate $nn(x)$. // According to \cref{nn}.
		\EndFor
		\State $m_1\gets s_1$; $M.add(m_1)$ // $s_i$ denotes the $i$th object in $S$.   
		\For {each $y\in U$}
		\State  $\tau_1(y)\gets m_1$; 
		\EndFor  
		\State $t\gets 2$;         
		\While {$t\leq k$}
		\For {each $x\in S$}
		\If {$nn(x)\notin M$}
		\State continue;
		\EndIf
		\State $r_t(x)\gets 0$; 
		\For {each $y\in U$}
		\State $r_t(x)\gets r_t(x)+\max (d_{y,\tau_{t-1}(y)}-d_{y,x}, 0)$; // According to \cref{eq18}
		\EndFor
		\EndFor
		\State Take the object with the maximum value $r_t(x)$ as the $t$th medoid $m_t$.
		\State $M.add(m_t)$;
		\For {each $y\in U$}
		\If {$d_{y,m_t}< d_{y,\tau_{t-1}(y)}$}
		\State $\tau_{t}(y)\gets m_t$;
		\Else 
		\State $\tau_{t}(y)\gets \tau_{t-1}(y)$
		\EndIf
		\EndFor
		\State $t\gets t+1$; 
		\EndWhile
	\end{algorithmic}
	\label{gop}
\end{algorithm}

For each epoch, the objects are visited according to their degree in ascending order. \Cref{P2} indicates that if $nn(x)\notin M$, then $x\notin M$. Thus, we check the status of $nn(x)$ for each object $x$, and only the objects that satisfy $nn(x)\in M$ are taken as candidates for the medoids. The above step can filter out a lot of unqualified objects and greatly reduce the number of candidates. Next, we introduce a method to mine the $t$th medoid $m_t$ from the candidates. Let
\begin{equation} \label{eq17}
\Omega _t(x)={\{y|d_{y,x}< d_{y,\tau_{t-1}(y)}\} },
\end{equation}
and
\begin{equation} \label{eq18}
r_t(x)=\sum\limits_{y\in \Omega _t(x)}{(d_{y,\tau_{t-1}(y)}-d_{y,x})}, 
\end{equation}
where $\tau_{t-1}(y)$ denotes the nearest medoid to object $y$ in the last epoch. $\Omega_t(x)$ and $r_t(x)$ respectively denote the set of objects that are assigned to $x$ and the changes in $E$ if $x$ is selected as the new medoid in the $t$th epoch. Medoids should be the influential nodes in the dataset, so the object with the maximum $r_t(x)$ will be taken to be $m_t$ in the $t$th epoch. We will prove the correctness of the above hypothesis by \Cref{P3}.

\textbf{Note:} $M$ is a set of medoids that can achieve the global optimum for the objective function in \cref{criterion}. If there are two or more objects that have the maximum $r_t(x)$ in each epoch, the one selected will not change the objective value. Therefore, any combination of objects belong to $M$ as long as it minimizes the value of the objective function, which implies that if $x\notin M$, letting $M'=\{M\backslash {m_t}\}\cup{x}$ and $\Phi(M)$ denotes the objective value of $M$, we obtain $\Phi(M)-\Phi(M')<0$.

\begin{figure}
	\centering
	\includegraphics[width=0.4\textwidth]{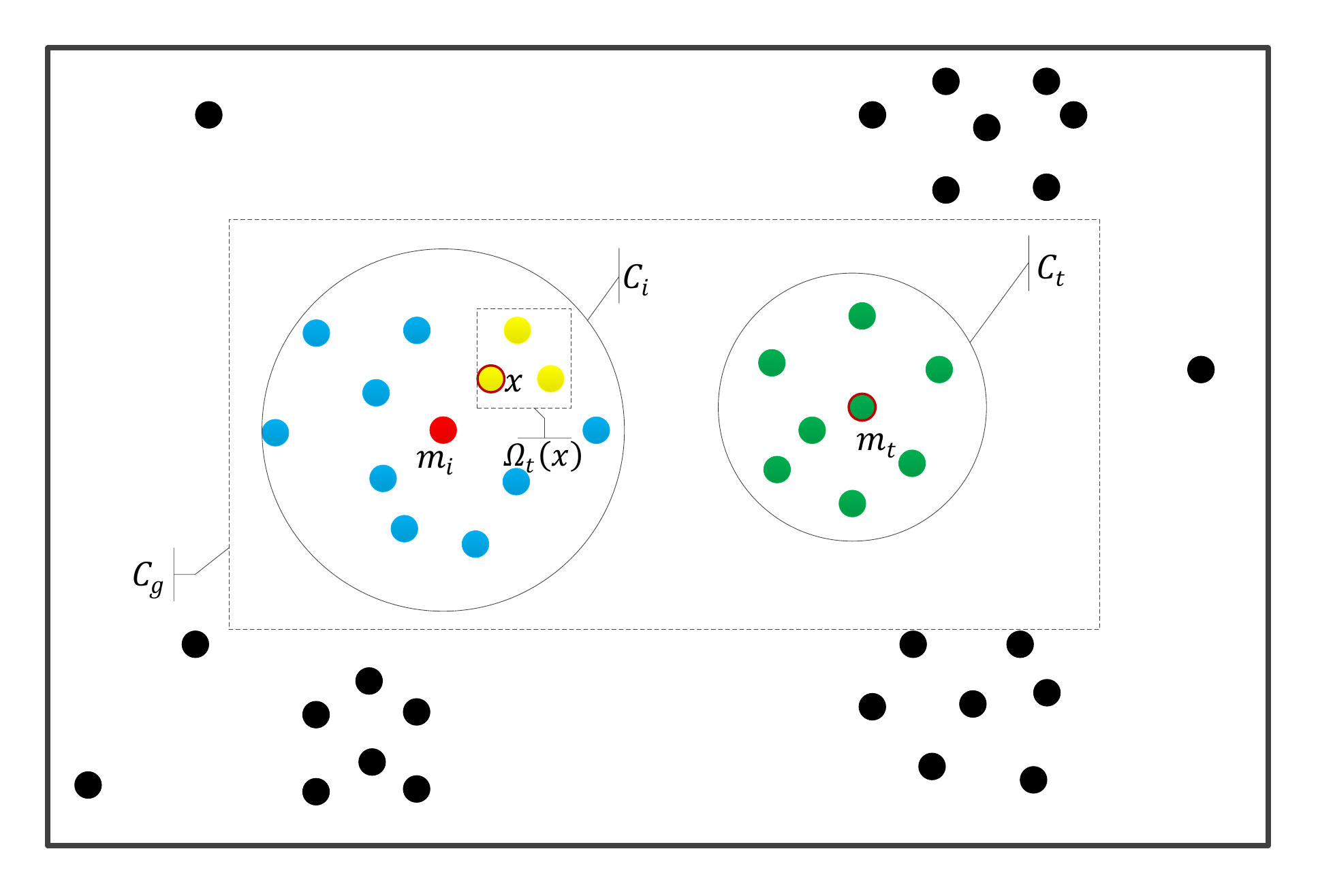}
	\caption{Example for \Cref{P3}. $m_i=nn(x)$ denotes the medoid that was discovered before the $t$th epoch; $m_t$ denotes the $t$th medoid that waits to be discovered.}
	\label{example}
\end{figure}

\begin{myTheo} \label{P3}
	Given an object $x$ where $nn(x)\in M$, if for $\forall y$ where $nn(y)\in M \wedge y\neq x$, $r_t(x)\geq r_t(y)$, then $x\in M$.
\end{myTheo}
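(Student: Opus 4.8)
The plan is to argue by contradiction using an exchange (swap) argument, leveraging the characterization recorded in the \textbf{Note}: if $x$ lies in no globally optimal medoid set, then for the swap $M'=(M\setminus\{m_t\})\cup\{x\}$ one has the strict inequality $\Phi(M)-\Phi(M')<0$. Hence it suffices to exhibit a single medoid $m_t\in M$ for which the swap does \emph{not} increase the objective. I take $M$ to be a global optimum extending the medoids $\{m_1,\dots,m_{t-1}\}$ selected before epoch $t$, assume for contradiction that $x\notin M$, and let $m_t$ denote the medoid (in $M$) of the cluster $C$ that contains $x$.

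First I would compute the effect of the swap on the objective. By \Cref{lem1}, every object $y\notin C$ satisfies $d_{y,x}=d_{y,m_t}$, so removing $m_t$ and inserting $x$ leaves the service cost of every external object unchanged and lures no external object toward $x$. By the ultrametric property (\Cref{pom}) the within-cluster distances of $C$ are dominated by the separation of $C$ from the other clusters, so after the swap every $y\in C$ is served by $x$. Consequently the change is confined to $C$ and equals $\Phi(M')-\Phi(M)=\sum_{y\in C}(d_{y,x}-d_{y,m_t})$.

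Next I would relate this difference to the greedy quantities. The crucial cancellation is again \Cref{lem1}: since $d_{y,x}=d_{y,m_t}$ for every $y\notin C$, the objects outside $C$ contribute identically to $r_t(x)$ and to $r_t(m_t)$ in \cref{eq18}, giving $r_t(x)-r_t(m_t)=\sum_{y\in C}(d_{y,m_t}-d_{y,x})$. Combining with the previous display yields the clean identity $\Phi(M')-\Phi(M)=r_t(m_t)-r_t(x)$. Because $x$ attains the maximum value of $r_t$ among all admissible candidates, $r_t(x)\ge r_t(m_t)$, whence $\Phi(M')\le\Phi(M)$; this contradicts the strict inequality supplied by the Note, and therefore $x\in M$.

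The step I expect to be the main obstacle is justifying that the inequality $r_t(x)\ge r_t(m_t)$ actually applies, i.e. that $m_t$ is an admissible candidate in epoch $t$ (one with $nn(m_t)\in M$ already selected) and that $x$'s cluster medoid is \emph{not} among the already-chosen medoids, so that $d_{y,\tau_{t-1}(y)}$ for $y\in C$ is a genuine inter-cluster distance and every $y\in C$ contributes to both $r_t$ values, keeping the cancellation exact. Here I would invoke \Cref{P2}: the map $nn$ organizes the medoids of any optimal $M$ into a structure closed under passing to the smaller-degree neighbor, so $M$ is built up outward from $m_1$; choosing $M$ among the optima extending $\{m_1,\dots,m_{t-1}\}$ so that $C$ is attached to the already-selected medoids through $nn$ makes $m_t$ a legitimate candidate. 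The remaining bookkeeping is the careful treatment of ties — objects equidistant from two medoids — which the paper already isolates through its unique-assignment convention.
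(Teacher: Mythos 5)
Your overall strategy (contradiction via the swap $M'=(M\setminus\{m_t\})\cup\{x\}$, then relating $\Phi(M)-\Phi(M')$ to $r_t(x)-r_t(m_t)$) is the same as the paper's, but you swap out the wrong medoid, and the resulting gap is fatal. You take $m_t$ to be the medoid of the cluster $C$ containing $x$. By \Cref{P2} and \Cref{lem1}, if $x\notin M$ then that medoid is exactly $nn(x)$, which by the candidacy hypothesis is \emph{already among the selected medoids} $\{m_1,\dots,m_{t-1}\}$. Hence $\tau_{t-1}(y)=m_t$ for every $y\in C$, so $\Omega_t(m_t)=\emptyset$ and $r_t(m_t)=0$, and your claimed identity $r_t(x)-r_t(m_t)=\sum_{y\in C}(d_{y,m_t}-d_{y,x})$ fails: the left-hand side equals $\sum_{y\in C}\max(d_{y,m_t}-d_{y,x},0)\ge 0$, while the right-hand side is $\le 0$ by the medoid property of $m_t$. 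Consequently $\Phi(M')\le\Phi(M)$ does not follow; in fact $\Phi(M')-\Phi(M)=\sum_{y\in C}(d_{y,x}-d_{y,m_t})\ge 0$ holds for \emph{every} non-medoid $x\in C$ no matter how large $r_t(x)$ is, so this particular swap can never yield the desired contradiction. You flag exactly this issue yourself (``that $x$'s cluster medoid is \emph{not} among the already-chosen medoids'') and hope to arrange it by a suitable choice of $M$, but the hypothesis $nn(x)\in M$ forces the opposite situation.

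The paper's proof avoids this by swapping out a \emph{different}, not-yet-selected medoid $m_t\in M\setminus\{m_1,\dots,m_{t-1}\}$ (one exists since only $t-1<k$ medoids have been chosen). Then \Cref{lem1} gives $\Omega_t(x)\subset C_i$ (the cluster of $m_i=nn(x)$), so the gain from inserting $x$ is exactly $r_t(x)=\sum_{y\in\Omega_t(x)}(d_{y,m_i}-d_{y,x})$ because $\tau_{t-1}(y)=m_i$ on $C_i$, while the loss from deleting $m_t$ is at most $r_t(m_t)$ because $C_t\subset\Omega_t(m_t)$; hence $\Phi(M)-\Phi(M')\ge r_t(x)-r_t(m_t)$, and optimality of $M$ together with $x\notin M$ forces $r_t(m_t)>r_t(x)$, contradicting the maximality of $r_t(x)$. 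Your bookkeeping for the external objects (cancellation via \Cref{lem1}) is fine and carries over once the swap target is corrected.
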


\begin{proof}
	Suppose $x\notin M$ and let $m_i=nn(x)$. According to the definition of $nn(x)$ and \Cref{P2}, if $x\notin M$, then $x\in C_i$. See \cref{example} for an example. \Cref{lem1} shows that for $\forall y\notin C_i$, we have $d_{y,m_i}=d_{y,x}$. Note that $\tau_{t-1}(y)$ denotes the nearest medoid to object $y$ in the last epoch. Thus for $\forall y\notin C_i$, we have $d_{y,\tau_{t-1}(y)}\leq d_{y,m_i}=d_{y,x}$ which implies that $\Omega _{t}(x)\subset C_i$.
	
	Letting $M'=\{M\backslash {m_t}\}\cup{x}$, and $\Phi(M)$ denotes the objective value of M, we have
   \begin{alignat*}{1}
	\Phi(M) &=\sum\limits_{y\in C_i}{d_{y,m_i}}+\sum\limits_{y\in C_t}{d_{y,m_t}}+\sum\limits_{\ell \neq i, \ell \neq t}\limits^k \sum\limits_{y\in C_\ell}d_{y,m_\ell} \\
	\Phi(M') &= \sum\limits_{y\in \Omega _{t}(x)}▒{d_{y,x}} +\sum \limits_{y\in C_i-\Omega _{t}(x)}▒{d_{y,m_i}} +\sum\limits_{y\in C_t}{d_{y,\tau_{t-1}(y)}} \\	&+\sum\limits_{\ell \neq i, \ell \neq t}\limits^k \sum\limits_{y\in C_\ell}d_{y,m_\ell} 
	\end{alignat*}
	Therefore,	
	\begin{align*}
	D&=\Phi(M)-\Phi(M') \\
	&= \sum\limits_{y\in \Omega _{t}(x)}▒(d_{y,m_i}-d_{y,x})-	\sum\limits_{y\in C_t}▒(d_{y,\tau_{t-1}(y)}-d_{y,m_t}).
	\end{align*}
	According to the definition of $\Omega _{t}(x)$, $C_t\subset \Omega _{t}(m_t)$. In addition, for $\forall y\in C_i$, $\tau_{t-1}(y)=m_i$. Thus,
	\begin{align*}
	  D	&\geq \sum\limits_{y\in \Omega _{t}(x)}▒(d_{y,m_i}-d_{y,x})-\sum\limits_{y\in \Omega _{t}(m_t)}▒(d_{y,\tau_{t-1}(y)}-d_{y,m_t}) \\
	&= r_{t}(x)-r_{t}(m_t). 
	\end{align*}
	Because $M$ is optimal, we have $D<0$, which implies that $r_{t}(m_t)>r_{t}(x)$. This contradicts the assumption that $r_{t}(x)$ is one of the largest of all the objects that satisfy $nn(x)\in M$. Thus, $x\in M$.
\end{proof}

We repeat the above steps until we obtain the $k$th medoid. The final clustering result is obtained by assigning the rest of objects to their nearest medoid. 

There are two famous MST algorithms: Prim~\cite{prim1957shortest} and Kruskal~\cite{kruskal1956shortest}. The time complexity of Prim's algorithm and Kruskal's algorithm are $O(n^2)$ and $O(l\log{n})$, respectively. Here, $l$ is the number of links, and $n$ is the number of nodes in the graph. As the MST in a clustering problem is generally constructed from a complete graph, $l=n^2$. Thus, in this study, we use the Prim's algorithm to construct the MST. 

\subsection{Outlier or noise problem}
\label{subsec3}

\begin{figure} 
	\centering
	\includegraphics[width=0.4\textwidth]{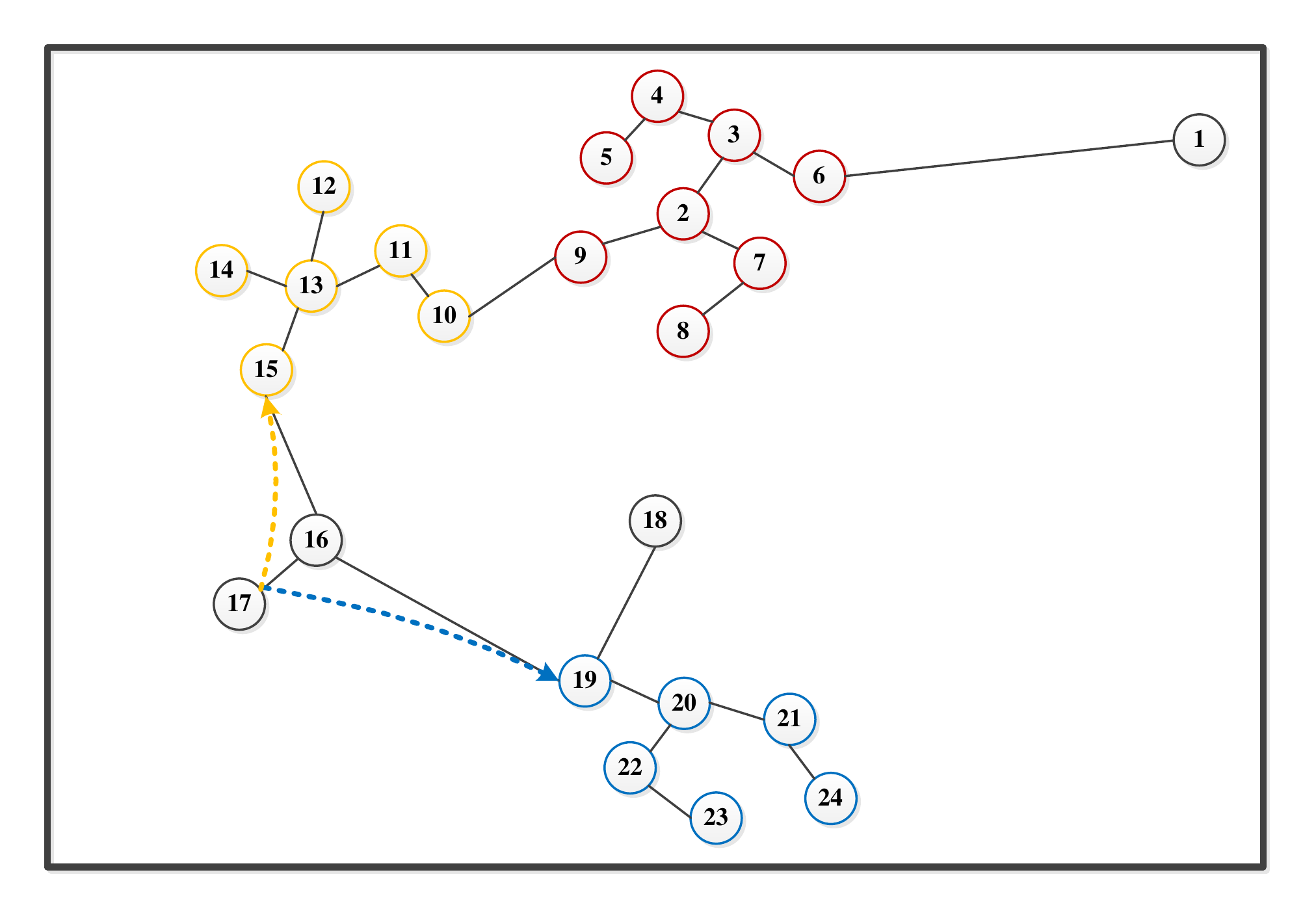}
	\caption{MST of an example dataset. Points are colored according to the cluster to which they are assigned. The black points are noise.}
	\label{fig4}
\end{figure}

Each object is assigned to the nearest medoid. What if there is an object that has the minimum distance to more than one medoid? An example is shown in \cref{fig4}. The black point 1 has a long distance to the normal clusters. From the perspective of minimax distance: $d_{1,2}=d_{1,13}=d_{1,6}$ (2 and 13 are medoids). We take these points (such as black point 1) to be outliers or noise. Outliers and noise are generated by objects that do not belong to any cluster and are a long distance from the normal clusters. Often, these objects have a negative influence on clustering. For example, the algorithm may take a few outliers or noise to be a cluster. However, the GOPC algorithm is robust against this noise. The reason is that the outliers are always alone or have few neighbors. Thus, the $r_{k}(x)$ of these outliers is often smaller than those of medoids. For these outliers or noise, we have two solutions:
\begin{enumerate}
	\item Take them to be a separate cluster.
	\item Assign these outliers to normal clusters based on the MST.
	This method is similar to Kruskal's algorithm. First, create a set $\Lambda$ containing all the edges in the MST where at least one of the endpoints is an outlier. Second, remove the edge with minimum weight from $\Lambda$. The two endpoints of this edge will be assigned to the same cluster. Repeat the second step until $\Lambda= \emptyset$.
	
\end{enumerate}

\begin{figure*} 
	\centering
	\subfloat[Taking noise to be a separate cluster.]{\includegraphics[width=0.4\textwidth]{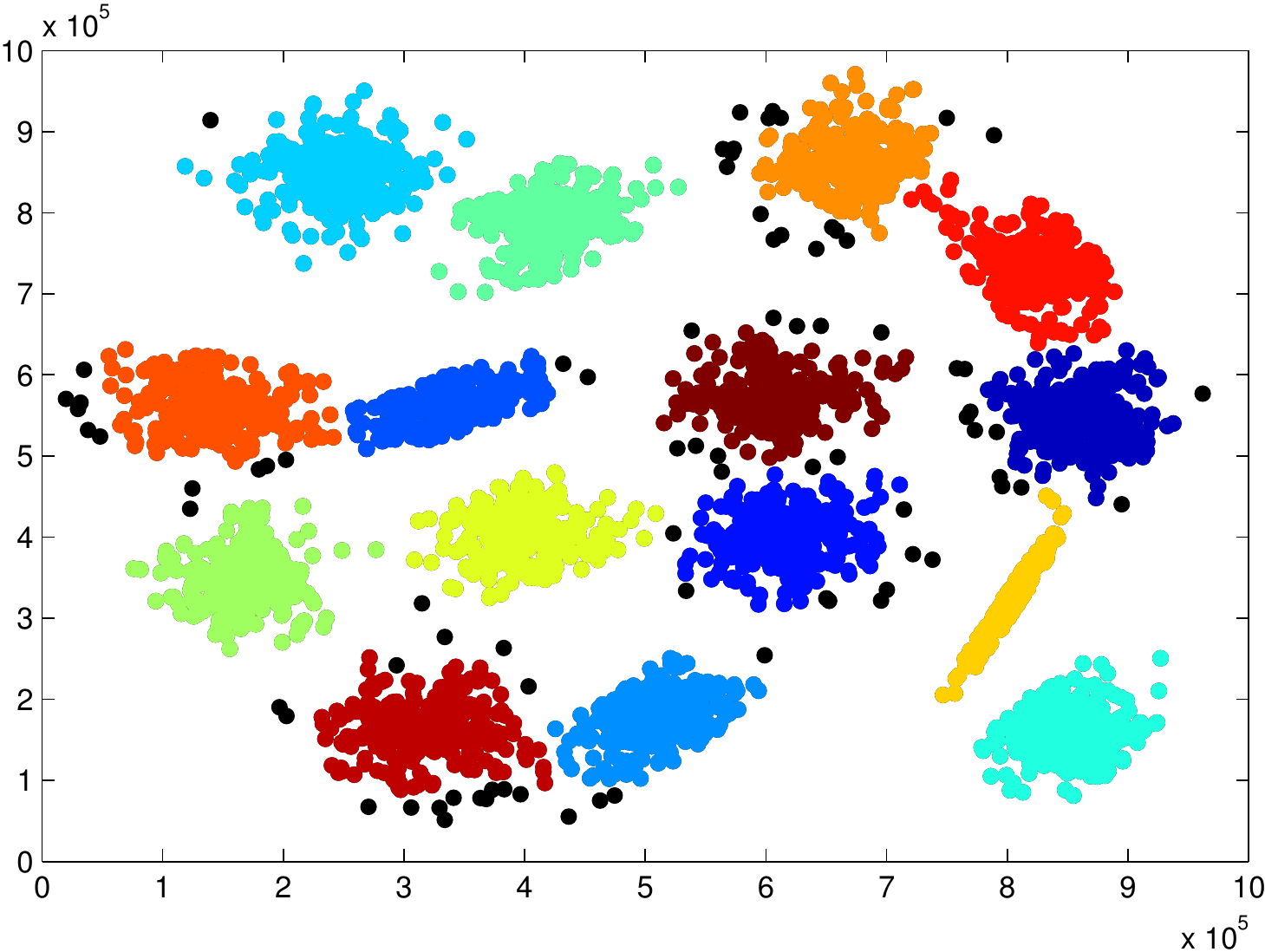}}
	\subfloat[Assigning noise to normal clusters based on the MST.]{\includegraphics[width=0.4\textwidth]{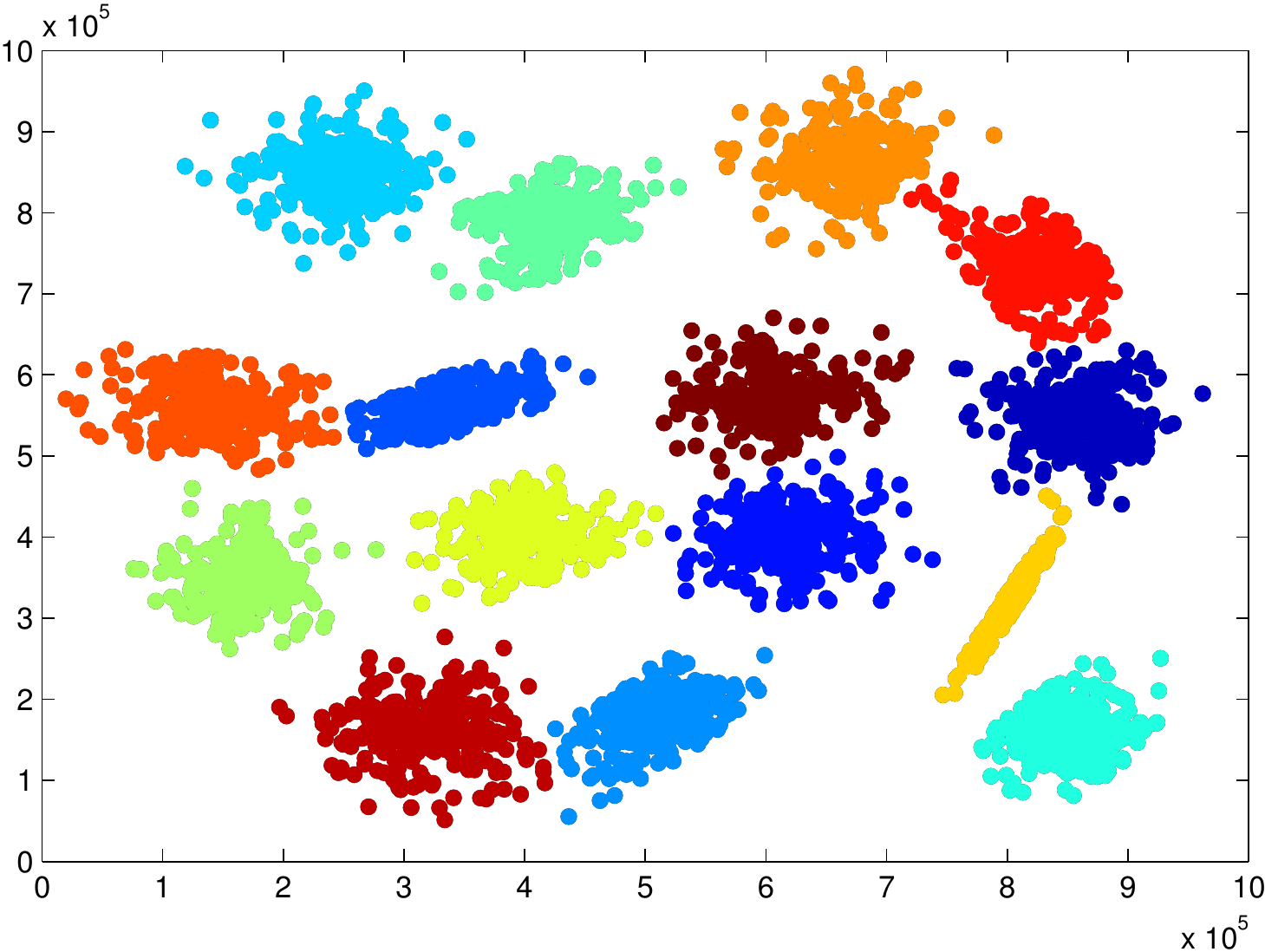}} 
	\caption{Clustering results on $S1$; black points are noise.}
	\label{fig5}
\end{figure*}

Figure~\ref{fig5} shows the clustering results of the two possibilities performed on dataset $s1$~\cite{franti2006iterative}, respectively.

\subsection{Estimating the number of clusters}
\label{subsec4}
The GOPC algorithm only needs to input one parameter, $k$ (i.e., the number of clusters), which is usually set manually. Sometimes $k$ is given as prior knowledge, but in most cases, we expect that the clustering algorithm can determine the correct number of clusters automatically.

In this subsection, we propose a simple but effective method to help find the number of clusters. This method is based on the fact that the medoid has a large value of $r_t(x)$. So, we record the maximum value of $r_t(x)$ for each epoch, as shown in \cref{fig6}. 

The vertical coordinates represent the maximum value of $r_t(x)$ for the $t$th epoch (denoted as $\max(R)$). \Cref{fig6} shows that with the increase in $t$, $\max(R)$ decreases gradually, and there is a cliff-like drop after the 15th epoch. The number of clusters is the threshold of the cliff. Note that the values in the last five epochs are not zero (they are too small to be observed when compared with the value of the medoids). It is very simple for a user to observe this cliff in a graphical representation. Therefore, \cref{fig6} can be used as a decision graph that helps in estimating the number of clusters.

\begin{figure}
	\centering	\includegraphics[width=0.4\textwidth]{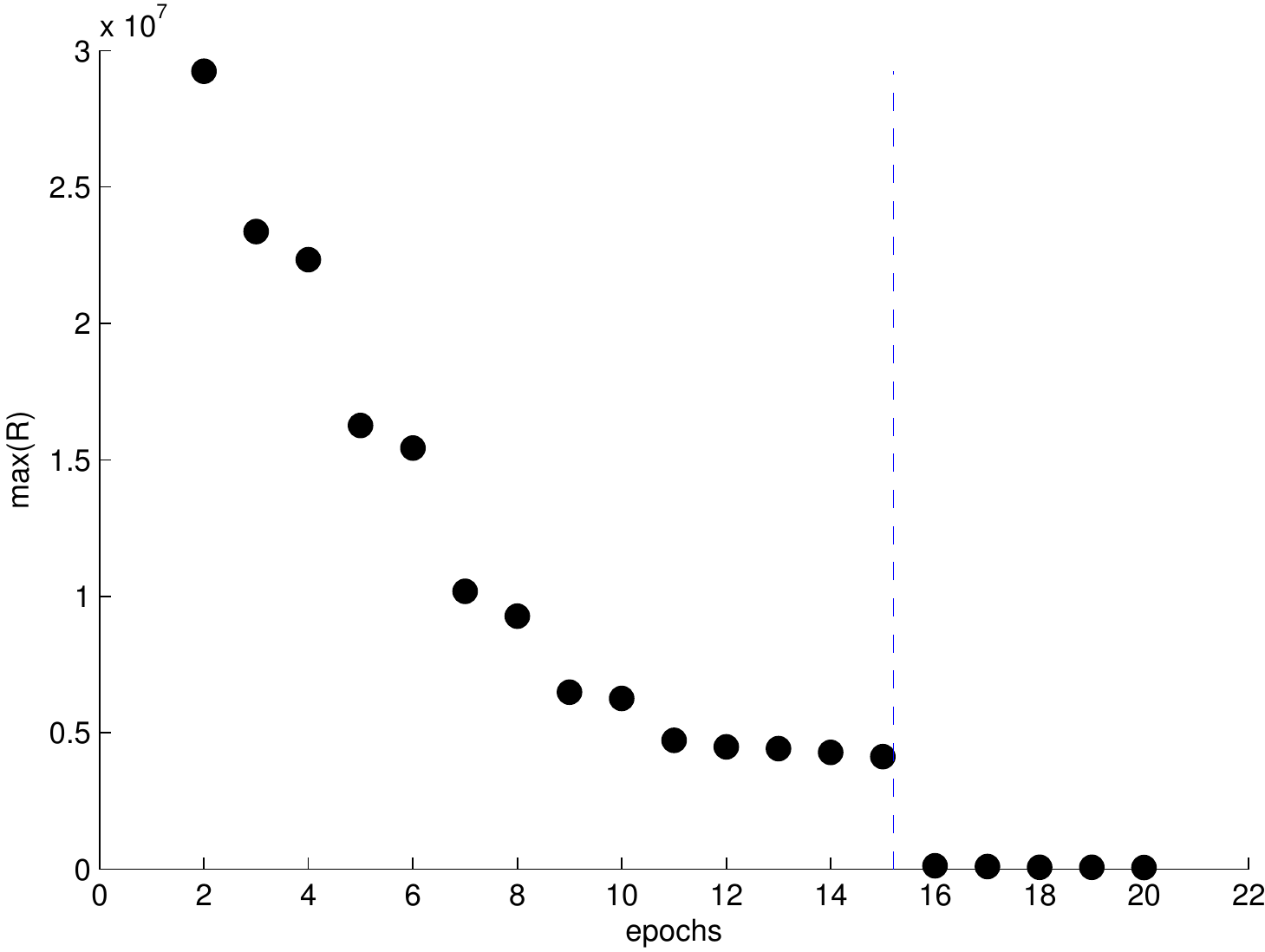}
	\caption{Experimental results on $S1$: the maximum value of $R$ ($R=r_t(x_1), r_t(x_2), \cdots, r_t(x_n)$) for each epoch.}
	\label{fig6}
\end{figure}

\section{Experiments}
\label{sec3}

In this section, we present the tests of the proposed algorithm on various synthetic and real-world datasets. State-of-the-art algorithms---single-linkage clustering (SLC)~\cite{Sneath1957The}, kernel $k$-means (Kernel KM)~\cite{Sch1998Nonlinear}, clustering by fast search and find of density peaks (FDPC)~\cite{rodriguez2014clustering}, and normalized cut (NCUT)~\cite{Shi2000Normalized}---were used for baseline comparison. SLC is a method for hierarchical clustering. The naive algorithm for SLC is essentially the same as Kruskal's algorithm for MST~\cite{Gower1969Minimum}. The kernel KM algorithm is a generalization of the standard $k$-means algorithm. It can detect clusters that are nonlinearly separable in input space by using a kernel function (here we use only the Gaussian kernel) to map data points to a higher dimensional feature space~\cite{Dhillon2007Weighted}. NCUT utilizes the spectrum (i.e., eigenvalues) of the similarity matrix of the data to map the data into a space in which the objects can be clustered by traditional clustering techniques~\cite{Wang2016Automatic}. This algorithm performs very well at non-convex boundaries. FDPC is based on the idea that cluster centers are characterized by a higher density than their neighbors and by a relatively large distance from points with higher densities~\cite{rodriguez2014clustering}. This method can detect non-spherical clusters. The source code for the kernel KM\footnote{http://www.dcs.gla.ac.uk/$\sim$srogers/firstcourseml/matlab/}, FDPC\footnote{http://people.sissa.it/$\sim$laio/Research/Research.php}, and NCUT\footnote{http://www.cis.upenn.edu/$\sim$jshi/software/} algorithms are taken from the author's website. SLC is implemented using a Python-based ecosystem of open-source software SciPy\footnote{https://docs.scipy.org/doc/scipy-0.14.0/reference/cluster.html}. The FDPC algorithm needs a parameter $d_c$. As the authors suggested, one can choose $d_c$ so that the average number of neighbors is approximately $1\%$ to $2\%$ of the total number of points in the dataset. Both the GOPC and SLC algorithm need the same parameter---the number of clusters $k$---which is given as prior knowledge. The remaining two algorithms (Kernel KM and NCUT) need not only the parameter $k$, but also another parameter $\sigma$ to control the width of the radial basis function (RBF) kernel. Parameter $\sigma$ is selected by trial and error. Because the Kernel KM algorithm may fall into a local optimum, we run it ten times for the proper $\sigma$ and reported the best results. 

\subsection{Experiments on synthetic datasets}
\label{subsec3.1}
To access the efficacy of the GOPC algorithm for clustering tasks, we first performed some experiments on synthetic datasets. As two randomly distributed datasets generated by us, $DS1$ consisted of three linear shaped clusters, while $DS2$ consisted of three concentric circles. $Spiral$, $Aggregation$, $Flame$, $Jain$, $Unbalance$, $R15$, $D31$, $A3$, and $S1-S2$ were obtained from University of Eastern Finland website\footnote{http://cs.joensuu.fi/sipu/datasets/}. $PanelB$ was taken from Ref.~\cite{rodriguez2014clustering}. The others were taken from Ref.~\cite{Liu2016K}. Some of these datasets had different shapes (e.g., $Spiral$), some had different sizes (e.g., $Unbalance$), some had different densities (e.g., $Jain$), some were randomly distributed (e.g., $DS1$ and $DS2$), and some were normally distributed (e.g., $R15$ and $D31$). More importantly, there were also some connected clusters (e.g., $Aggregation$). Through experiments on these datasets, we tested the robustness of the GOPC algorithm. \Cref{tab1} gives a brief description of the datasets used. 

\begin{table}
	\centering
	\caption{Description of datasets used}
	\label{tab1}
	\begin{tabular}{c c c c}
		\hline
		Datasets & Data size & Dimensionality & Number of clusters \\
		\hline
		DS1 & 695 & 2 & 3 \\
		DS2 & 385 & 2 & 3 \\
		Spiral & 312 & 2 & 3 \\
		Aggregation & 788 & 2 & 7 \\
		Jain & 373 & 2 & 2 \\
		Flame & 240 & 2 & 2 \\
		Unbalance & 6500 & 2 & 8 \\
		R15 & 600 & 2 & 15 \\
		D31 & 3100 & 2 & 31 \\
		S1 & 5000 & 2 & 15 \\
		S2 & 5000 & 2 & 15 \\
		A3 & 7500 & 2 & 50 \\
		PanelB & 4000 & 2 & 5 \\
		Lyga & 1500 & 2 & 3 \\
		Lygd & 546 & 2 & 2 \\
		\hline
	\end{tabular}
\end{table}

\begin{figure*}
	\begin{minipage}{0\linewidth}
		\rightline{A}
	\end{minipage}
	\hfill
	\begin{minipage}{0.19\linewidth}
		\centerline{\includegraphics[width=1\textwidth]{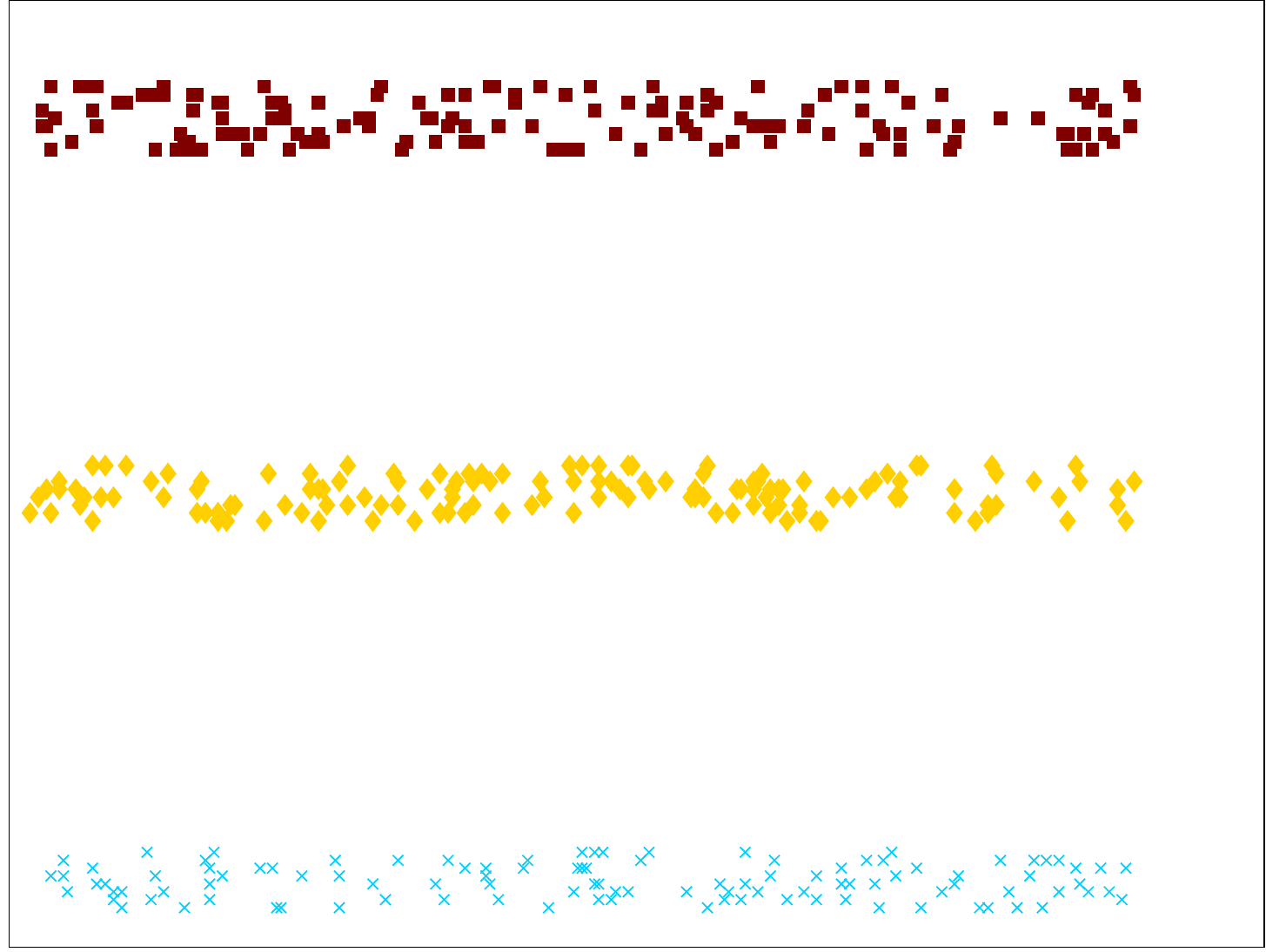}}
	\end{minipage}
	\hfill
	\begin{minipage}{0.19\linewidth}
		\centerline{\includegraphics[width=1\textwidth]{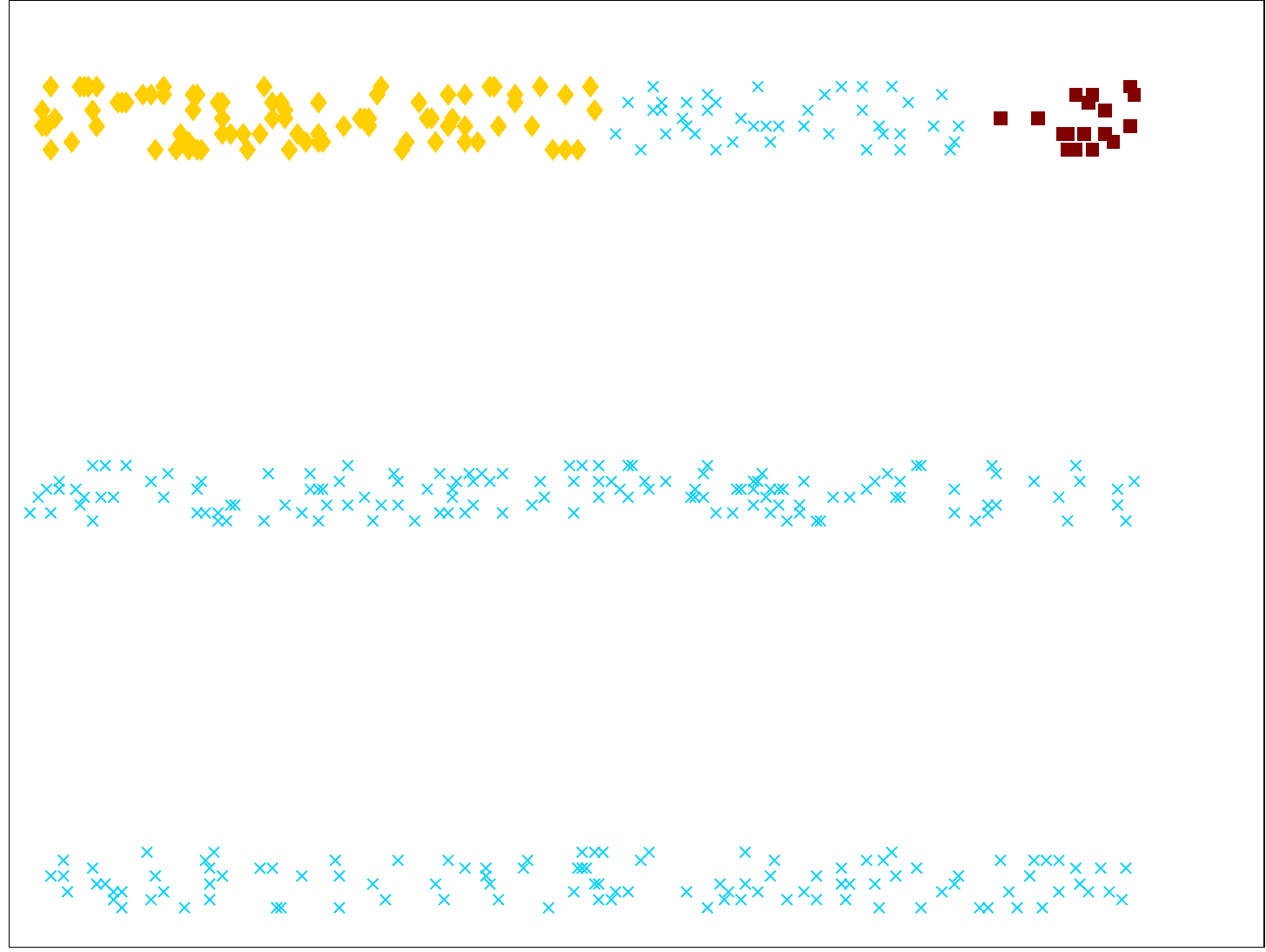}}
	\end{minipage}
	\hfill
	\begin{minipage}{0.19\linewidth}
		\centerline{\includegraphics[width=1\textwidth]{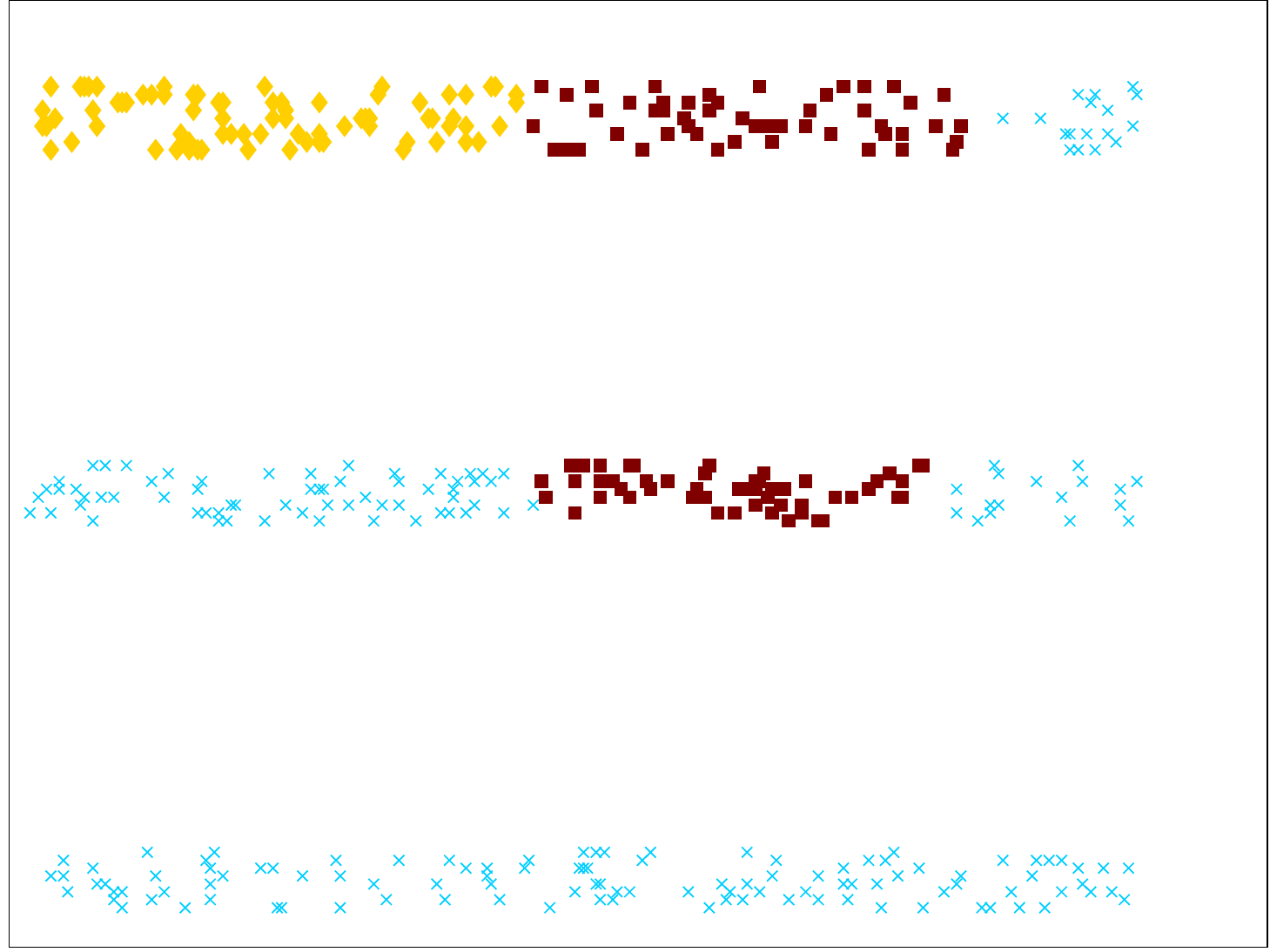}}
	\end{minipage}
	\hfill
	\begin{minipage}{0.19\linewidth}
		\centerline{\includegraphics[width=1\textwidth]{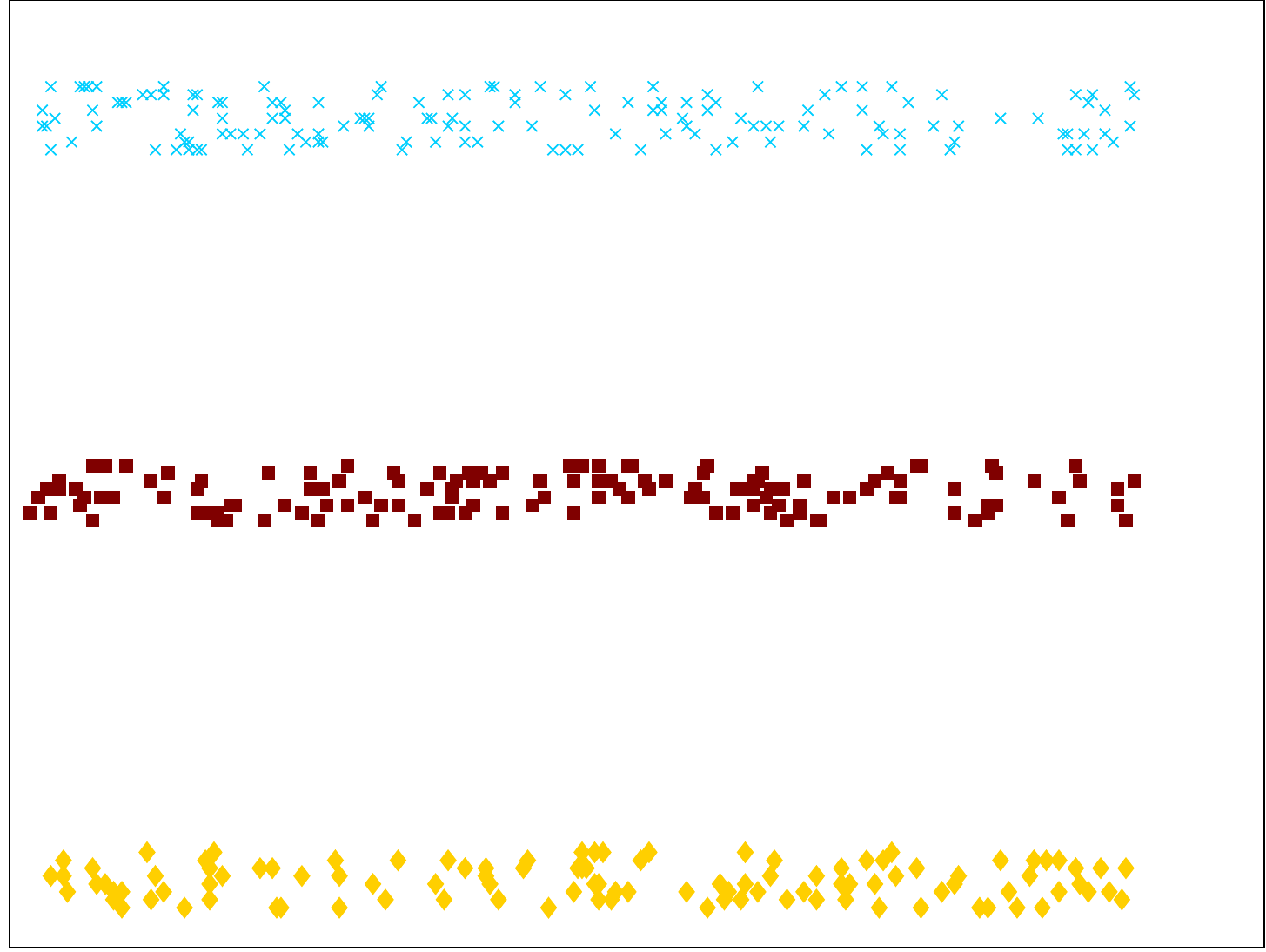}}
	\end{minipage}
	\hfill
	\begin{minipage}{0.19\linewidth}
		\centerline{\includegraphics[width=1\textwidth]{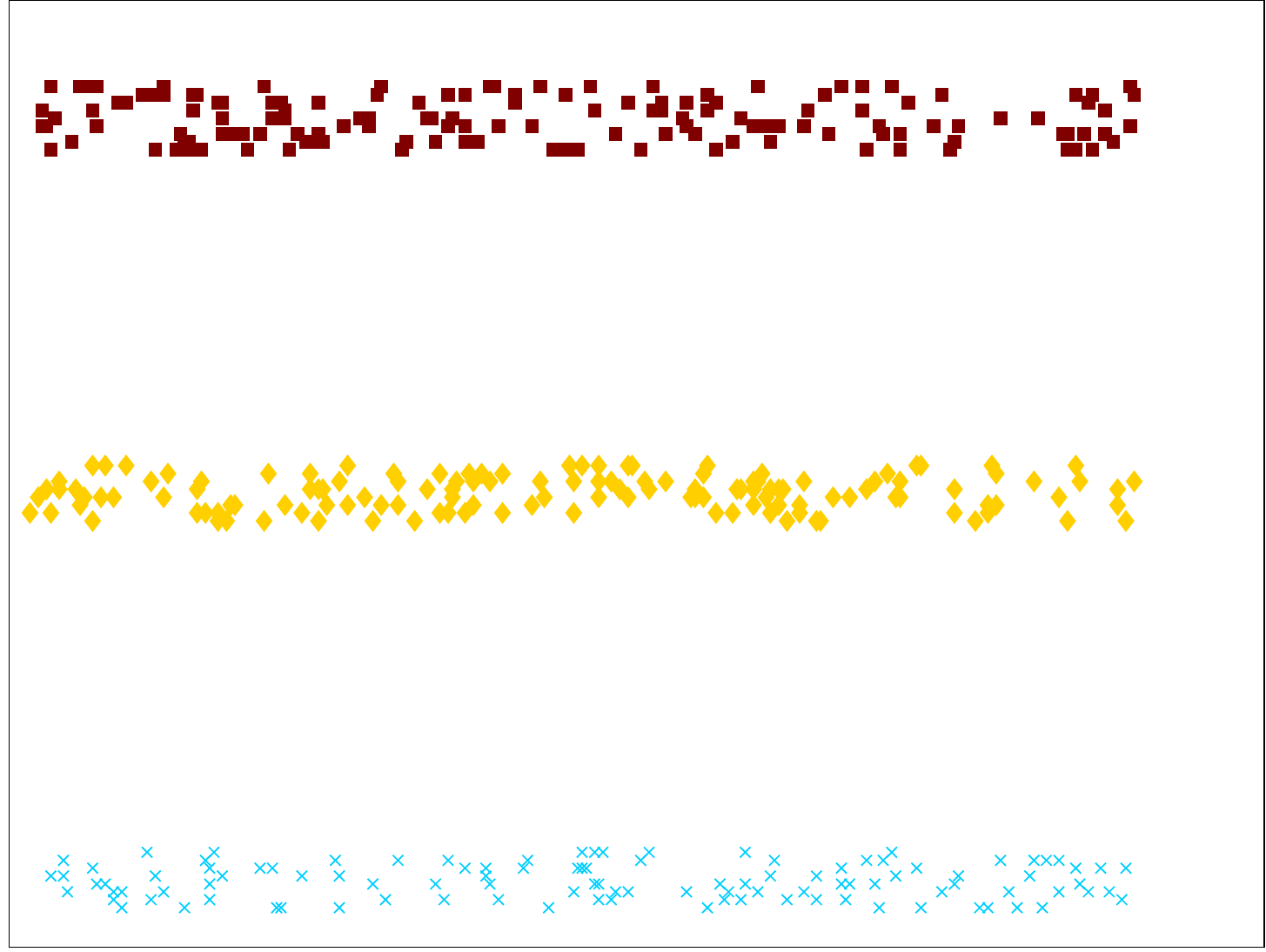}}
	\end{minipage}
	\vfill
	\begin{minipage}{0\linewidth}
		\rightline{B}
	\end{minipage}
	\hfill
	\begin{minipage}{0.19\linewidth}
		\centerline{\includegraphics[width=1\textwidth]{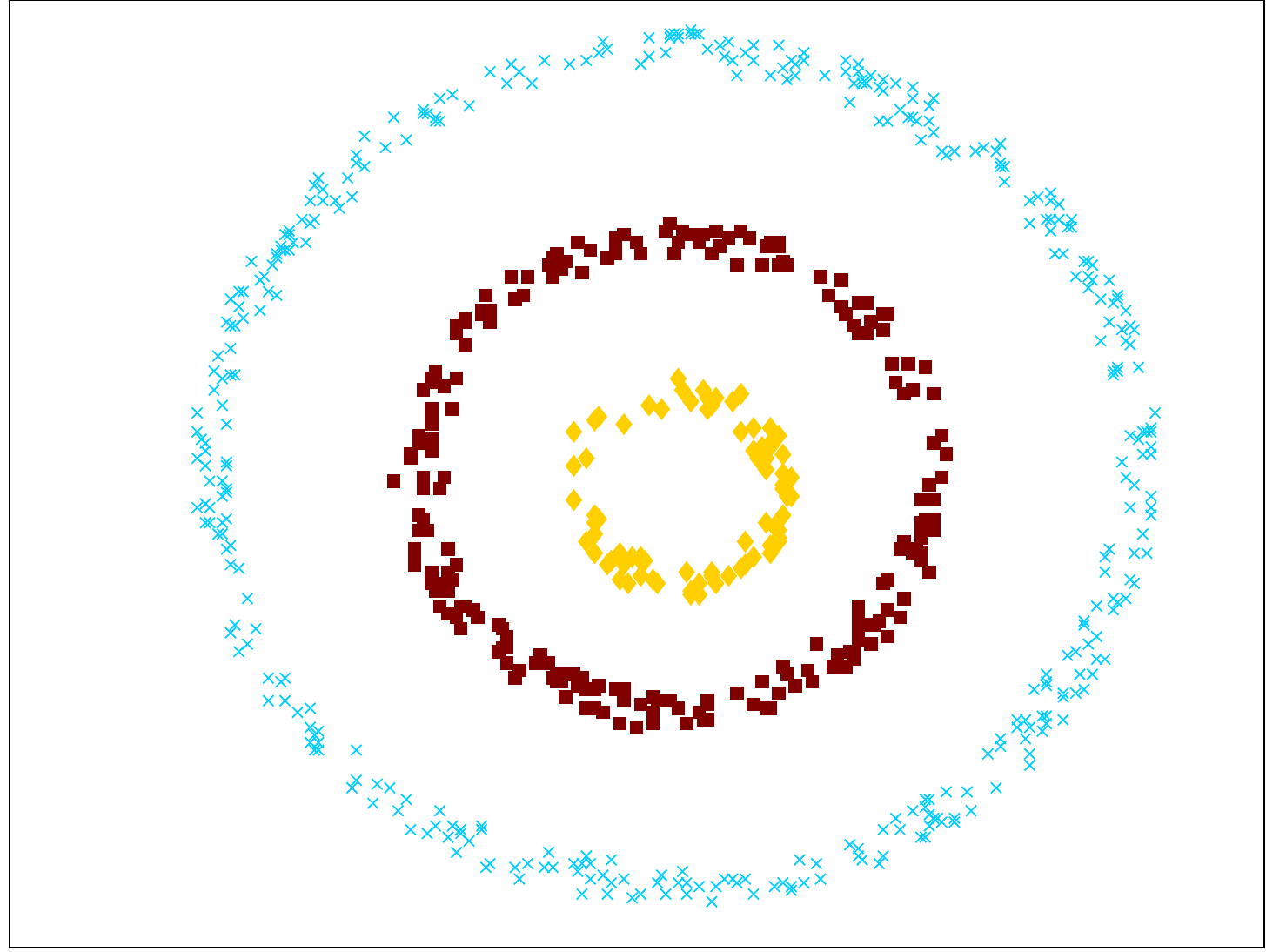}}
	\end{minipage}
	\hfill
	\begin{minipage}{0.19\linewidth}
		\centerline{\includegraphics[width=1\textwidth]{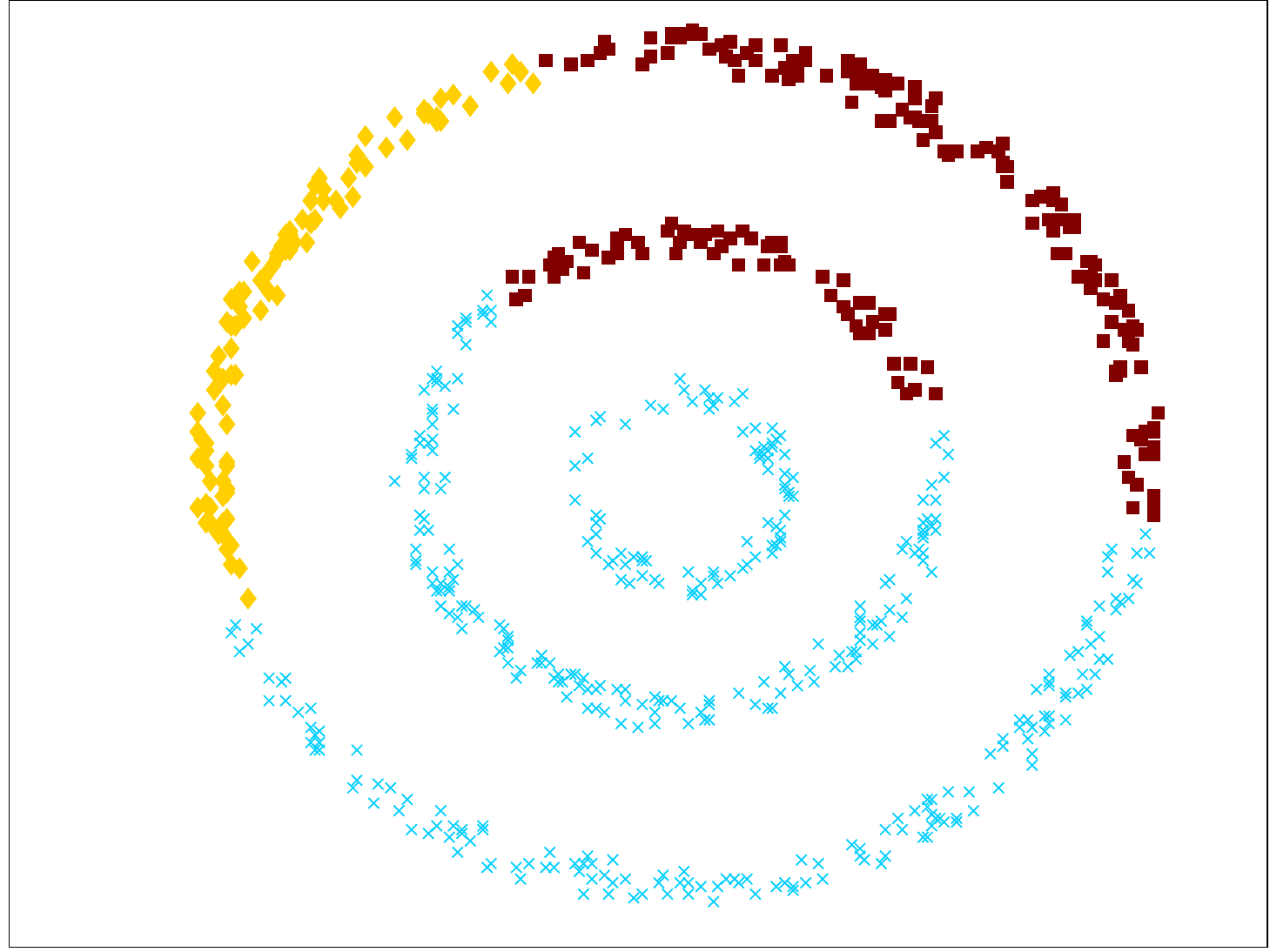}}
	\end{minipage}
	\hfill
	\begin{minipage}{0.19\linewidth}
		\centerline{\includegraphics[width=1\textwidth]{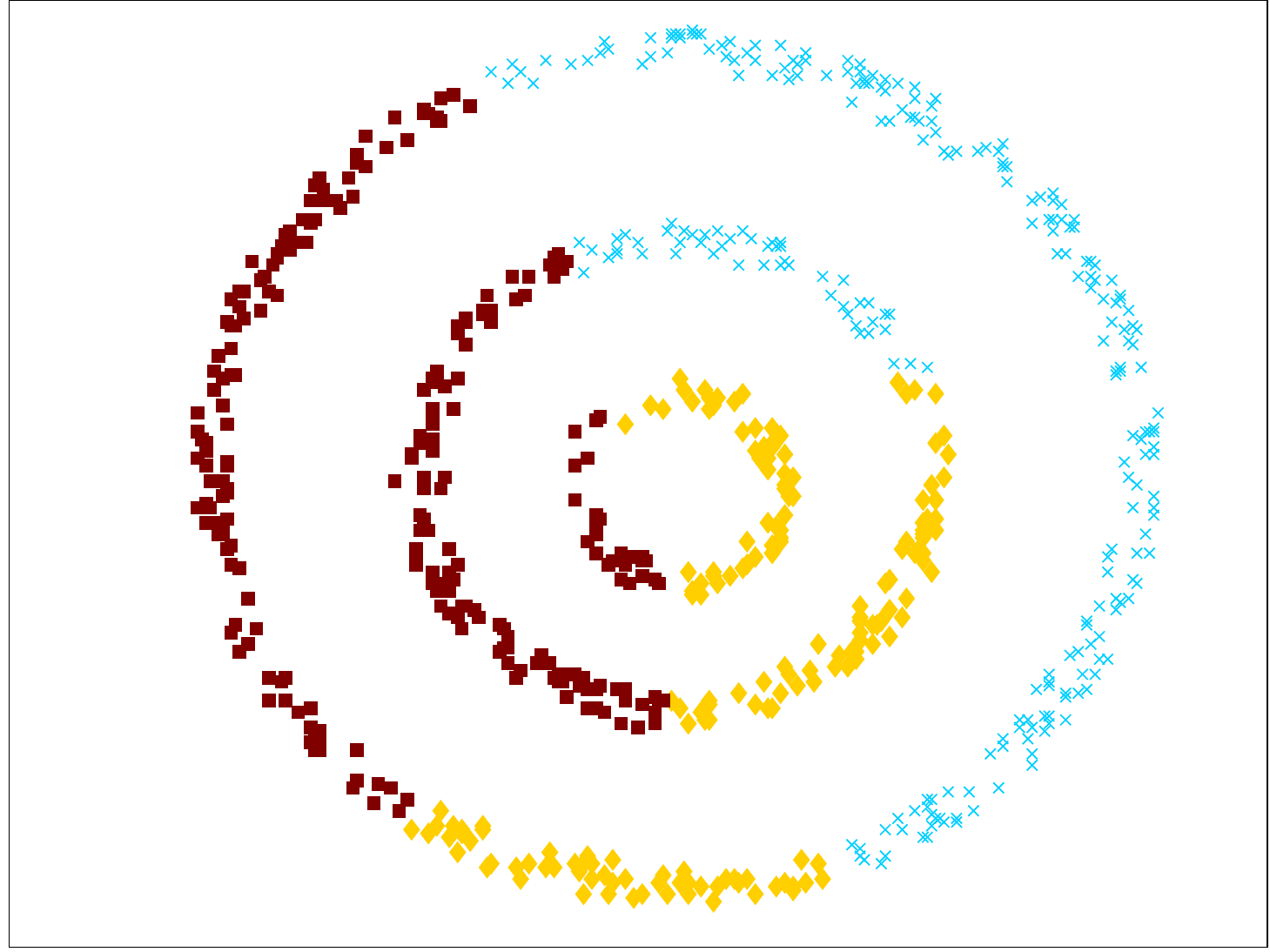}}
	\end{minipage}
	\hfill
	\begin{minipage}{0.19\linewidth}
		\centerline{\includegraphics[width=1\textwidth]{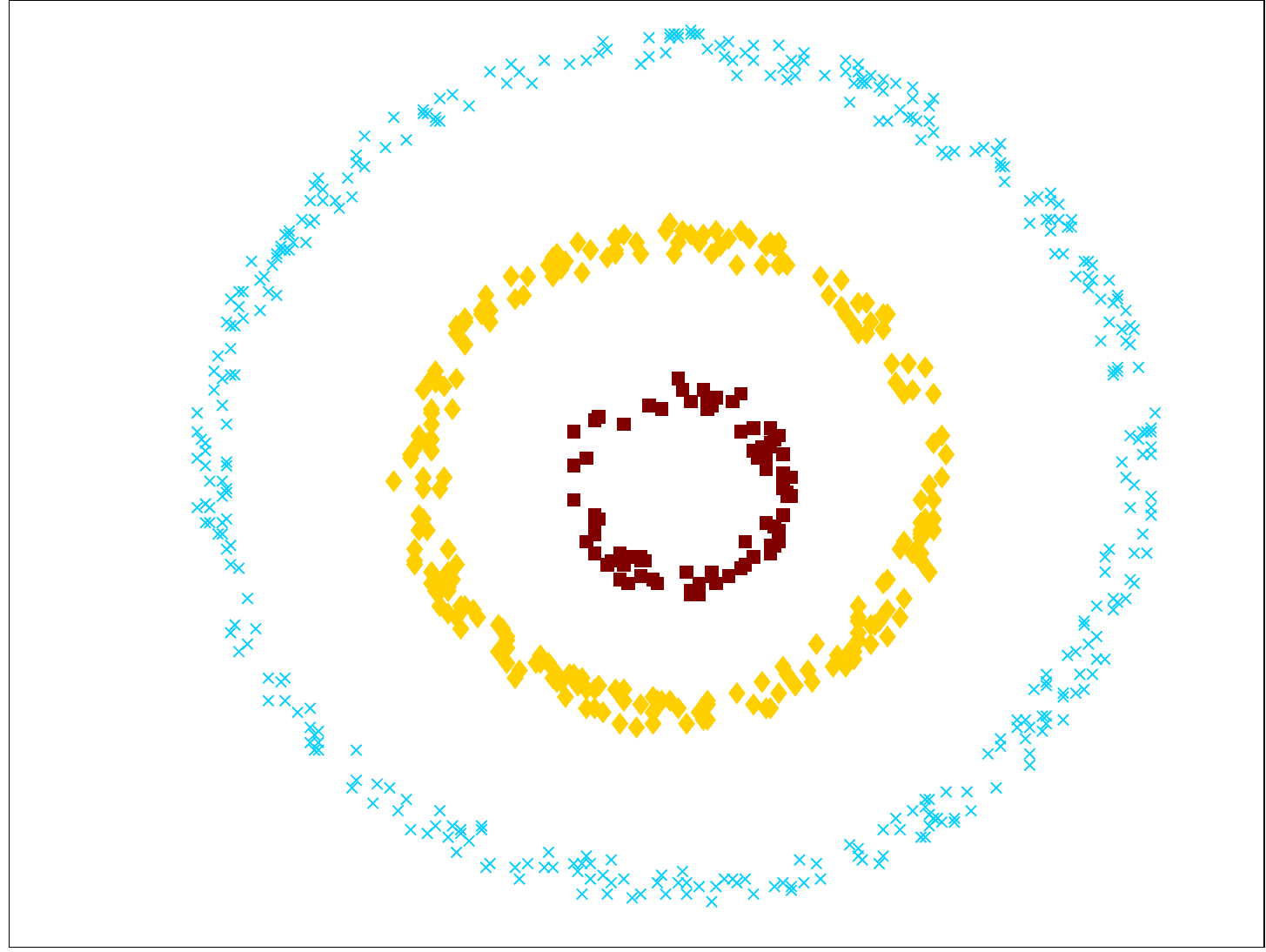}}
	\end{minipage}
	\hfill
	\begin{minipage}{0.19\linewidth}
		\centerline{\includegraphics[width=1\textwidth]{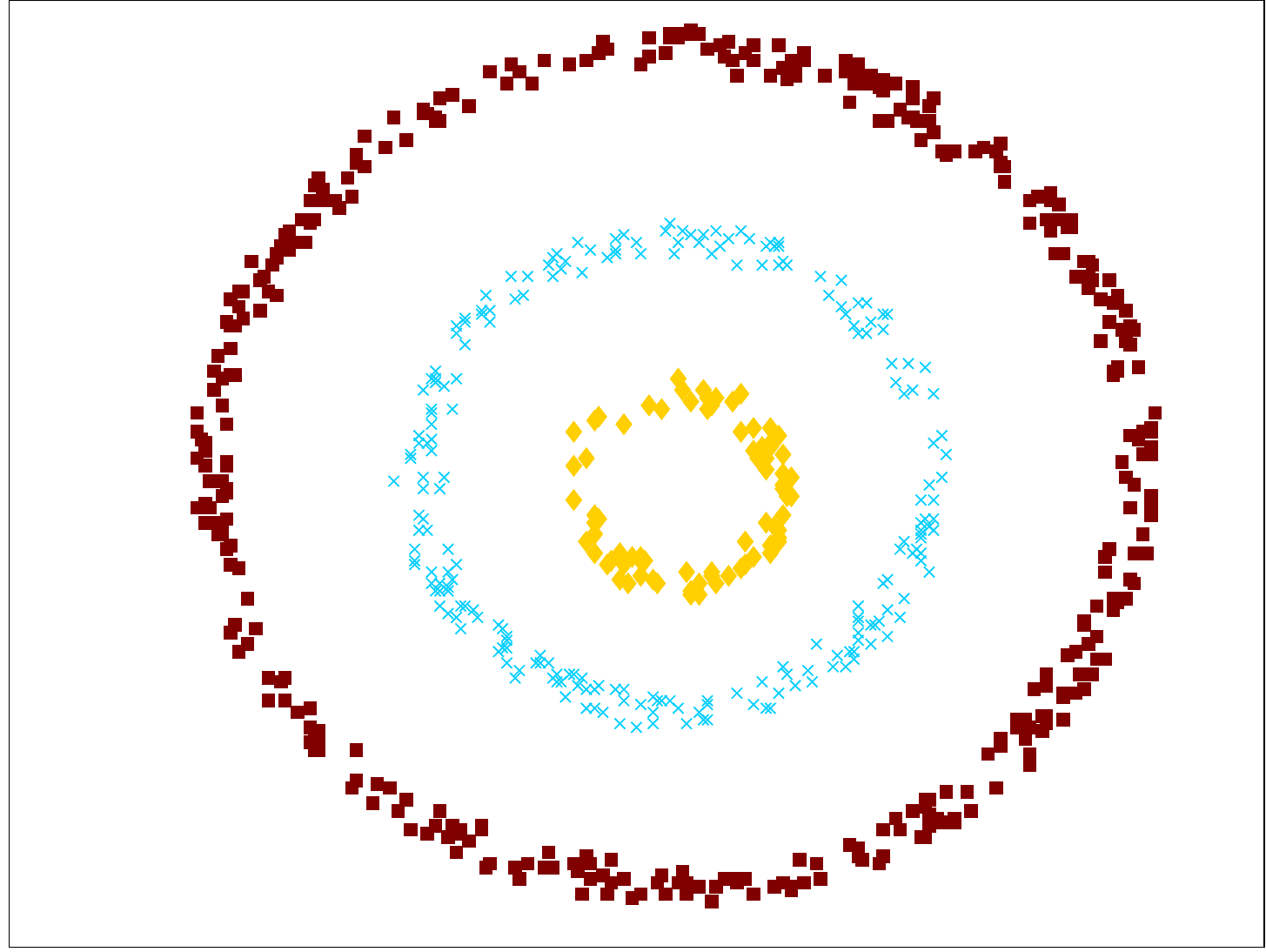}}
	\end{minipage}
	\vfill
	\begin{minipage}{0\linewidth}
		\rightline{C}
	\end{minipage}
	\hfill
	\begin{minipage}{0.19\linewidth}
		\centerline{\includegraphics[width=1\textwidth]{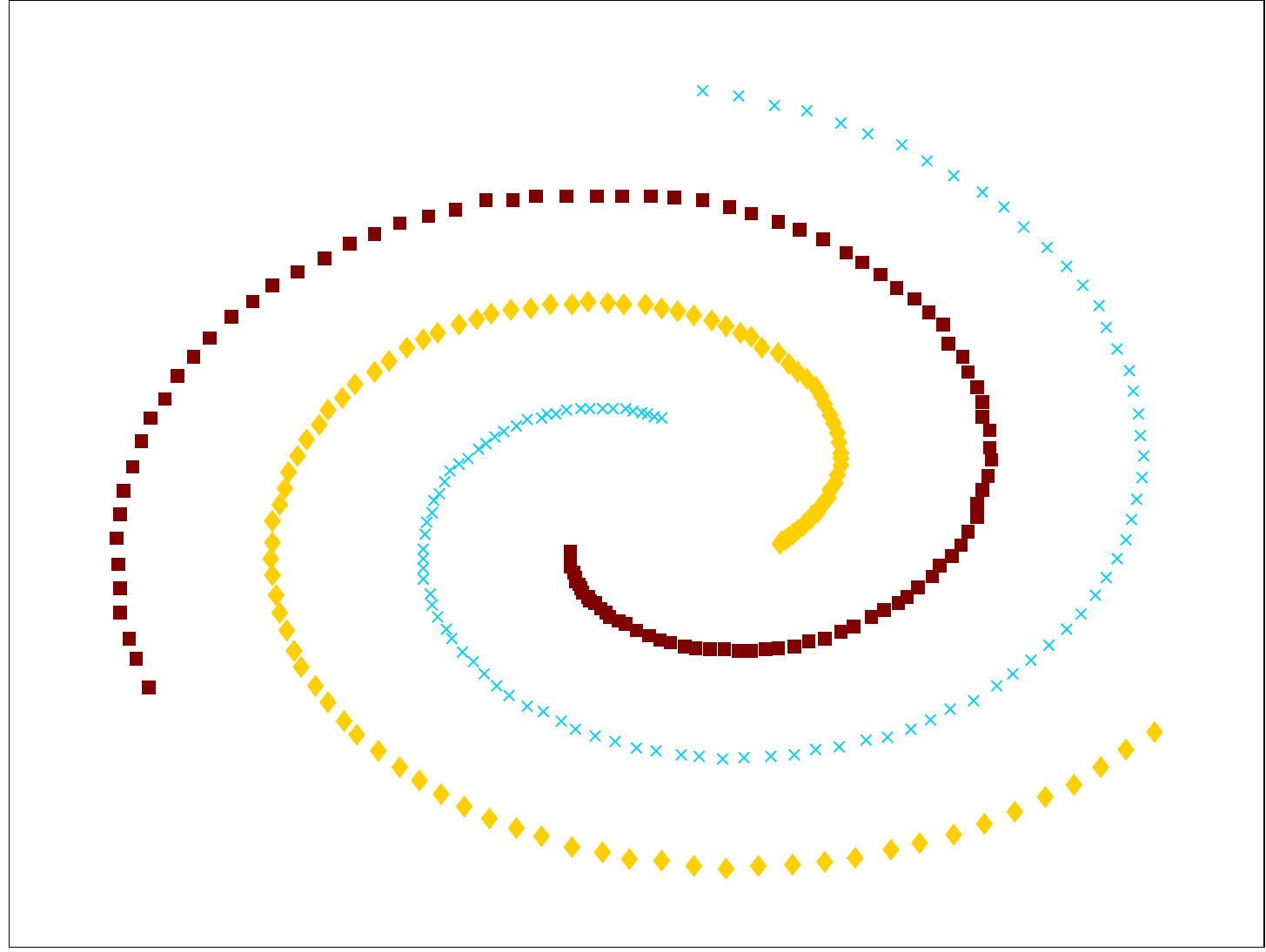}}
	\end{minipage}
	\hfill
	\begin{minipage}{0.19\linewidth}
		\centerline{\includegraphics[width=1\textwidth]{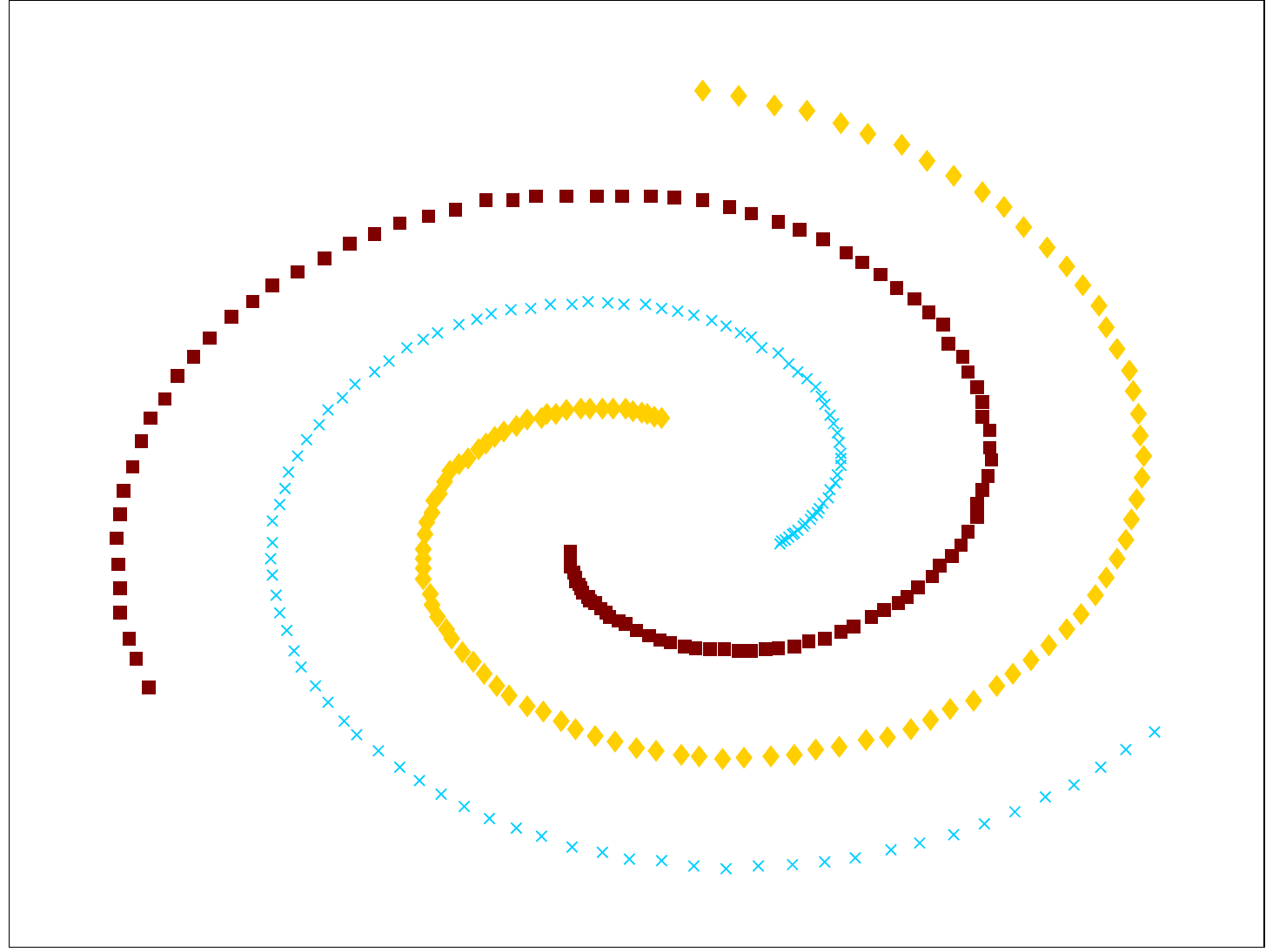}}
	\end{minipage}
	\hfill
	\begin{minipage}{0.19\linewidth}
		\centerline{\includegraphics[width=1\textwidth]{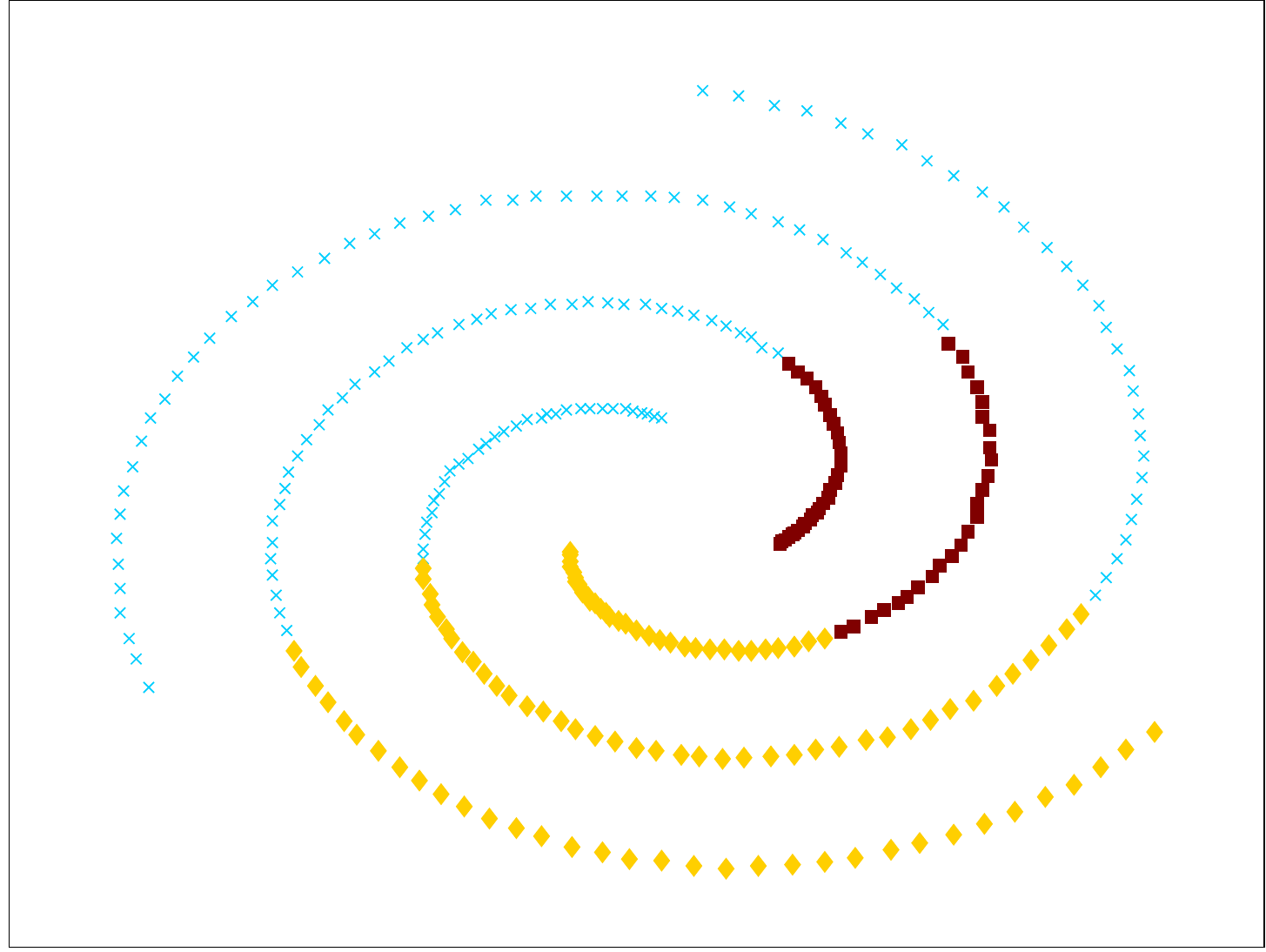}}
	\end{minipage}
	\hfill
	\begin{minipage}{0.19\linewidth}
		\centerline{\includegraphics[width=1\textwidth]{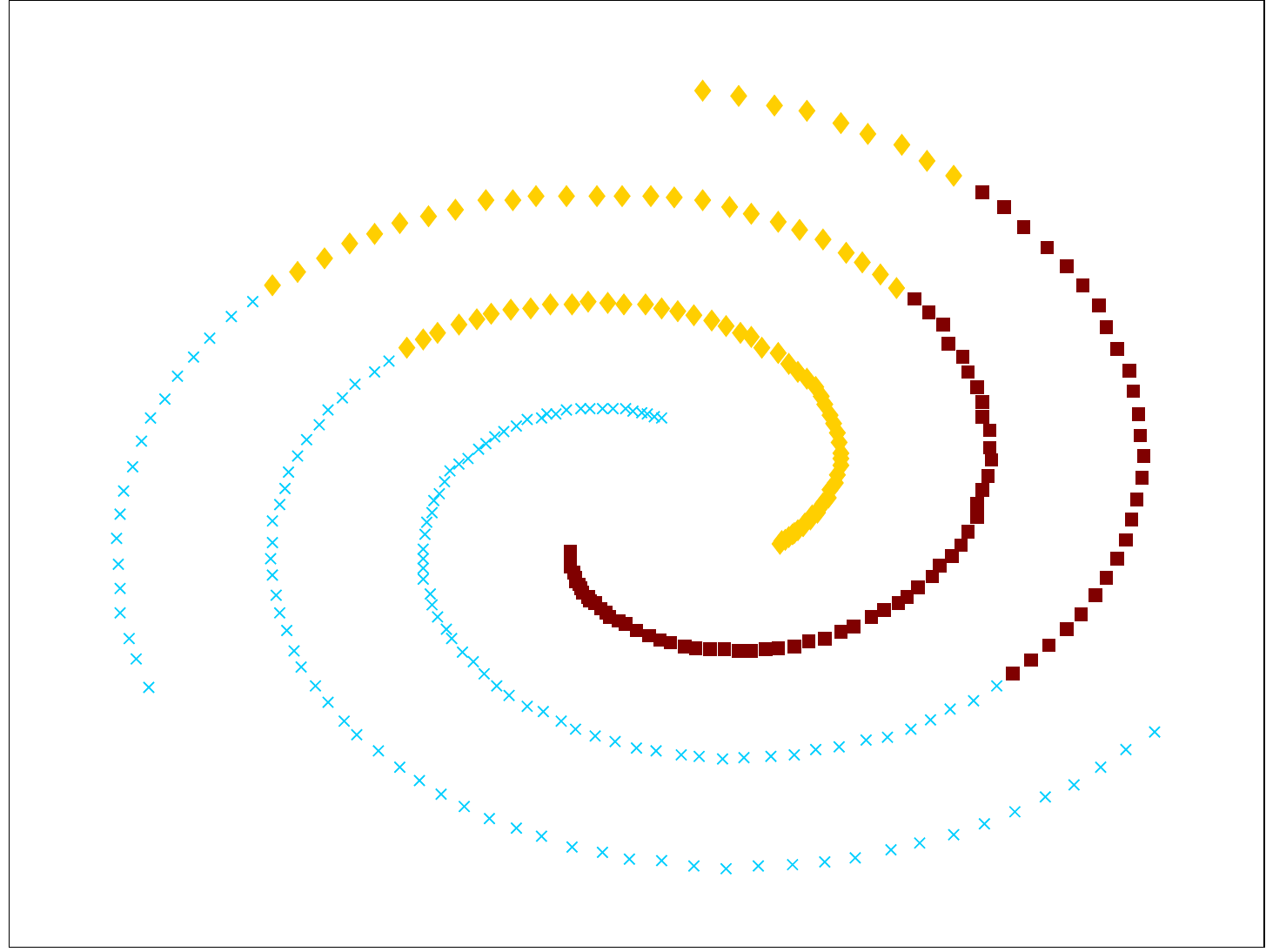}}
	\end{minipage}
	\hfill
	\begin{minipage}{0.19\linewidth}
		\centerline{\includegraphics[width=1\textwidth]{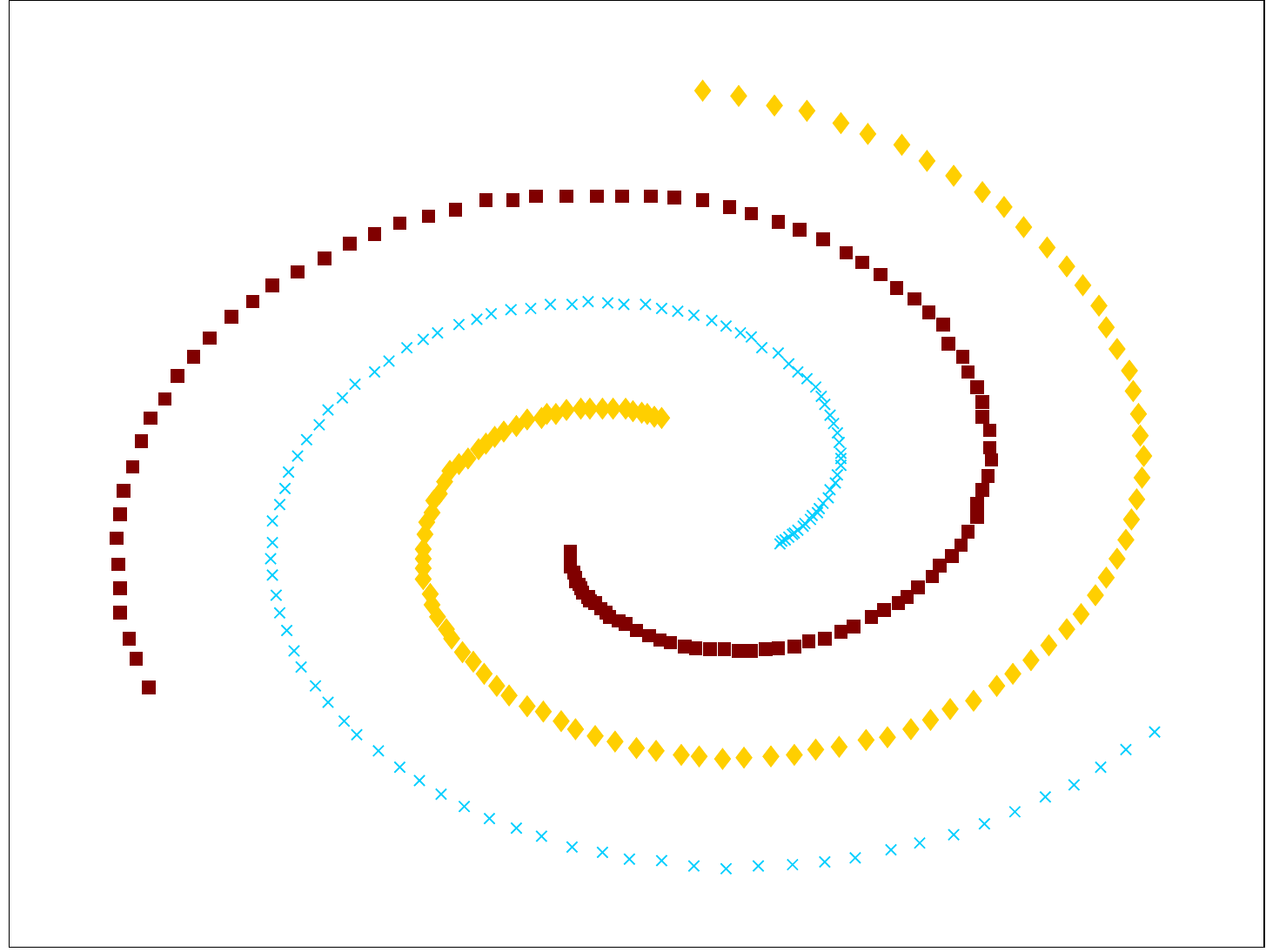}}
	\end{minipage}
	\vfill
	\begin{minipage}{0\linewidth}
		\rightline{D}
	\end{minipage}
	\hfill
	\begin{minipage}{0.19\linewidth}
		\centerline{\includegraphics[width=1\textwidth]{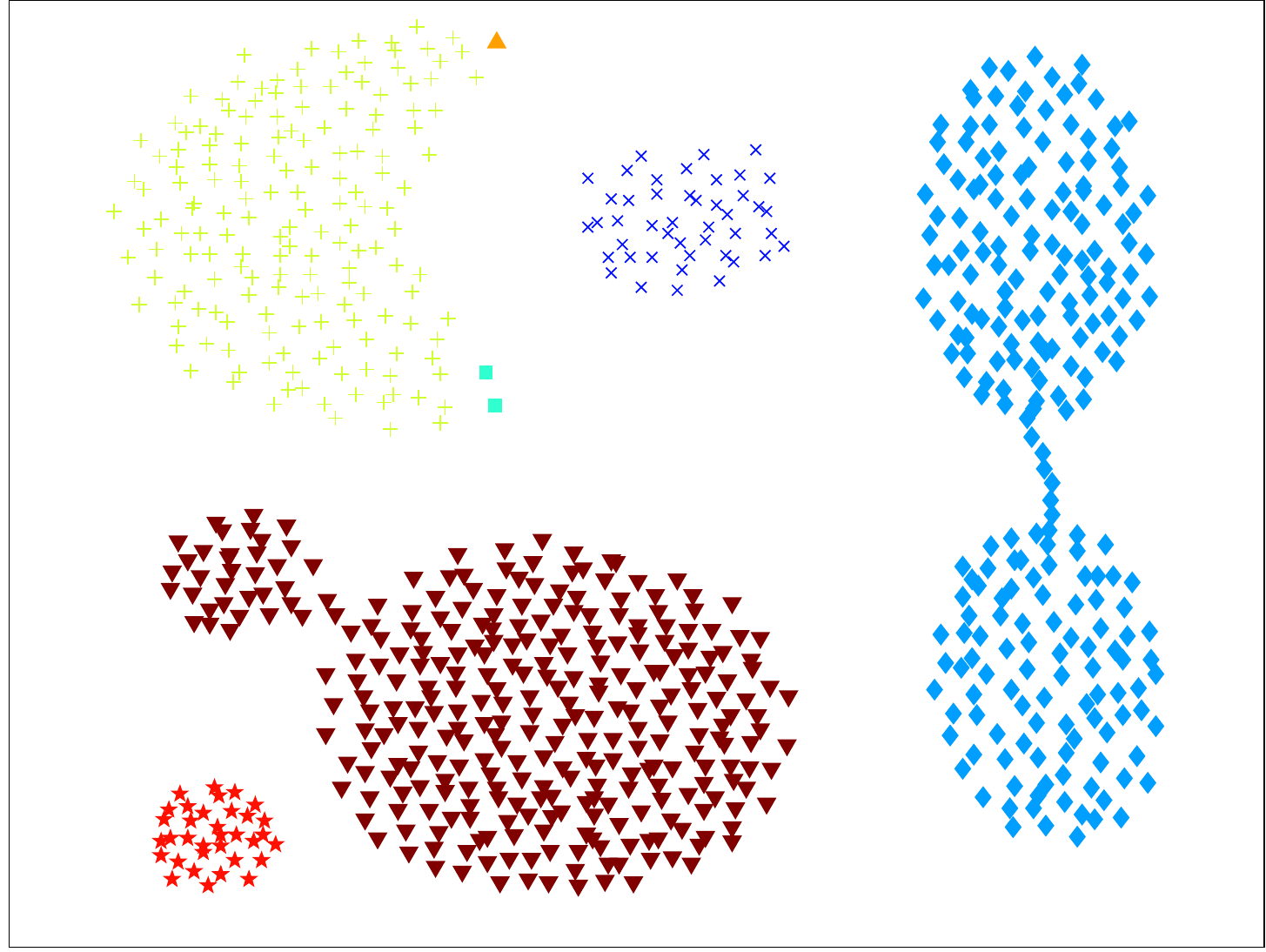}}
	\end{minipage}
	\hfill
	\begin{minipage}{0.19\linewidth}
		\centerline{\includegraphics[width=1\textwidth]{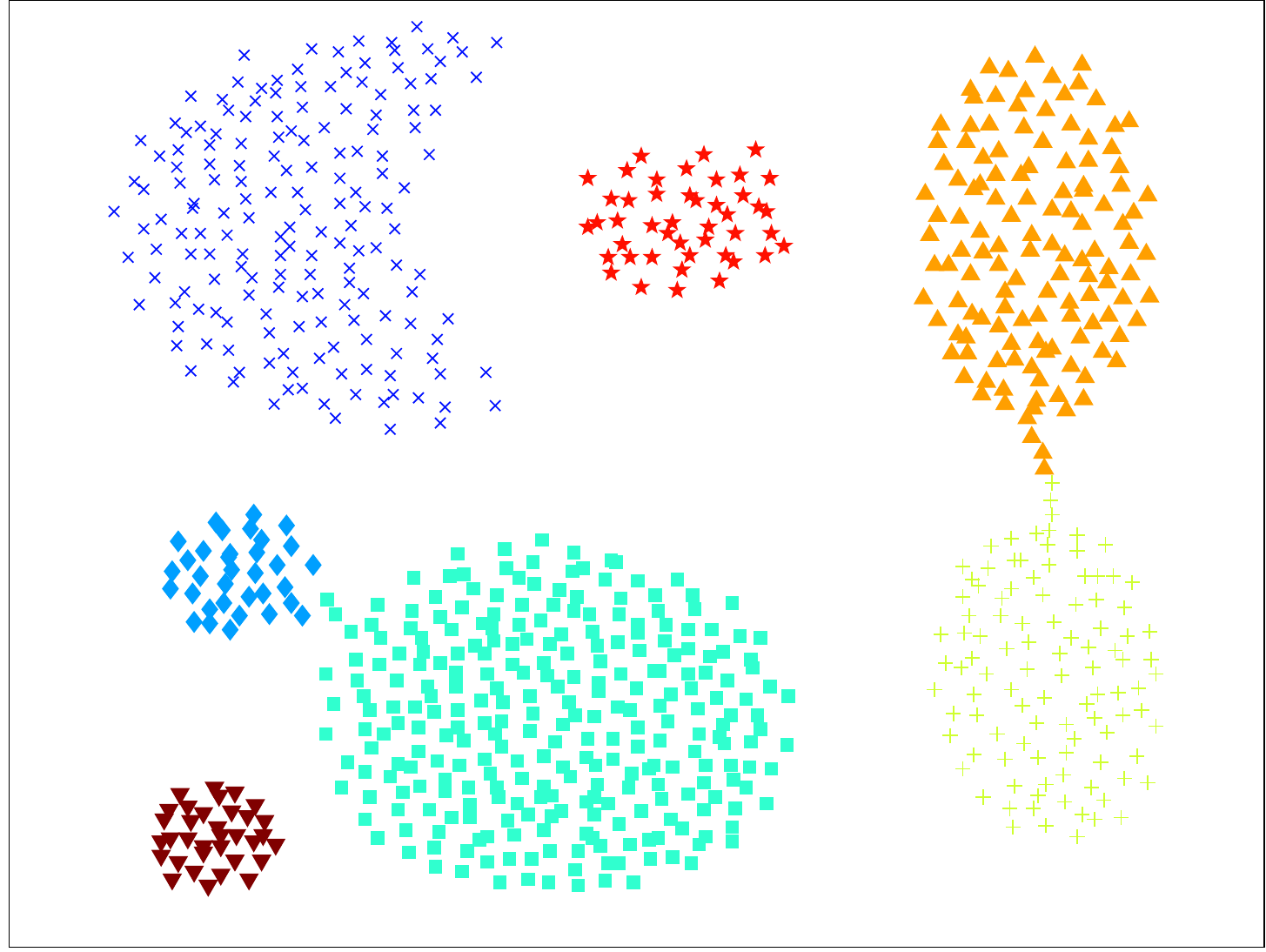}}
	\end{minipage}
	\hfill
	\begin{minipage}{0.19\linewidth}
		\centerline{\includegraphics[width=1\textwidth]{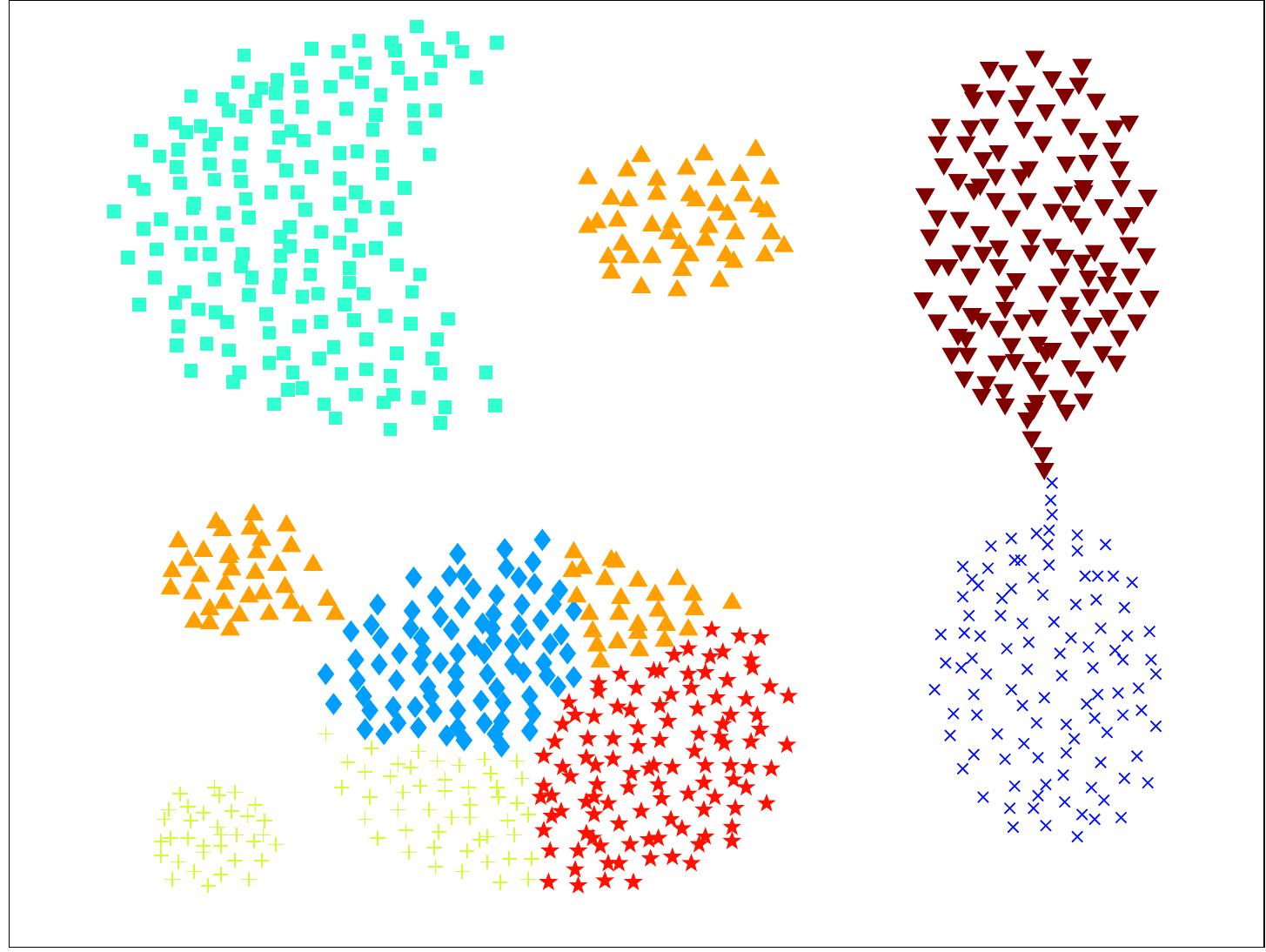}}
	\end{minipage}
	\hfill
	\begin{minipage}{0.19\linewidth}
		\centerline{\includegraphics[width=1\textwidth]{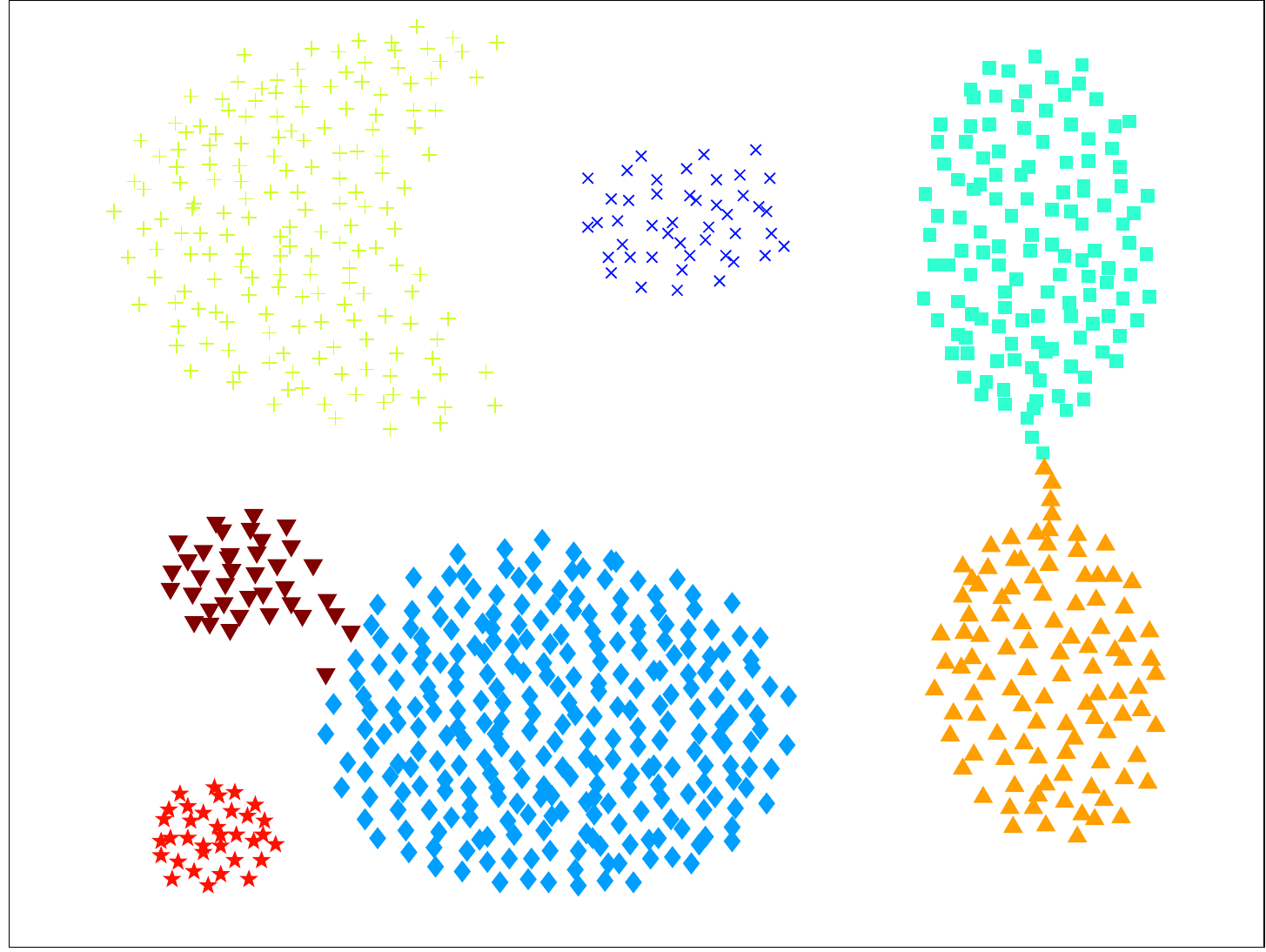}}
	\end{minipage}
	\hfill
	\begin{minipage}{0.19\linewidth}
		\centerline{\includegraphics[width=1\textwidth]{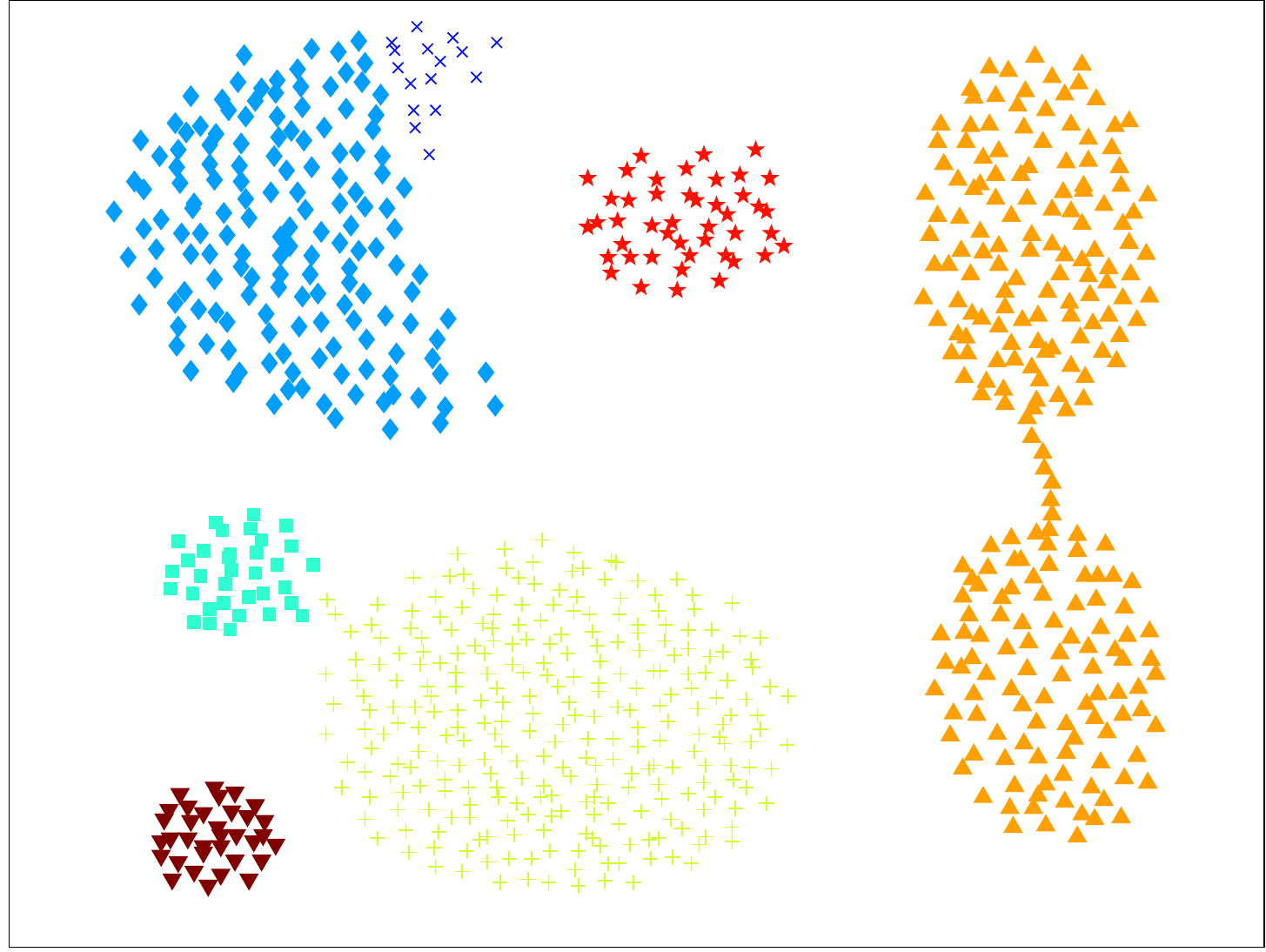}}
	\end{minipage}
	\vfill
	\begin{minipage}{0\linewidth}
		\rightline{E}
	\end{minipage}
	\hfill
	\begin{minipage}{0.19\linewidth}
		\centerline{\includegraphics[width=1\textwidth]{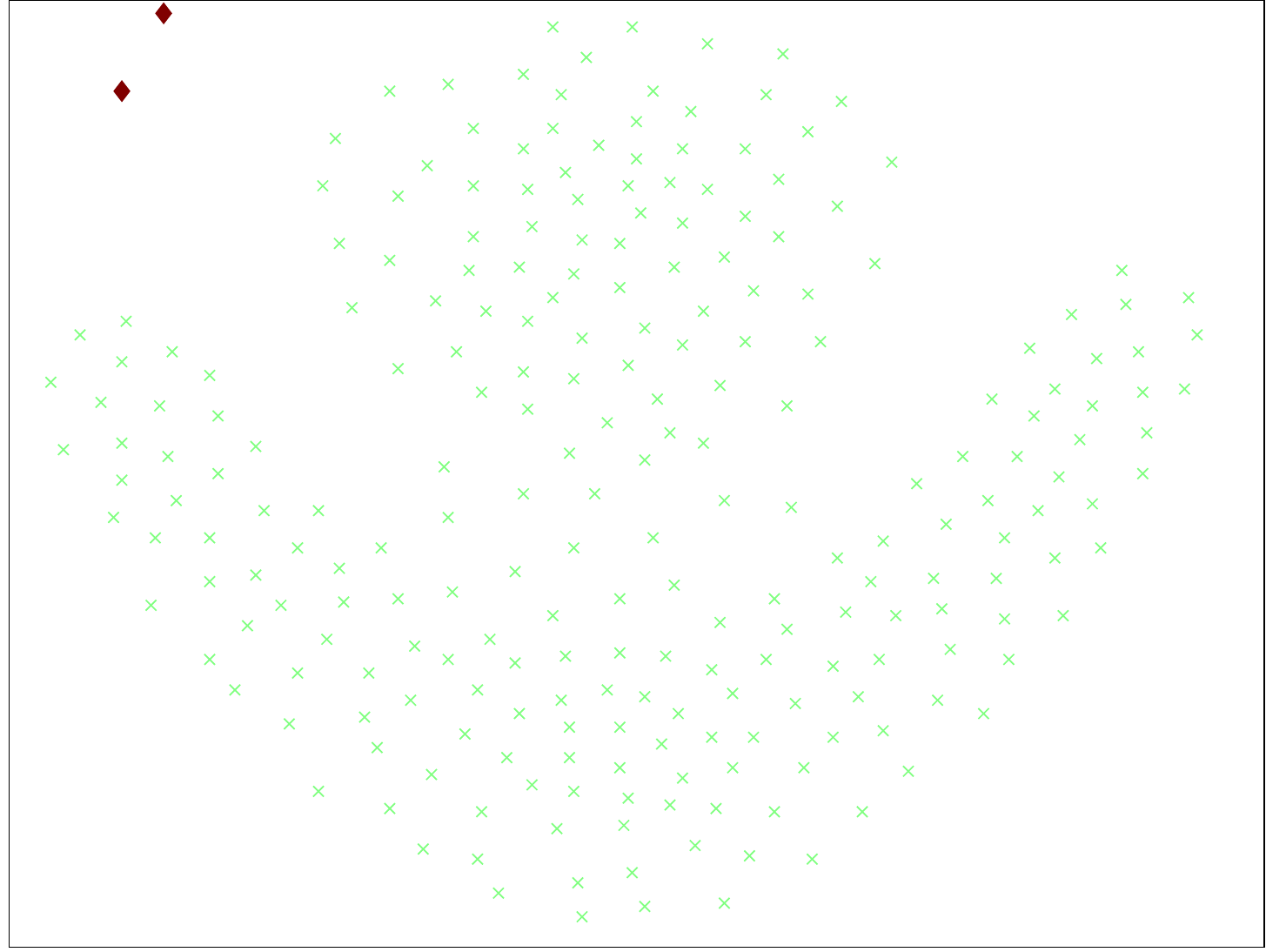}}
	\end{minipage}
	\hfill
	\begin{minipage}{0.19\linewidth}
		\centerline{\includegraphics[width=1\textwidth]{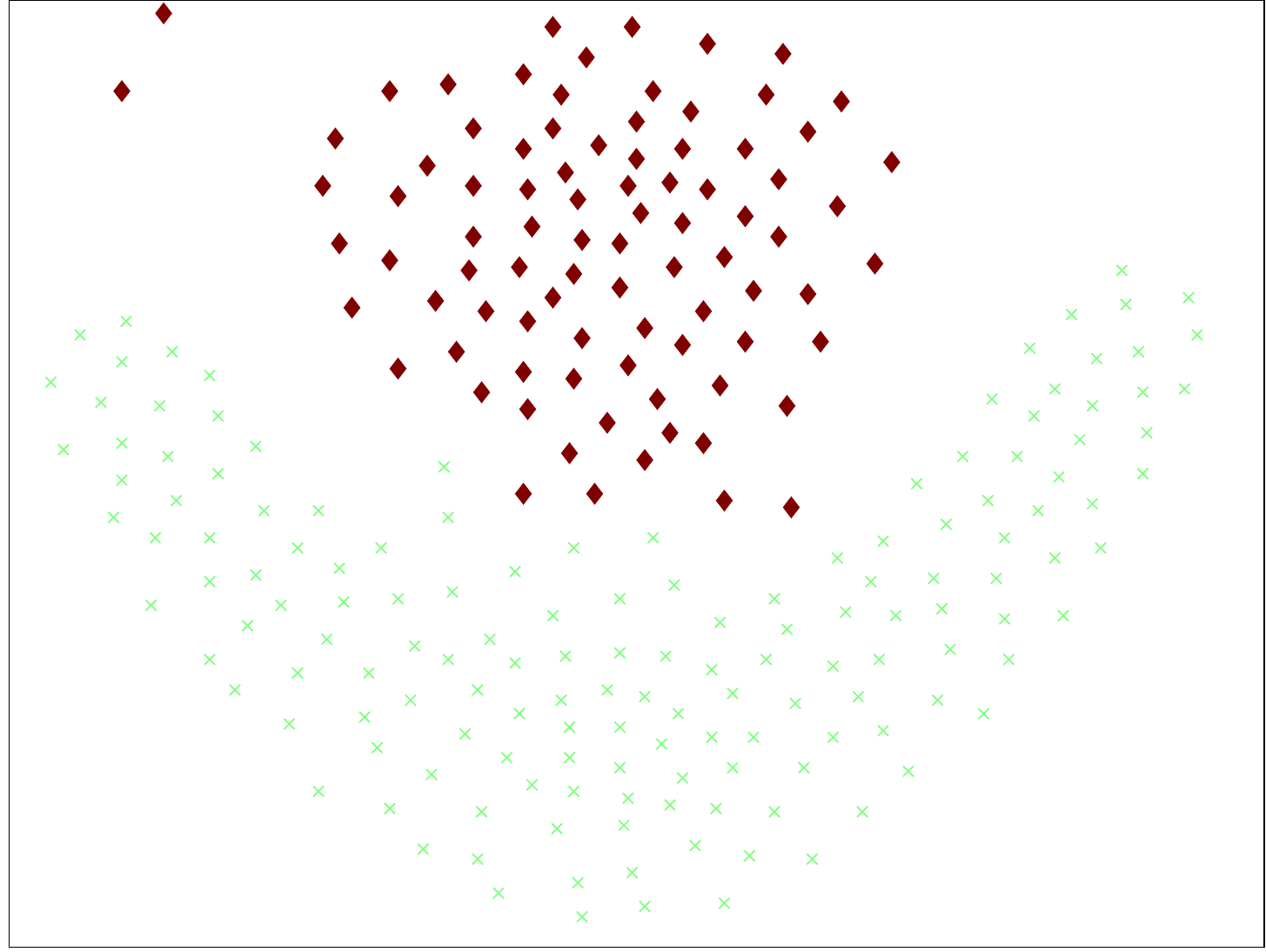}}
	\end{minipage}
	\hfill
	\begin{minipage}{0.19\linewidth}
		\centerline{\includegraphics[width=1\textwidth]{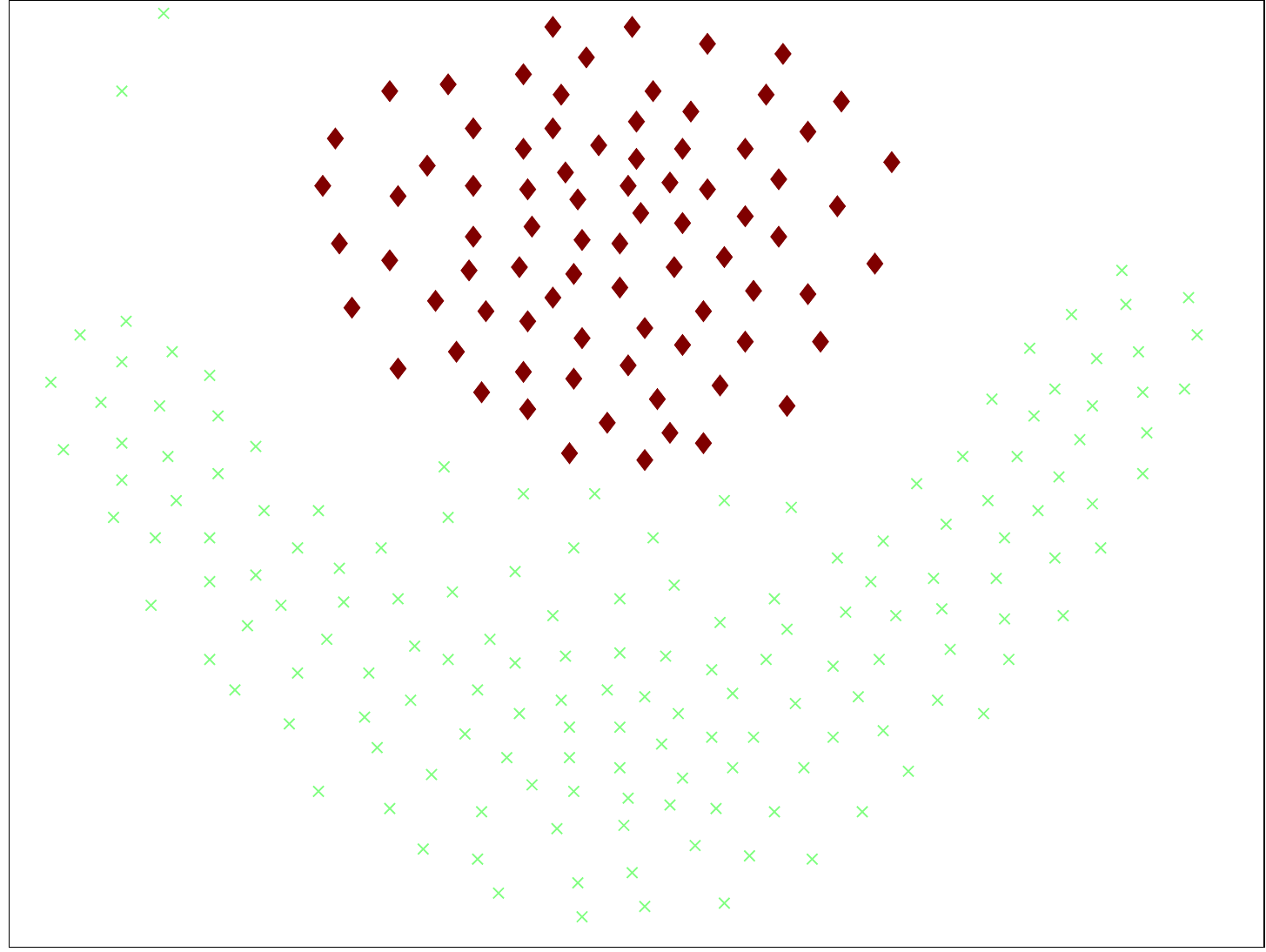}}
	\end{minipage}
	\hfill
	\begin{minipage}{0.19\linewidth}
		\centerline{\includegraphics[width=1\textwidth]{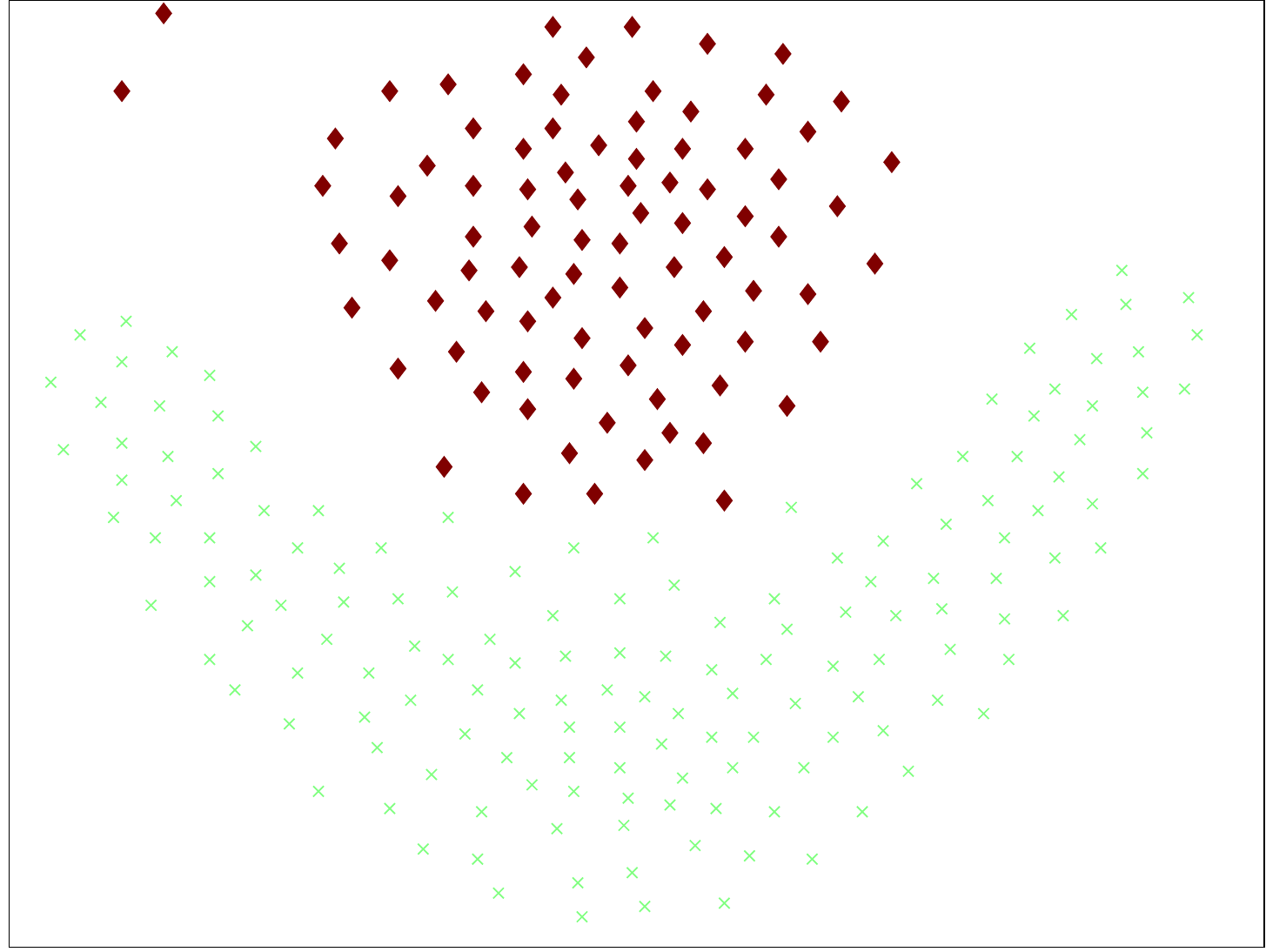}}
	\end{minipage}
	\hfill
	\begin{minipage}{0.19\linewidth}
		\centerline{\includegraphics[width=1\textwidth]{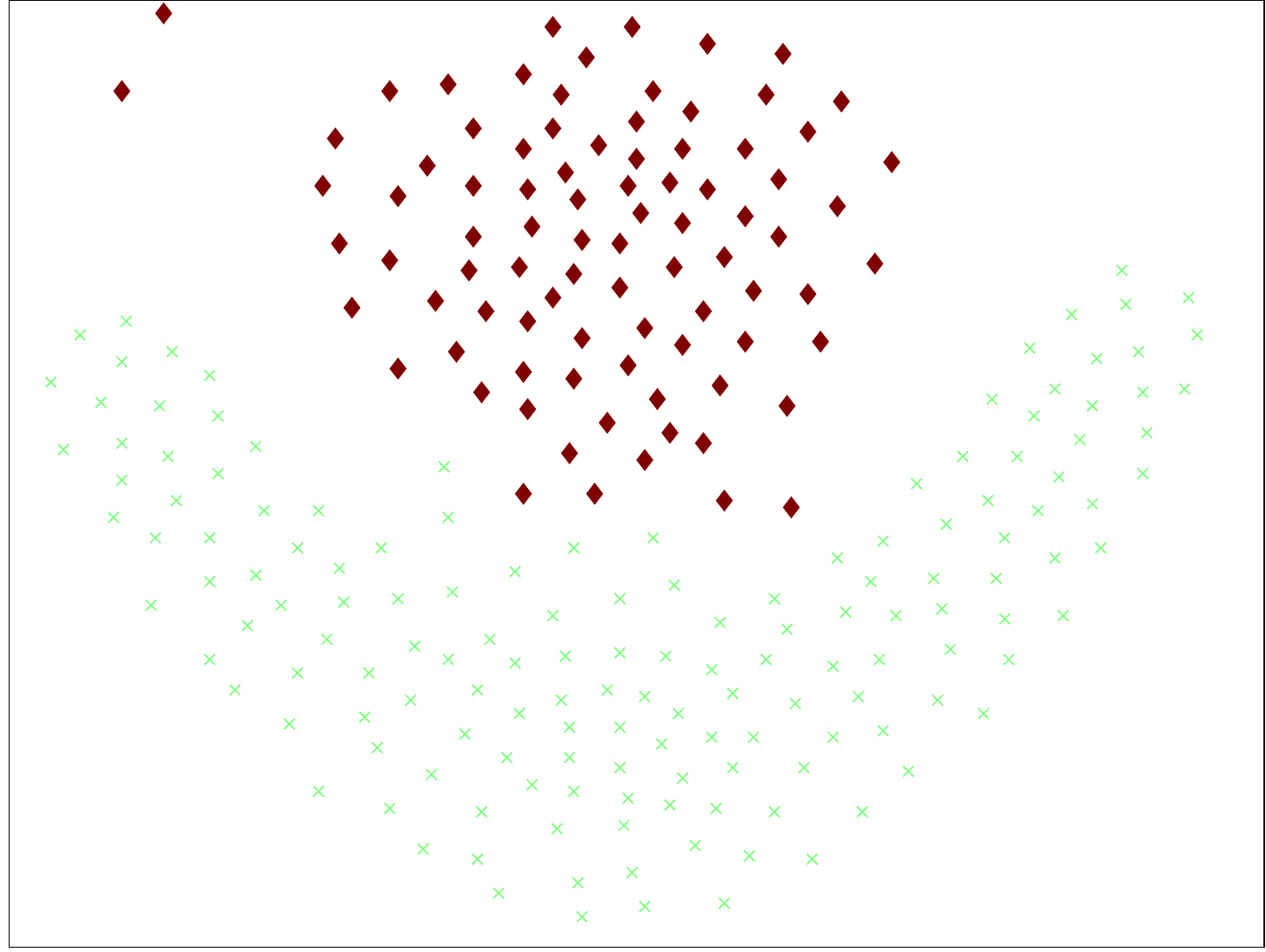}}
	\end{minipage}
	\vfill
	\begin{minipage}{0\linewidth}
		\rightline{F}
	\end{minipage}
	\hfill
	\begin{minipage}{0.19\linewidth}
		\centerline{\includegraphics[width=1\textwidth]{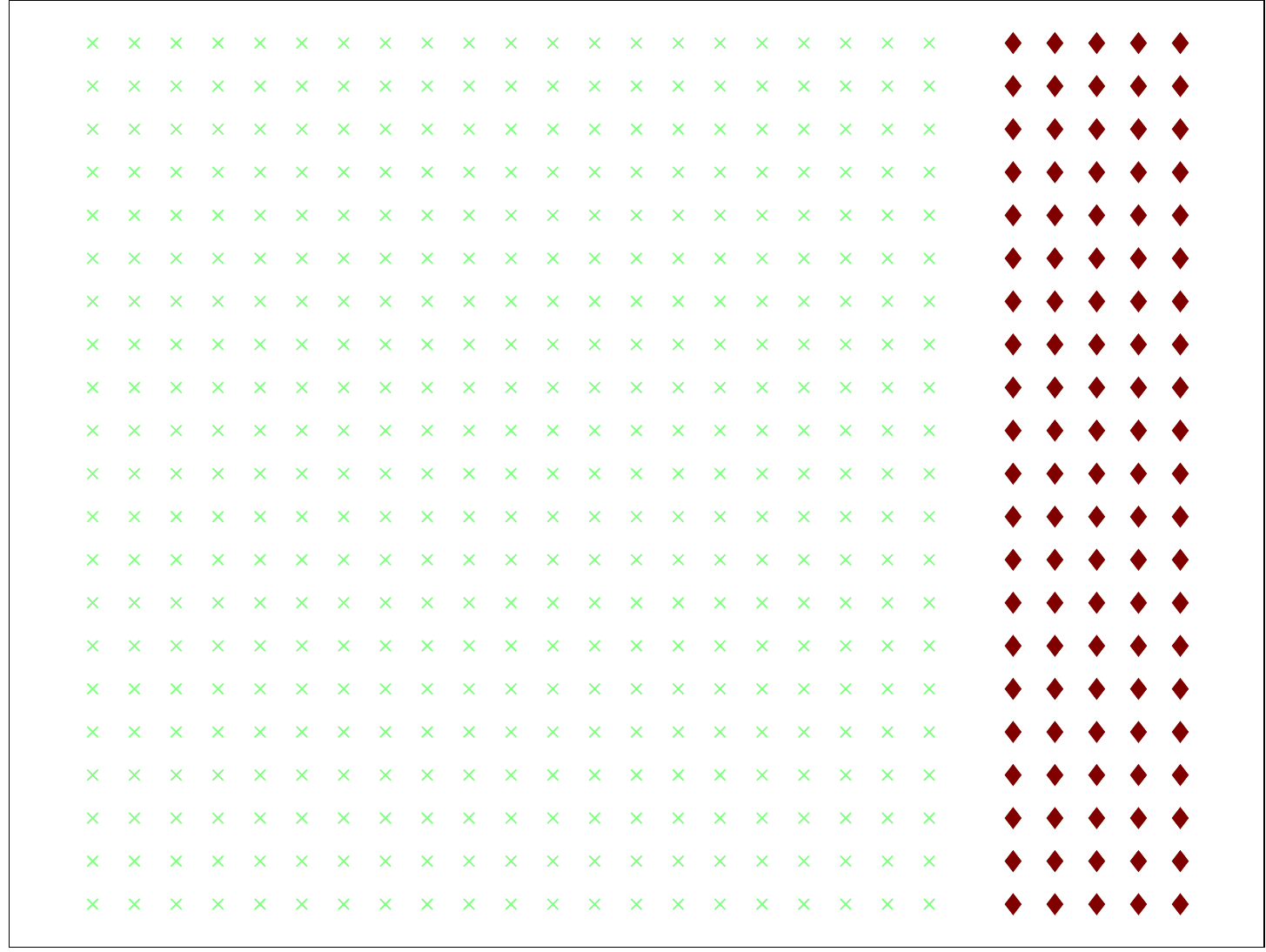}}
	\end{minipage}
	\hfill
	\begin{minipage}{0.19\linewidth}
		\centerline{\includegraphics[width=1\textwidth]{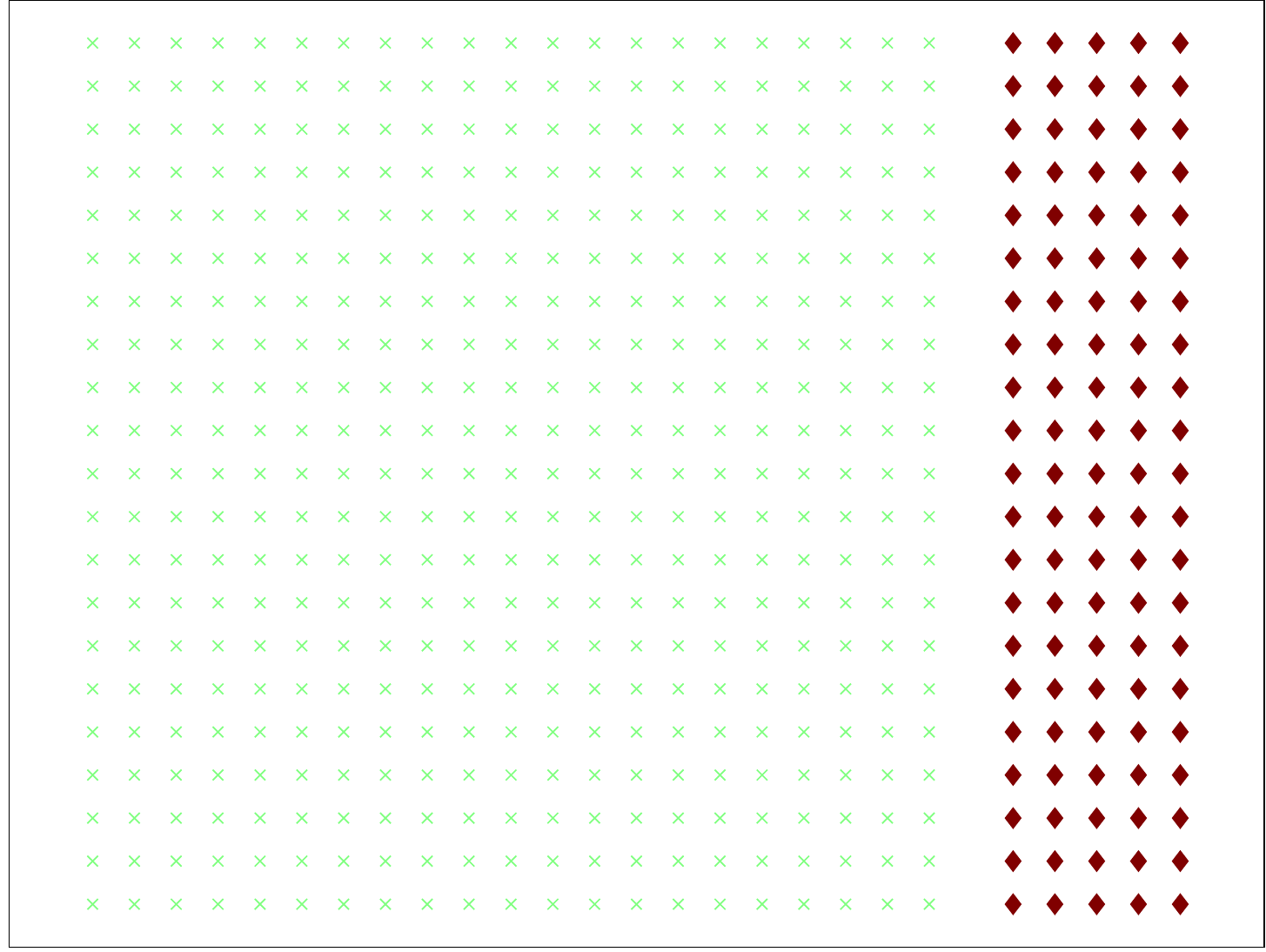}}
	\end{minipage}
	\hfill
	\begin{minipage}{0.19\linewidth}
		\centerline{\includegraphics[width=1\textwidth]{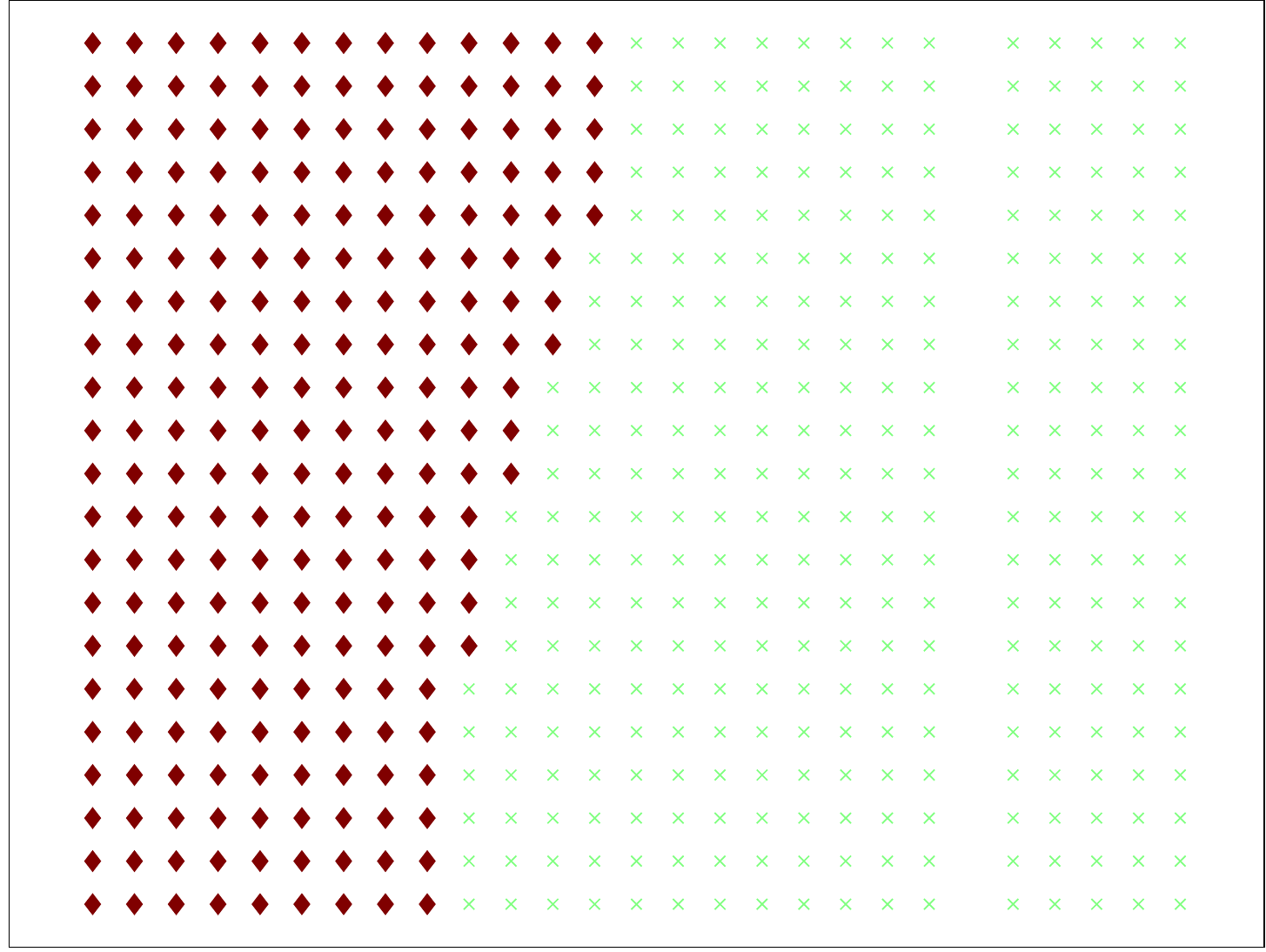}}
	\end{minipage}
	\hfill
	\begin{minipage}{0.19\linewidth}
		\centerline{\includegraphics[width=1\textwidth]{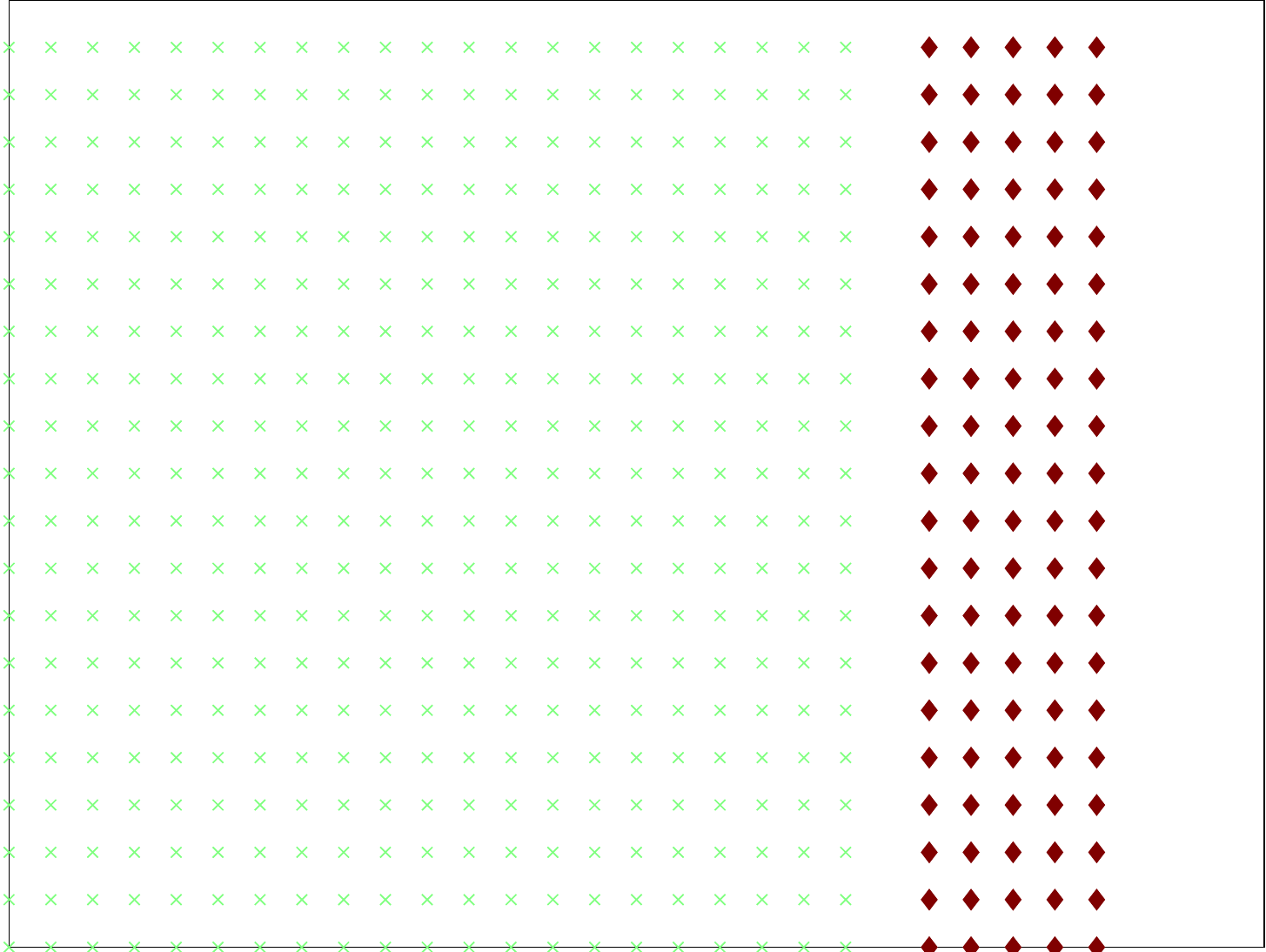}}
	\end{minipage}
	\hfill
	\begin{minipage}{0.19\linewidth}
		\centerline{\includegraphics[width=1\textwidth]{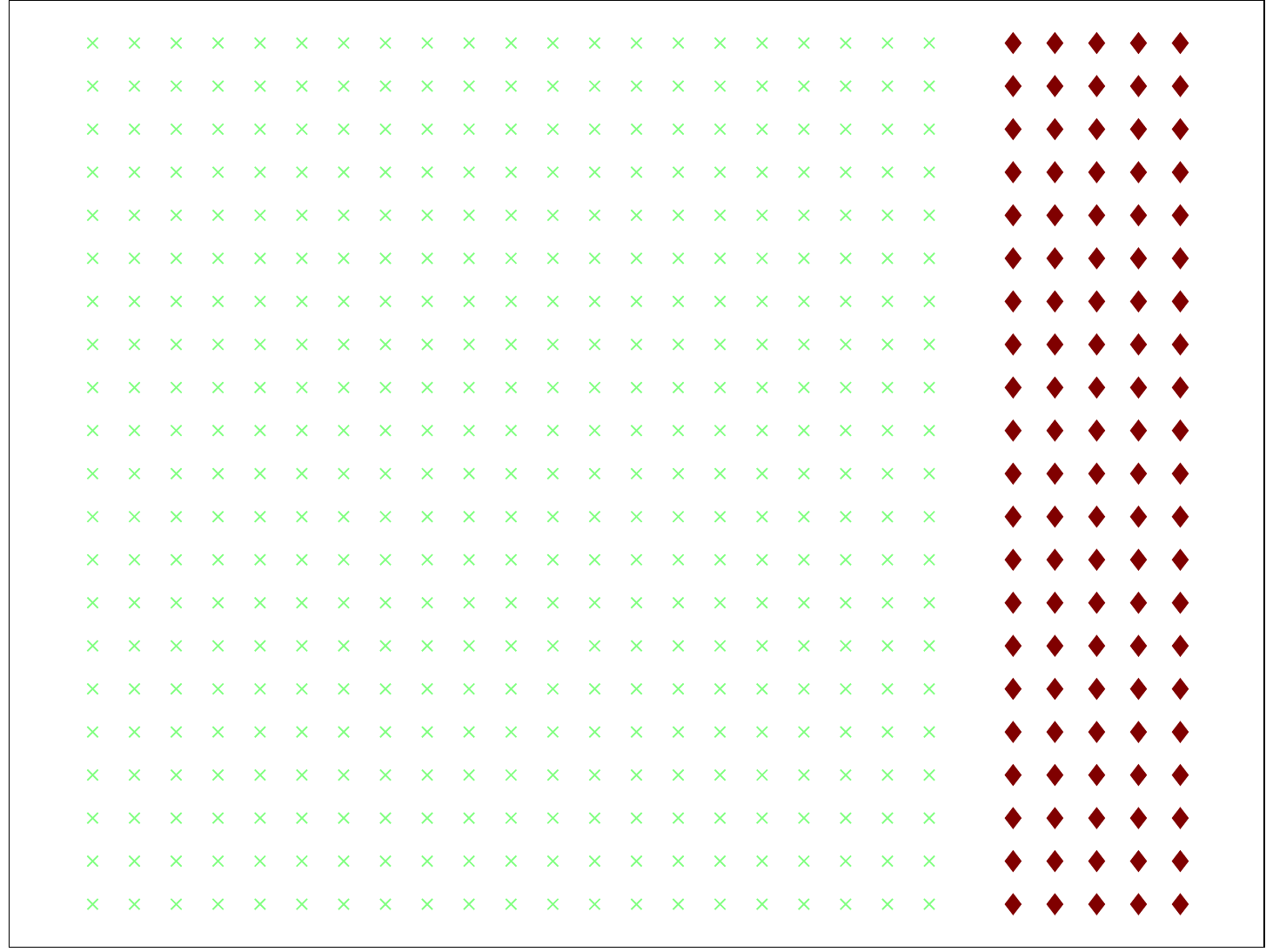}}
	\end{minipage}
	\vfill
	\begin{minipage}{0\linewidth}
		\rightline{G}
	\end{minipage}
	\hfill
	\begin{minipage}{0.19\linewidth}
		\centerline{\includegraphics[width=1\textwidth]{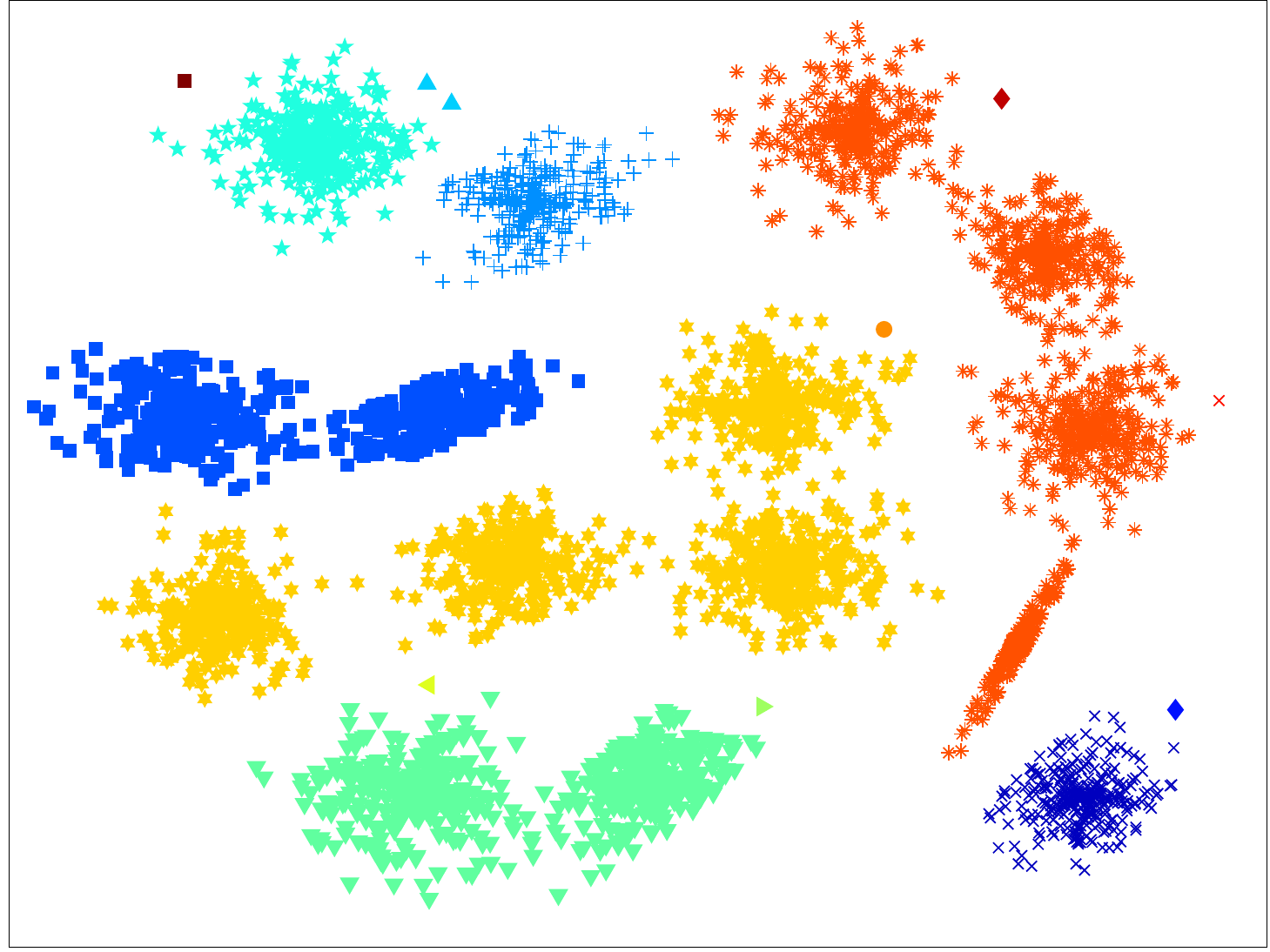}}
	\end{minipage}
	\hfill
	\begin{minipage}{0.19\linewidth}
		\centerline{\includegraphics[width=1\textwidth]{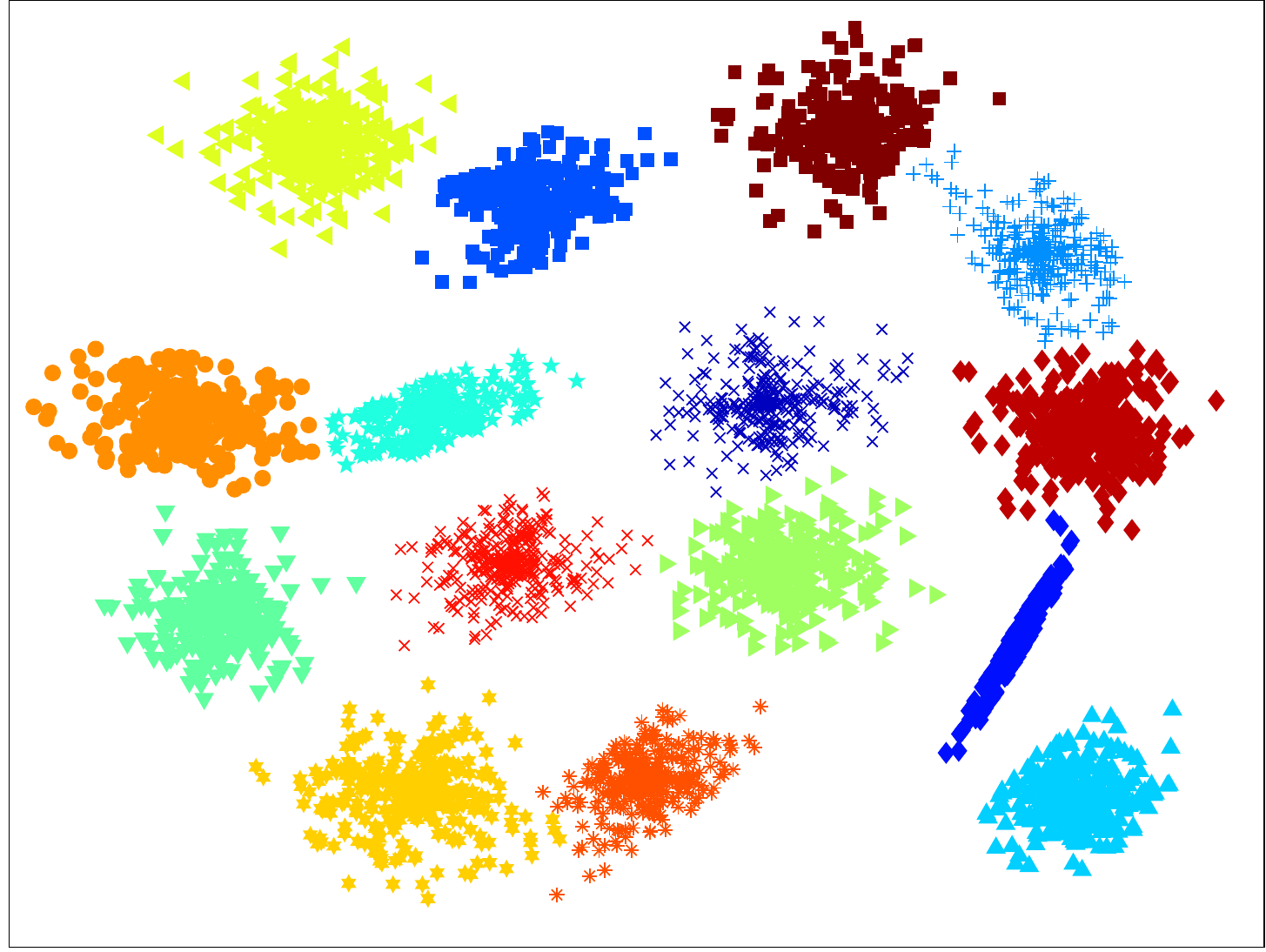}}
	\end{minipage}
	\hfill
	\begin{minipage}{0.19\linewidth}
		\centerline{\includegraphics[width=1\textwidth]{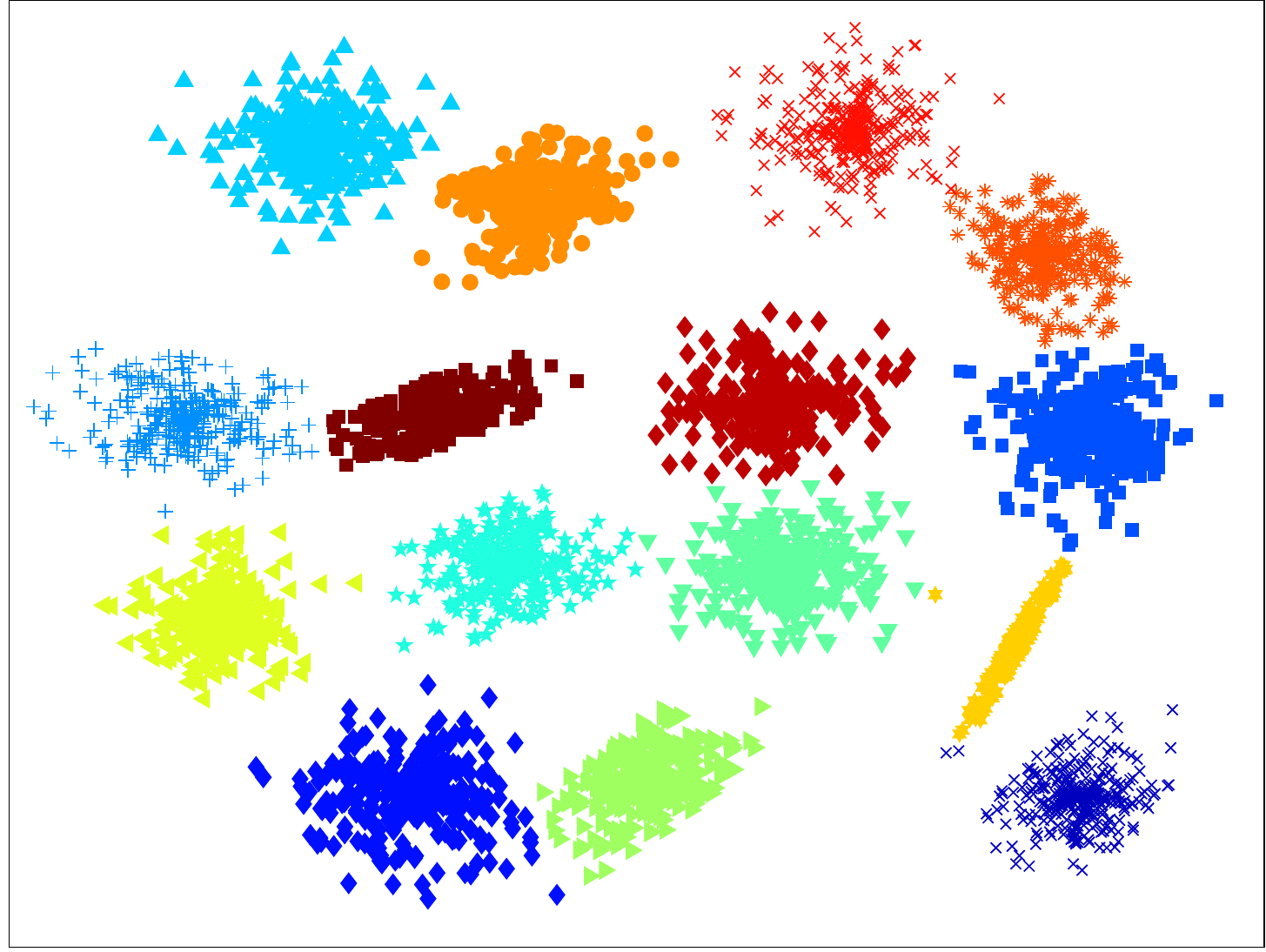}}
	\end{minipage}
	\hfill
	\begin{minipage}{0.19\linewidth}
		\centerline{\includegraphics[width=1\textwidth]{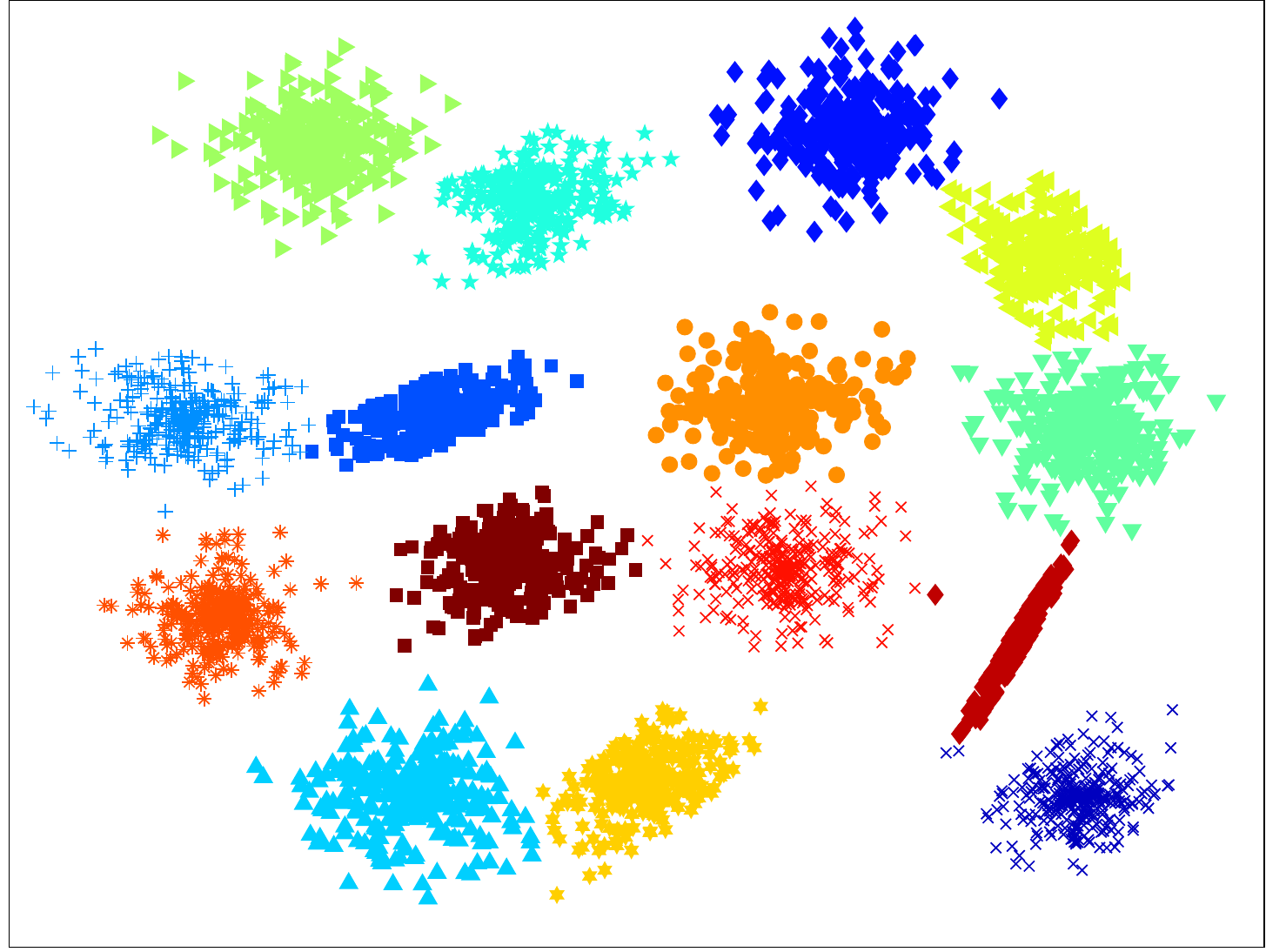}}
	\end{minipage}
	\hfill
	\begin{minipage}{0.19\linewidth}
		\centerline{\includegraphics[width=1\textwidth]{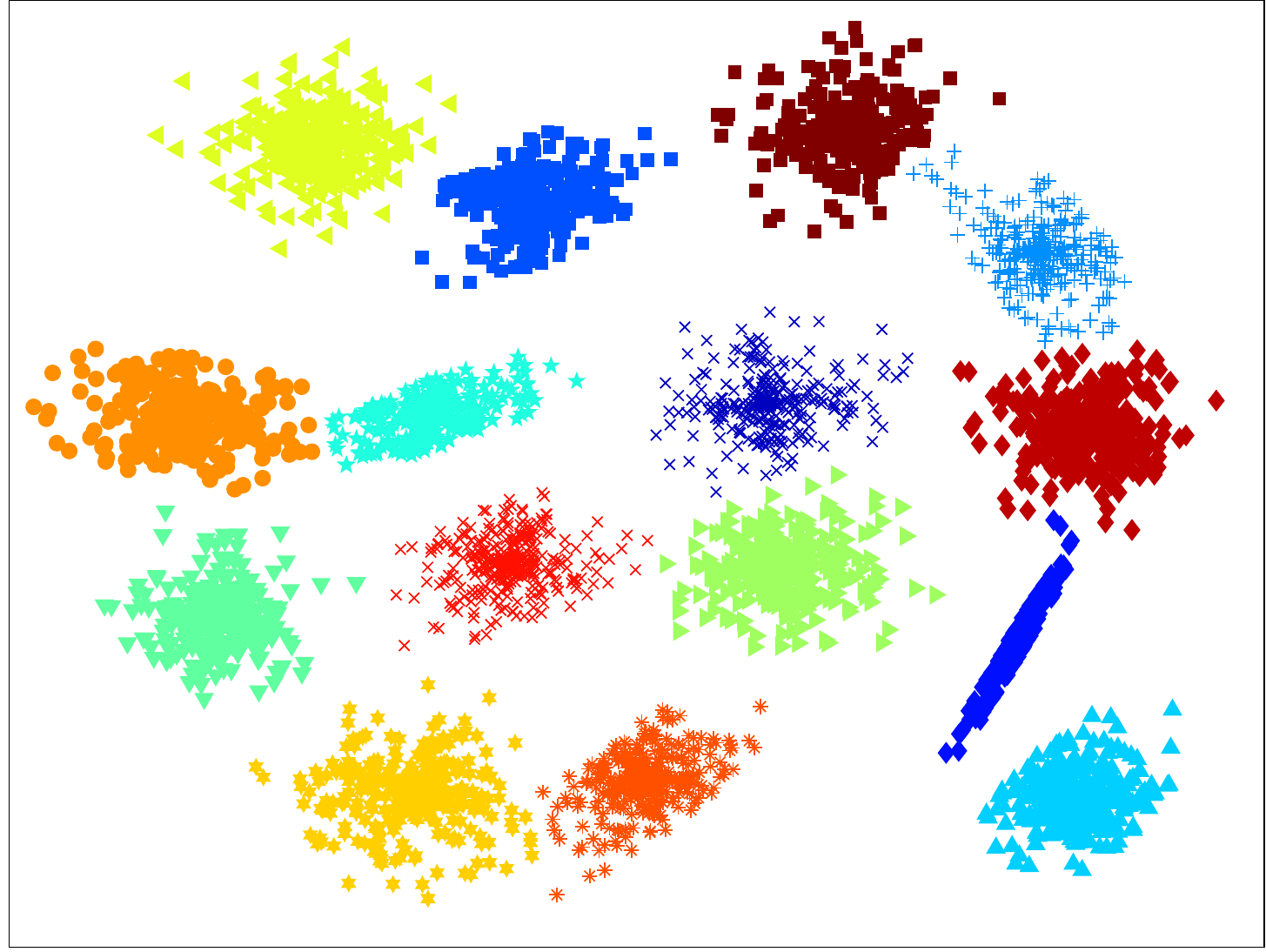}}
	\end{minipage}
	\vfill
	\begin{minipage}{0\linewidth}
		\rightline{H}
	\end{minipage}
	\hfill
	\begin{minipage}{0.19\linewidth}
		\centerline{\includegraphics[width=1\textwidth]{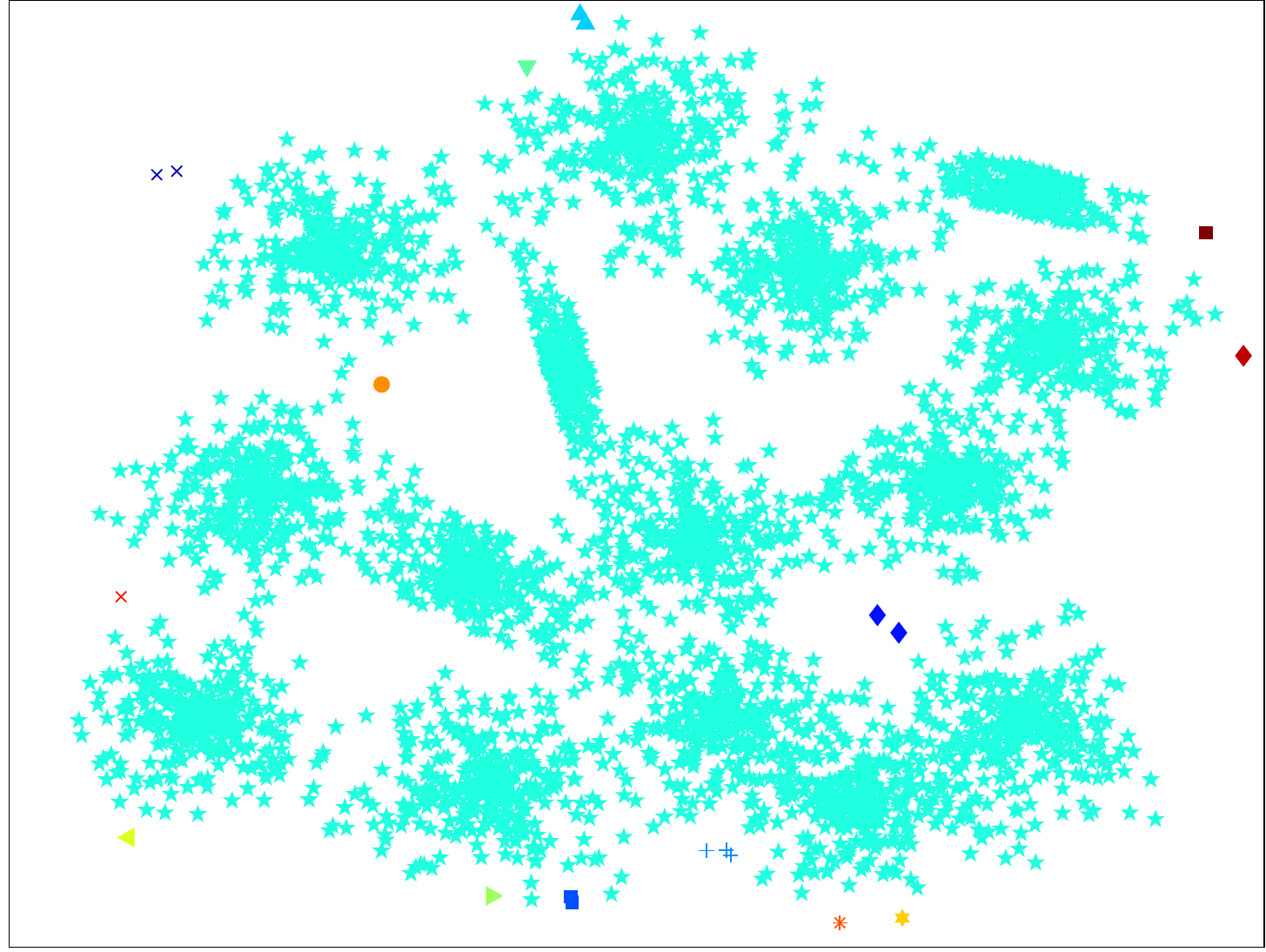}}
		\centerline{SLC}
	\end{minipage}
	\begin{minipage}{0.19\linewidth}
		\centerline{\includegraphics[width=1\textwidth]{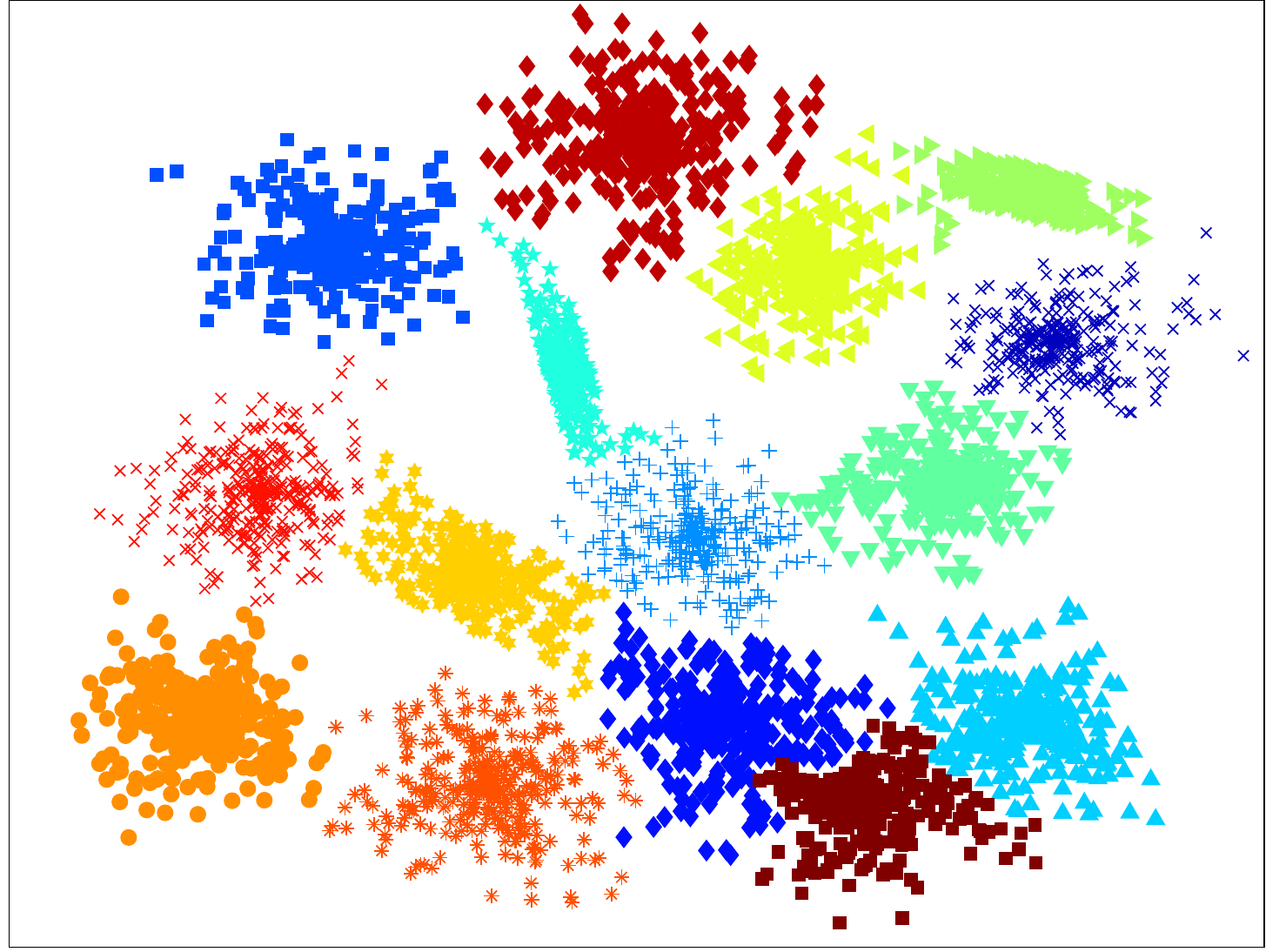}}
		\centerline{FDPC}
	\end{minipage}
	\hfill
	\begin{minipage}{0.19\linewidth}
		\centerline{\includegraphics[width=1\textwidth]{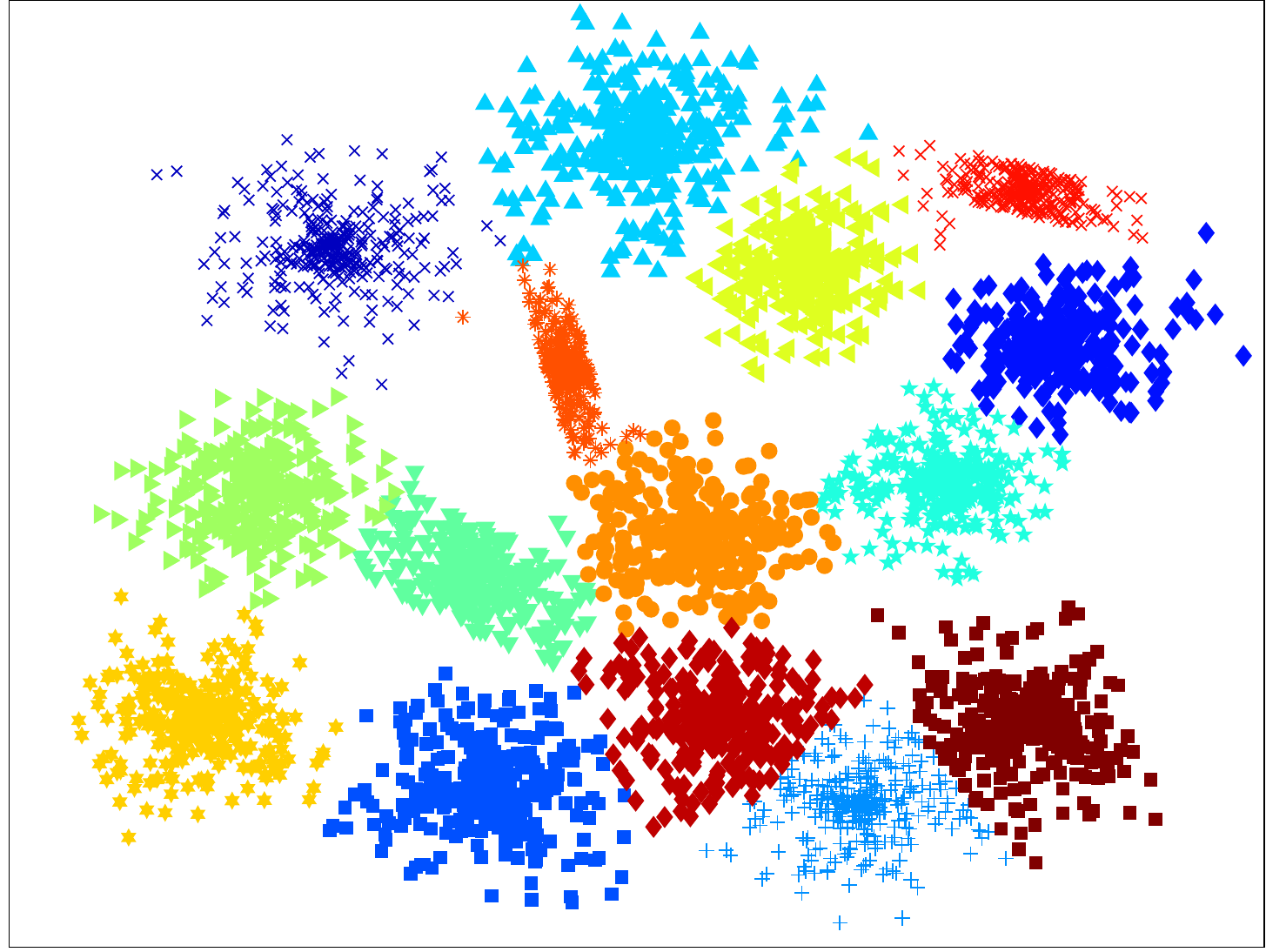}}
		\centerline{kernel KM}
	\end{minipage}
	\hfill
	\begin{minipage}{0.19\linewidth}
		\centerline{\includegraphics[width=1\textwidth]{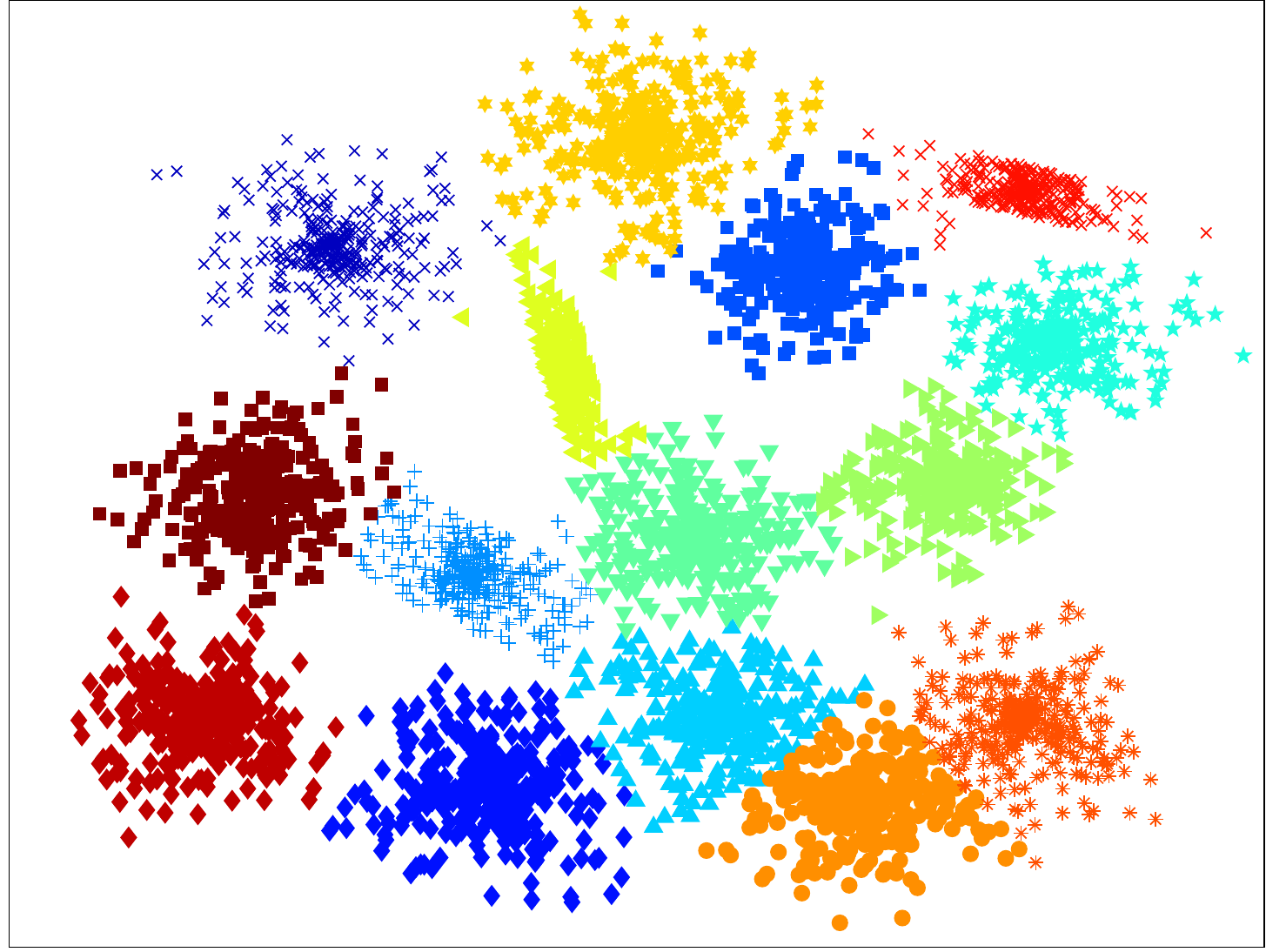}}
		\centerline{NCUT}
	\end{minipage}
	\hfill
	\begin{minipage}{0.19\linewidth}
		\centerline{\includegraphics[width=1\textwidth]{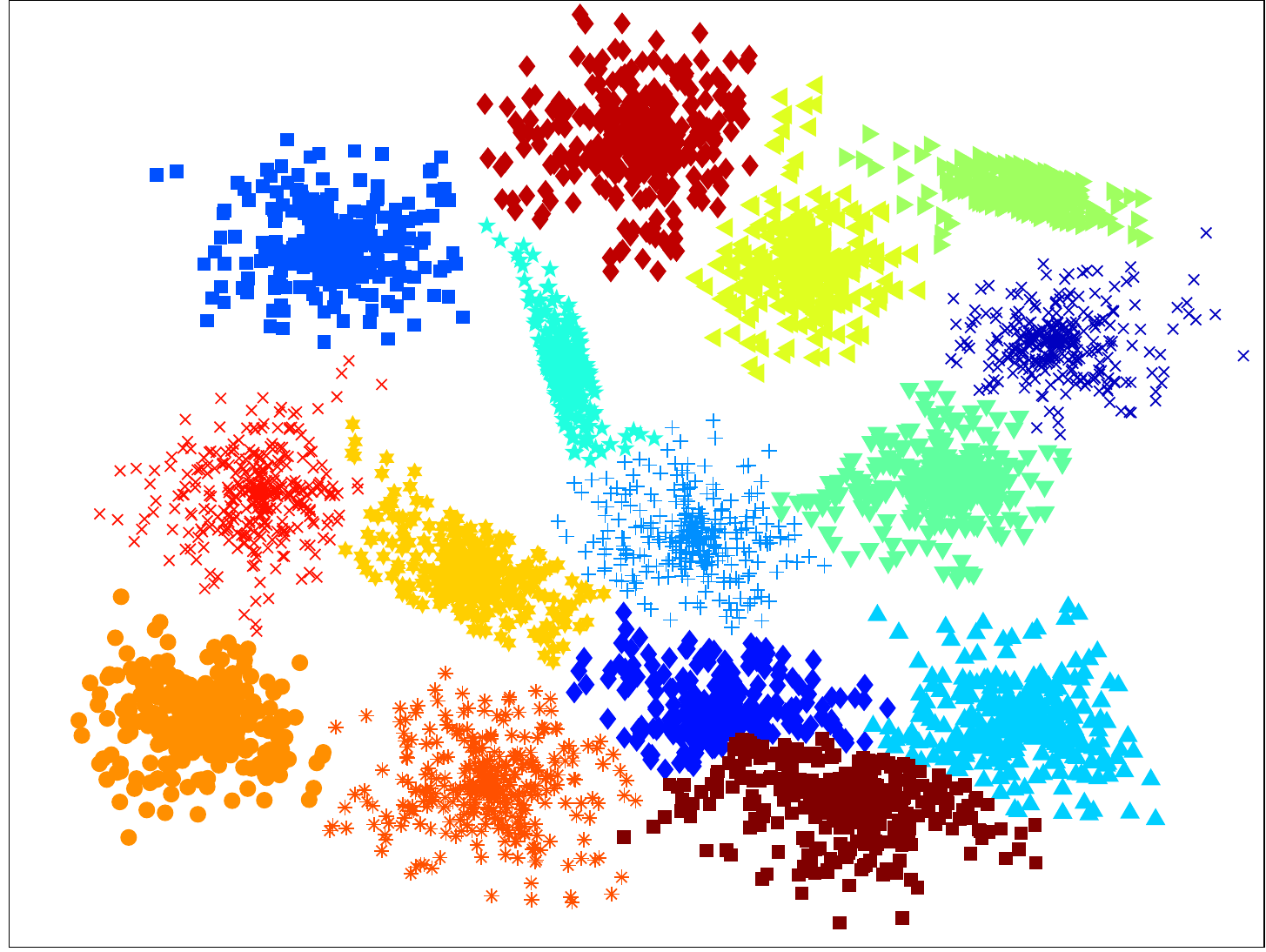}}
		\centerline{GOPC}
	\end{minipage}	
\caption{Experimental results on synthetic datasets; (A) $DS1$; (B) $DS2$; (C) $Spiral$; (D) $Aggregation$; (E) $Flame$; (F) $Lygd$; (G) $S1$; (H) $S2$.}
\label{synthetic1}
\end{figure*}

\begin{figure*}
	\begin{minipage}{0\linewidth}
		\rightline{I}
	\end{minipage}
	\hfill
	\begin{minipage}{0.19\linewidth}
		\centerline{\includegraphics[width=1\textwidth]{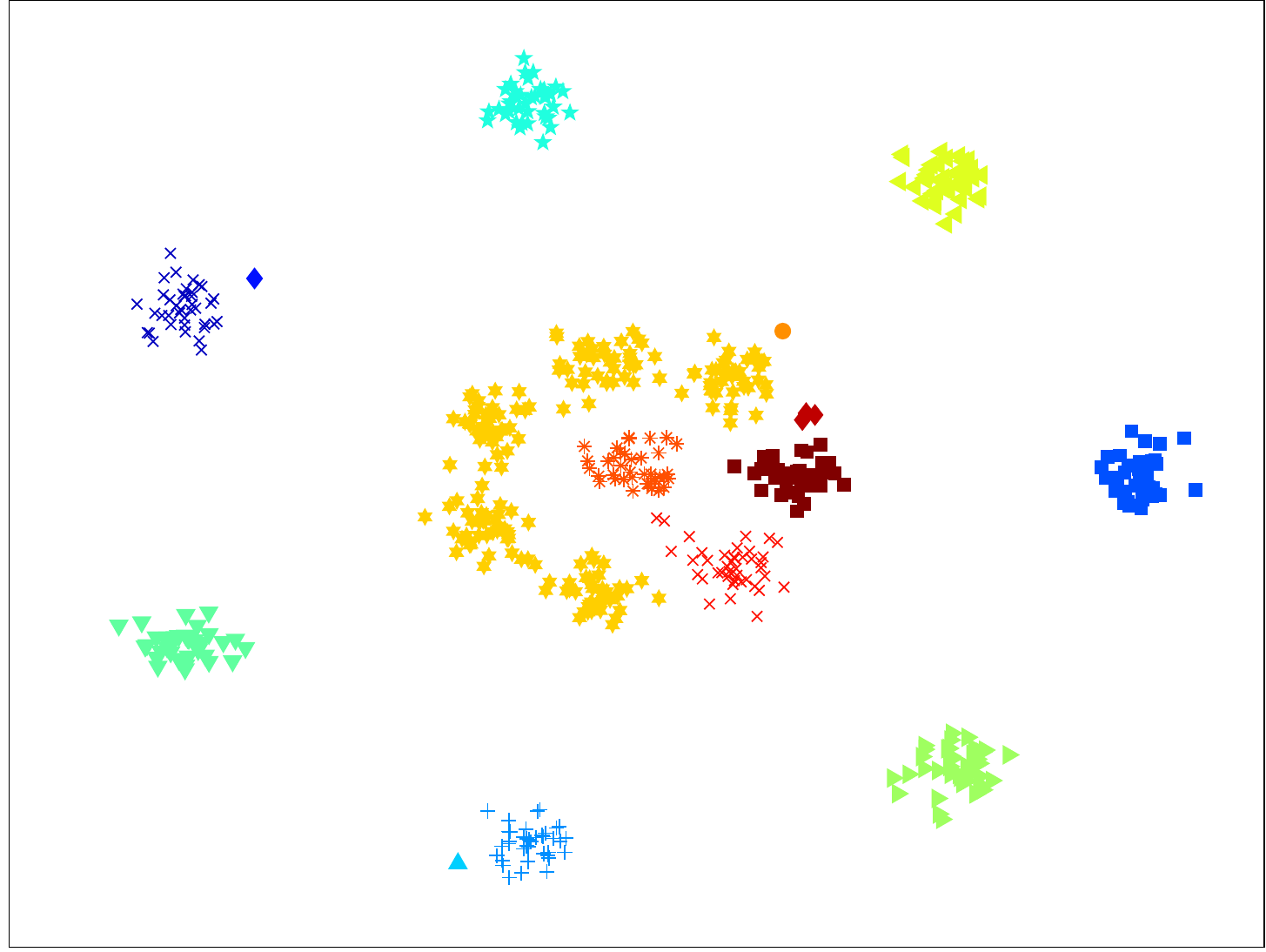}}
	\end{minipage}
	\hfill
	\begin{minipage}{0.19\linewidth}
		\centerline{\includegraphics[width=1\textwidth]{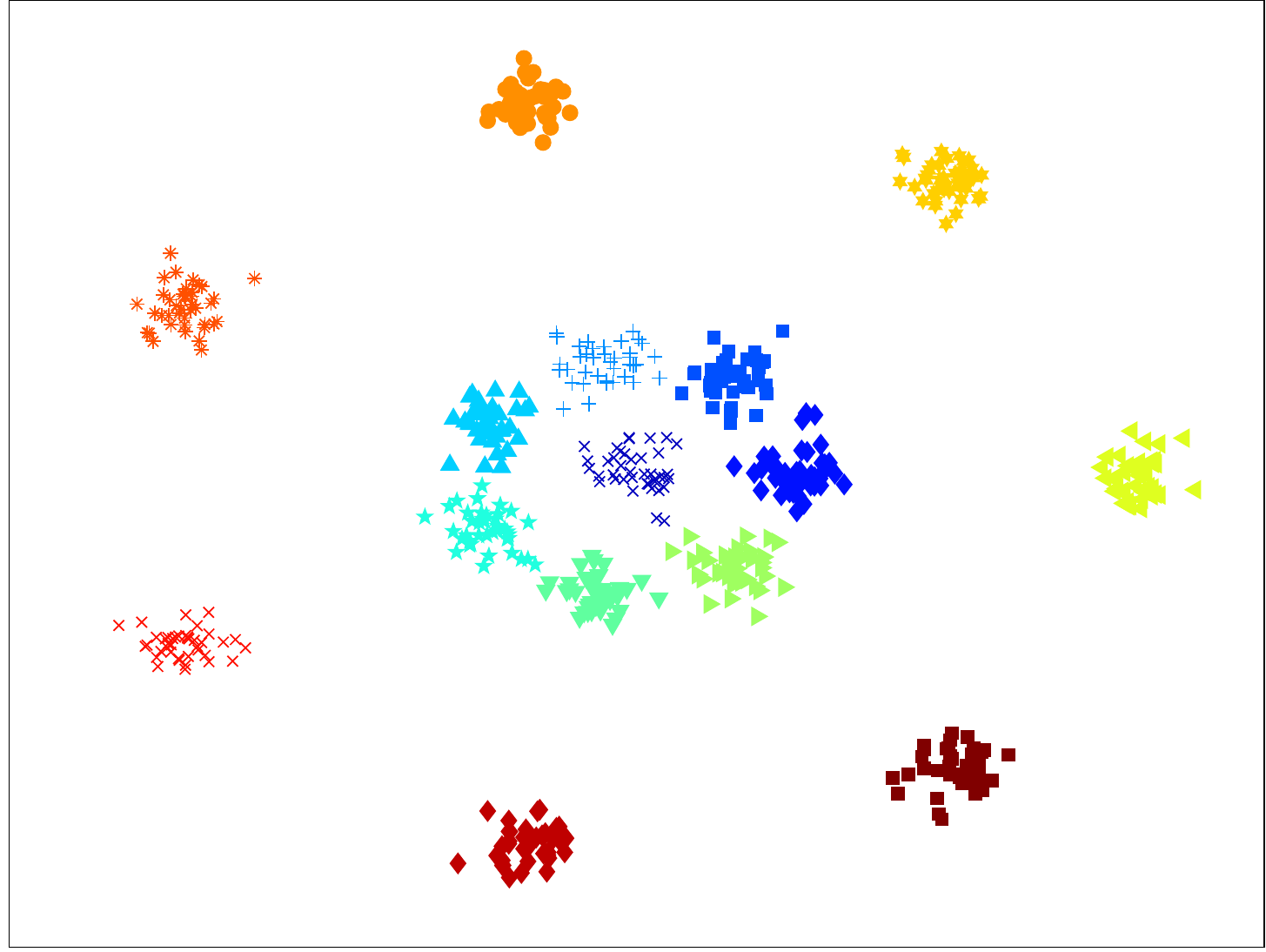}}
	\end{minipage}
	\hfill
	\begin{minipage}{0.19\linewidth}
		\centerline{\includegraphics[width=1\textwidth]{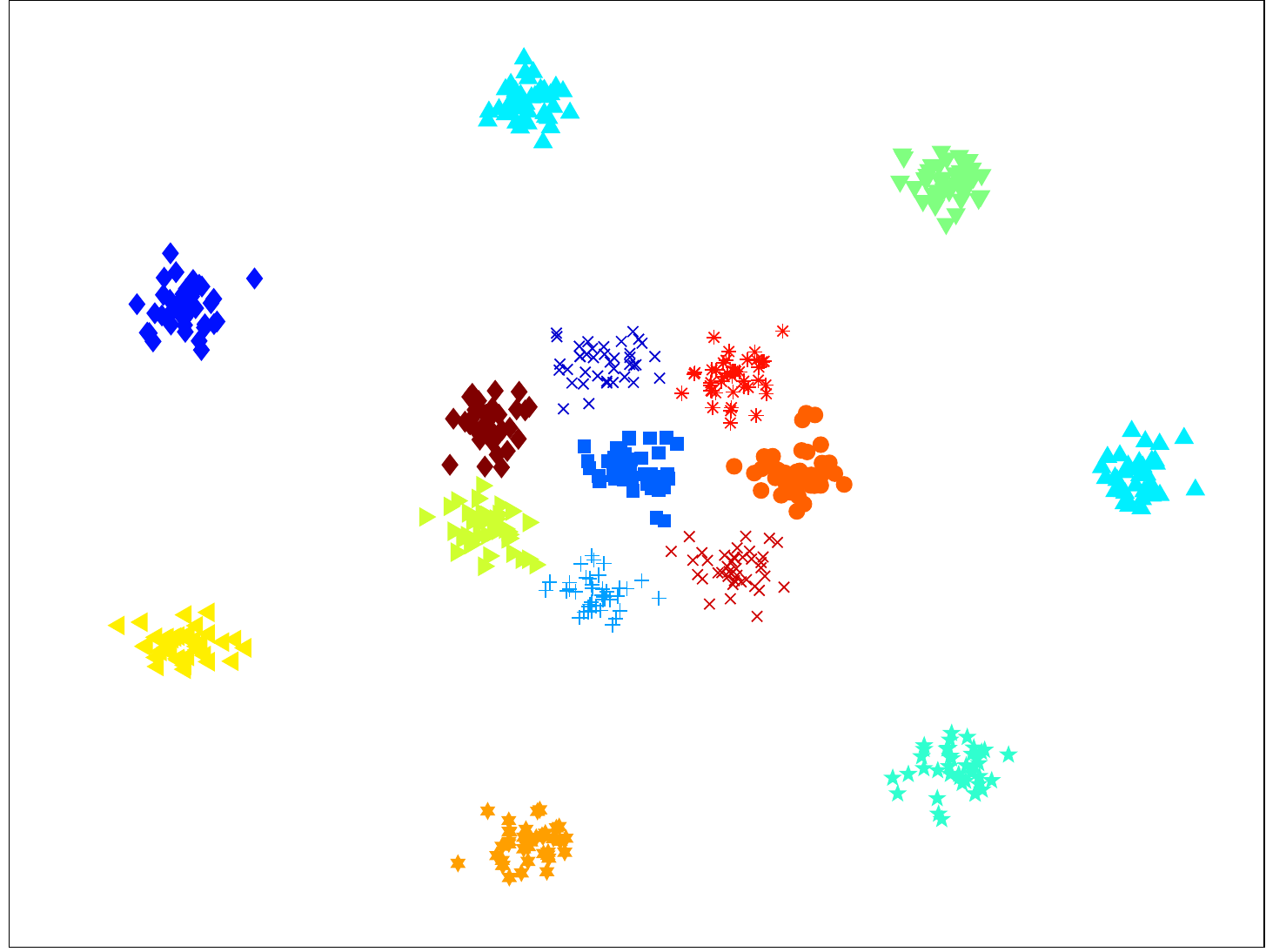}}
	\end{minipage}
	\hfill
	\begin{minipage}{0.19\linewidth}
		\centerline{\includegraphics[width=1\textwidth]{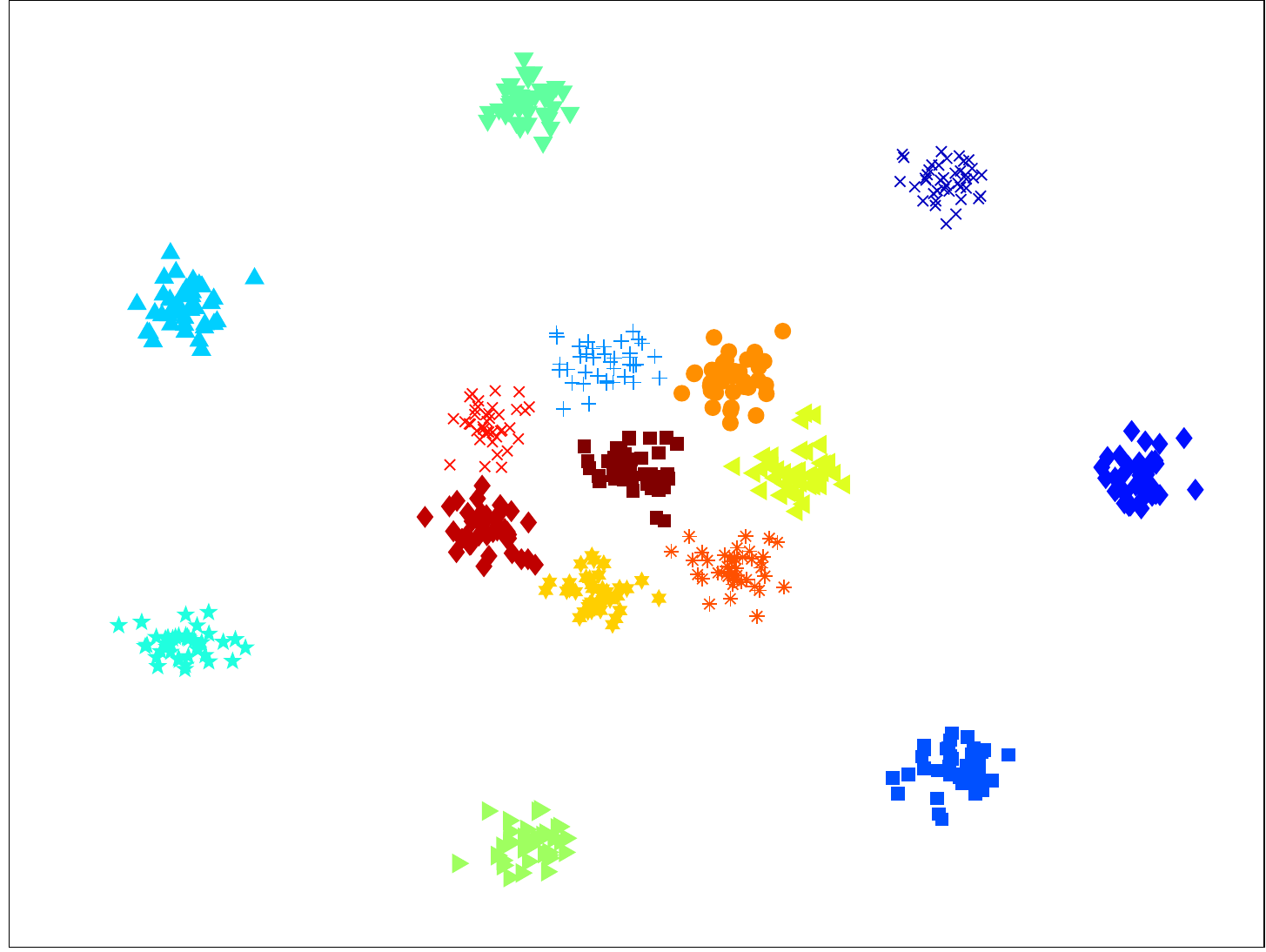}}
	\end{minipage}
	\hfill
	\begin{minipage}{0.19\linewidth}
		\centerline{\includegraphics[width=1\textwidth]{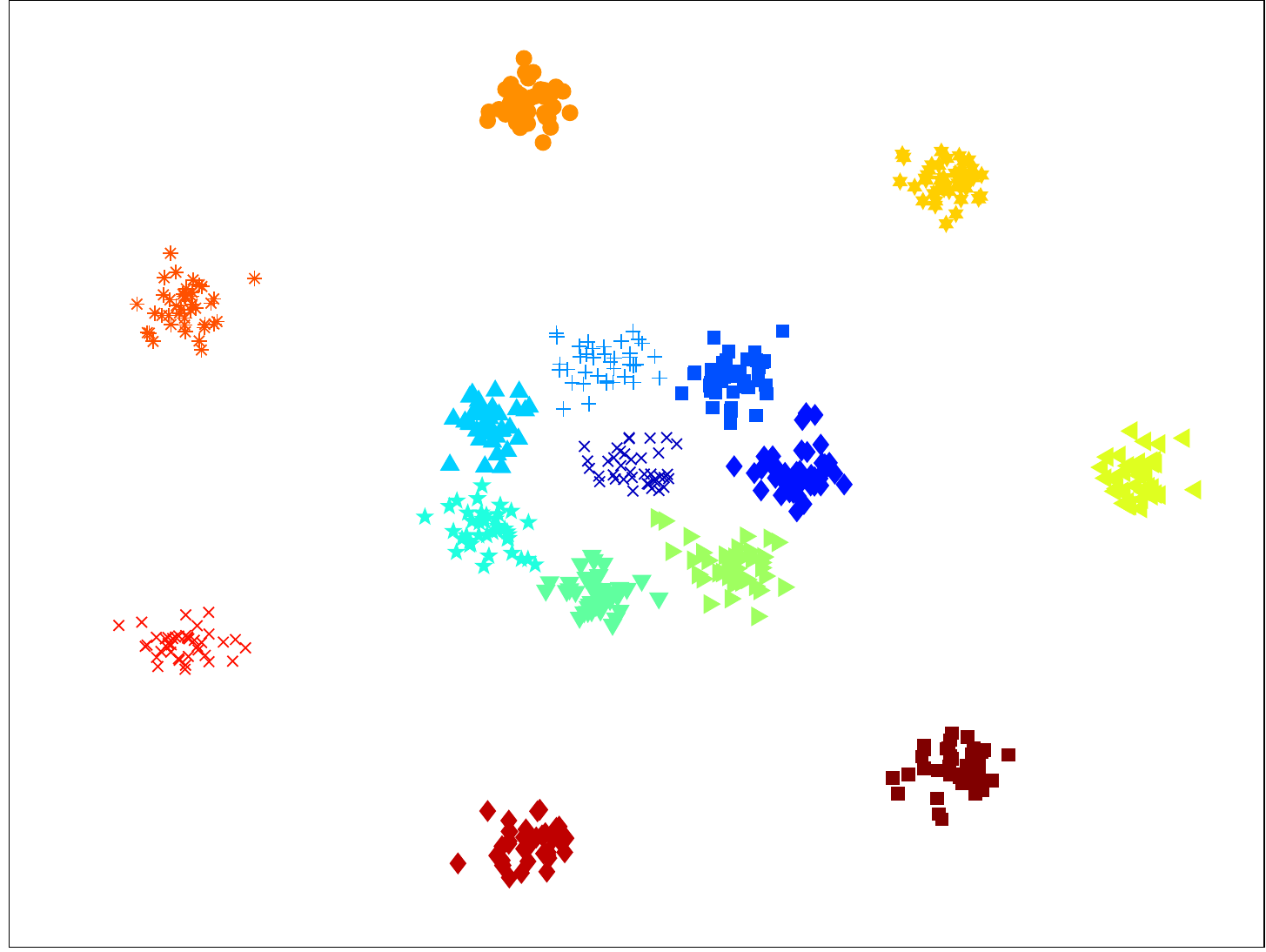}}
	\end{minipage}
	\vfill
	\begin{minipage}{0\linewidth}
		\rightline{J}
	\end{minipage}
	\hfill
	\begin{minipage}{0.19\linewidth}
		\centerline{\includegraphics[width=1\textwidth]{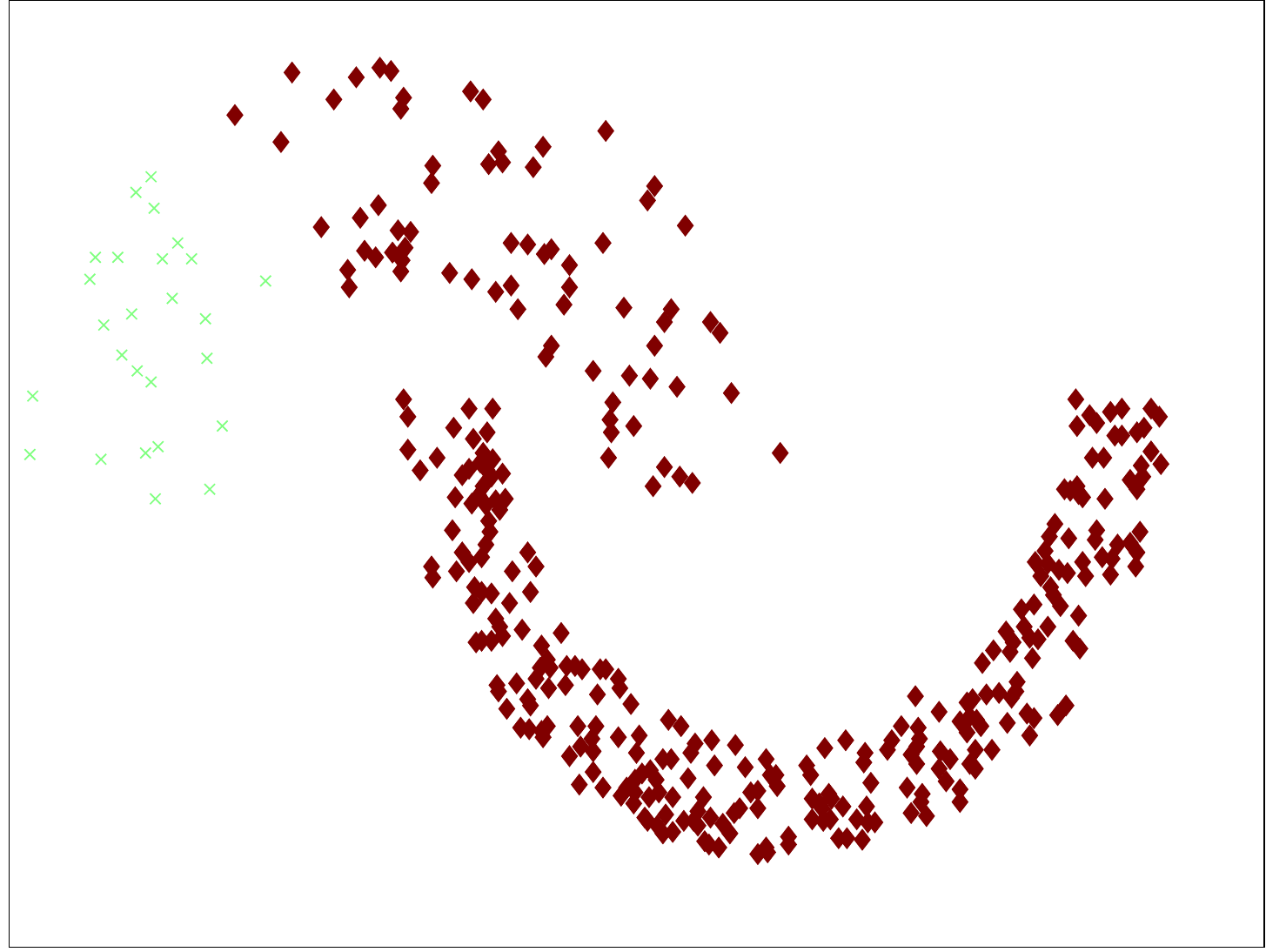}}
	\end{minipage}
	\hfill
	\begin{minipage}{0.19\linewidth}
		\centerline{\includegraphics[width=1\textwidth]{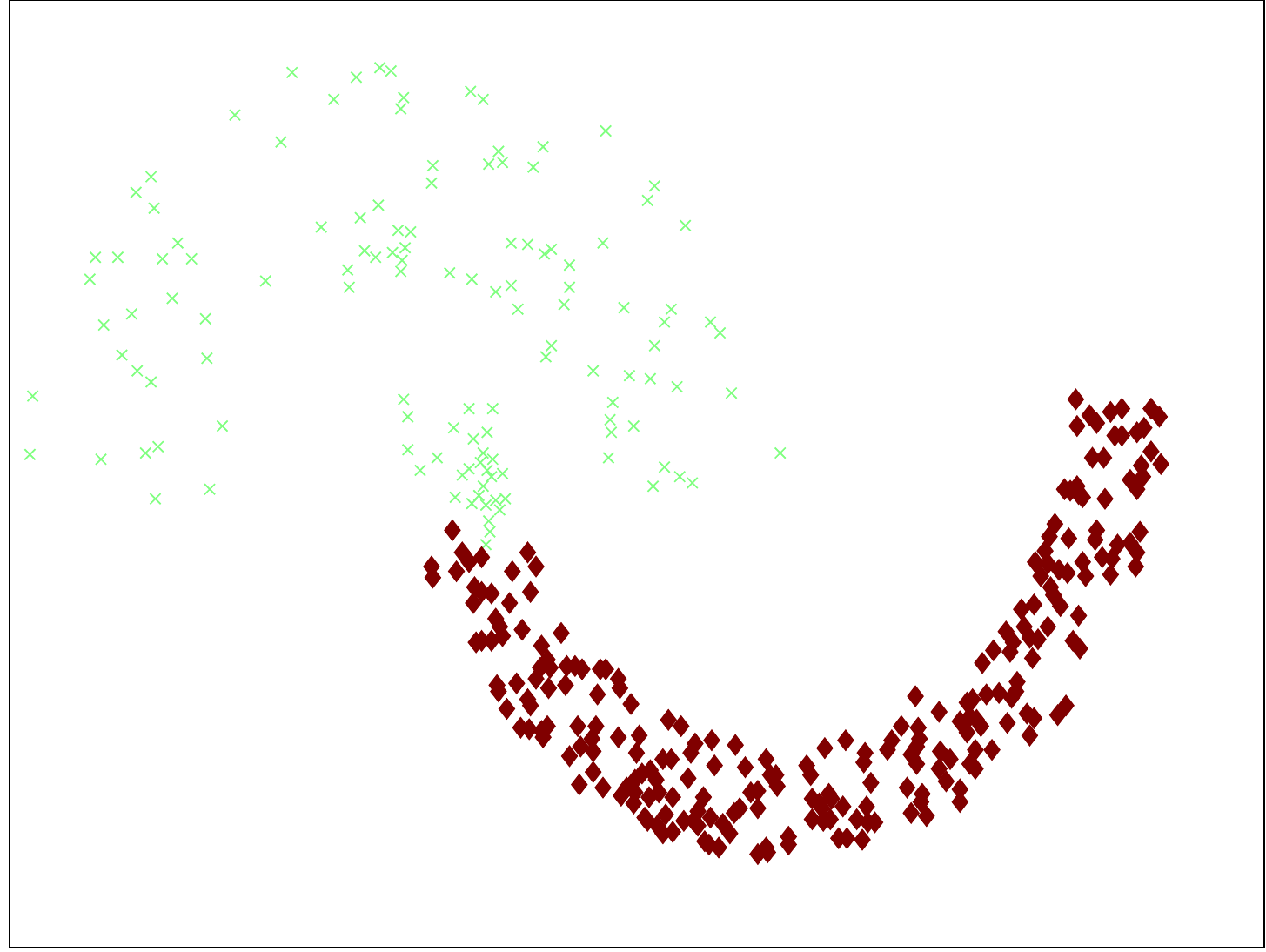}}
	\end{minipage}
	\hfill
	\begin{minipage}{0.19\linewidth}
		\centerline{\includegraphics[width=1\textwidth]{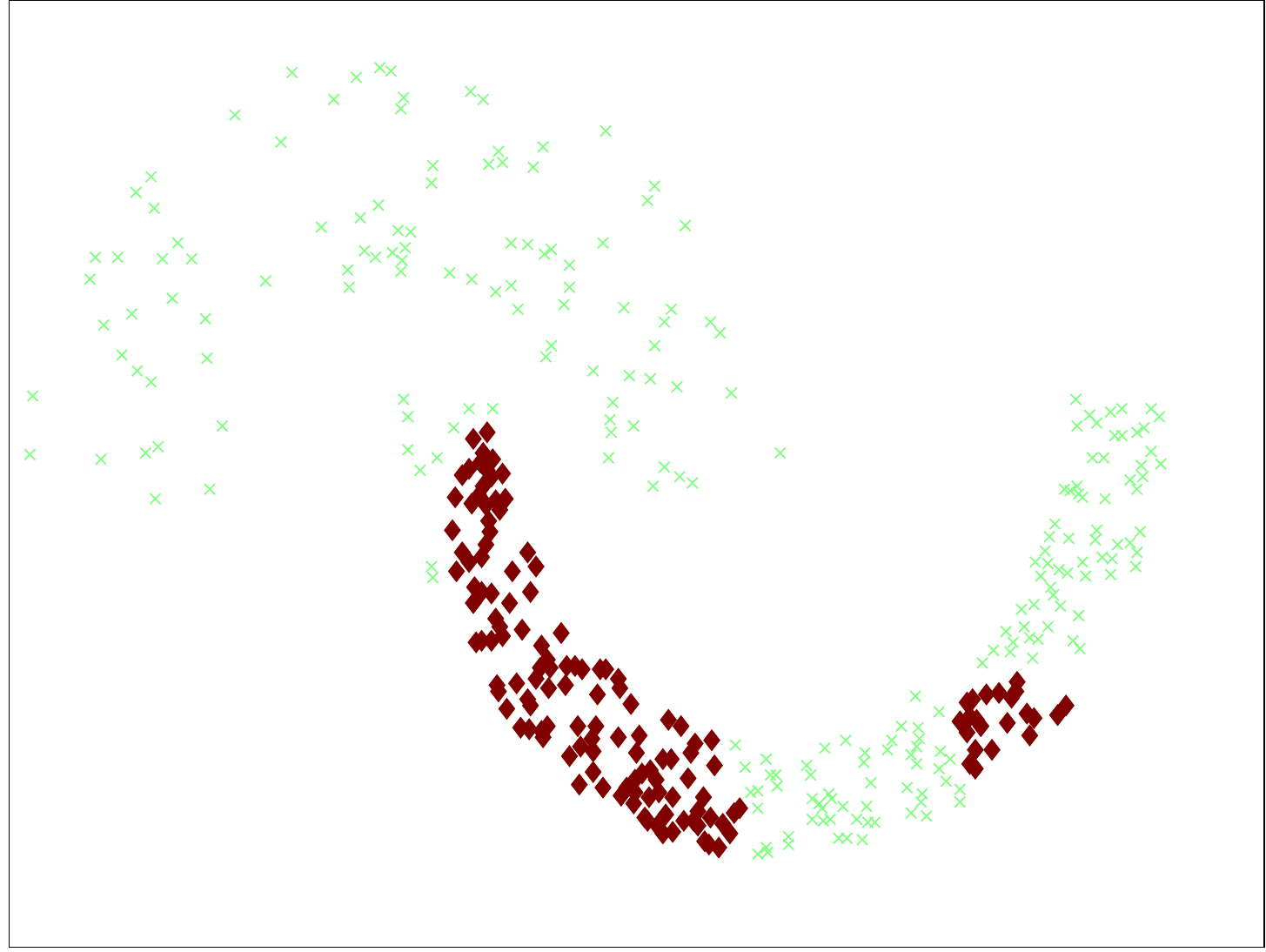}}
	\end{minipage}
	\hfill
	\begin{minipage}{0.19\linewidth}
		\centerline{\includegraphics[width=1\textwidth]{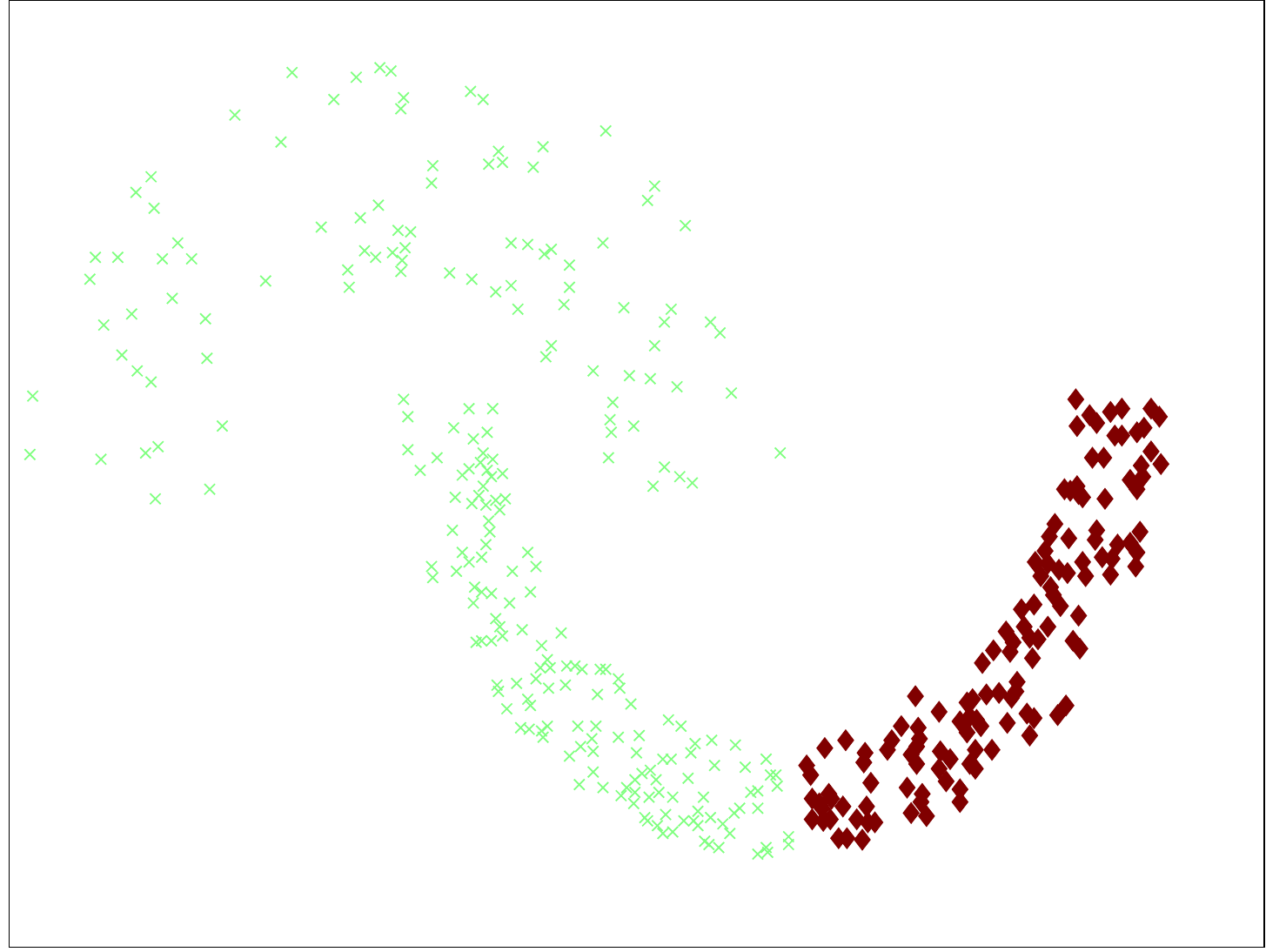}}
	\end{minipage}
	\hfill
	\begin{minipage}{0.19\linewidth}
		\centerline{\includegraphics[width=1\textwidth]{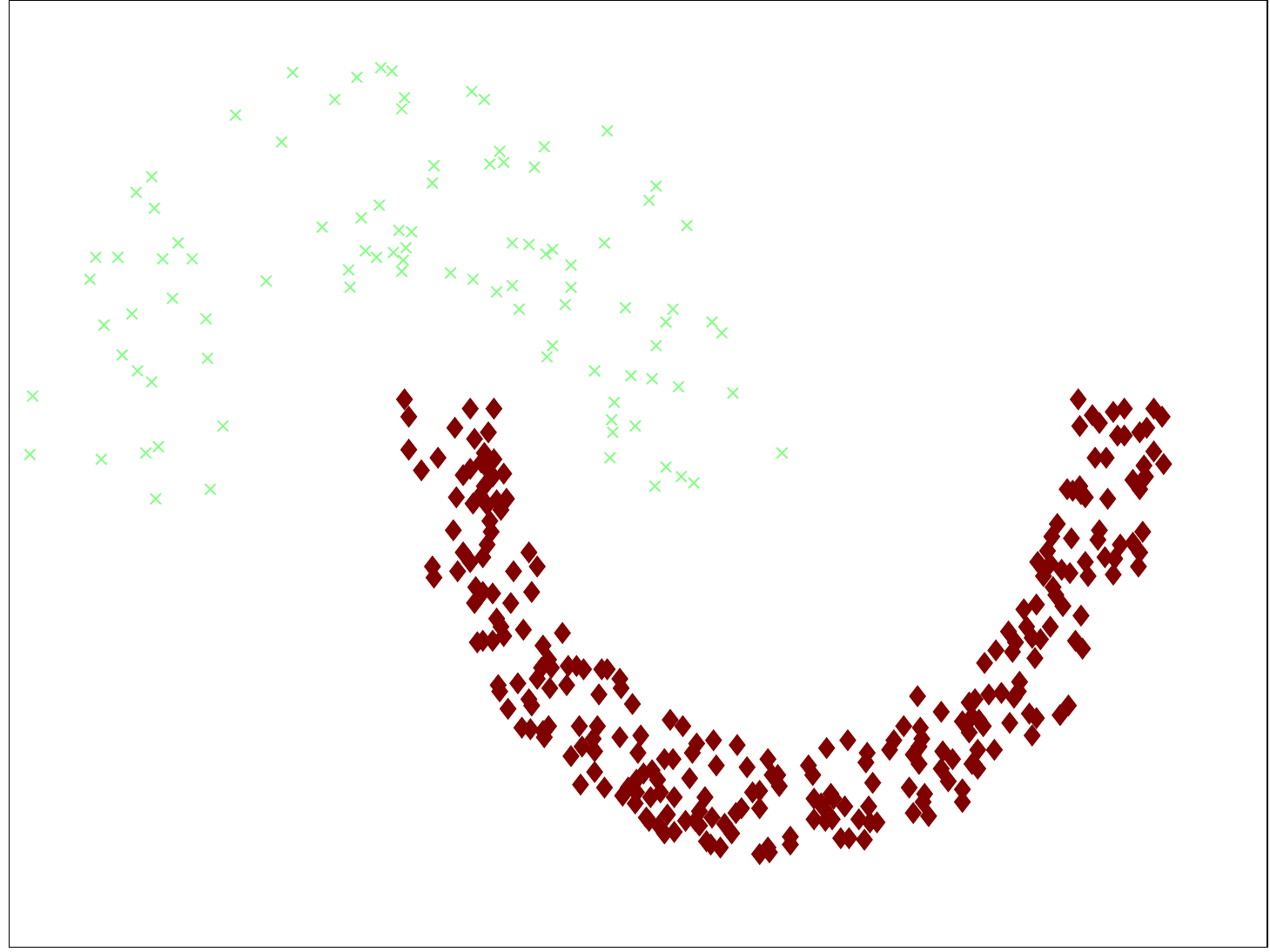}}
	\end{minipage}
	\vfill
	\begin{minipage}{0\linewidth}
		\rightline{K}
	\end{minipage}
	\hfill
	\begin{minipage}{0.19\linewidth}
		\centerline{\includegraphics[width=1\textwidth]{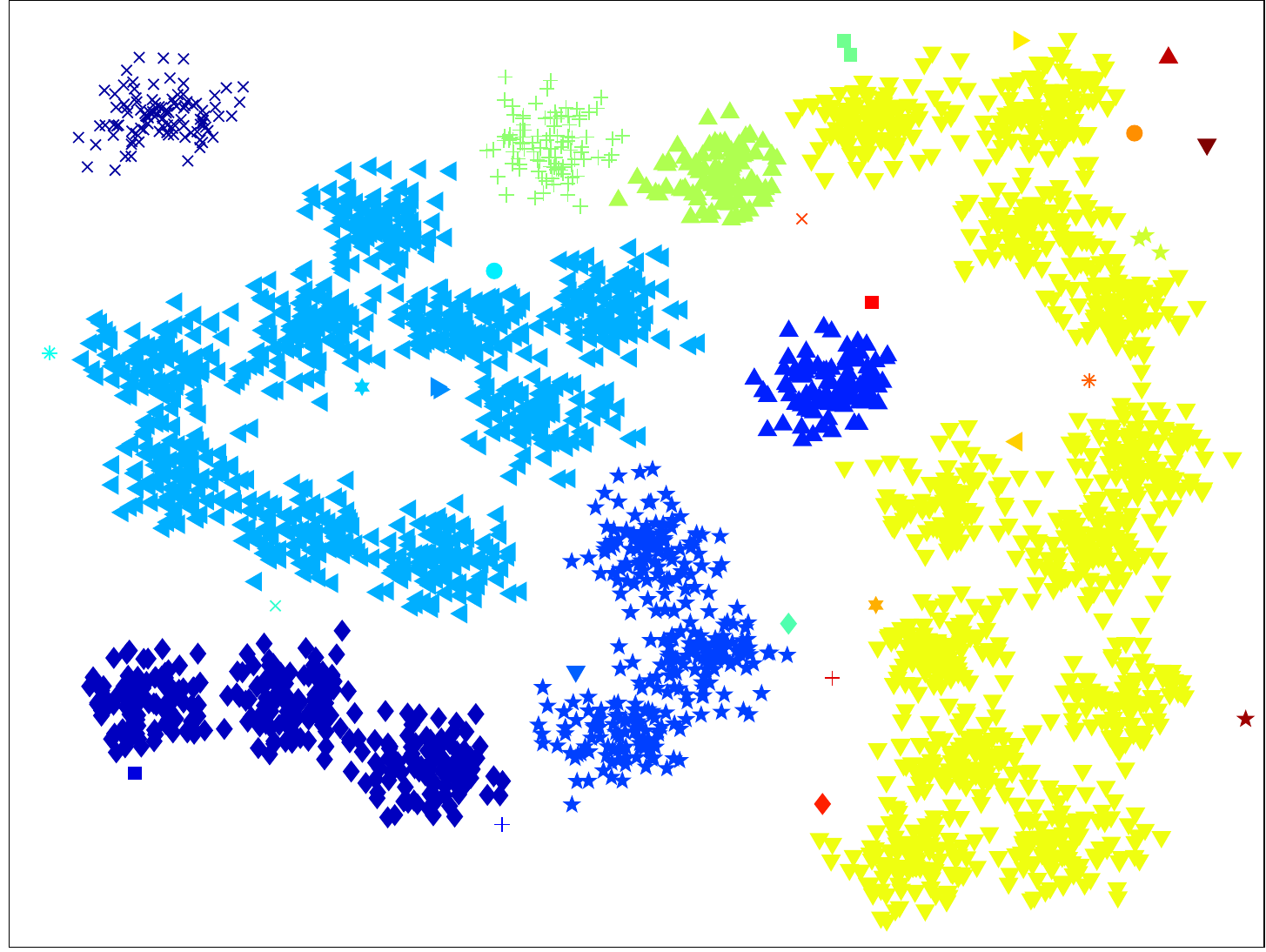}}
	\end{minipage}
	\hfill
	\begin{minipage}{0.19\linewidth}
		\centerline{\includegraphics[width=1\textwidth]{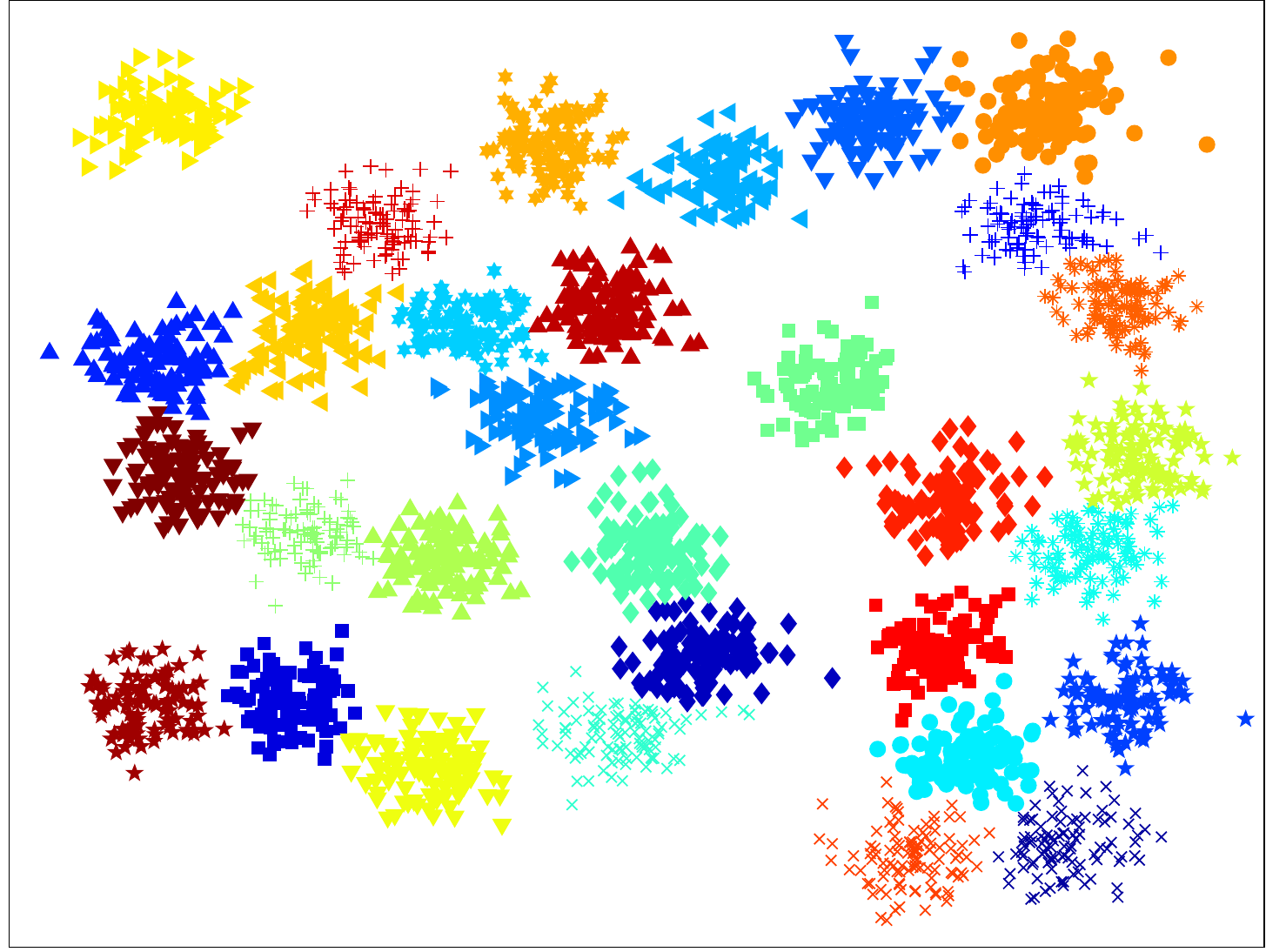}}
	\end{minipage}
	\hfill
	\begin{minipage}{0.19\linewidth}
		\centerline{\includegraphics[width=1\textwidth]{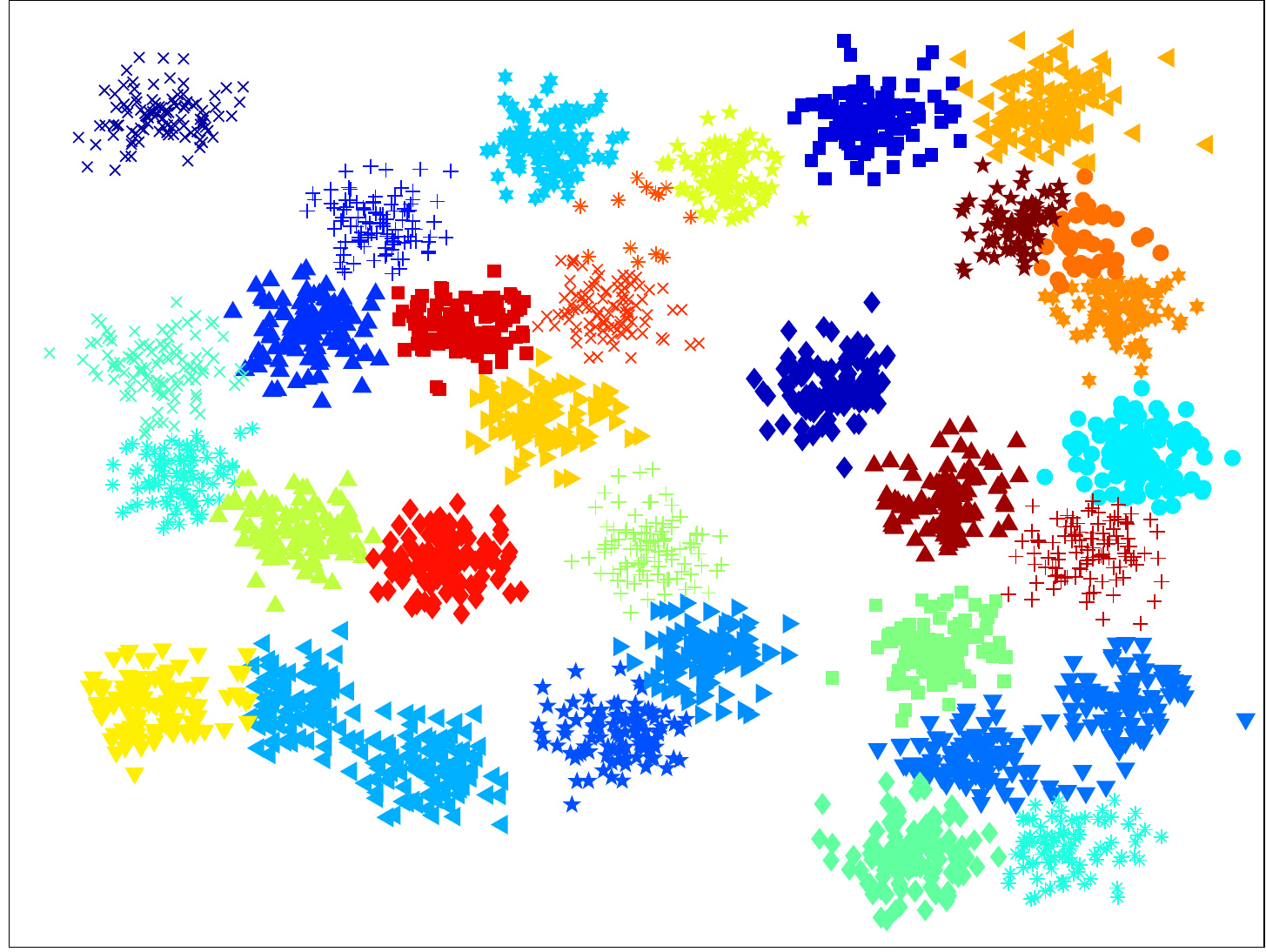}}
	\end{minipage}
	\hfill
	\begin{minipage}{0.19\linewidth}
		\centerline{\includegraphics[width=1\textwidth]{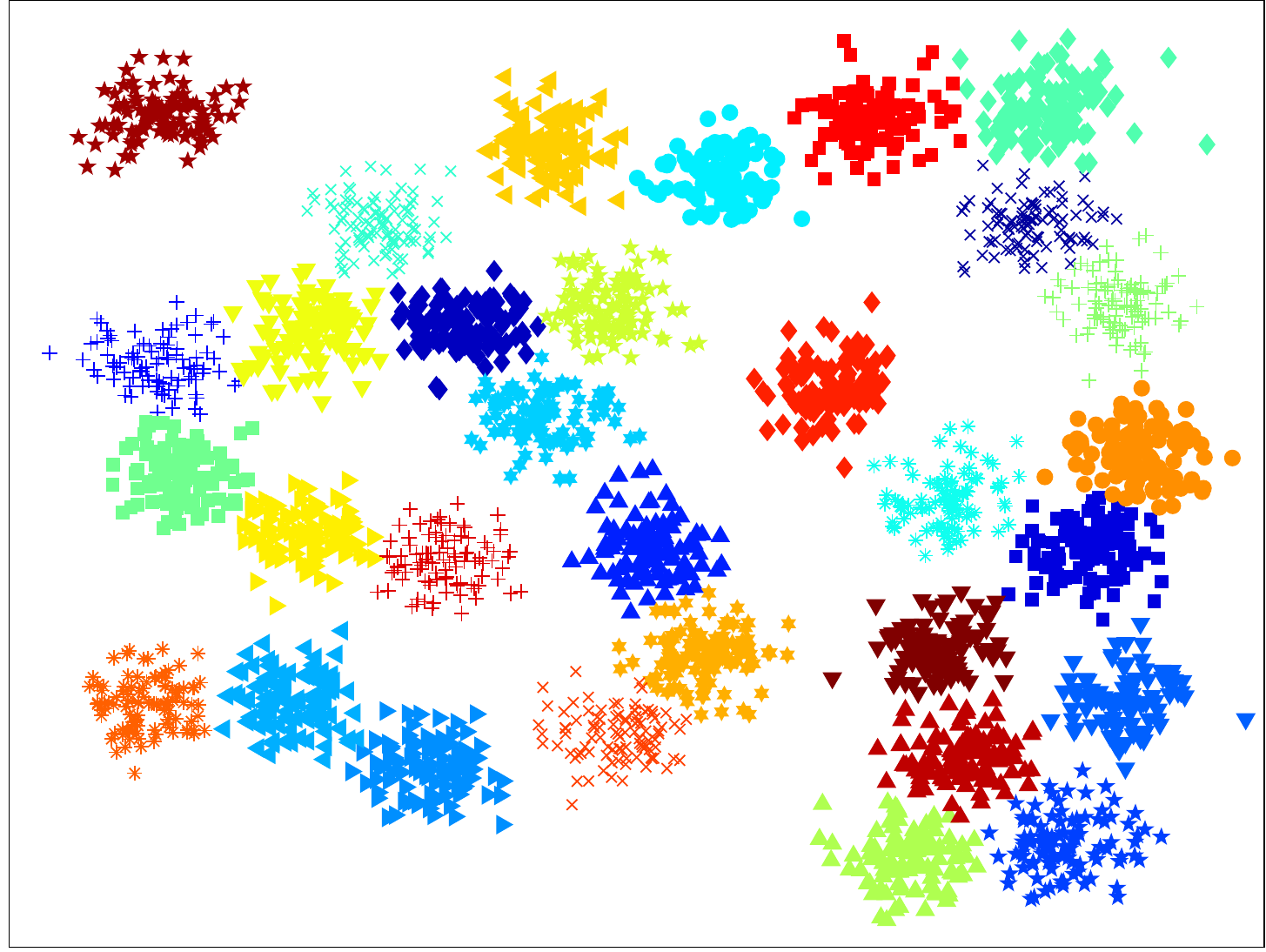}}
	\end{minipage}
	\hfill
	\begin{minipage}{0.19\linewidth}
		\centerline{\includegraphics[width=1\textwidth]{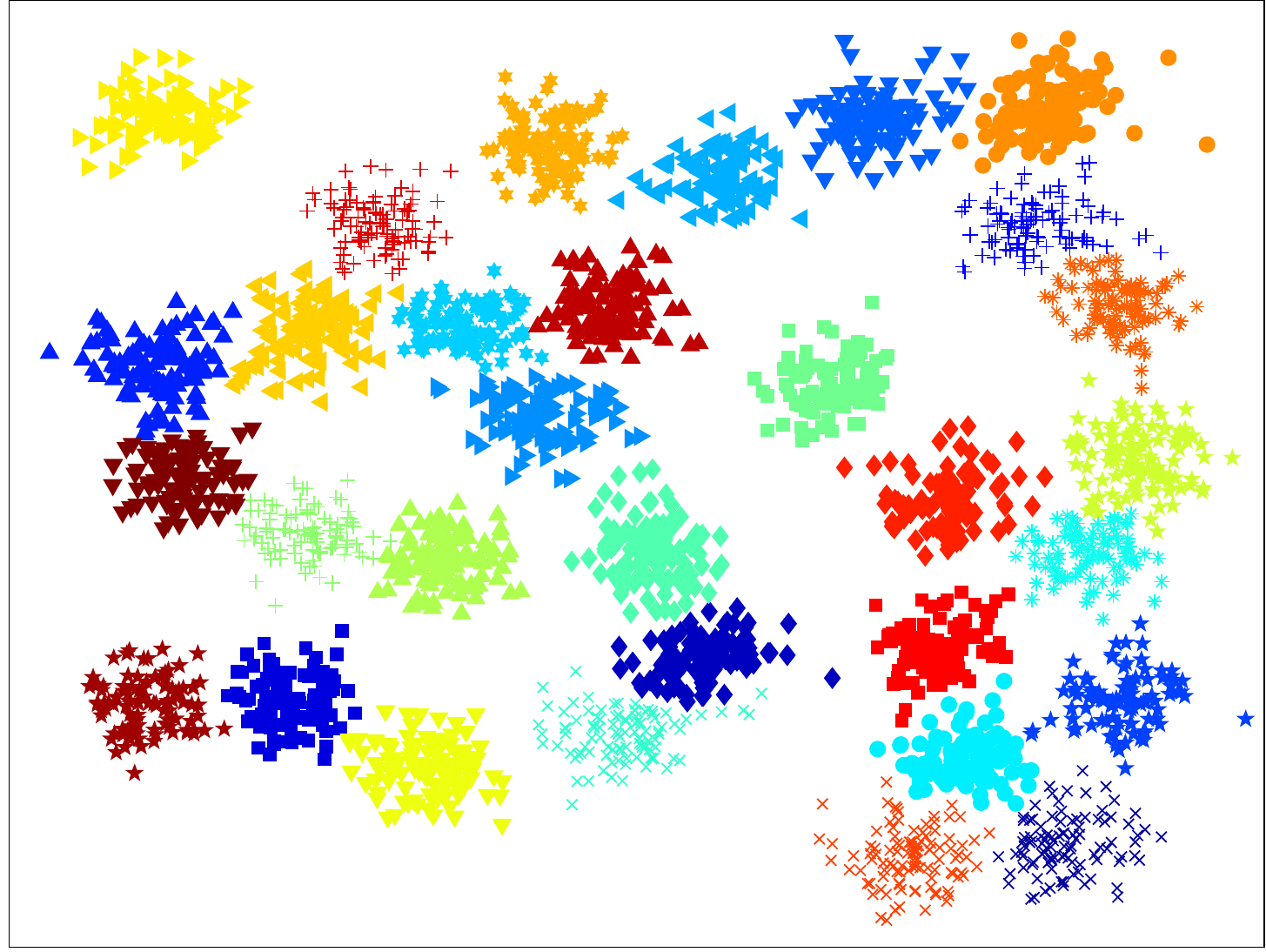}}
	\end{minipage}
	\vfill
\begin{minipage}{0\linewidth}
	\rightline{L}
\end{minipage}
\hfill
\begin{minipage}{0.19\linewidth}
	\centerline{\includegraphics[width=1\textwidth]{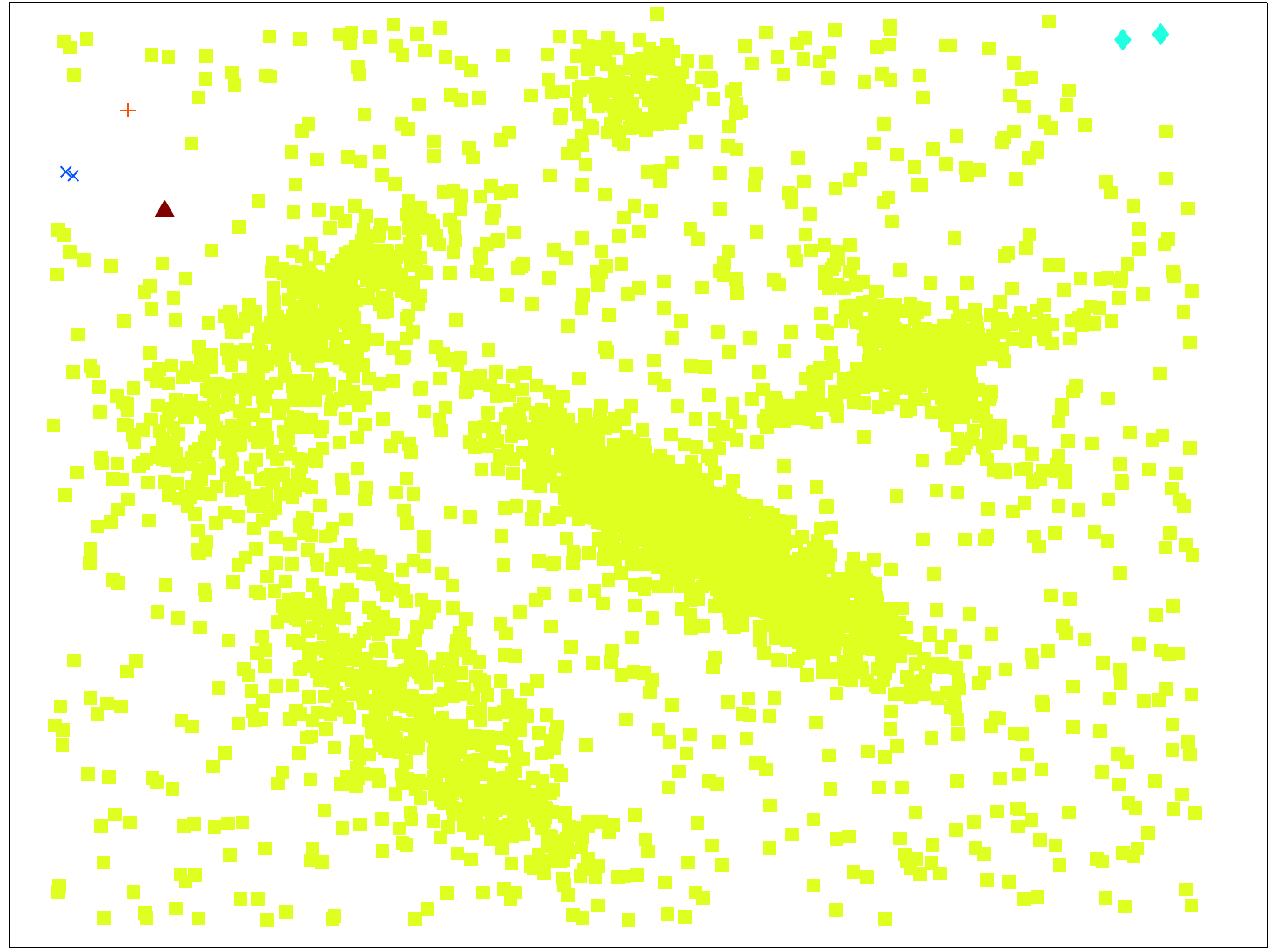}}
\end{minipage}
\hfill
\begin{minipage}{0.19\linewidth}
	\centerline{\includegraphics[width=1\textwidth]{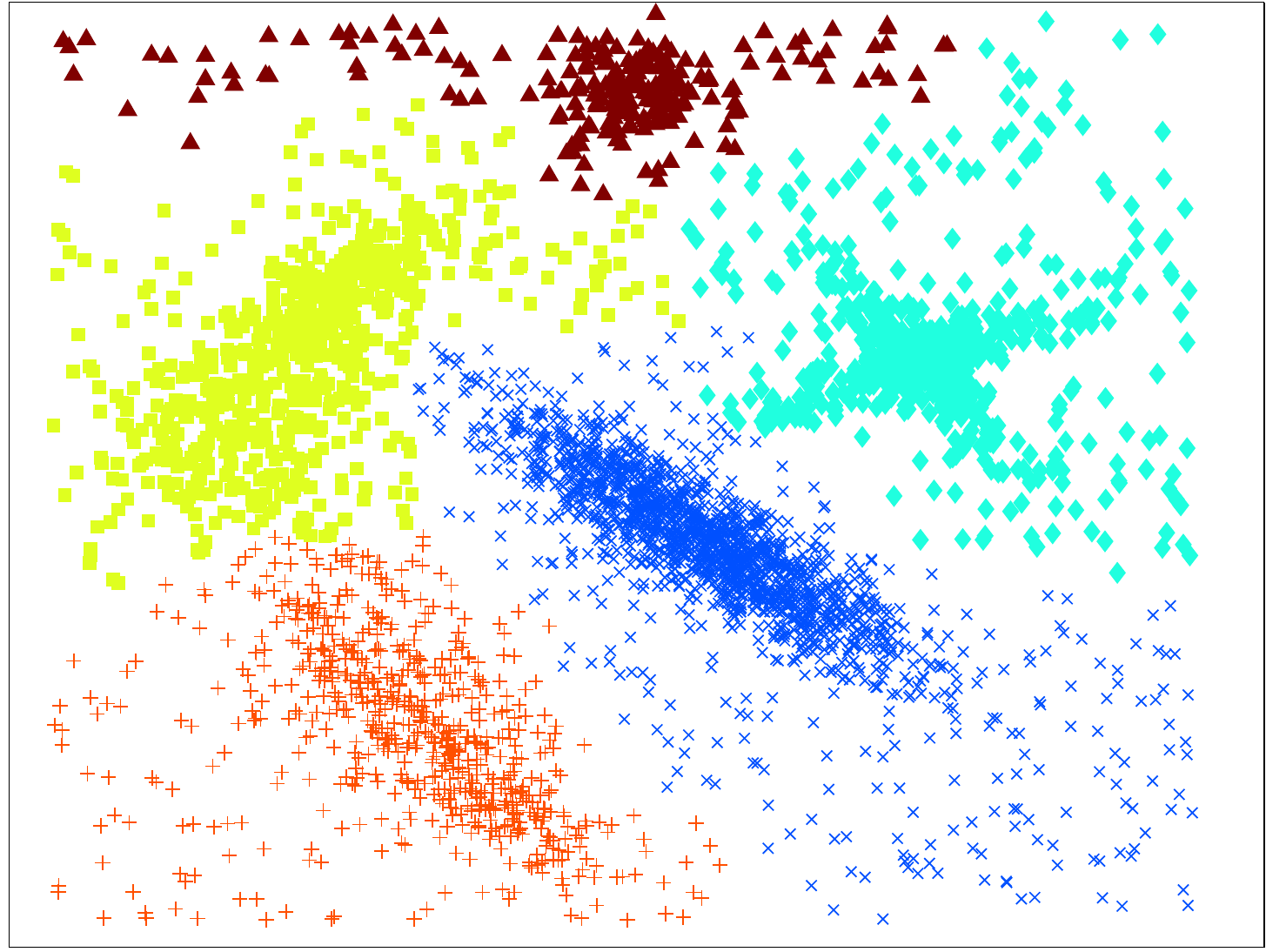}}
\end{minipage}
\hfill
\begin{minipage}{0.19\linewidth}
	\centerline{\includegraphics[width=1\textwidth]{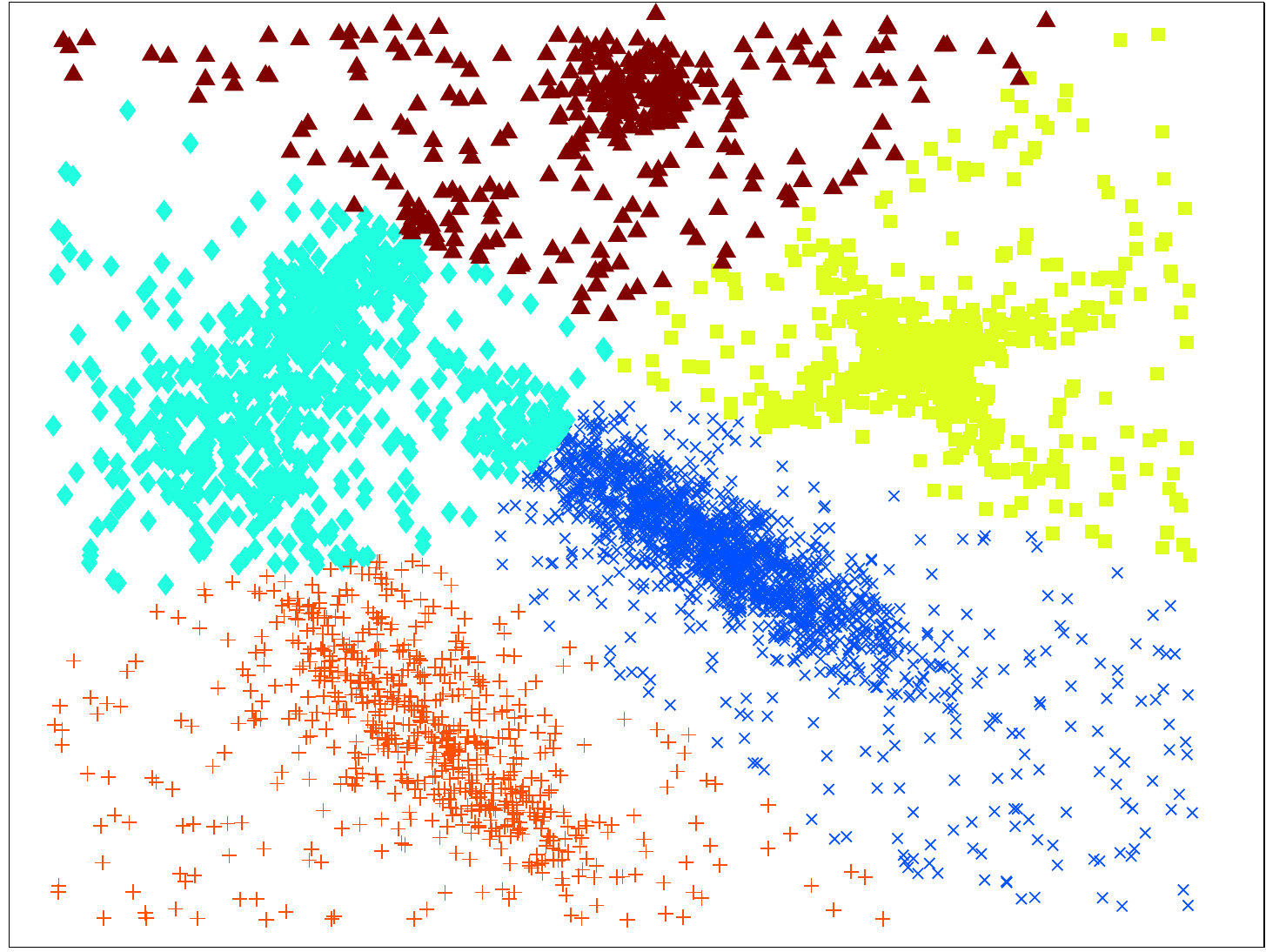}}
\end{minipage}
\hfill
\begin{minipage}{0.19\linewidth}
	\centerline{\includegraphics[width=1\textwidth]{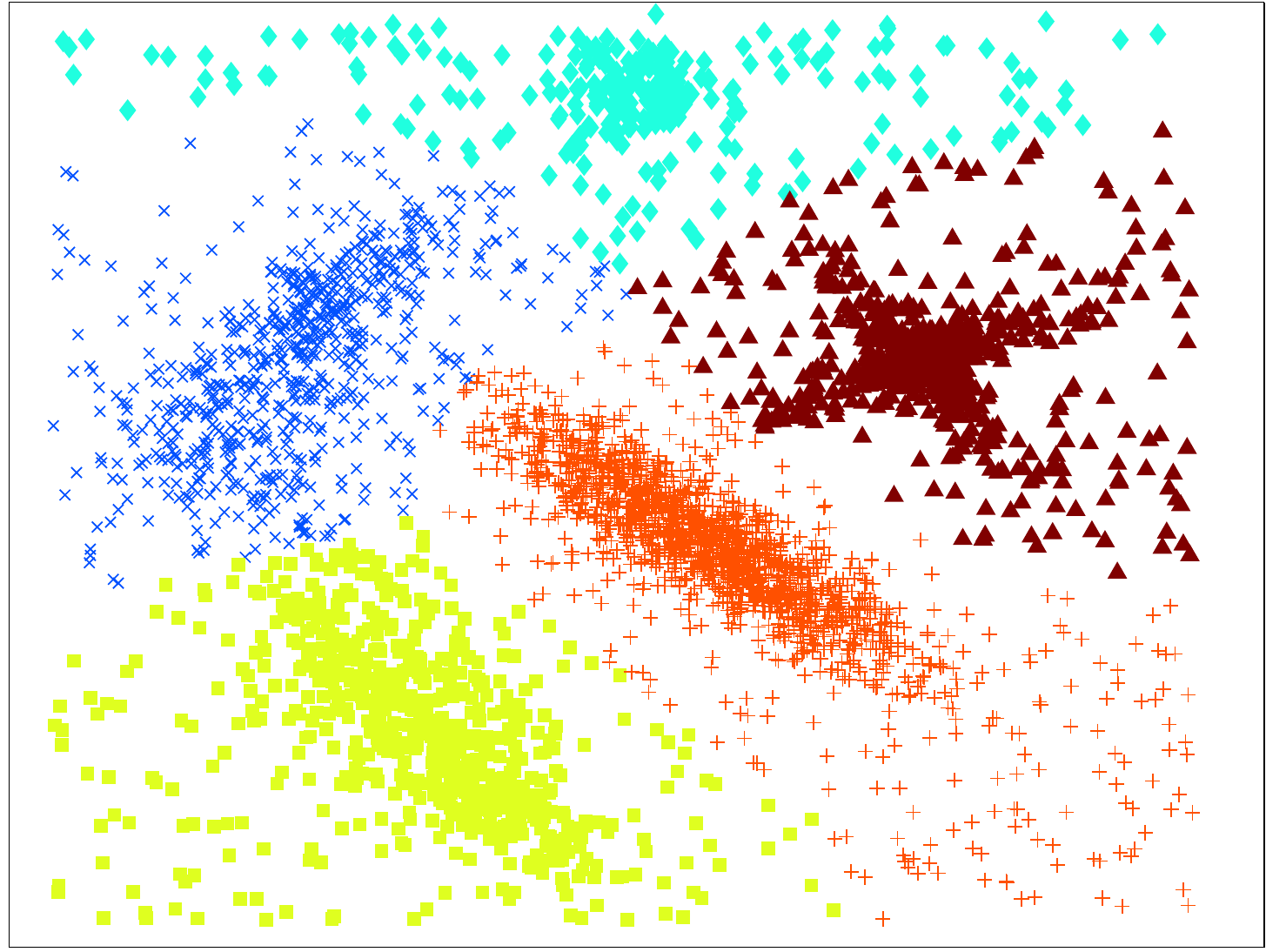}}
\end{minipage}
\hfill
\begin{minipage}{0.19\linewidth}
	\centerline{\includegraphics[width=1\textwidth]{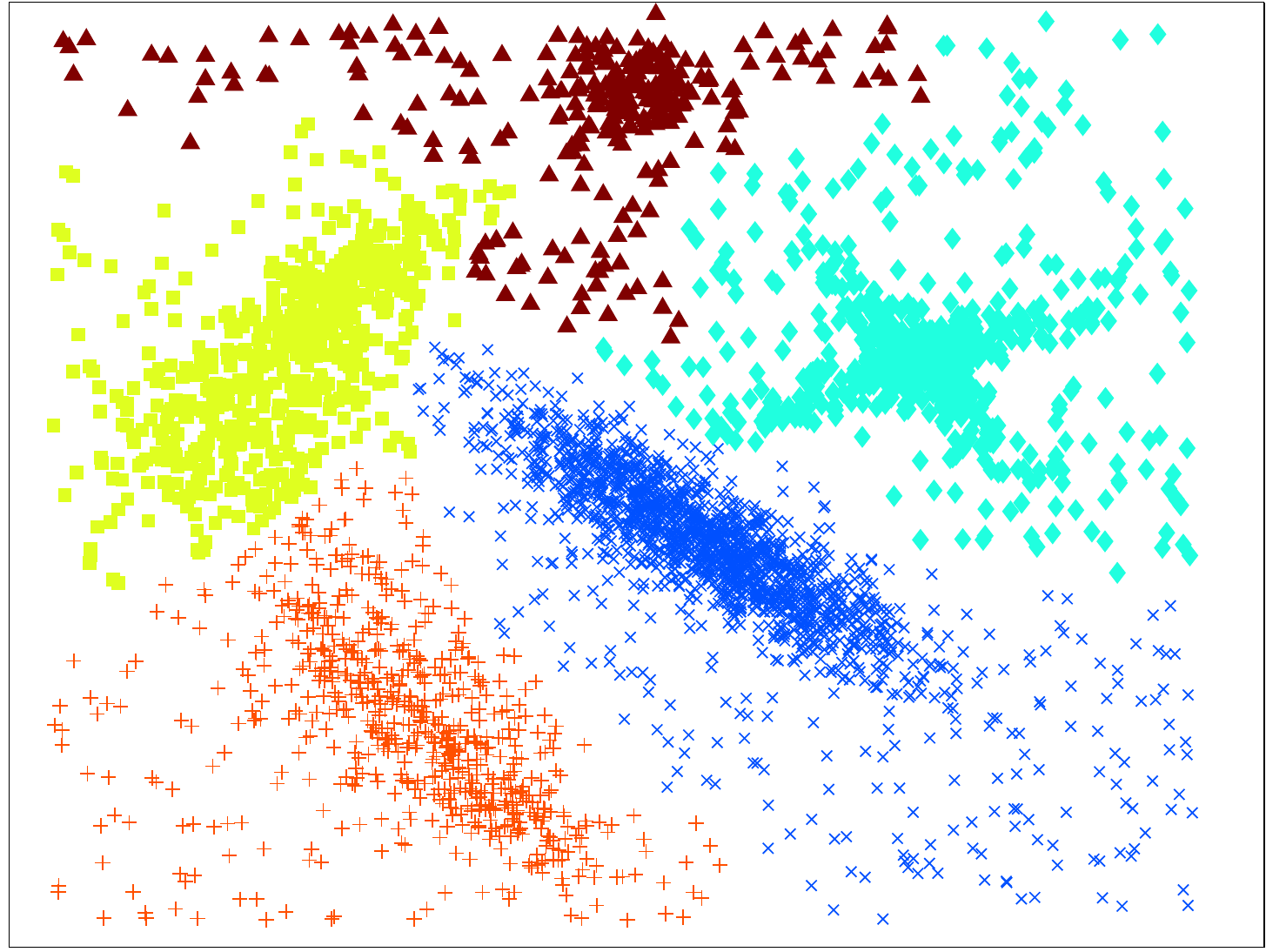}}
\end{minipage}
\vfill
\begin{minipage}{0\linewidth}
	\rightline{M}
\end{minipage}
\hfill
\begin{minipage}{0.19\linewidth}
	\centerline{\includegraphics[width=1\textwidth]{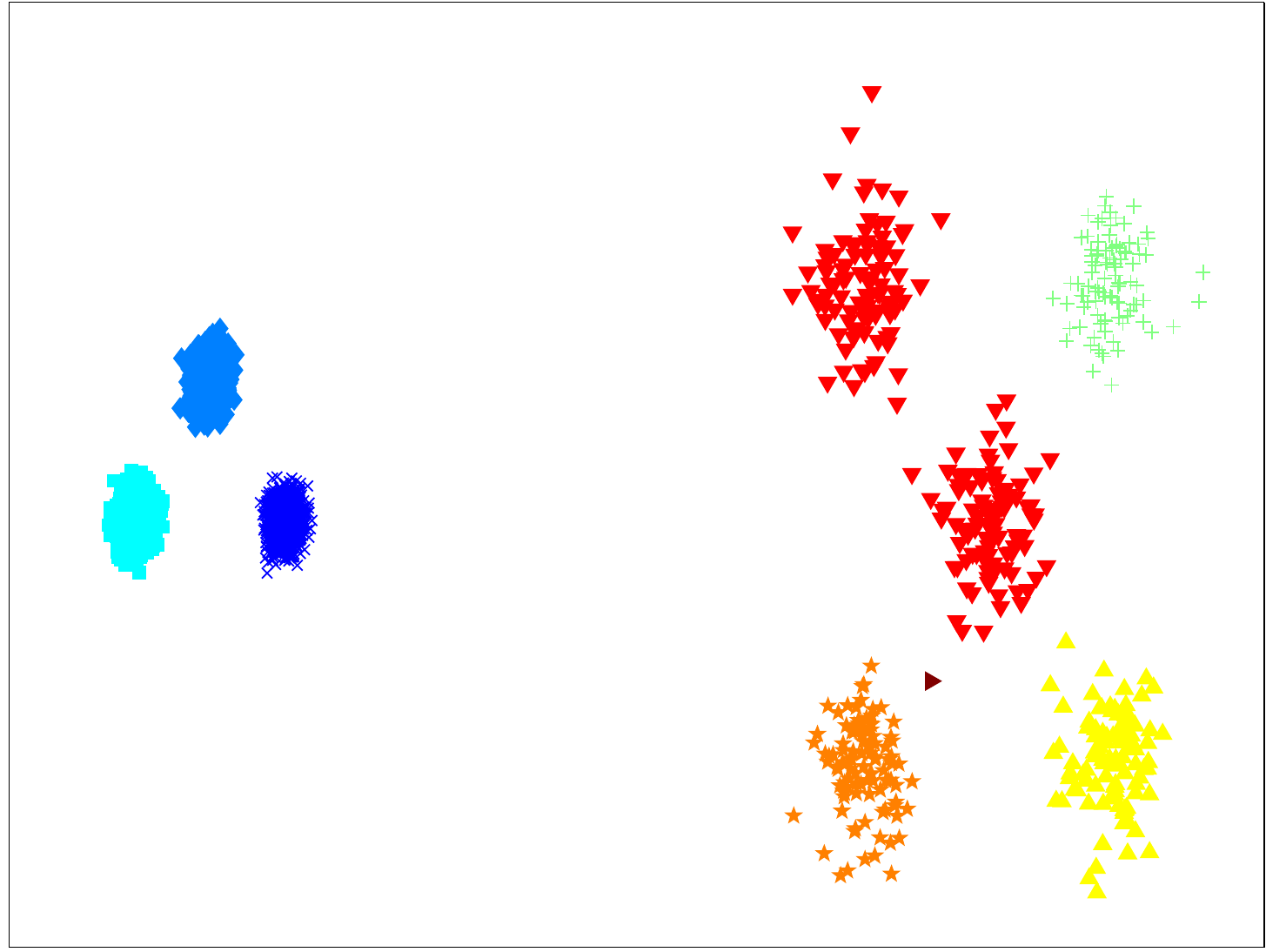}}
\end{minipage}
\hfill
\begin{minipage}{0.19\linewidth}
	\centerline{\includegraphics[width=1\textwidth]{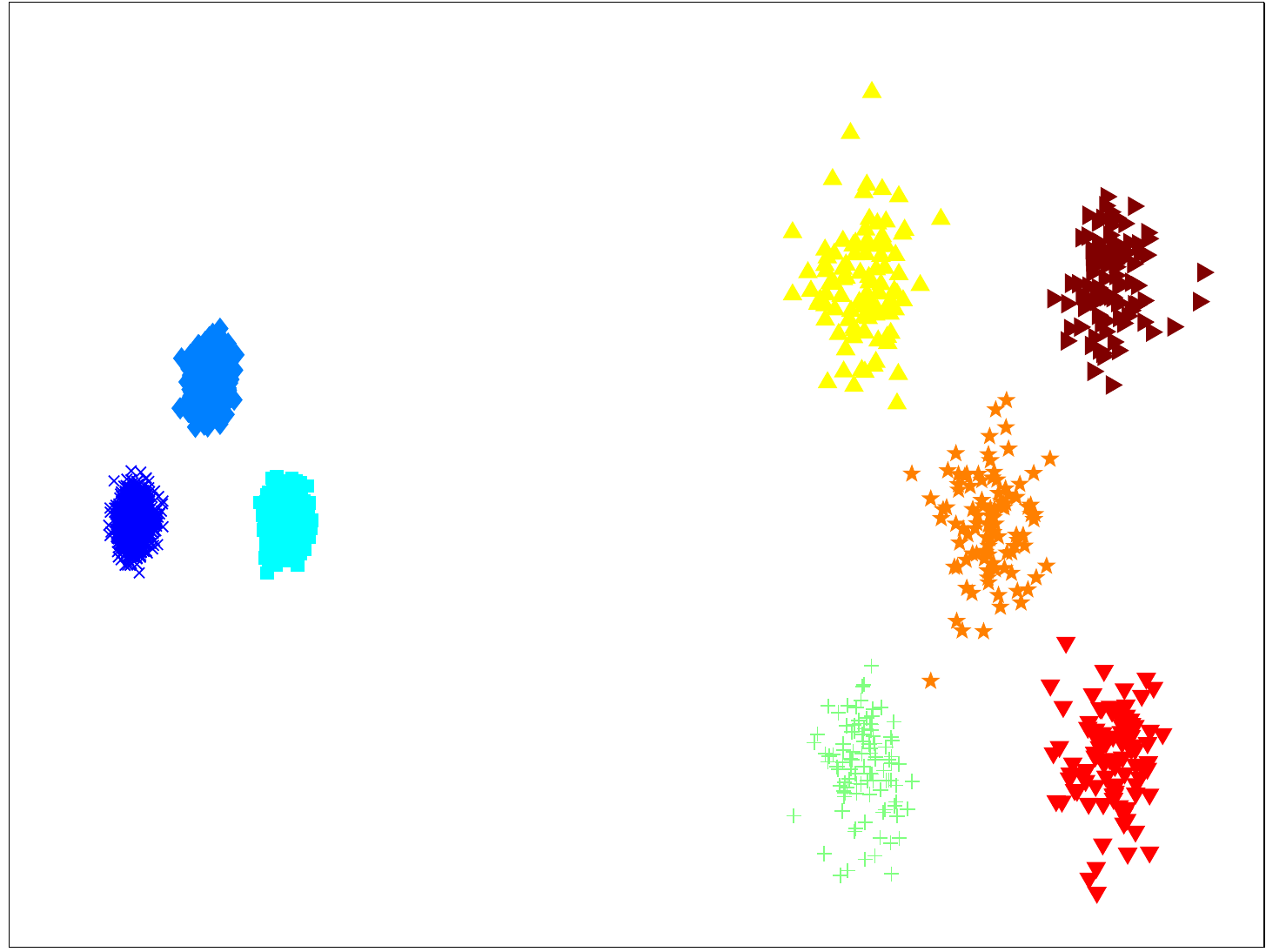}}
\end{minipage}
\hfill
\begin{minipage}{0.19\linewidth}
	\centerline{\includegraphics[width=1\textwidth]{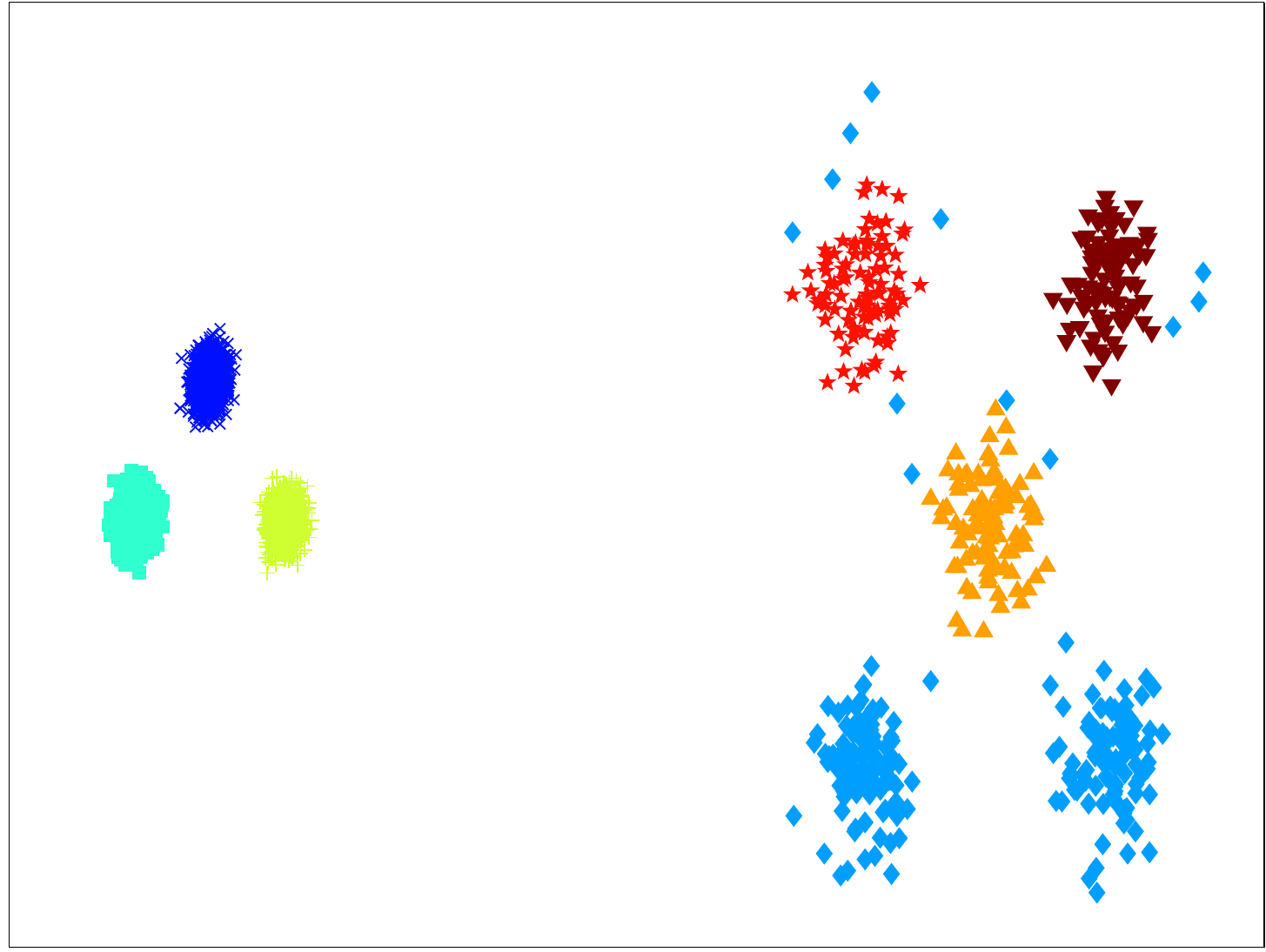}}
\end{minipage}
\hfill
\begin{minipage}{0.19\linewidth}
	\centerline{\includegraphics[width=1\textwidth]{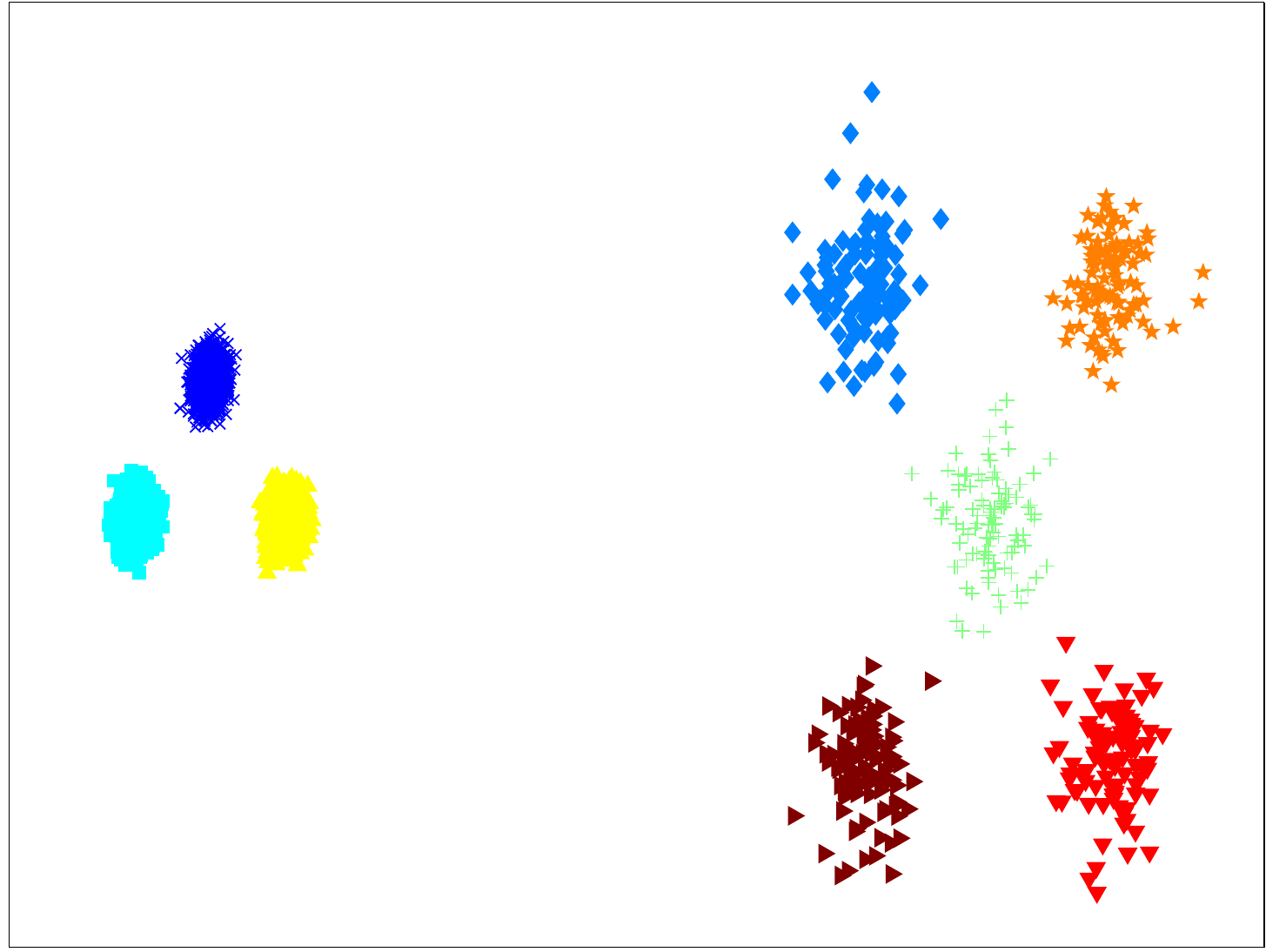}}
\end{minipage}
\hfill
\begin{minipage}{0.19\linewidth}
	\centerline{\includegraphics[width=1\textwidth]{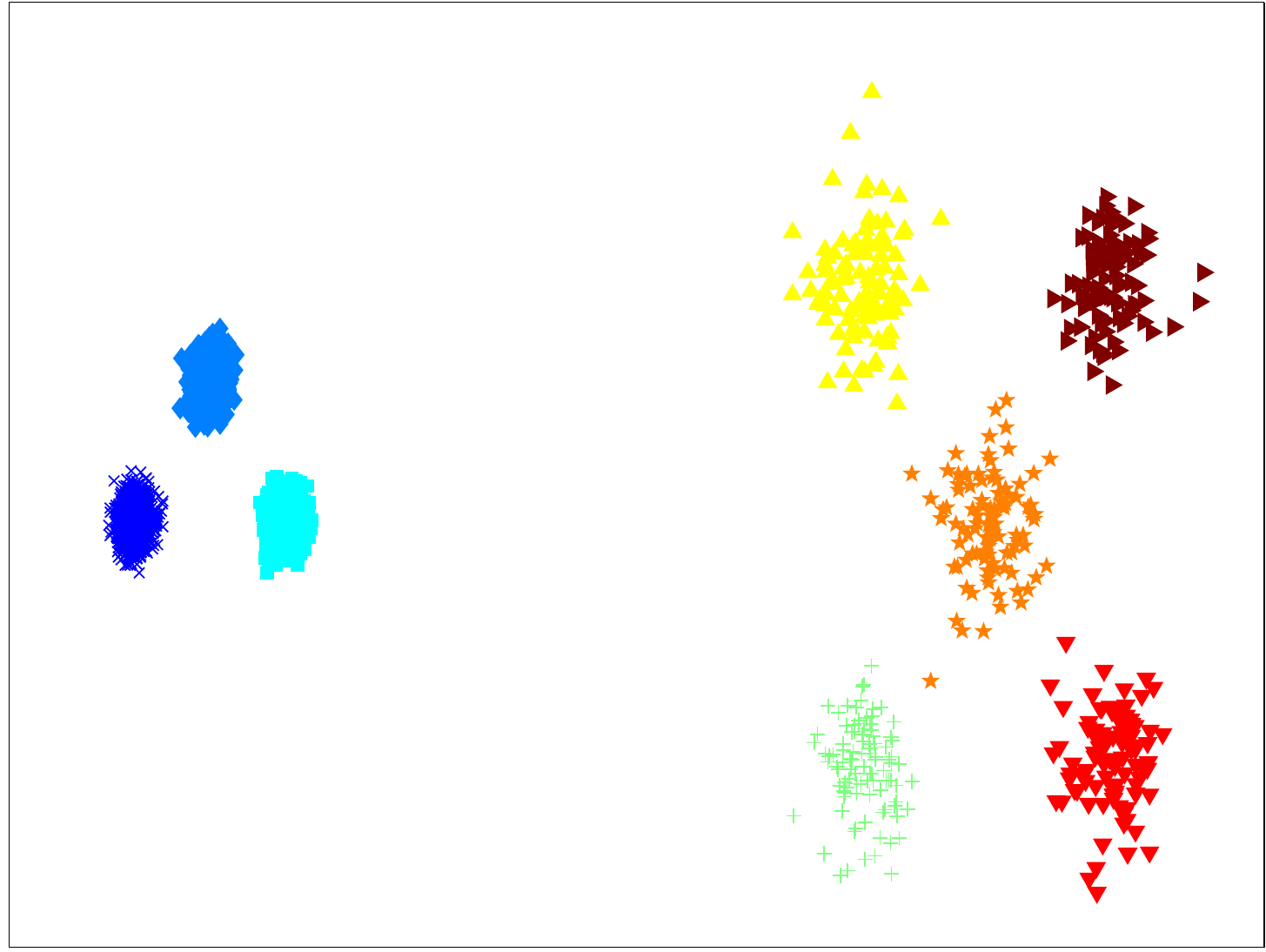}}
\end{minipage}
\vfill
\begin{minipage}{0\linewidth}
	\rightline{N}
\end{minipage}
\hfill
\begin{minipage}{0.19\linewidth}
	\centerline{\includegraphics[width=1\textwidth]{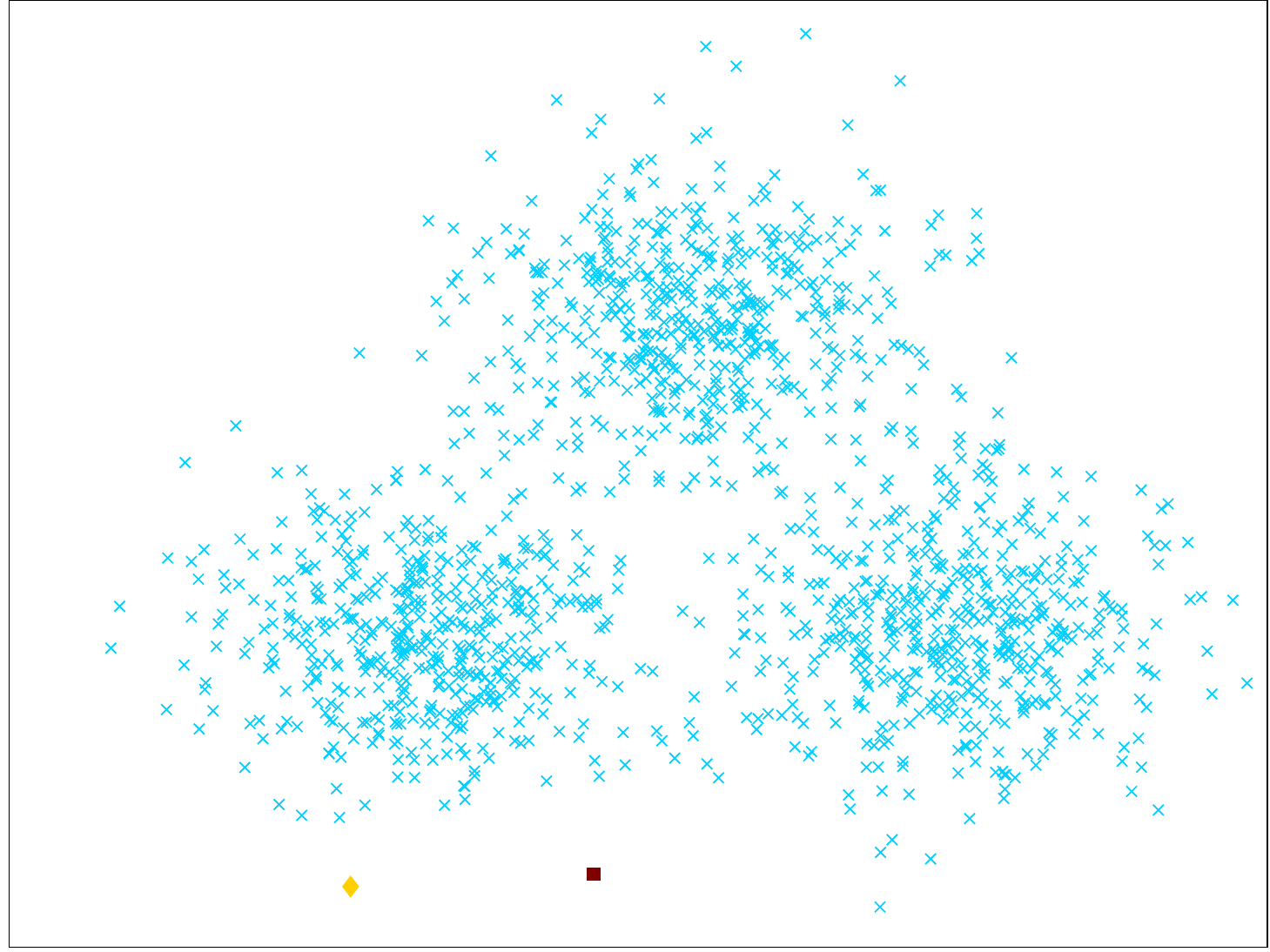}}
\end{minipage}
\hfill
\begin{minipage}{0.19\linewidth}
	\centerline{\includegraphics[width=1\textwidth]{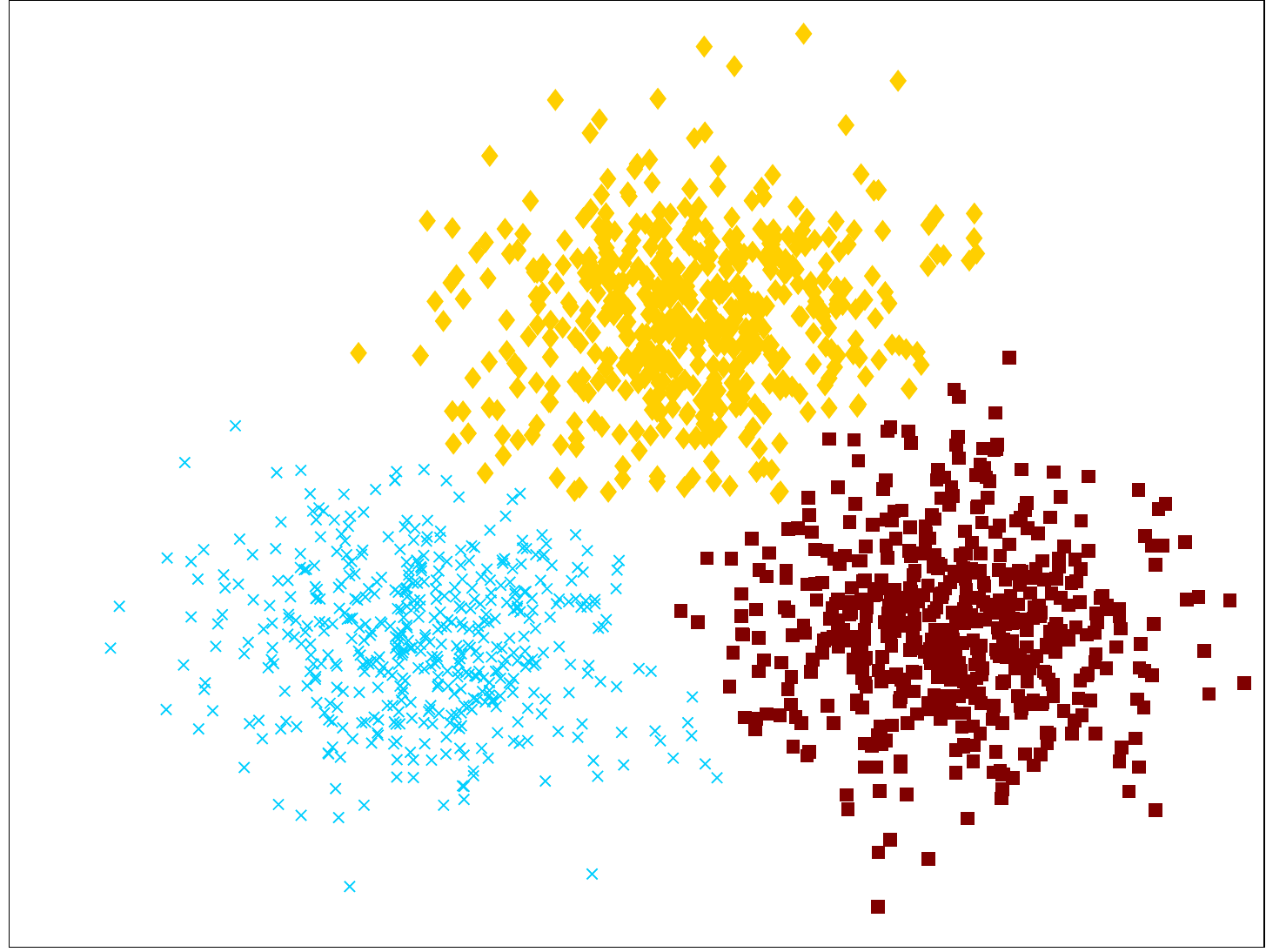}}
\end{minipage}
\hfill
\begin{minipage}{0.19\linewidth}
	\centerline{\includegraphics[width=1\textwidth]{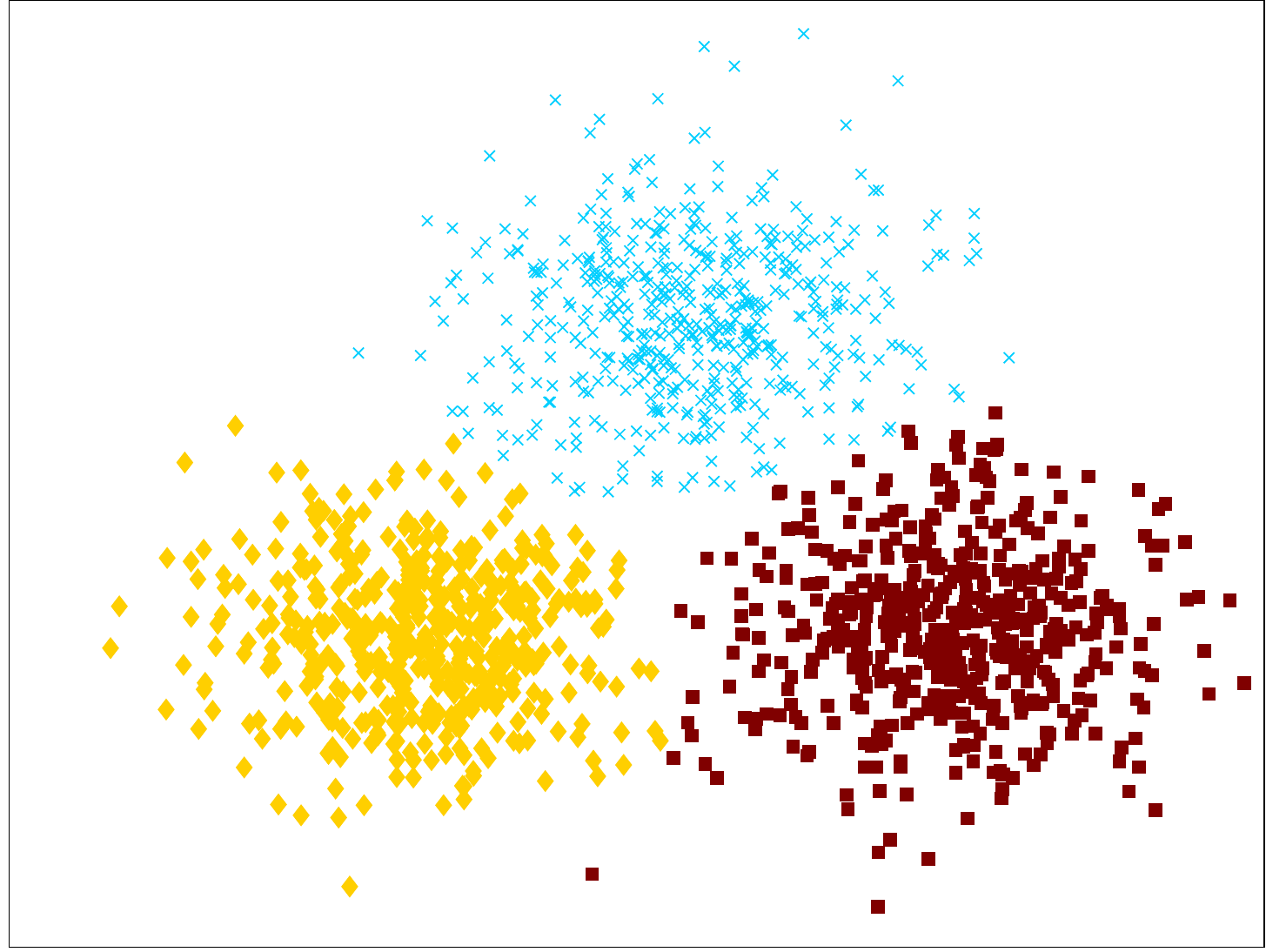}}
\end{minipage}
\hfill
\begin{minipage}{0.19\linewidth}
	\centerline{\includegraphics[width=1\textwidth]{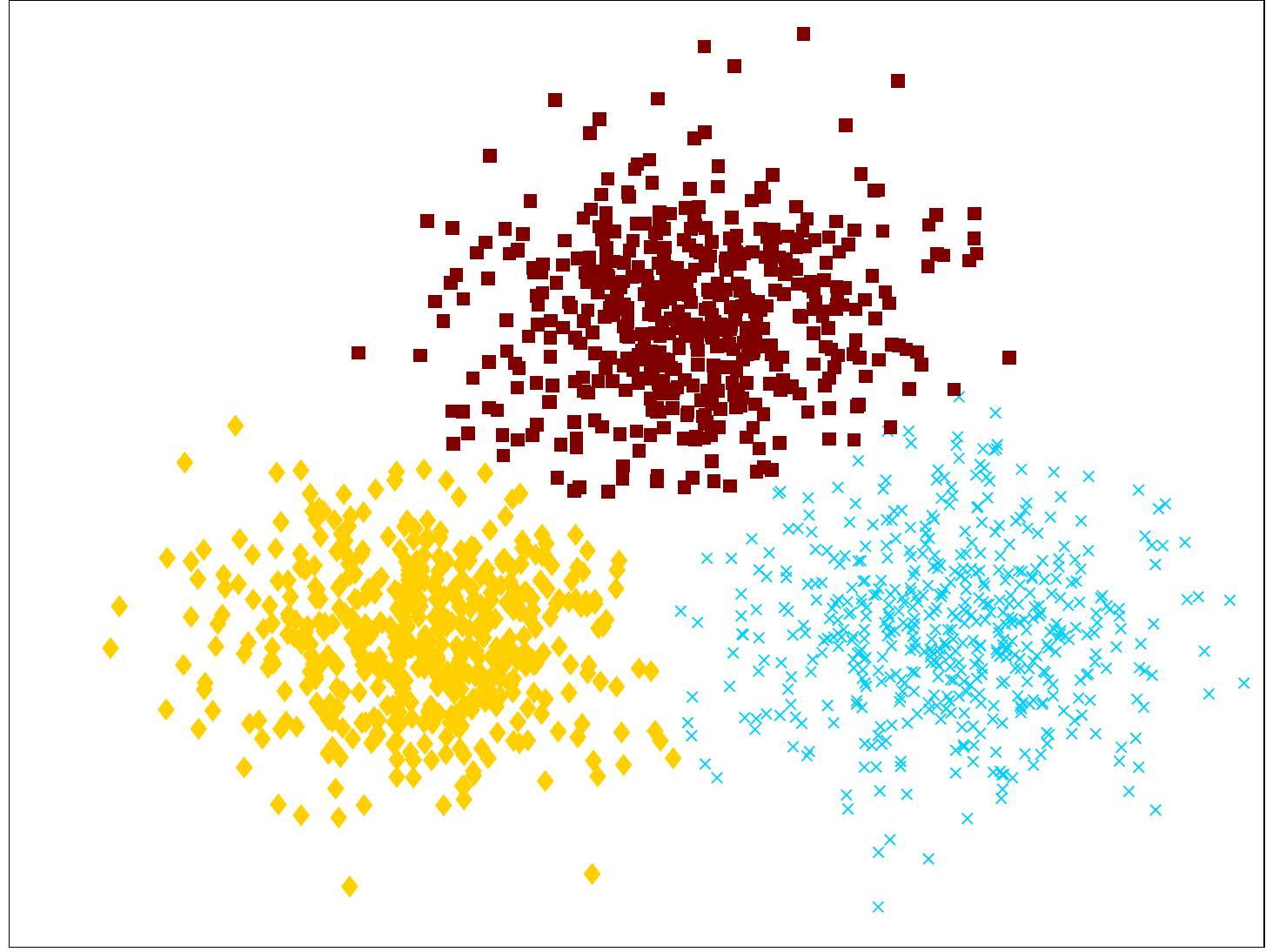}}
\end{minipage}
\hfill
\begin{minipage}{0.19\linewidth}
	\centerline{\includegraphics[width=1\textwidth]{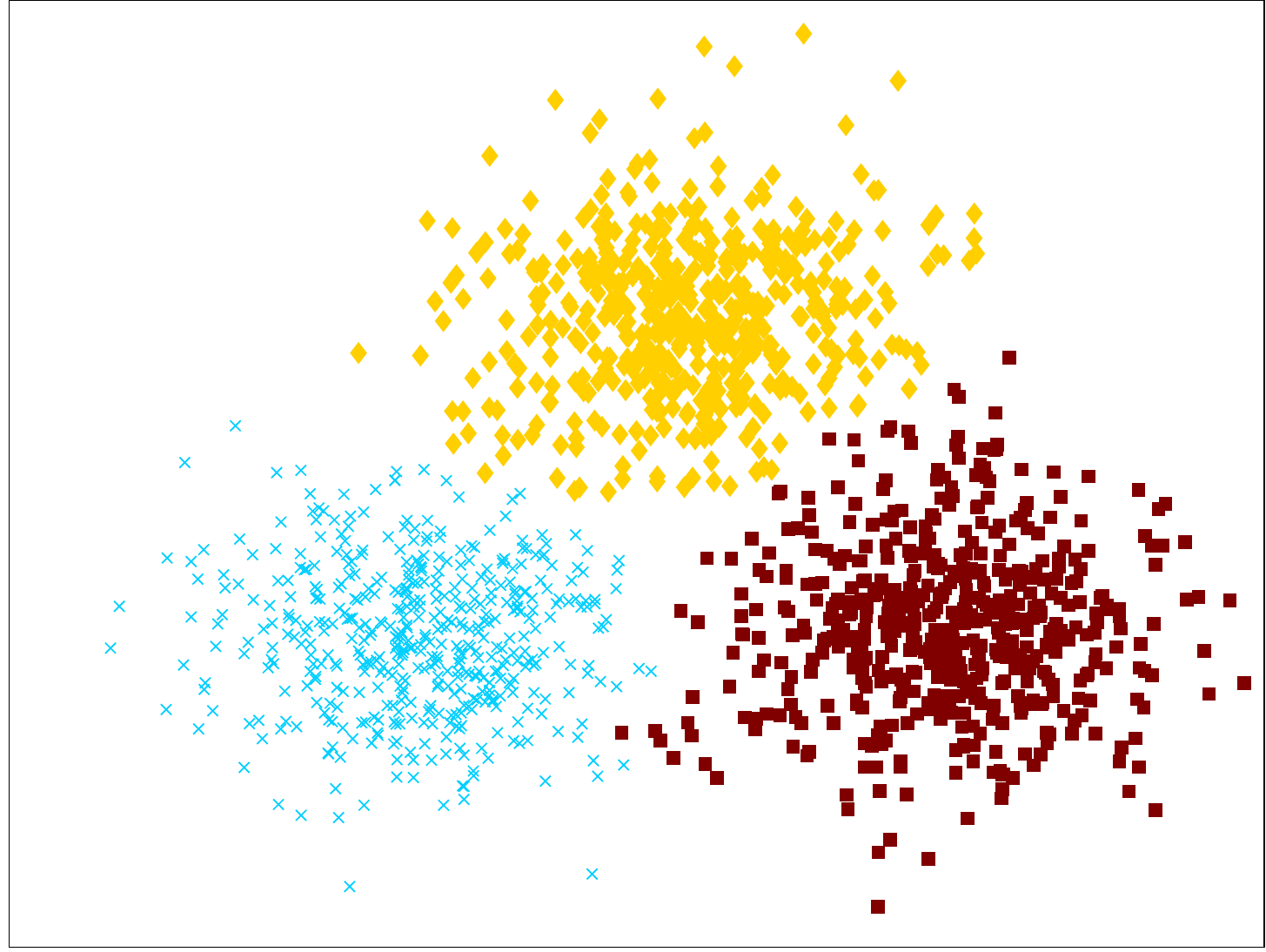}}
\end{minipage}
\vfill
\begin{minipage}{0\linewidth}
	\rightline{O}
\end{minipage}
\hfill
\begin{minipage}{0.19\linewidth}
	\centerline{\includegraphics[width=1\textwidth]{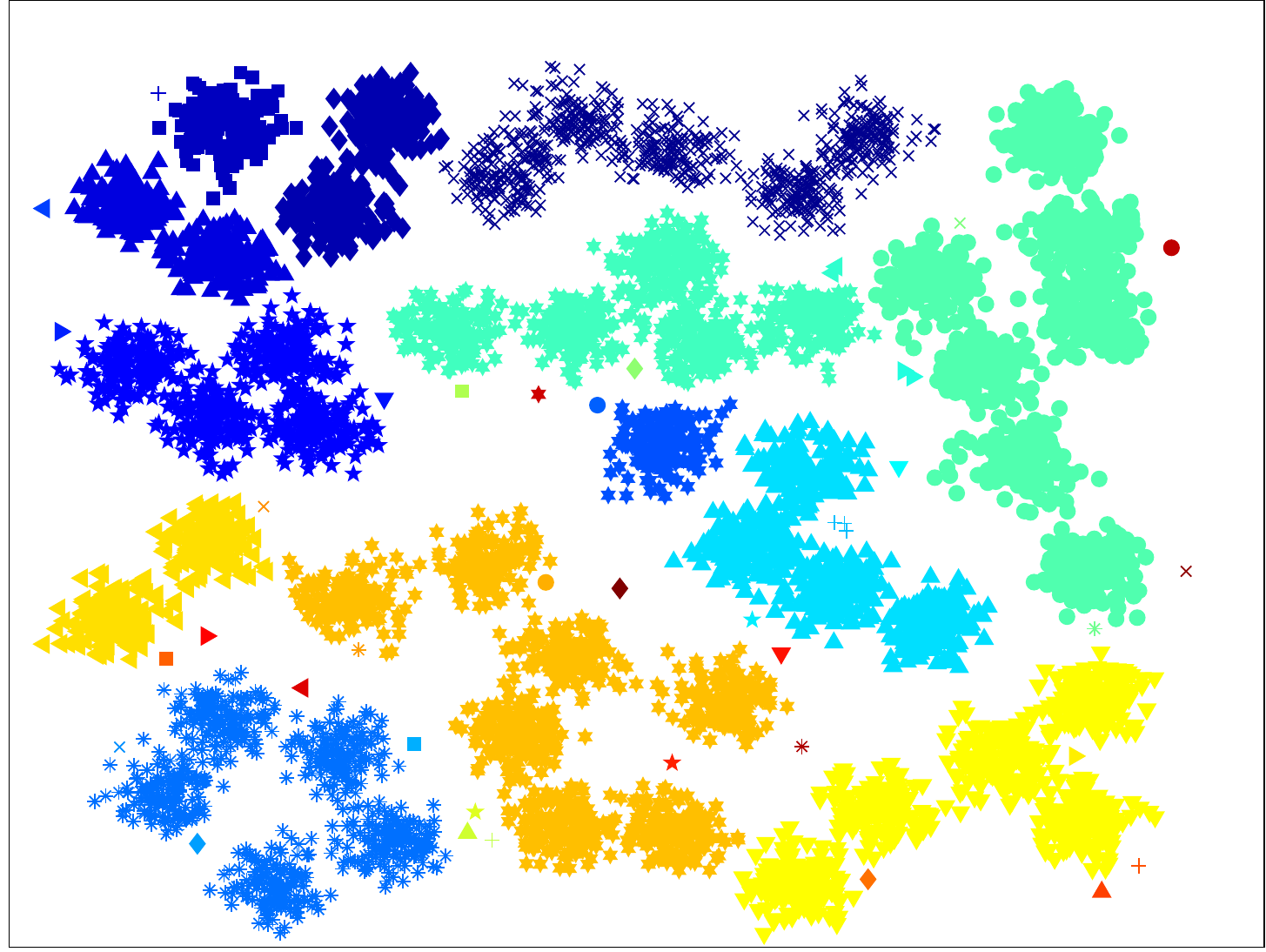}}
	\centerline{SLC}
\end{minipage}
\hfill
\begin{minipage}{0.19\linewidth}
	\centerline{\includegraphics[width=1\textwidth]{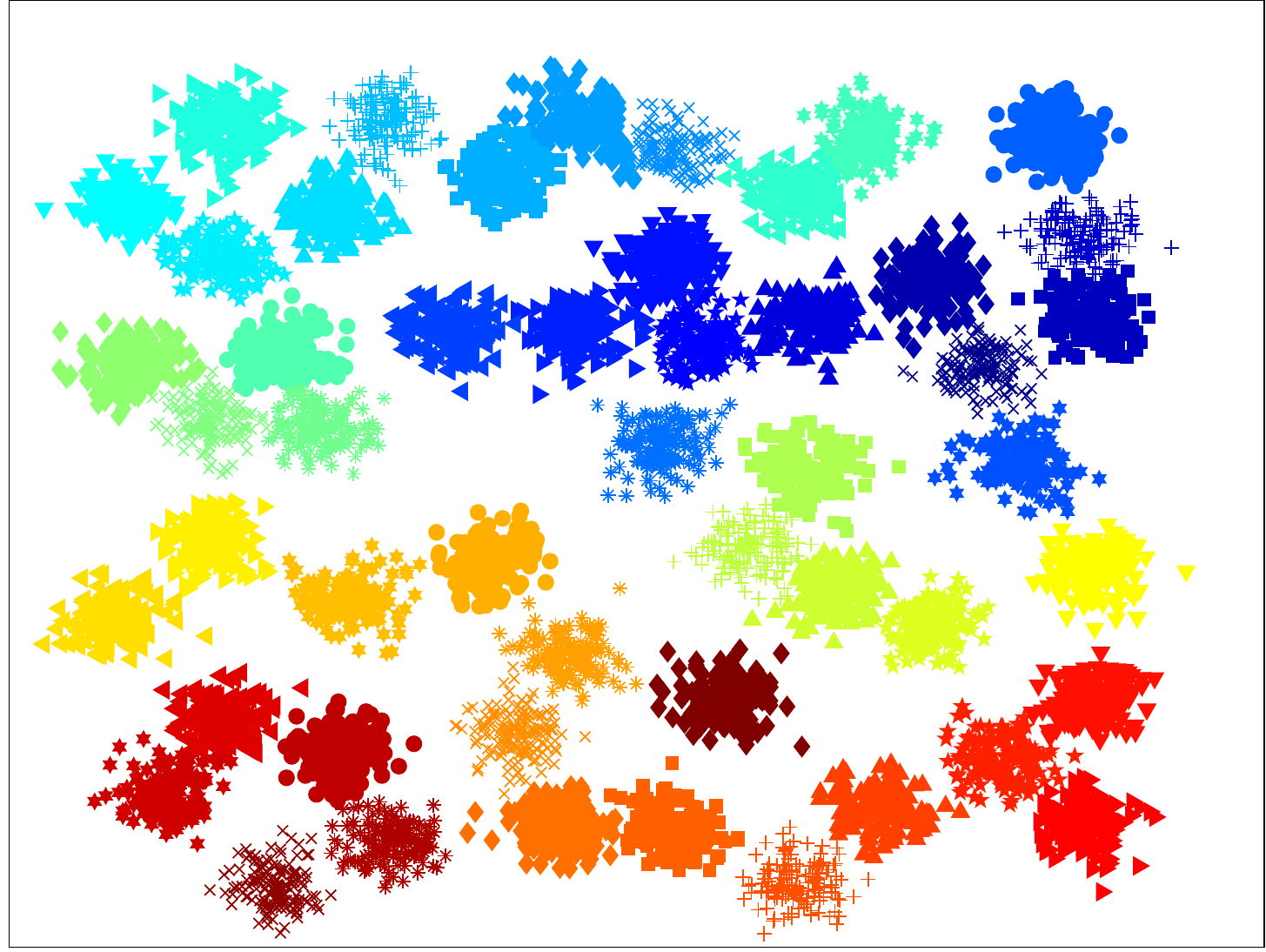}}
	\centerline{FDPC}
\end{minipage}
\hfill
\begin{minipage}{0.19\linewidth}
	\centerline{\includegraphics[width=1\textwidth]{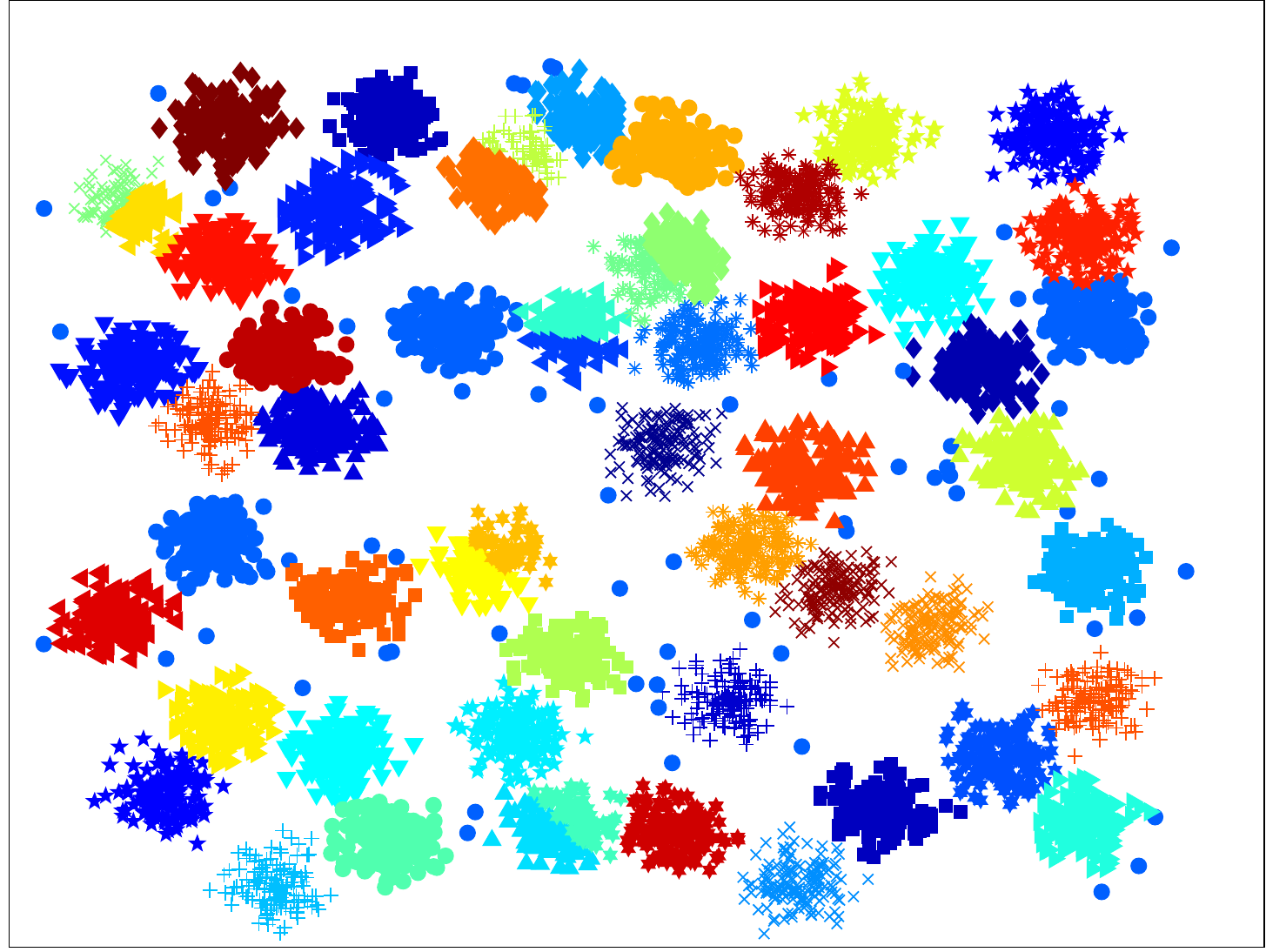}}
	\centerline{Kernel KM}
\end{minipage}
\hfill
\begin{minipage}{0.19\linewidth}
	\centerline{\includegraphics[width=1\textwidth]{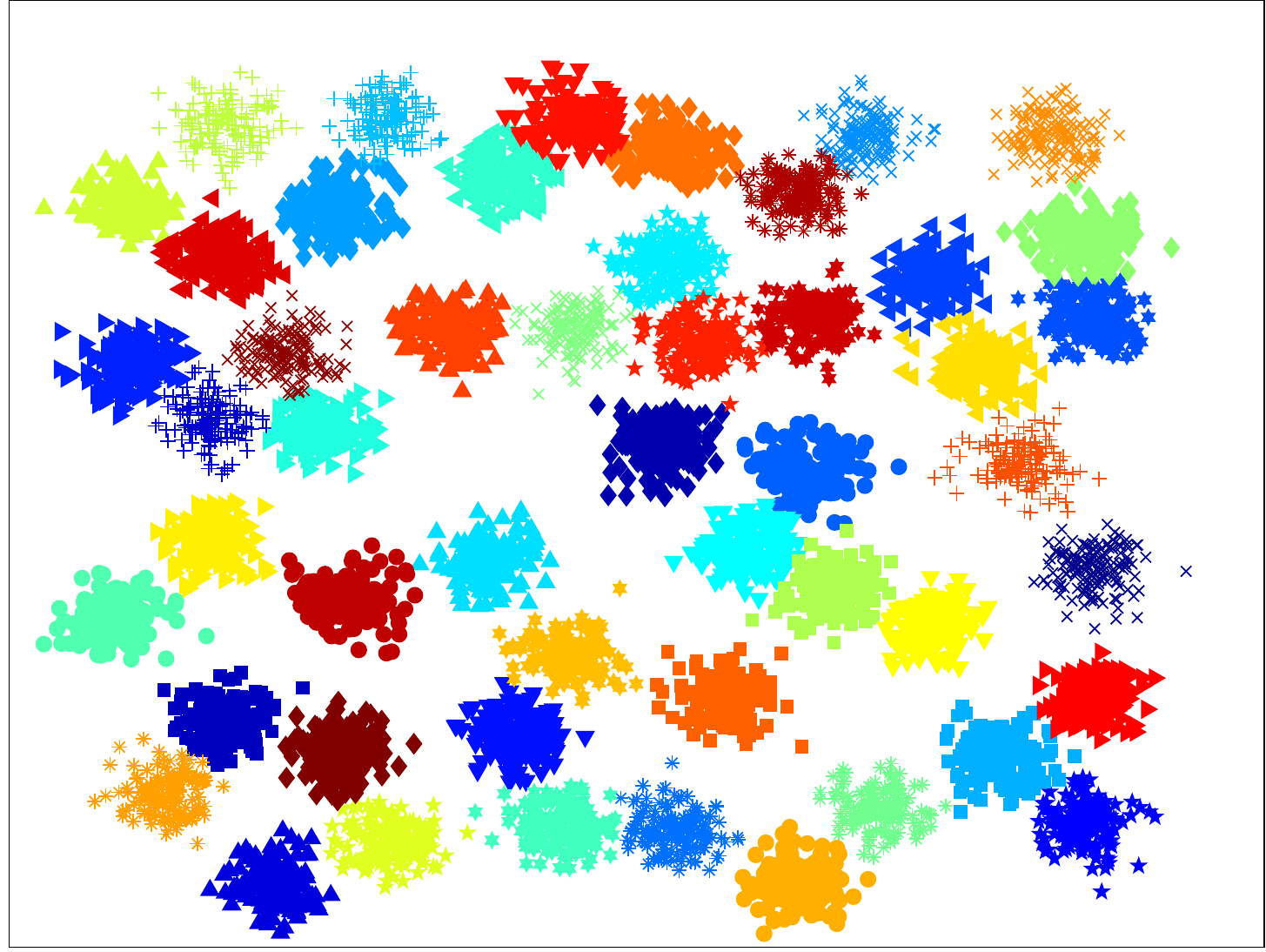}}
	\centerline{NCUT}
\end{minipage}
\hfill
\begin{minipage}{0.19\linewidth}
	\centerline{\includegraphics[width=1\textwidth]{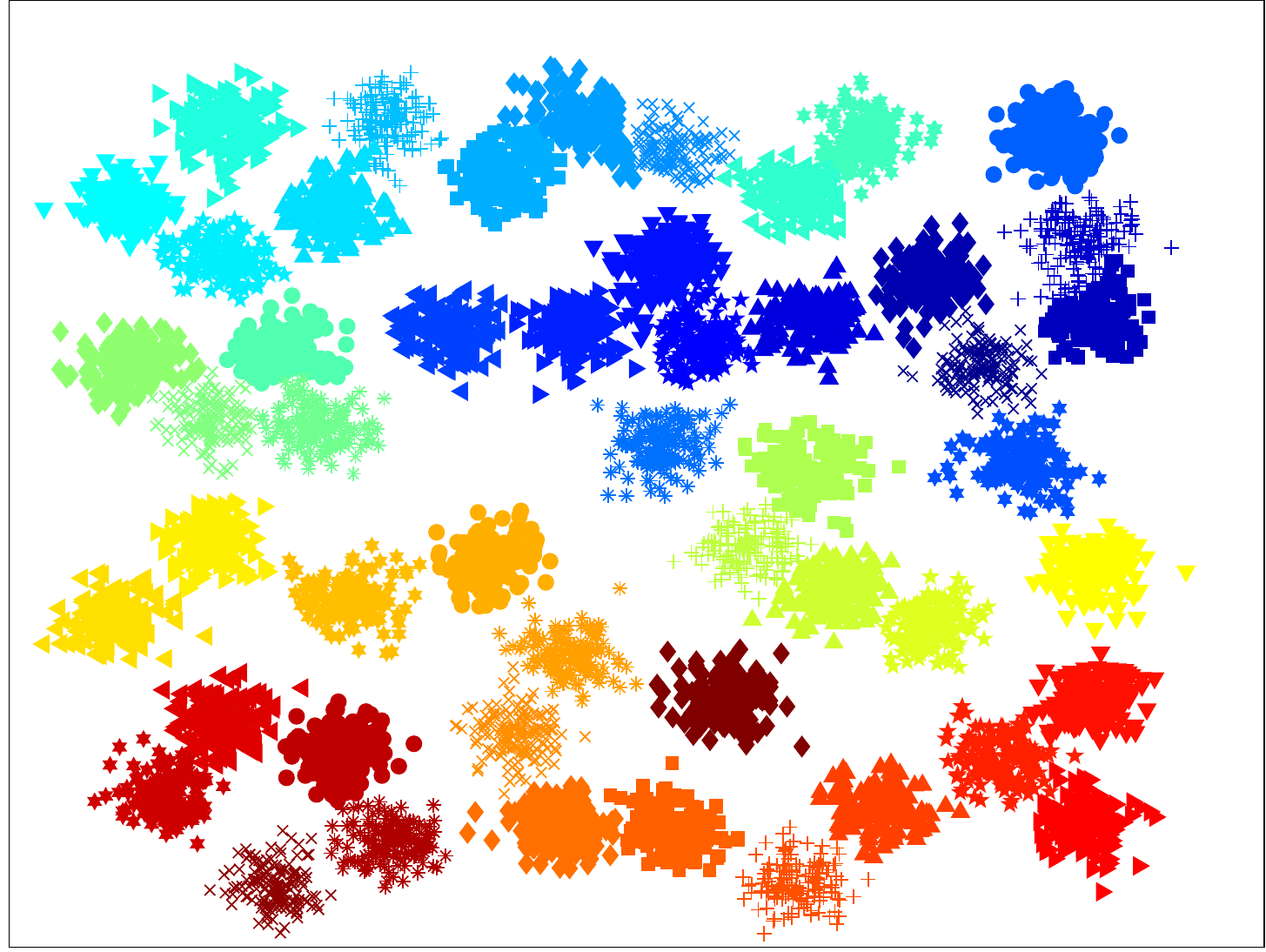}}
	\centerline{GOPC}
\end{minipage}
\caption{Experimental results on synthetic datasets; (I) $R15$; (J) $Jain$; (K) $D31$; (L) $PanelB$; (M) $Unbalance$; (N) $Lyga$; (O) $A3$.}
\label{synthetic3}
\end{figure*}

The experimental results shown in \cref{synthetic1,synthetic3} indicate that the GOPC algorithm achieves the best performance, followed by the FDPC. The GOPC algorithm yields satisfactory results on all datasets described above except for $Aggregation$. The reason is that there are a series of tight nodes in $Aggregation$ that bridge the two clusters together, making it difficult to manage this kind of dataset based only on dissimilarity. FDPC also performs well on most datasets except for the randomly distributed datasets, such as $DS1$ and $DS2$, which can be attributed to the fact that FDPC can only detect clusters that have an obvious center~\cite{Chen2016Effectively}. Although NCUT works perfectly on most datasets, it gives very poor results on datasets $Spiral$ and $Jain$. According to Refs.~\cite{Dhillon2007Weighted,Tzortzis2008The}, Kernel KM has a close relation to spectral clustering and can manage data that is not linearly separable. However, it is difficult to find a proper $\sigma$ for datasets with arbitrary shaped clusters in our experiments. The poor clustering results are due mainly to the fact that using the Gaussian kernel as the corresponding transformation is intractable for some datasets. SLC performs the worst of all the algorithms. Like the MST-based clustering algorithms, it has two drawbacks: first, a few objects far from other objects define a separate cluster; second, two connected clusters are taken as two parts of one cluster. In summary, the GOPC algorithm outperforms the compared clustering algorithms. Although the GOPC algorithm could not separate the connected clusters like $Aggregation$, it could separate clusters as long as a narrow gap existed between them in the experiments above. 

\subsection{Experiments on real-world datasets}
\label{subsec3.2}
The experimental results on synthetic datasets depict that the GOPC algorithm can recognize all kinds of clusters regardless of their shapes, sizes, or densities. Thus, we would like to test the GOPC algorithm on real-world clustering tasks. For quantitative performance metrics, we employed three popular external indexes, namely the Rand index ($RI$), the Adjusted Rand index ($ARI$) and Normalized mutual information ($NMI$). These three metrics, where the value of 1 denotes that the clustering result is perfect, measure how perfectly the clustering results match with the ground truth.

\subsubsection{Mnist dataset}
\begin{figure*} 
	\centering
	\subfloat[Examples of $Mnist$ dataset.]{\includegraphics[width=0.33\textwidth]{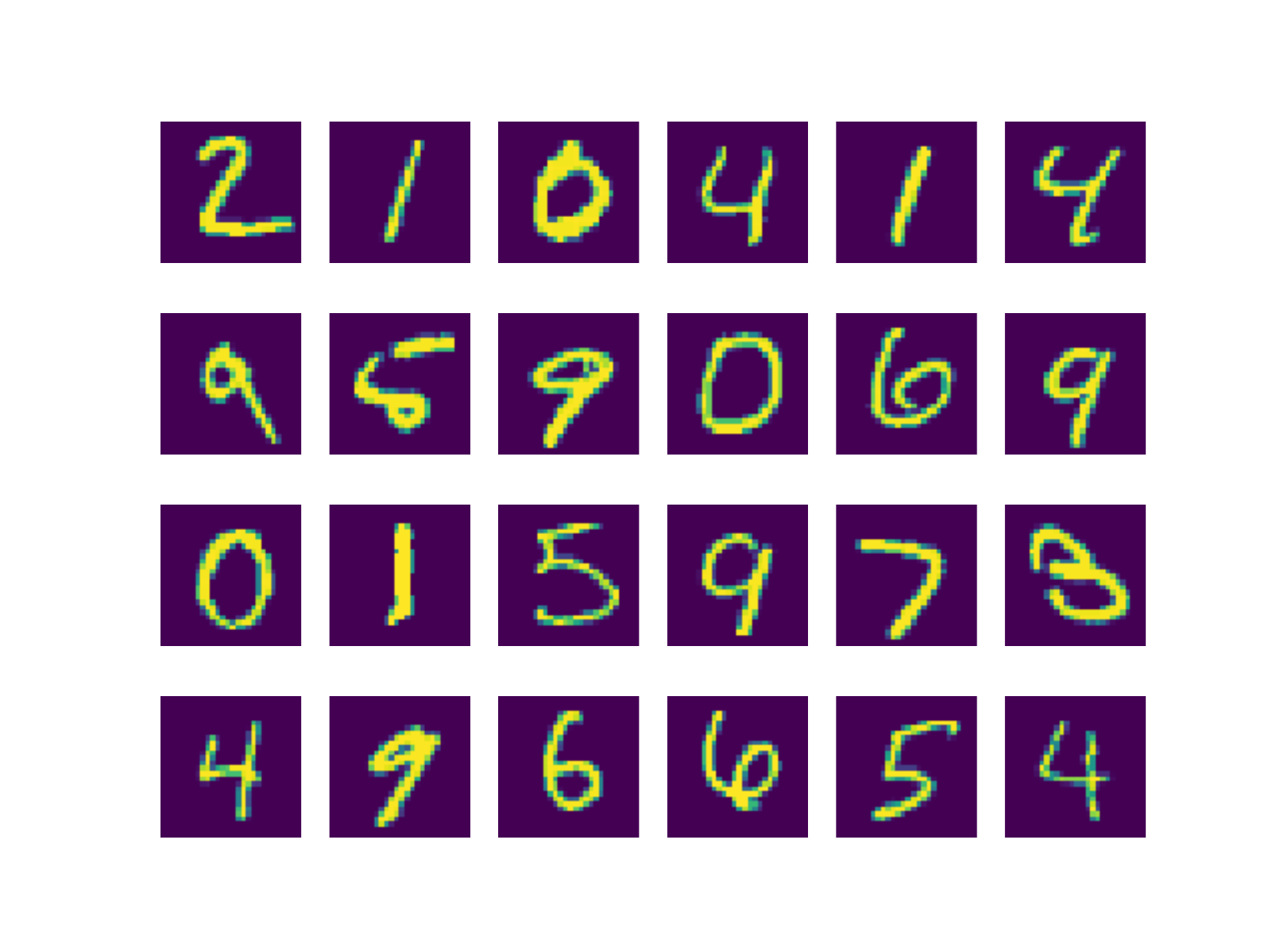}}     
	\subfloat[Three-dimensional features of test set.]{\includegraphics[width=0.33\textwidth]{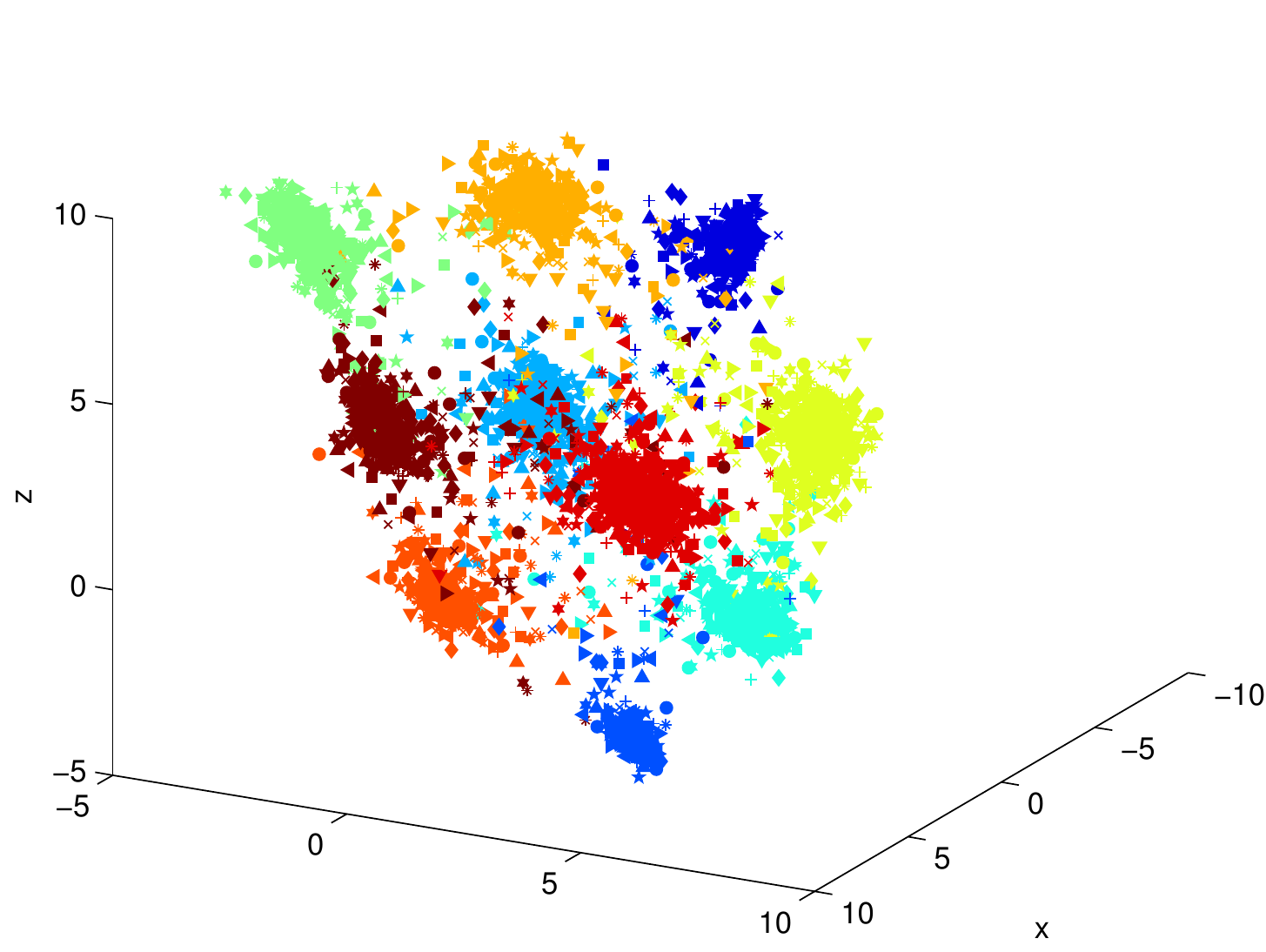}}
	\subfloat[Quantitative comparison.]{\includegraphics[width=0.33\textwidth]{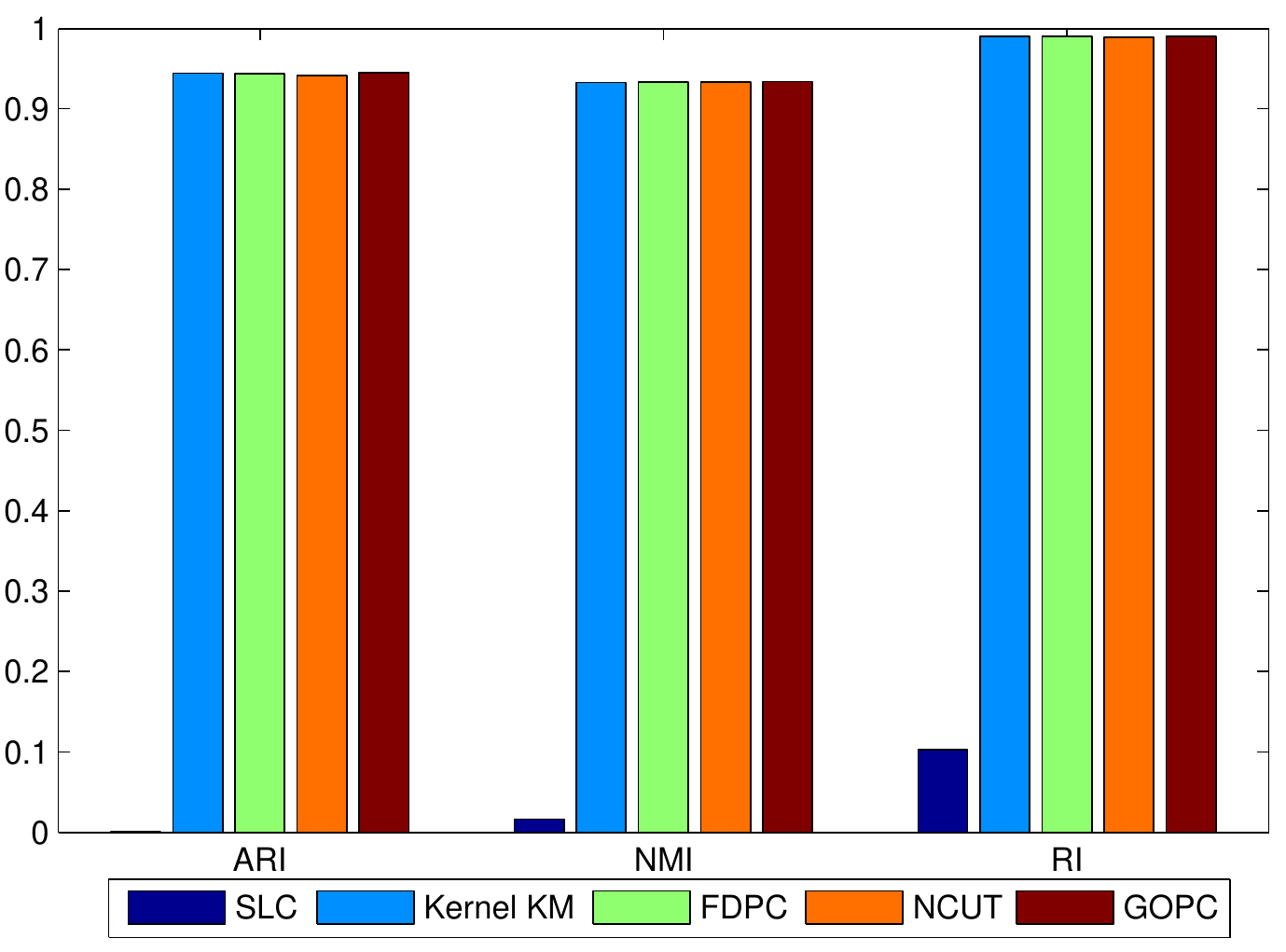}} 
	\caption{$Mnist$ dataset.}
	\label{Mnist}
\end{figure*}
$Mnist$\footnote{http://yann.lecun.com/exdb/mnist/} is a well-known handwritten digit dataset for data mining. It has a training set of 60,000 examples, and a test set of 10,000 examples. The digits in the database have been size-normalized and centered in fix-sized images. \Cref{Mnist}(a) displays some sample images from this database. We trained the Siamese network~\cite{Chopra2005Learning} on the training set, and then embed an image with 784 pixels into a point with three-dimensional features for the test set, as shown in \cref{Mnist}(b). 

We tested the GOPC algorithm on the test set with three dimensional features. Four well-known clustering algorithms---SLC, kernel KM, FDPC, and NCUT---were used for baseline comparison. The clustering results are summarized in \cref{Mnist}(c), where the three groups of bars represent the results in terms of $ARI$, $NMI$, and $RI$ in order. It is evident that all the algorithms could obtain satisfactory results, except for SLC, which defines a few objects as a cluster. The GOPC algorithm is slightly better than the other algorithms. Based on the Siamese network trained on the training set, objects with different numbers in the test set are well separated and form ten spherical clusters. This kind of dataset is not too difficult for most clustering algorithms.

\subsubsection{Olivetti face dataset}

\begin{figure*}[h] 
	\centering
	\subfloat[Examples of $Olivetti\; face$ dataset.]{\includegraphics[width=0.32\textwidth]{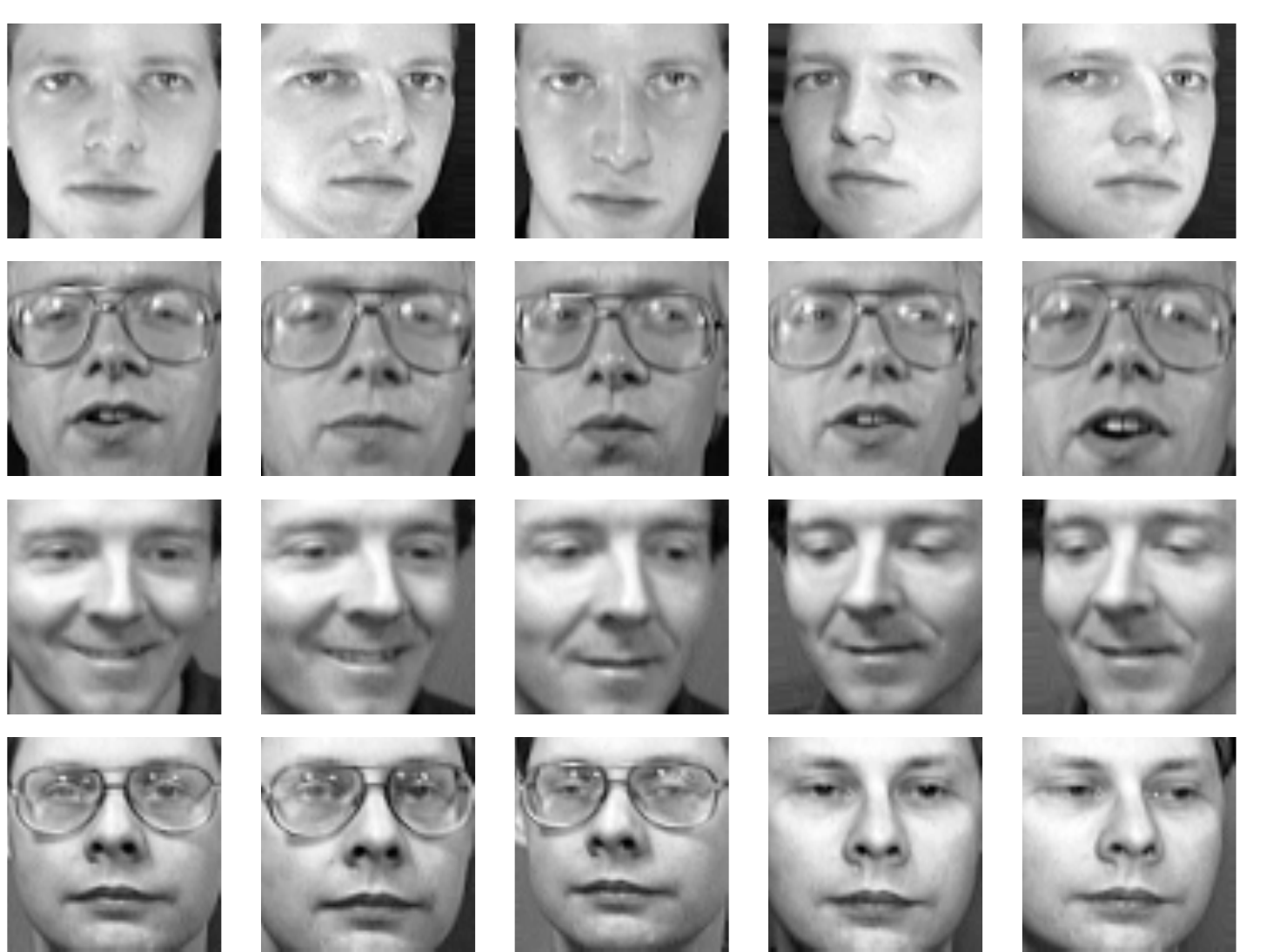}}     
	\subfloat[Decision graph of FDPC.]{\includegraphics[width=0.33\textwidth]{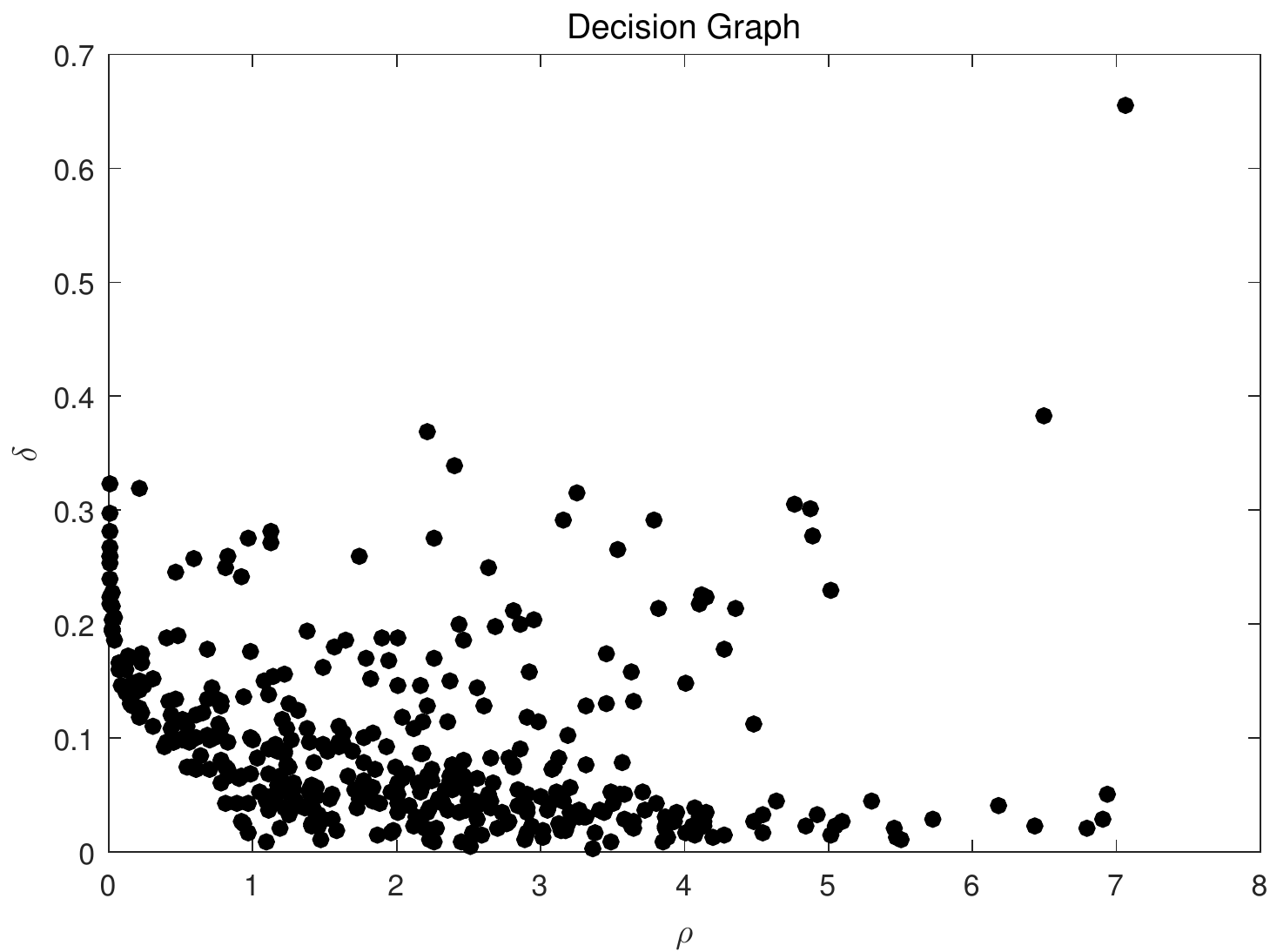}} 
	\subfloat[Quantitative comparison.]{\includegraphics[width=0.33\textwidth]{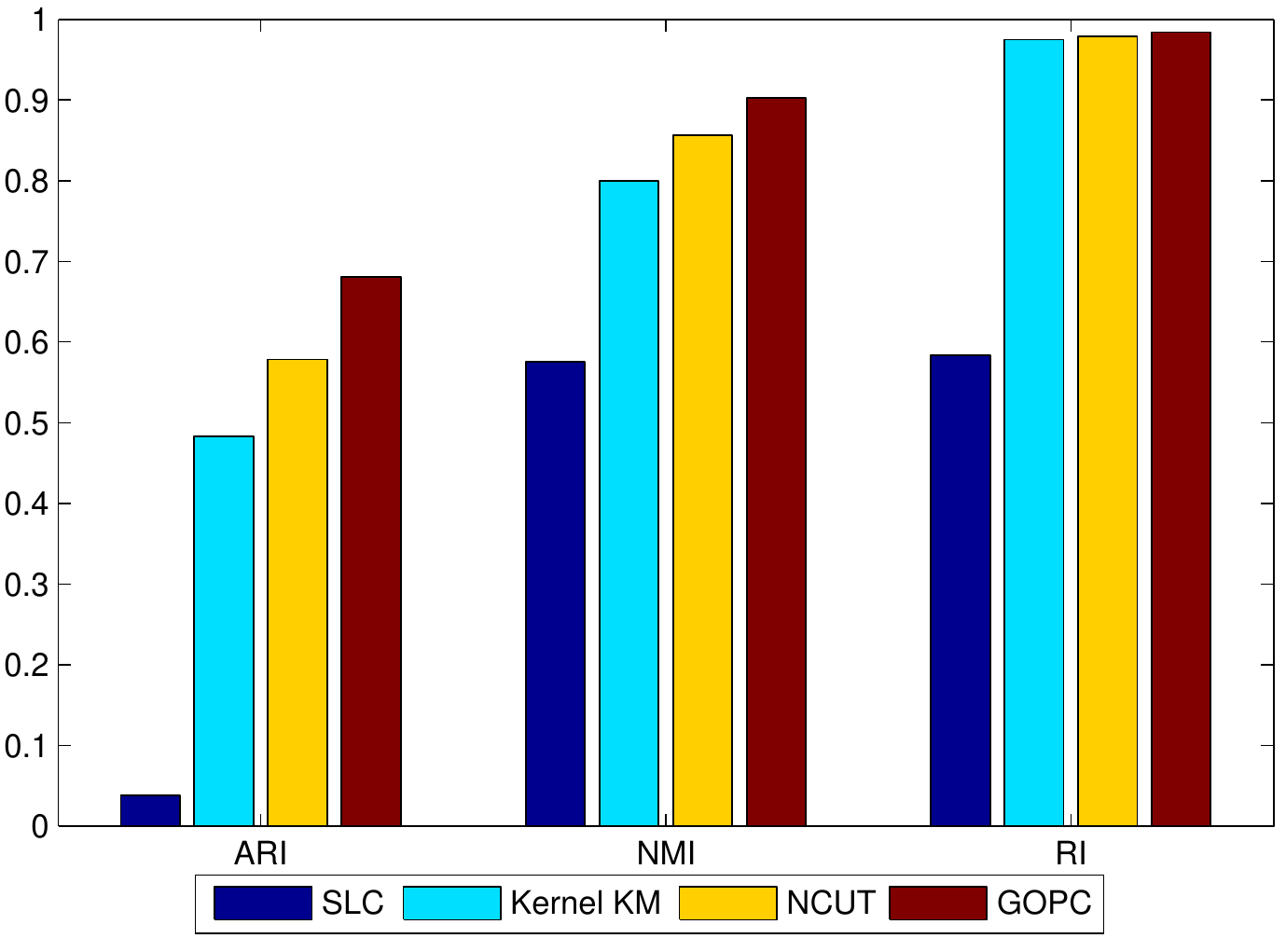}}
	\caption{$Olivetti\; face$ dataset.}
	\label{face}
\end{figure*}

The $Olivetti\; face$\footnote{http://www.cl.cam.ac.uk/research/dtg/attarchive/facedatabase.html} dataset consists of 400 gray-level images from 40 persons. The images were taken at different times, under different lighting conditions, and show different facial expressions and facial details. The size of each image is $64\times 64$ pixels. Some examples of the face images are shown in \cref{face}(a). The similarity between two images is calculated by complex wavelet structural similarity \cite{Sampat2009Complex}. There is a little difference in the similarity between the images. Thus, clustering them is quite challenging.

The clustering results using different methods are shown in \cref{face}(c). Here, we do not give the result implemented by FDPC because it is difficult to obtain the right number of clusters according to the decision graph in \cref{face}(b). The experimental results show that the proposed algorithm is obviously superior to the other three algorithms. 

\subsubsection{Other datasets}
We conducted further experiments on six real-world datasets, including $Iris$, $Pendigits$, $Mice$ $Protein$, $YaleB$, $COIL20$, and $COIL100$. $Iris$ contains 3 clusters of 50 instances, where each cluster refers to a type of $Iris$ plant. $Pendigits$ is a collection of handwritten digits (0-9) from 44 writers. $Mice$ $Protein$ is a dataset that consists of the expression levels of 77 proteins measured in the cerebral cortex of eight classes of control and trisomic mice. $YaleB$ consists of 38 subjects, each of which is represented by 64 face images acquired under different illumination conditions. $COIL20$ consists of 1440 gray-scale image samples of 20 objects, such as ducks and car models. $COIL100$ consists of 7200 images of 100 objects. For the latter three datasets, which consist of images, we used the deep neural network architecture proposed in Ref.~\cite{ji2017deep} to form an affinity matrix for each dataset, then applied the clustering algorithms mentioned above using this affinity matrix to obtain the clustering results.
\begin{table}[h]
	\centering
	\caption{Quantitative comparison of clustering results generated by the Kernel KM, NCut, FDPC and GOPC algorithms on six real-world datasets.}
	\label{UCI}
	\begin{tabular}{|c | c | c| c| c|}
		\hline
		\multicolumn{2}{|c|}{methods } & RI & ARI & NMI \\
		\hline
		\multirow{5}{*}{$Iris$} & Kernel KM & 0.8859 & 0,7434 & 0,7660 \\
		& NCut & 0.8797 & 0.7302 & 0.7582  \\
		& FDPC & 0.8923 & 0.7592 & 0.8058 \\
		& GOPC & \textbf{0.9495} & \textbf{0.8858} & \textbf{0.8705} \\
		\hline
		\multirow{5}{*}{$Pendigits$} & Kernel KM & 0.9325 & 0.6396 & 0.7464 \\
		& NCut & 0.9013 & 0.5138 & 0.6837 \\
		& FDPC & 0.9322 & 0.6457 & 0.7761  \\
		& GOPC & \textbf{0.9371} & \textbf{0.6813} & \textbf{0.8146} \\
		\hline
		\multirow{5}{*}{$Mice$ $Protein$} & Kernel KM & 0.8066 & 0.1671 & 0.3115  \\
		& NCut & 0.8207 & 0.2128 & 0.3345 \\
		& FDPC & 0.8224 & 0.3007 & 0.5074 \\
		& GOPC & \textbf{0.8533} & \textbf{0.4154} & \textbf{0.6131} \\
		\hline
		\multirow{5}{*}{$YaleB$} & Kernel KM & 0.9591 & 0.2428 & 0.5743 \\
		& NCut & \textbf{0.9778} & \textbf{0.6015} & \textbf{0.8149} \\
		& FDPC & NA & NA & NA \\
		& GOPC & 0.9571 & 0.3766 & 0.7343 \\
		\hline
		\multirow{5}{*}{$COIL20$} & Kernel KM & 0.9179 & 0.1919 & 0.4569 \\
		& NCut & 0.9475 & 0.5290 & 0.7543 \\
		& FDPC & NA & NA & NA \\
		& GOPC & \textbf{0.9614} & \textbf{0.6718} & \textbf{0.9114} \\
		\hline
		\multirow{5}{*}{$COIL100$} & Kernel KM & 0.8838 & 0.0544 & 0.6298 \\
		& NCut & 0.9773 & 0.2881 & 0.6795 \\
		& FDPC & NA & NA & NA \\
		& GOPC & \textbf{0.9878} & \textbf{0.5089} & \textbf{0.8360} \\
		\hline
	\end{tabular}

\footnotesize{$^{\rm 1}$NA, not applicable.}   
\end{table}

Results from six real-world datasets are reported in \Cref{UCI}. For each dataset, the best results are highlighted in bold. SLC is sensitive to noise, thus it was no longer used as the baseline for the rest of the experiments. The FDPC algorithm is not applicable to $YaleB$, $COIL20$, and $COIL100$ because its source code is designed for a distance matrix, not an affinity matrix. \Cref{UCI} shows that the GOPC algorithm achieves the highest accuracy on five of the six datasets, which implies that the GOPC algorithm can handle most datasets in real-world applications.

\subsection{Comparison of running time}

If we ignore the filtering step, the GOPC algorithm's time complexity is $O(k\times n^2)$, where $k$ is the number of clusters and $n$ is the number of data points. Compared with NCUT ($O(n^3)$ in general), the proposed algorithm is quite acceptable. More importantly, with the help of $nn(x)$, the running time of the GOPC algorithm is faster than $O(k\times n^2)$. In this subsection, we show only the running time of the GOPC algorithm because the running environments of the algorithms mentioned in \Cref{sec3} are different. For instance, FDPC is implemented in MATLAB while SLC is implemented in Python.

\begin{figure*}[h]
	\centering
	\subfloat[Running time for different values of $k$.]{\includegraphics[width=0.4\textwidth]{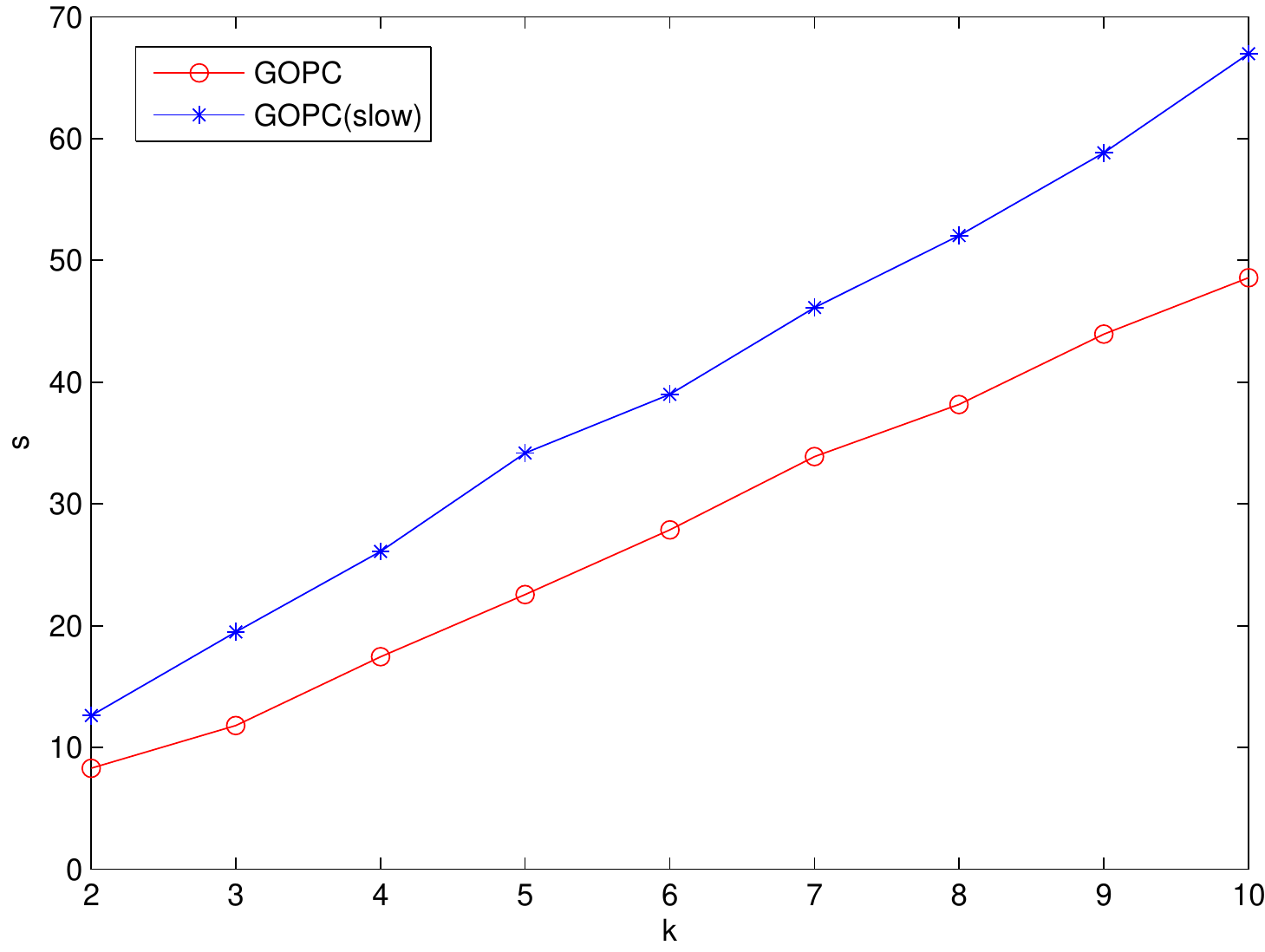}}     
	\subfloat[Running time for different sizes.]{\includegraphics[width=0.4\textwidth]{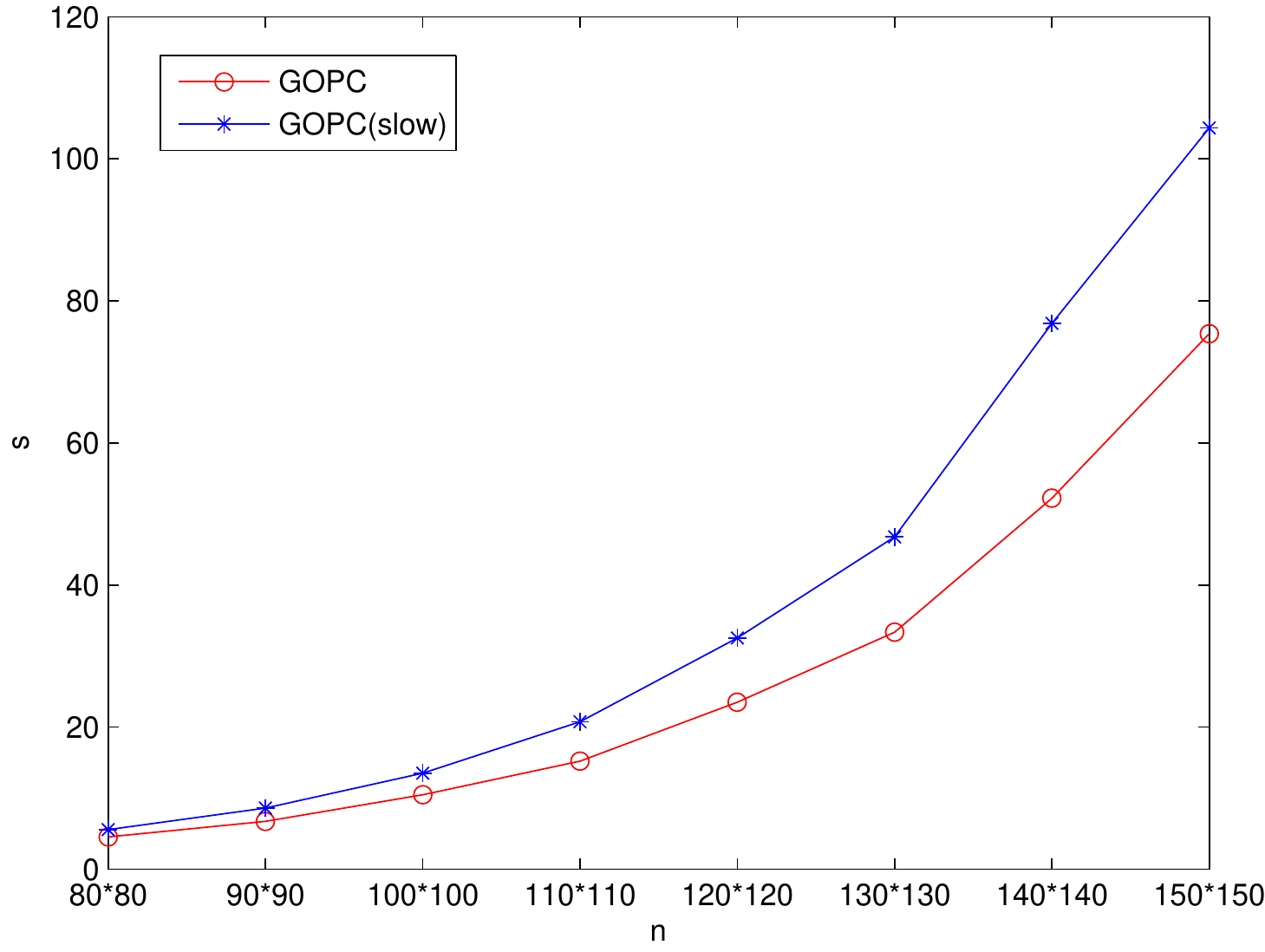}}
	\caption{Comparison of running time.}
	\label{time}
\end{figure*}

We carry out experiments on an image with $1000\times 1000$ pixels to examine the running time of GOPC. The algorithm that does not use $nn(x)$ to filter out information is recorded as the GOPC(slow) algorithm. We conducted two experiments to investigate the running time of these algorithms. In the first experiment, we changed $k$ from 2 to 10 and resized the image to $130\times 130$ to observe the running time trend of the algorithms. In the second experiment, with the number of clusters $k$ fixed at 7, we resized the image to different sizes and ran the algorithms on the subsets. Experimental results are shown in \cref{time}.

From these results, we reached the following conclusions: (1) The running time becomes longer as $k$ or $n$ increases. (2) The GOPC algorithm is superior to the GOPC(slow) algorithm. This means that $nn(x)$ plays an important role in the proposed algorithms and it accelerates the algorithms' operation. It also shows that the running time of GOPC is faster than $O(k\times n^2)$. 

\section{Image Segmentation}
\label{is}

We applied the GOPC algorithm to the segmentation evaluation database\footnote{http://www.wisdom.weizmann.ac.il/~vision/MorossLab/} and used NCUT, which has been widely used in image segmentation, for baseline comparison. We resized each image to $130\times 130$ pixels and constructed an affinity matrix based on the intervening contours method with the source code taken from Shi's website\footnote{http://www.cis.upenn.edu/$\sim$jshi/software/}. 

Because the length was limited, we provided nine representative examples to demonstrate that the GOPC algorithm could successfully be used in this domain. The experimental results are given in \cref{image segmentation}. The optimal number of image segments was selected by trial and error.

From the experimental results, we can see that both algorithms yield satisfactory results because the regions corresponding to objects or object parts are clearly separate from each other. The GOPC algorithm is superior to the NCUT algorithm for the following reasons: First, the GOPC algorithm has higher accuracy. For example, the wheels of carriage in the eighth picture can be detected by the GOPC algorithm. Second, the GOPC algorithm can identify small clusters. The cross sign on the building in the second picture and the scrubby tree in the sixth picture are separated as a single cluster by the GOPC algorithm in their respective segmentations. Lastly, the GOPC algorithm needs fewer clusters to separate objects and background. The GOPC algorithm can separate the helicopter from the background by setting $k=2$, while the NCUT algorithm needs at least $k=4$. In summary, the GOPC algorithm can be successfully applied in this domain and has better results than the NCUT algorithm.
\begin{figure}
	\centering
	\subfloat[Input images]{
		\begin{minipage}{.32\linewidth}
			\includegraphics[width=\textwidth]{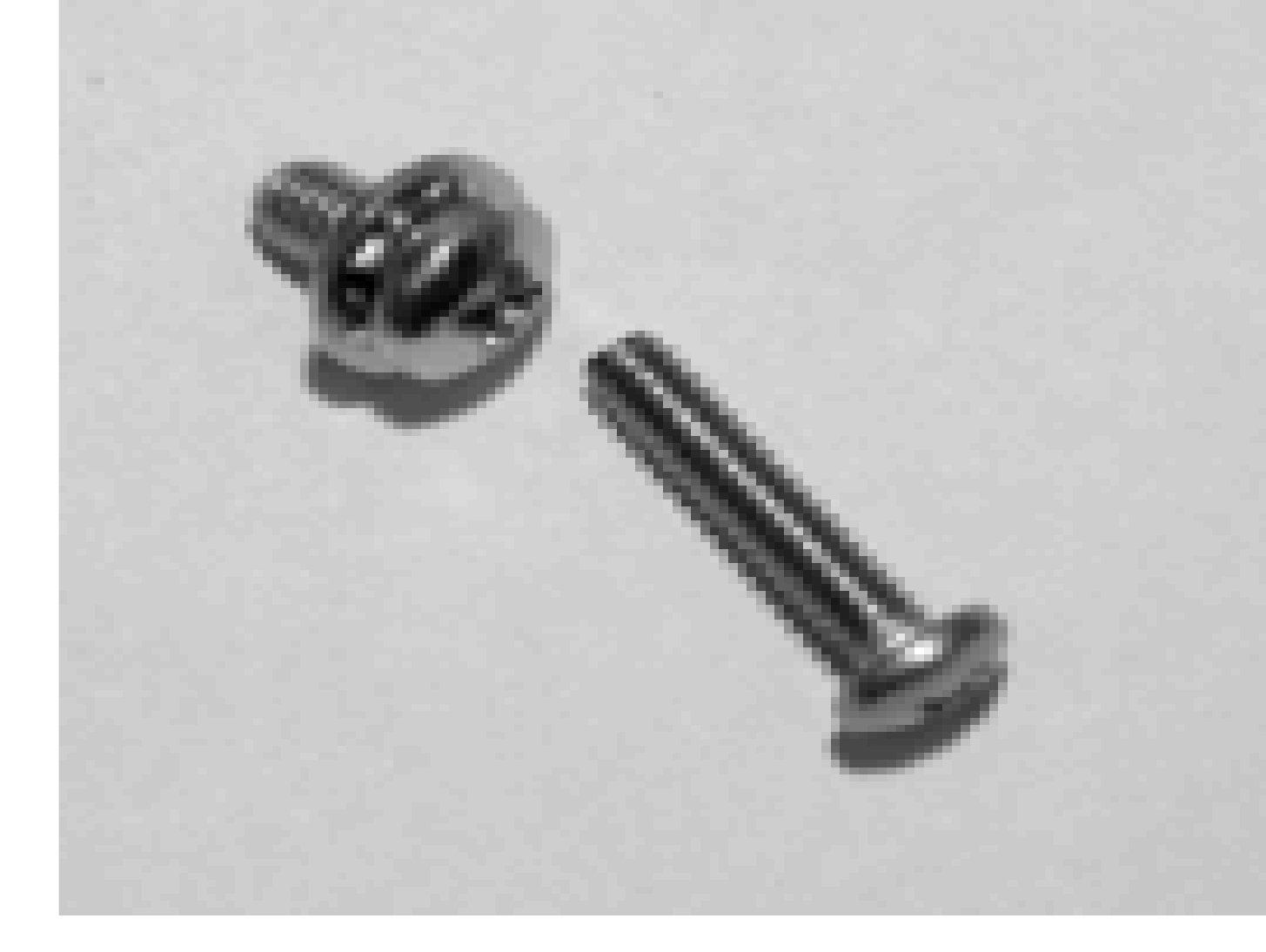} \\ \vfill
			\includegraphics[width=\textwidth]{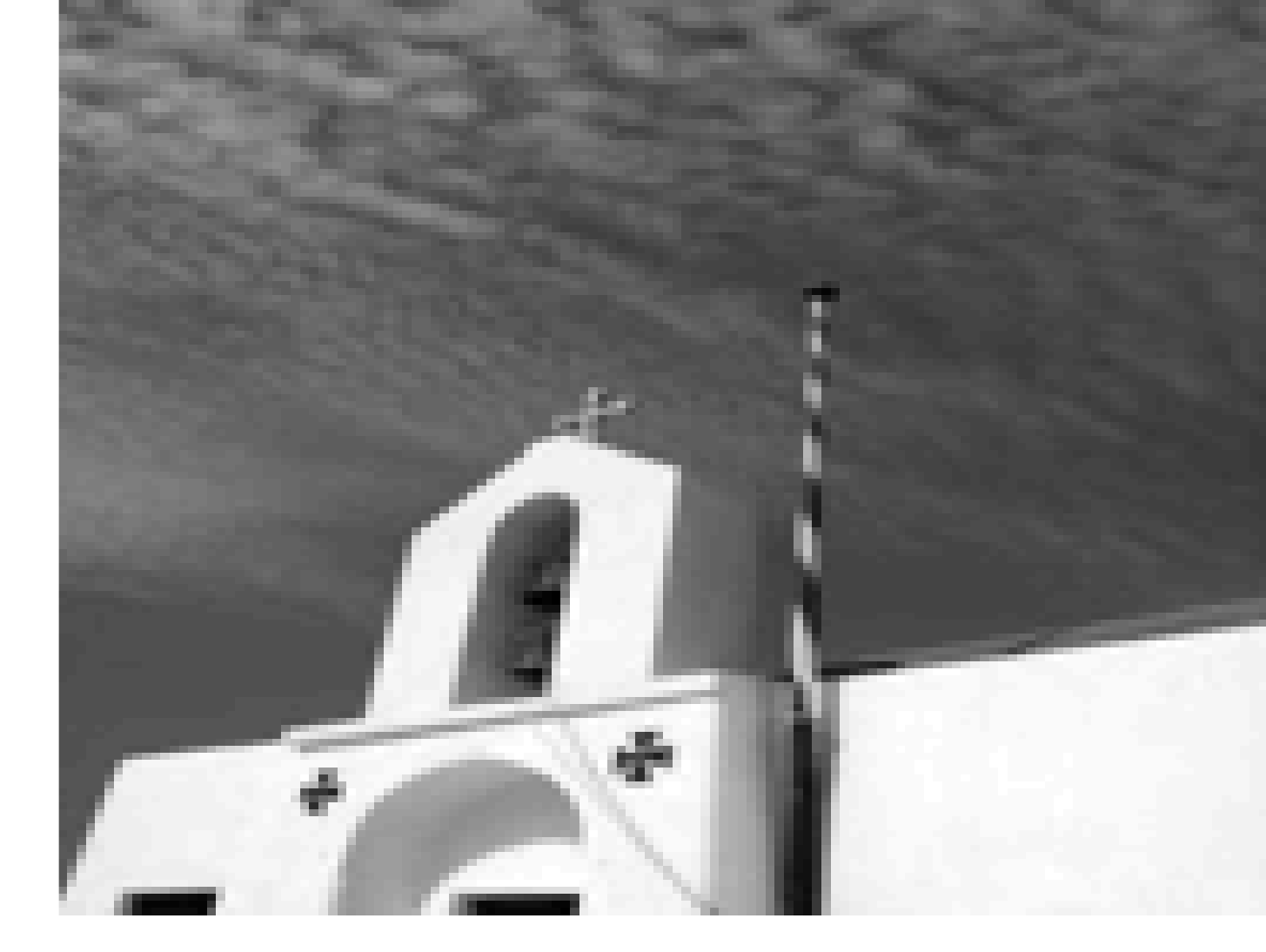}  \\ \vfill
			\includegraphics[width=\textwidth]{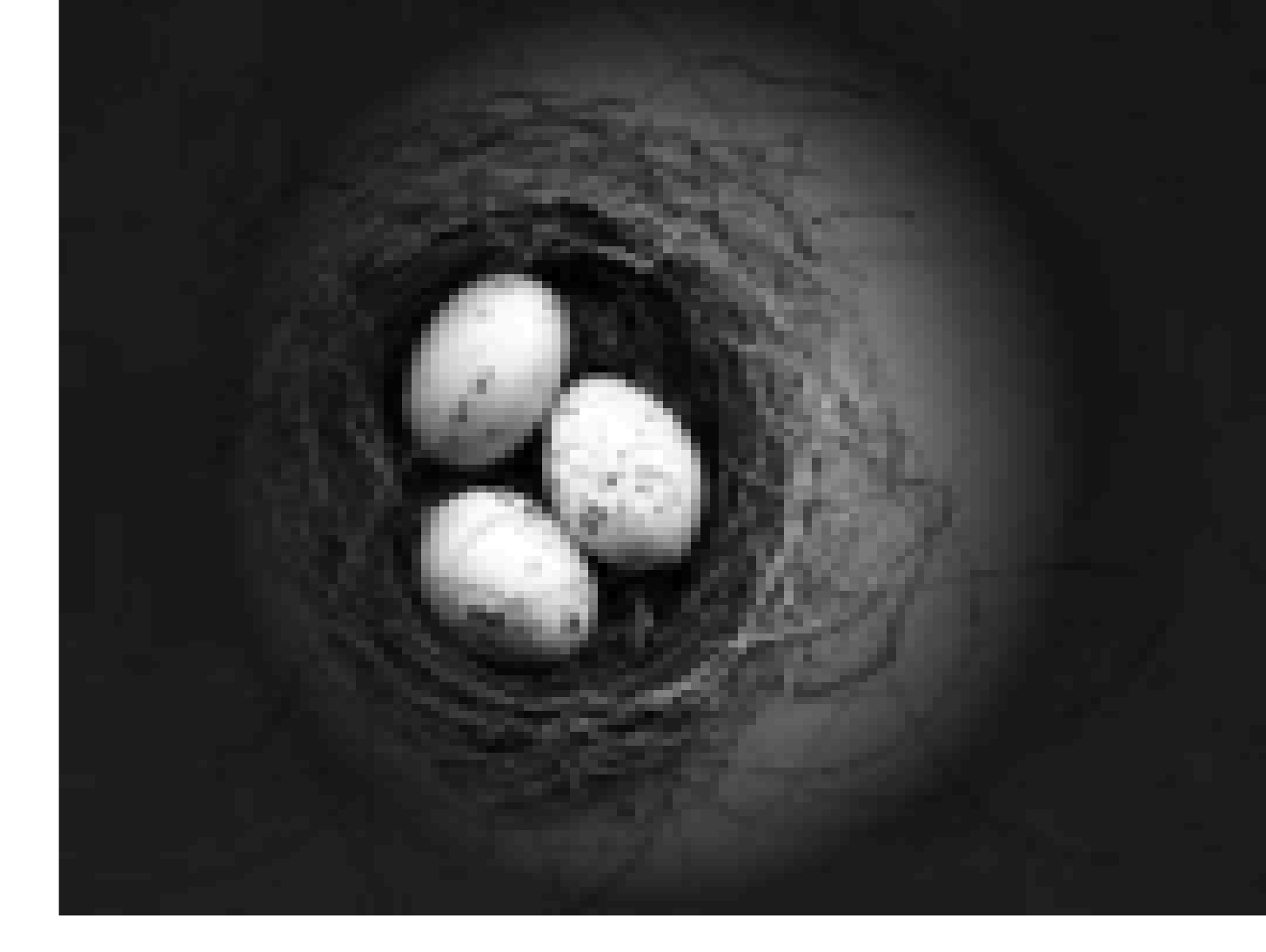}  \\ \vfill
			\includegraphics[width=\textwidth]{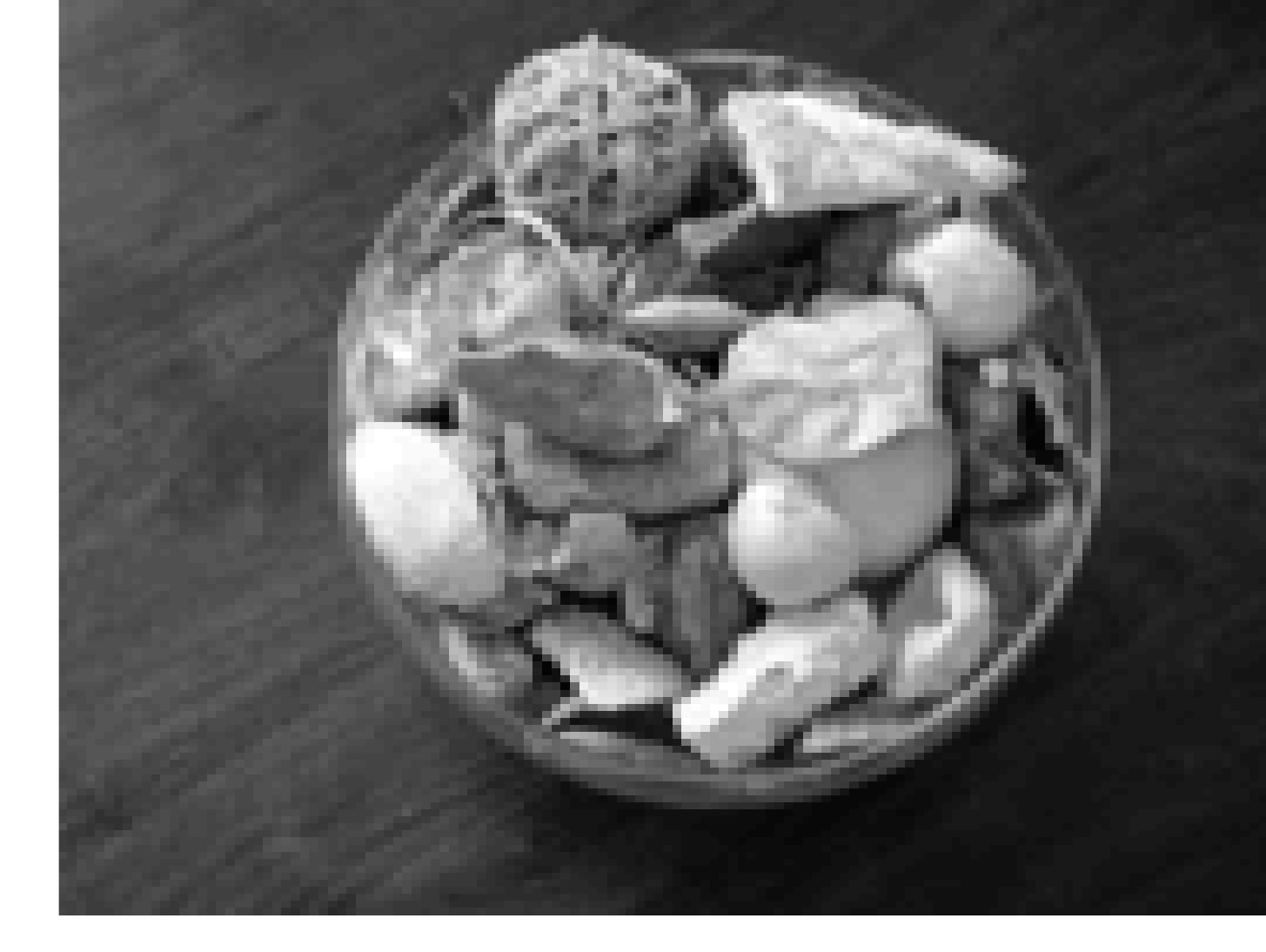} \\ \vfill
			\includegraphics[width=\textwidth]{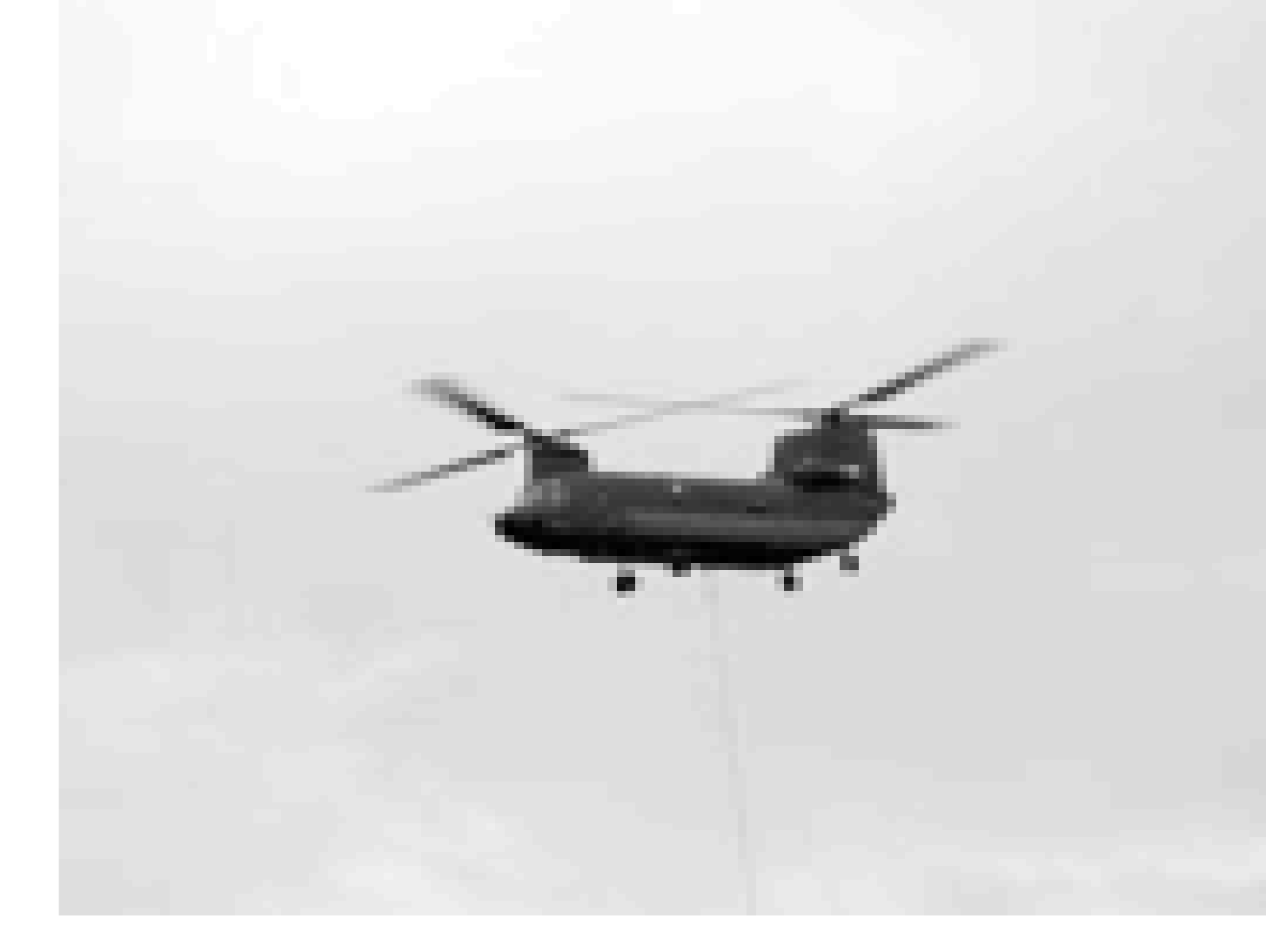}  \\ \vfill
			\includegraphics[width=\textwidth]{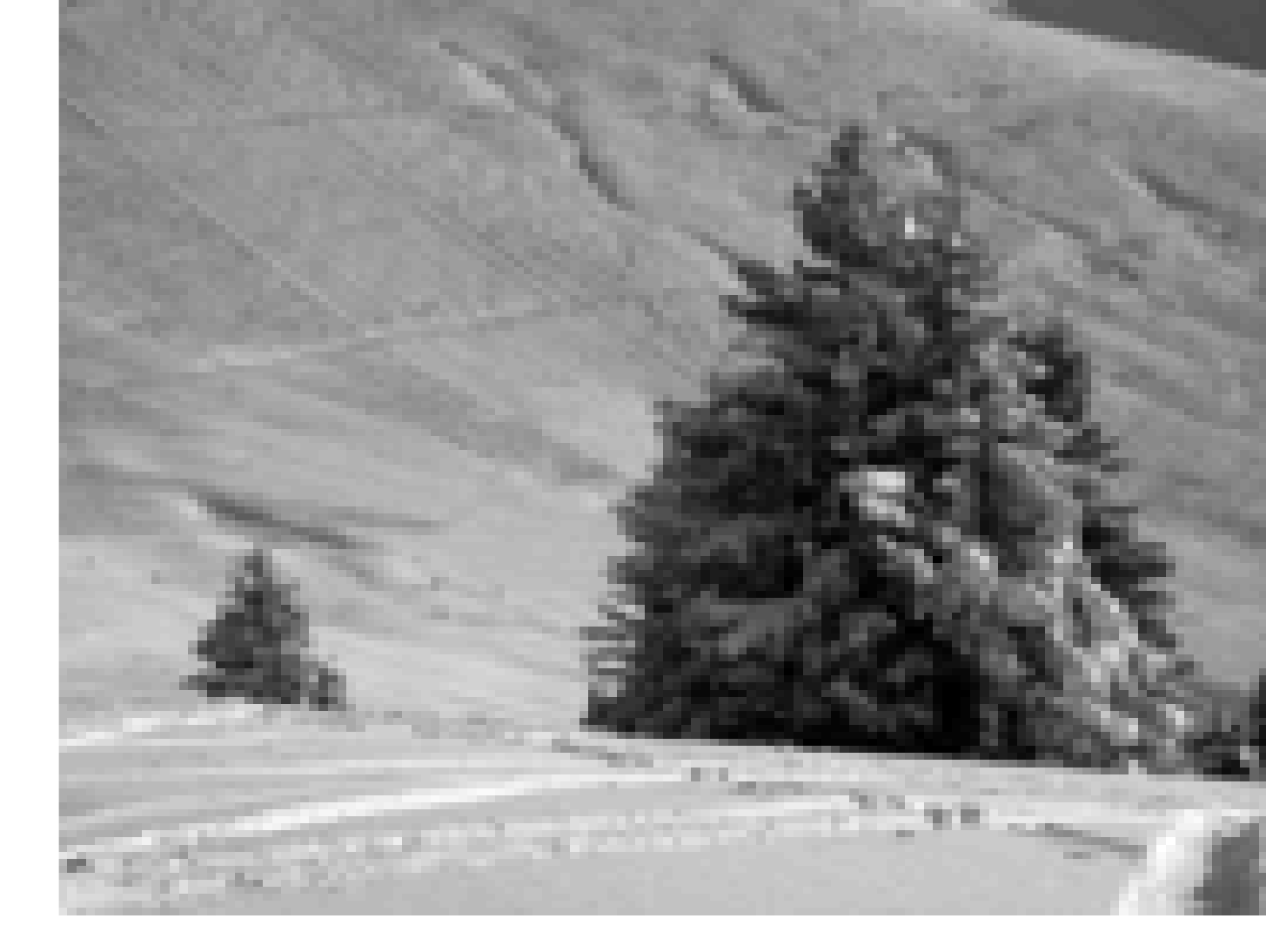}  \\ \vfill
			\includegraphics[width=\textwidth]{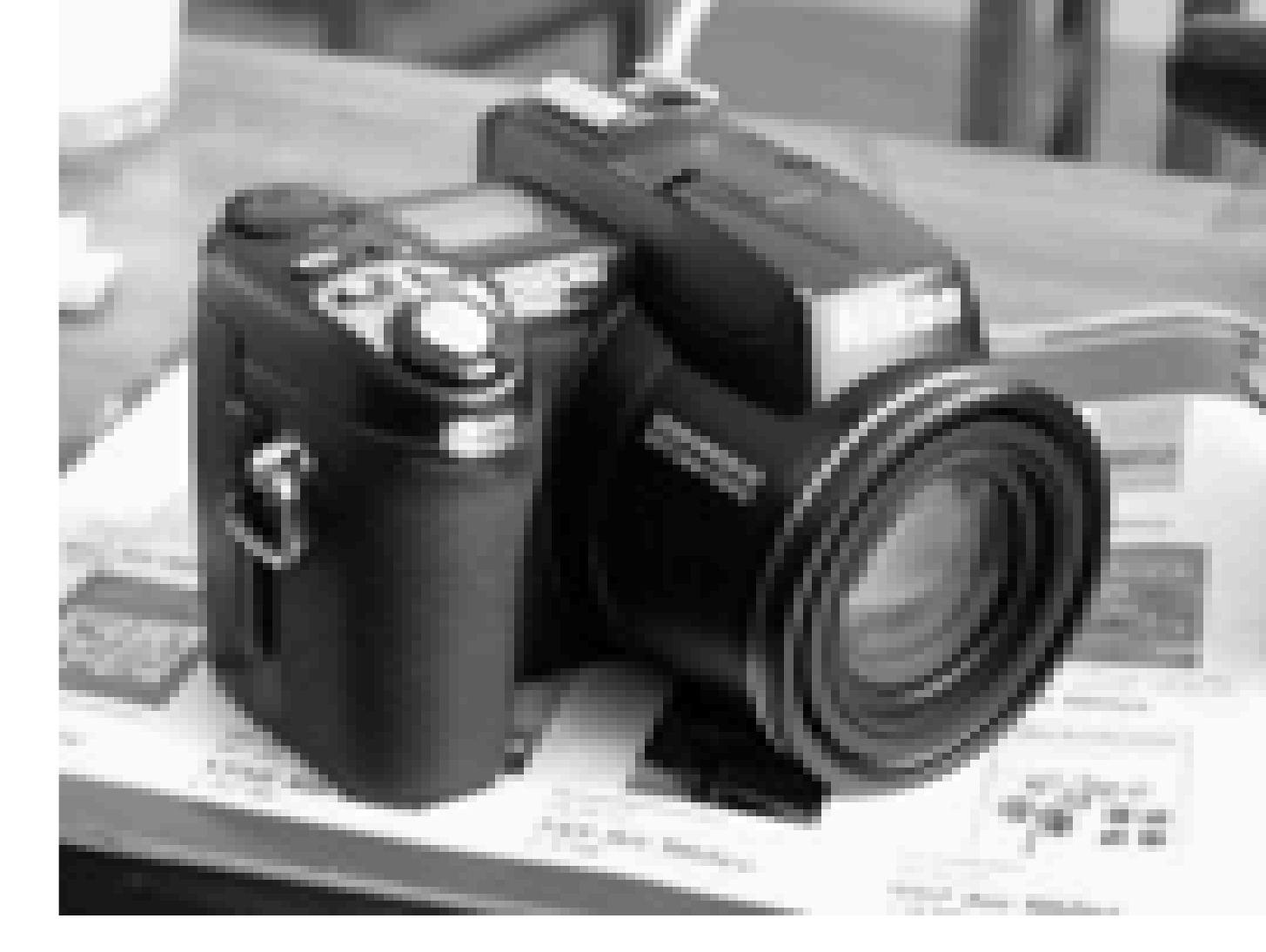}  \\ \vfill
			\includegraphics[width=\textwidth]{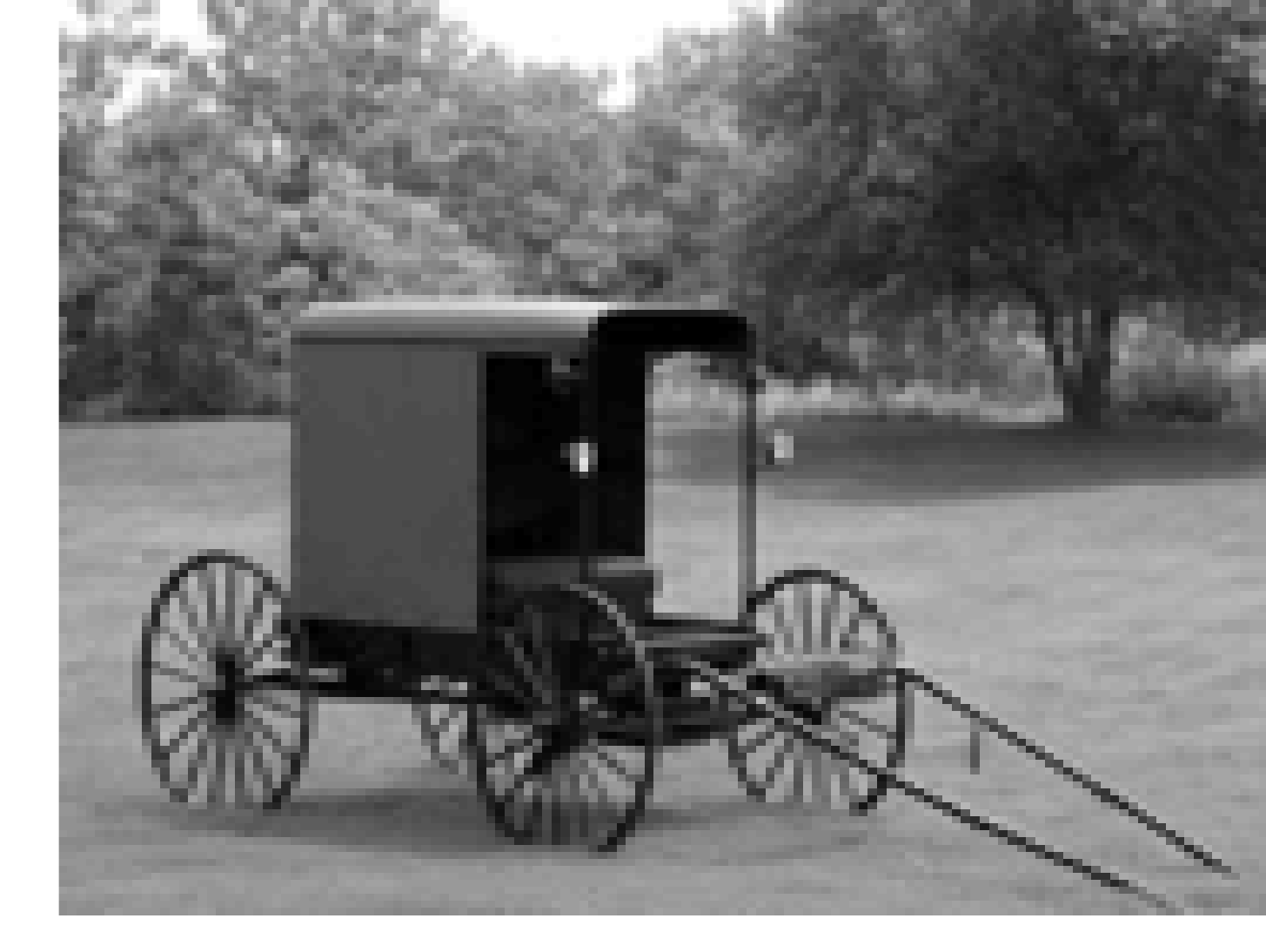}  \\ \vfill
			\includegraphics[width=\textwidth]{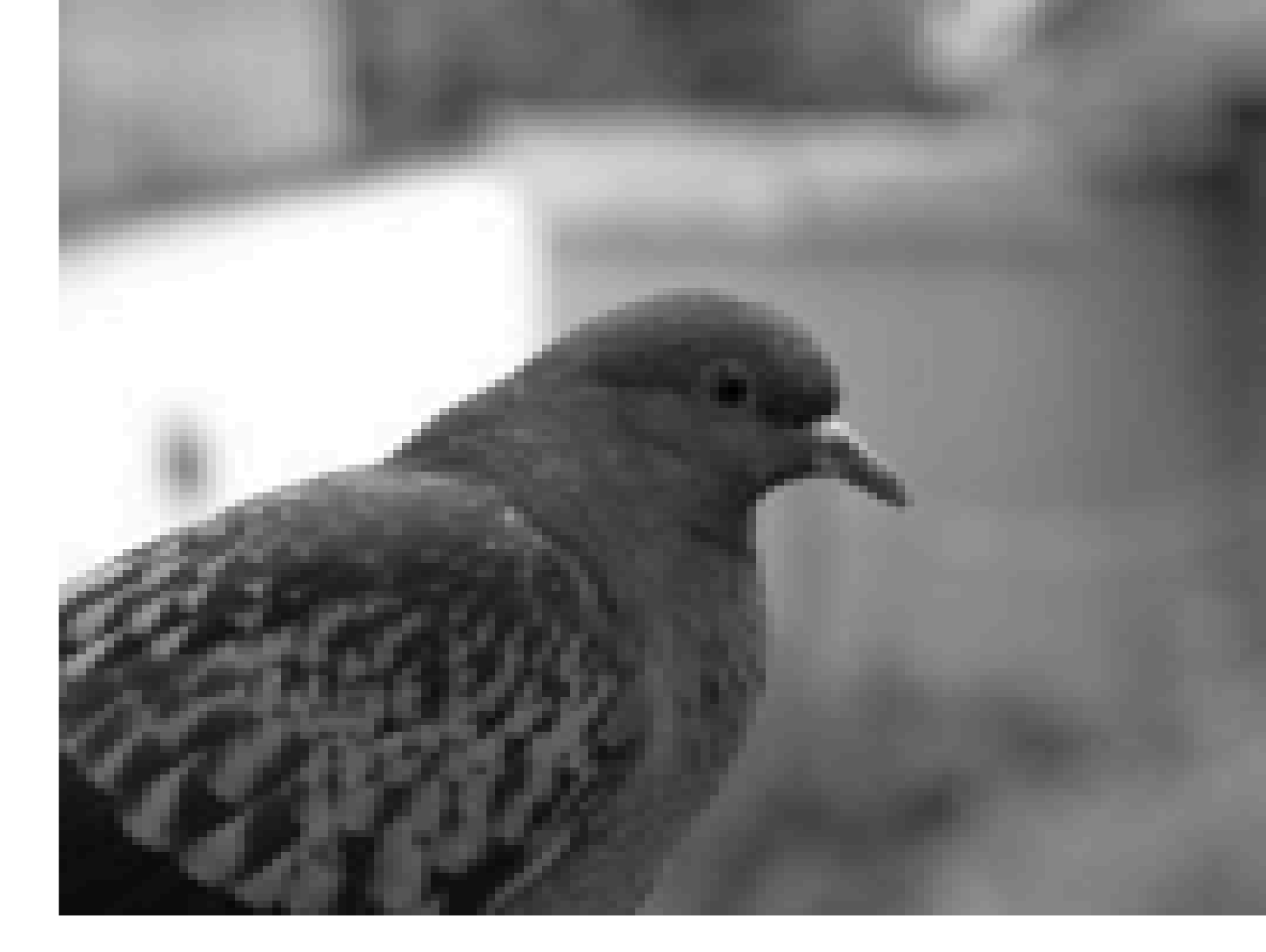}
		\end{minipage}
	}
	\subfloat[GOPC algorithm]{
		\begin{minipage}{.32\linewidth}
			\includegraphics[width=\textwidth]{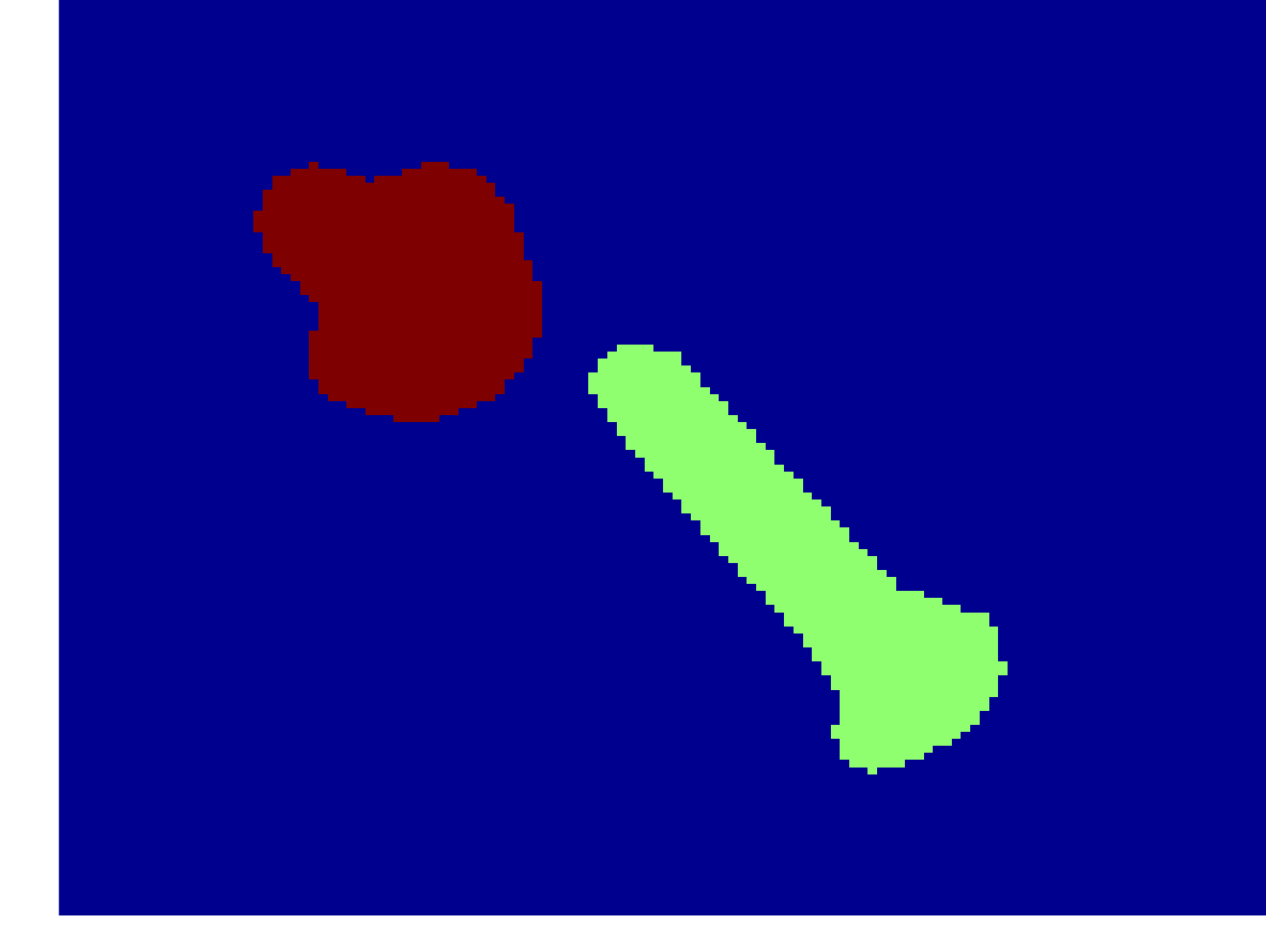} \\ \vfill
			\includegraphics[width=\textwidth]{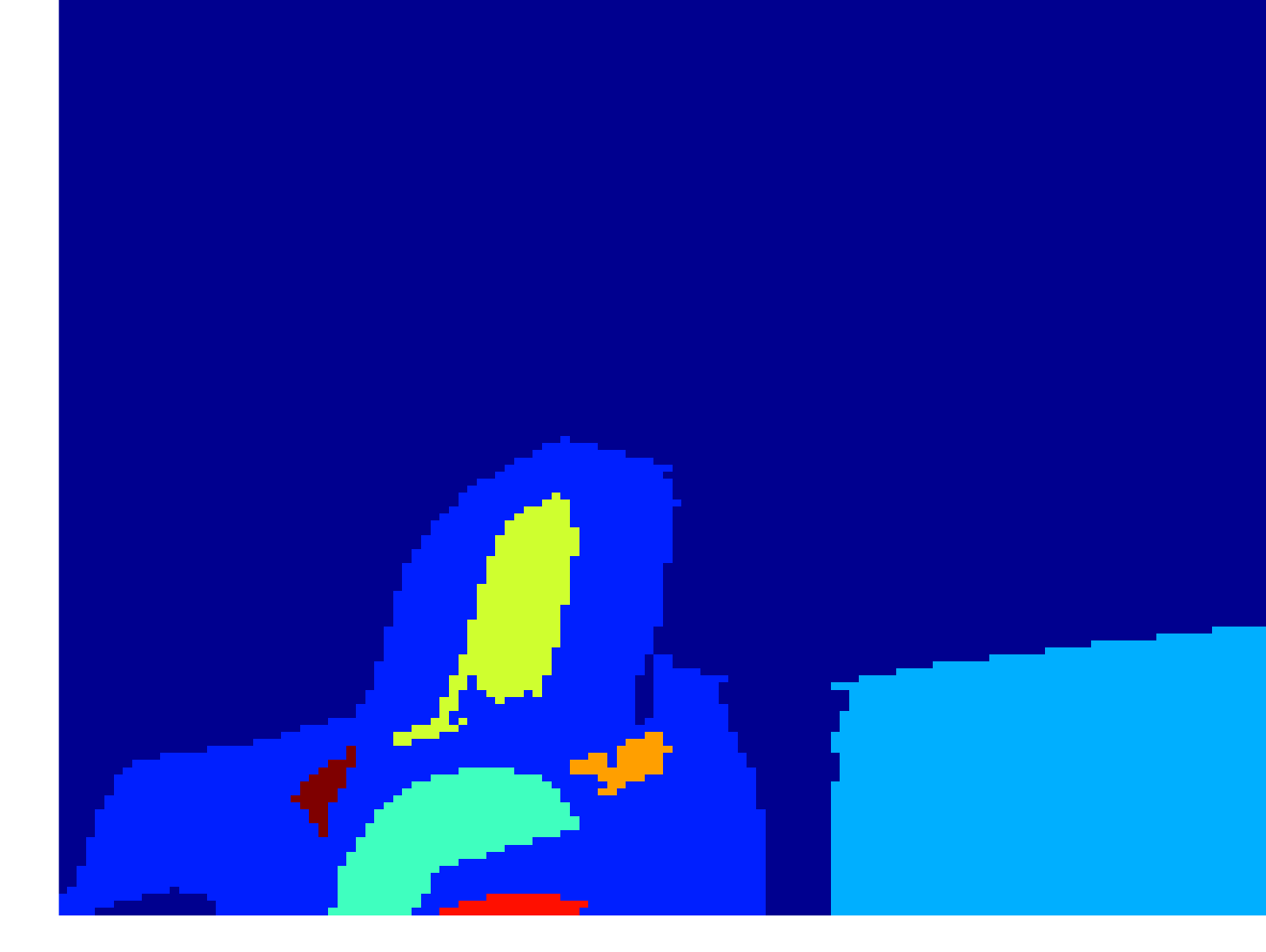} \\ \vfill
			\includegraphics[width=\textwidth]{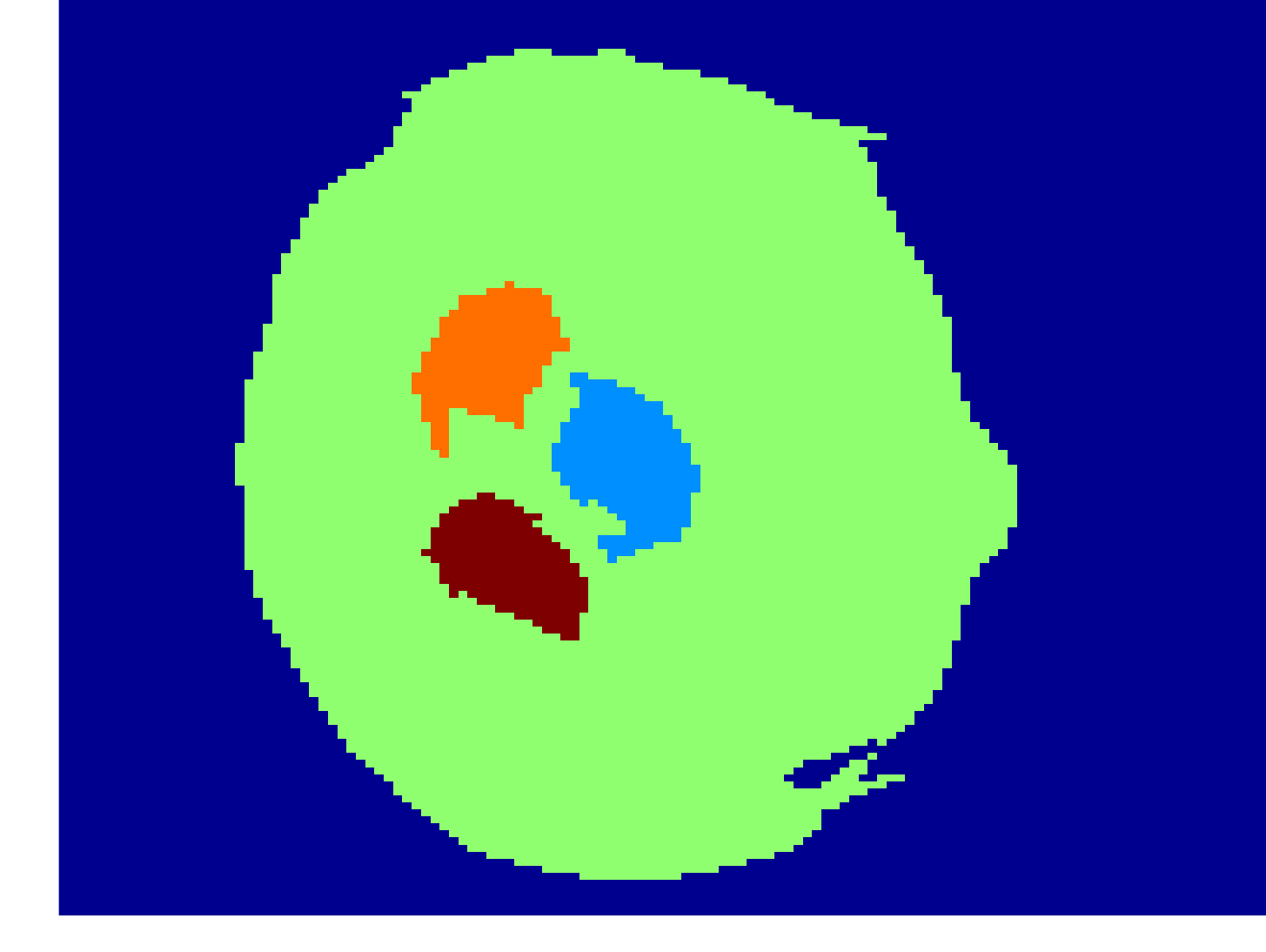} \\ \vfill
			\includegraphics[width=\textwidth]{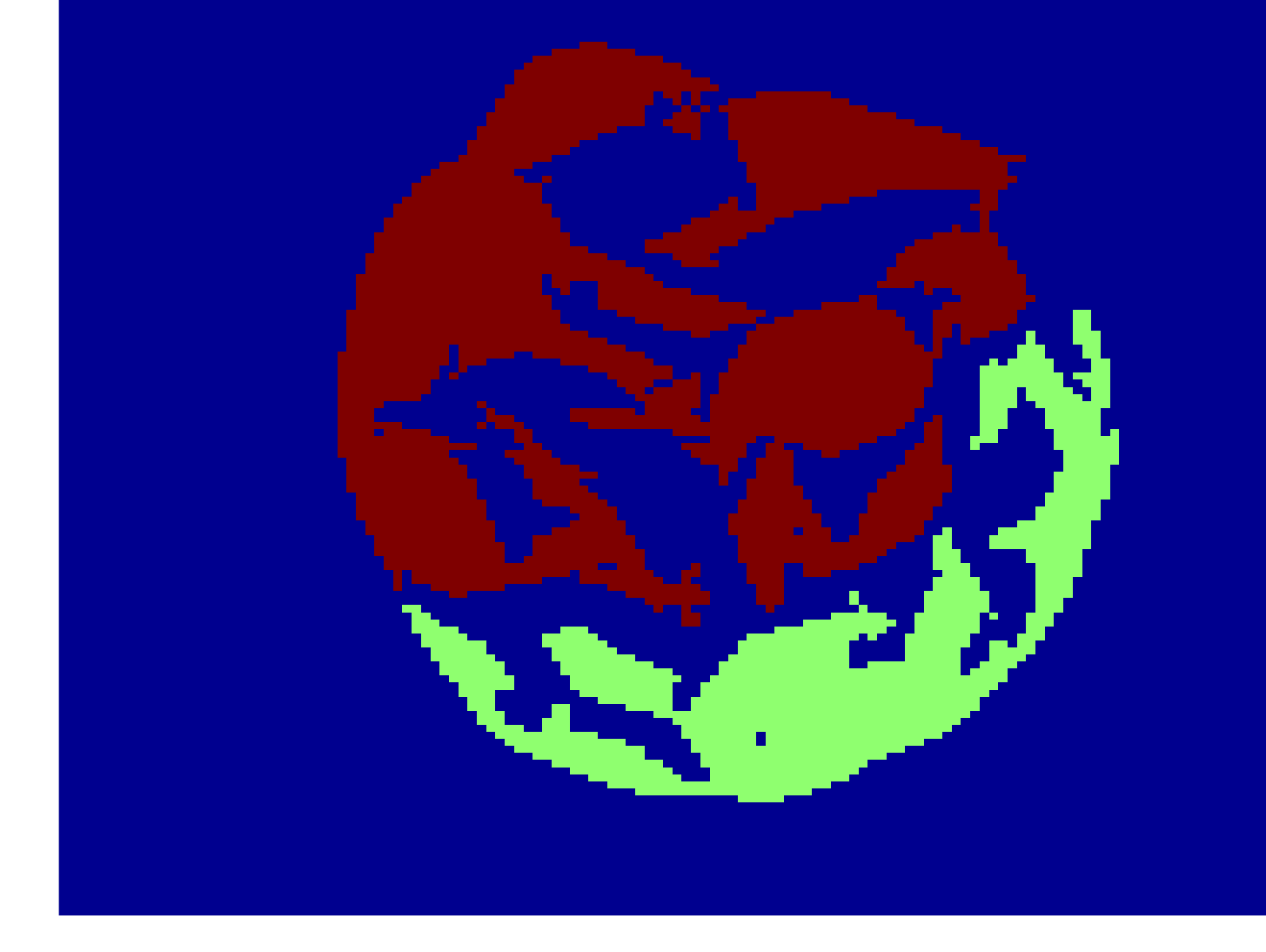} \\ \vfill
			\includegraphics[width=\textwidth]{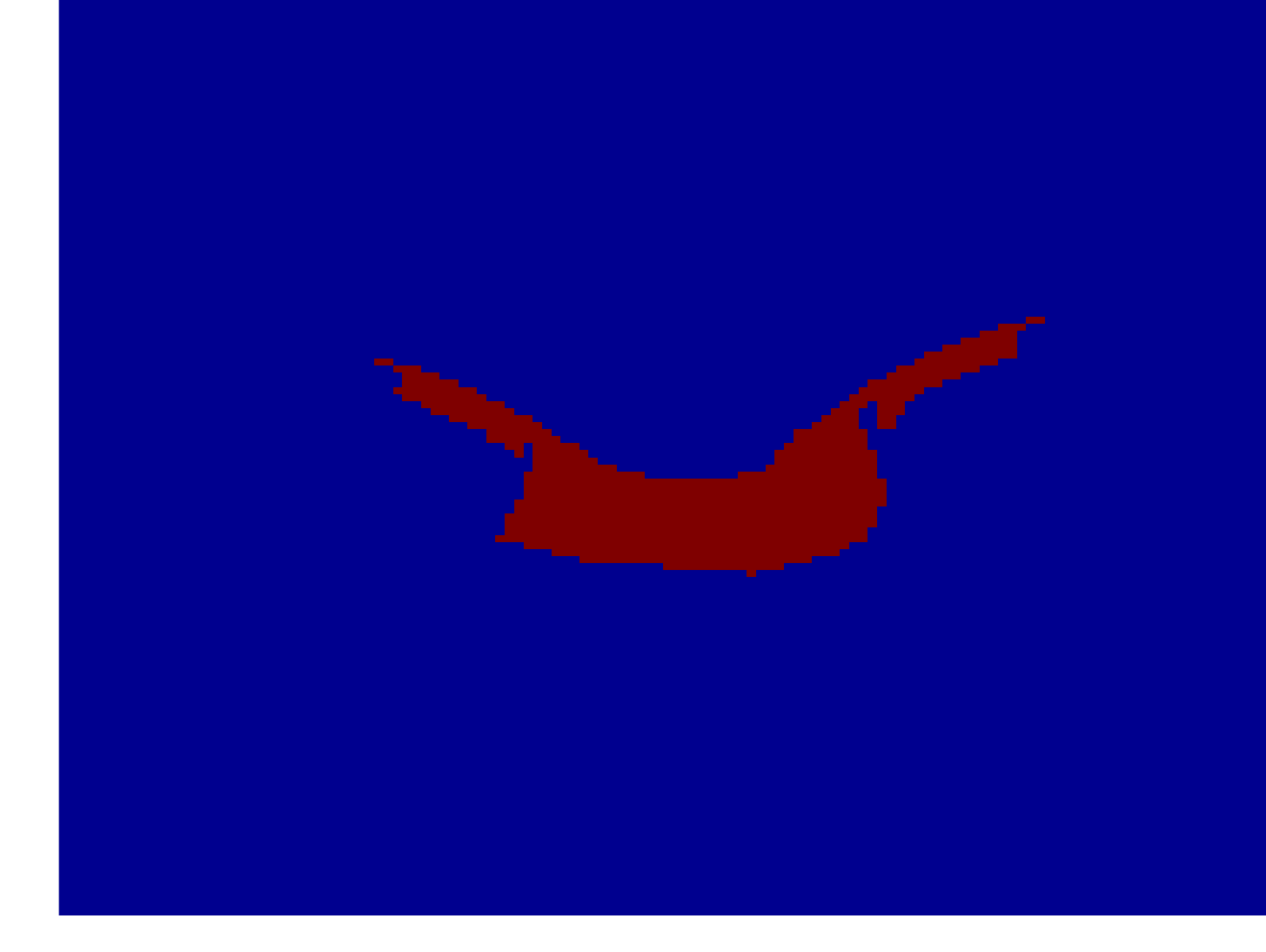} \\ \vfill
			\includegraphics[width=\textwidth]{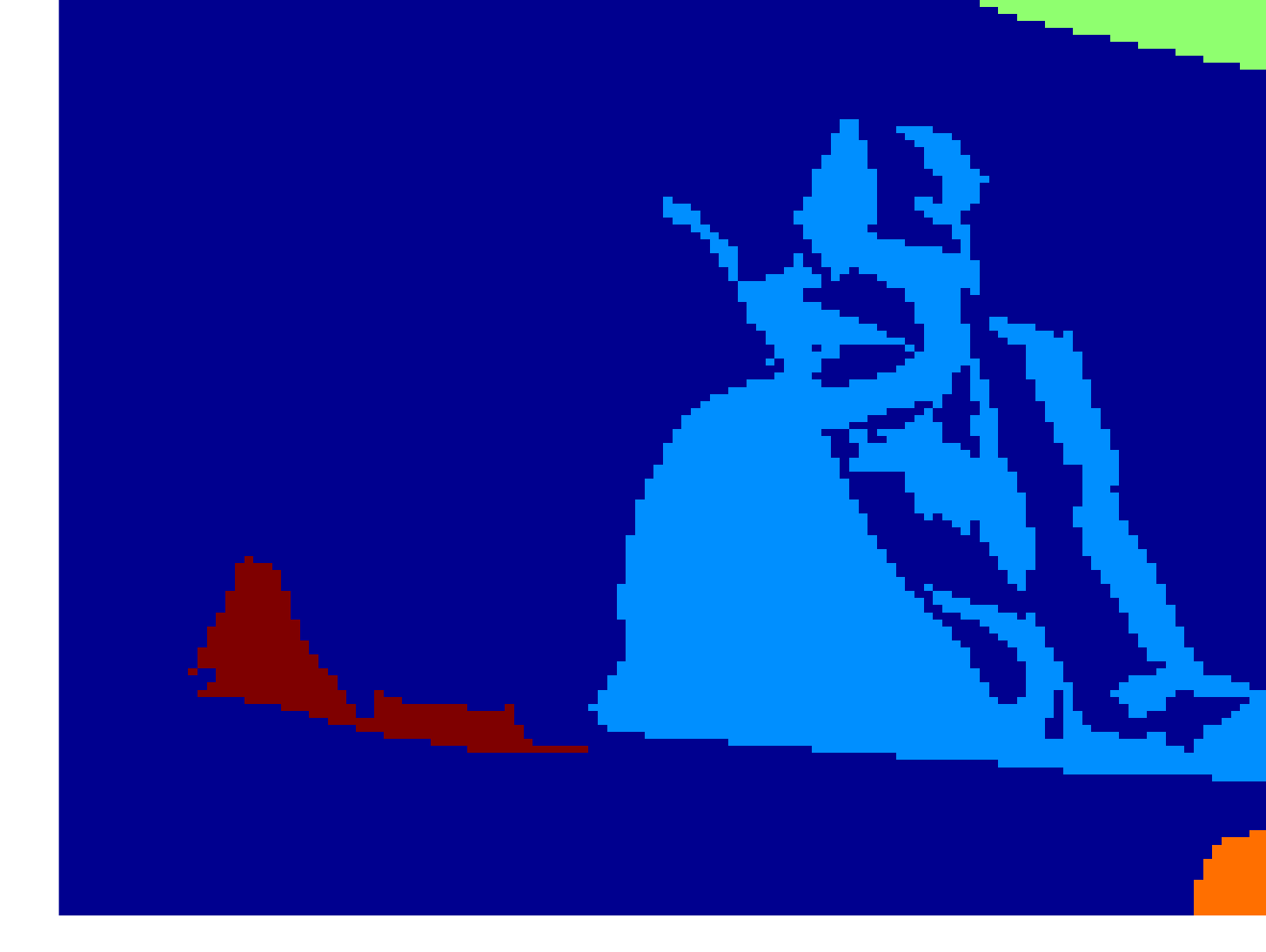} \\ \vfill
			\includegraphics[width=\textwidth]{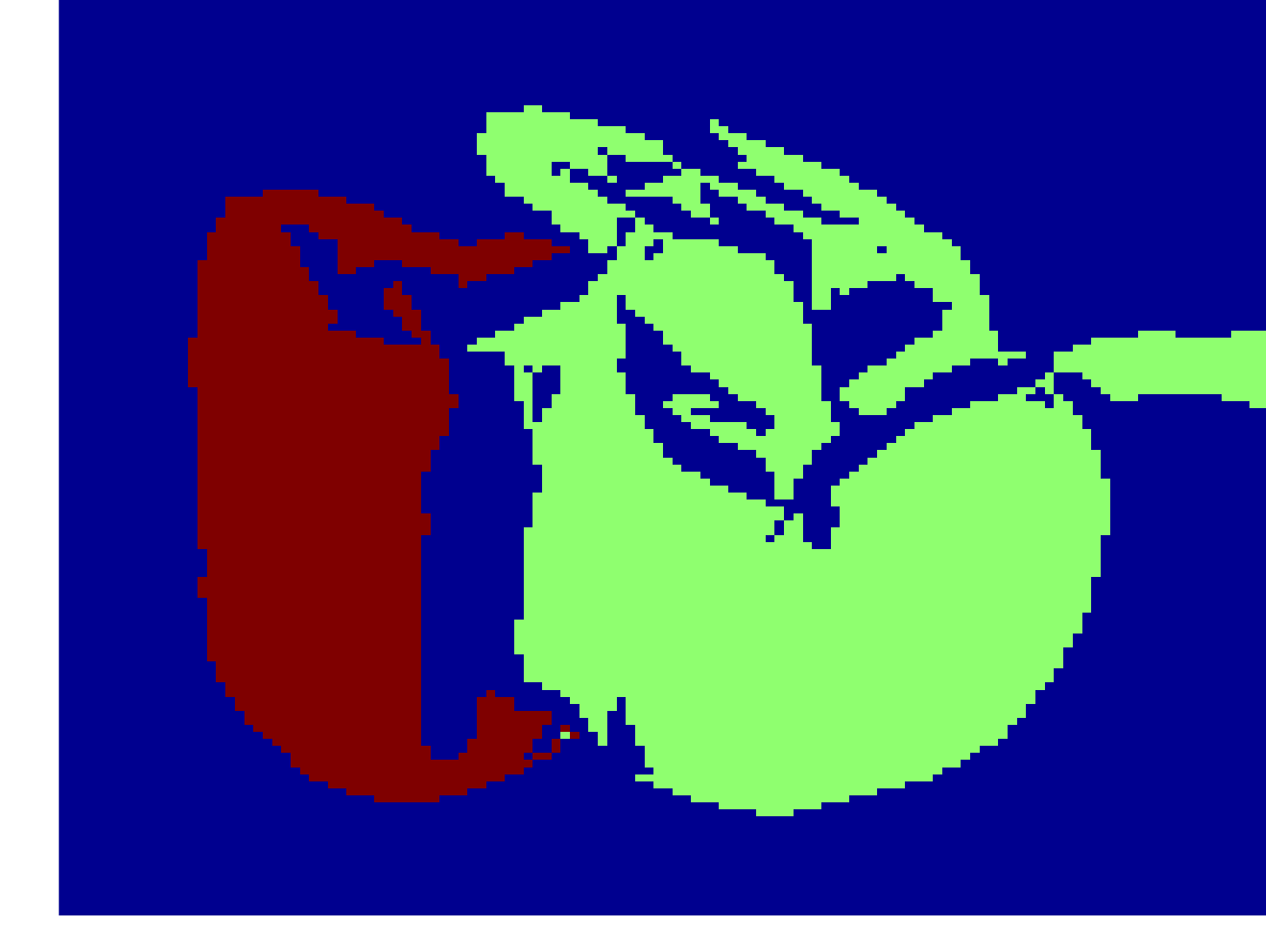} \\ \vfill
			\includegraphics[width=\textwidth]{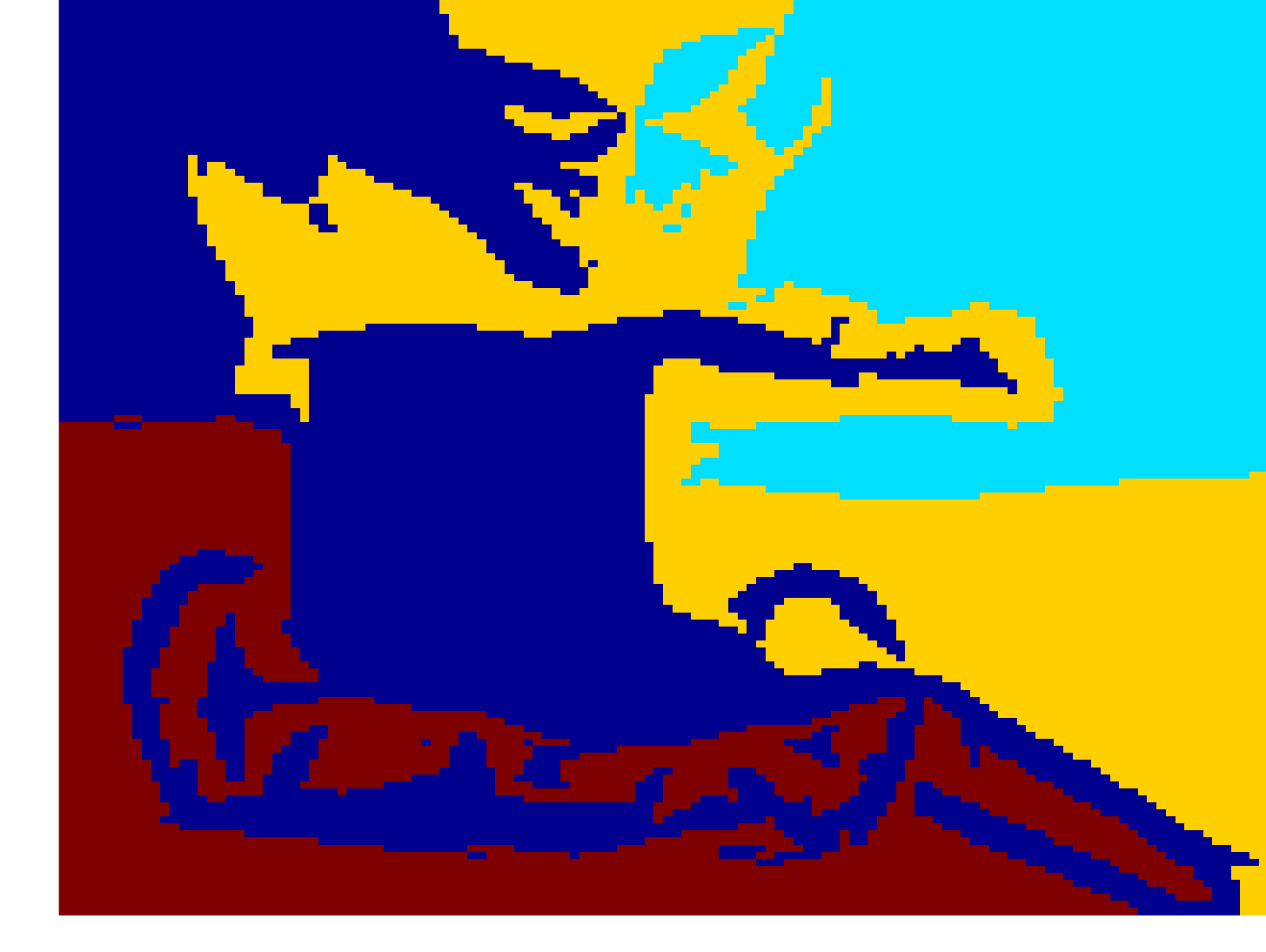} \\ \vfill
			\includegraphics[width=\textwidth]{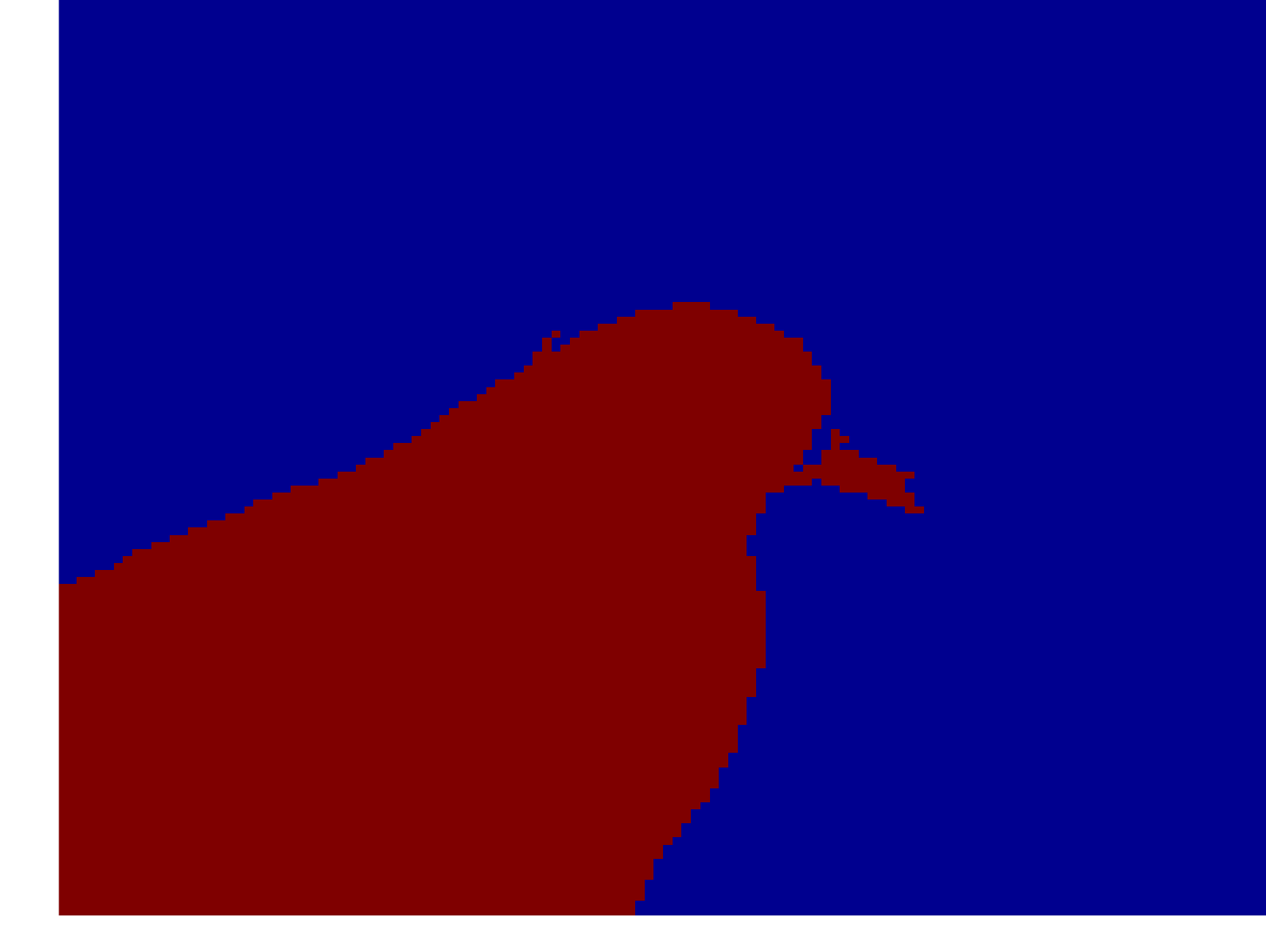}
		\end{minipage}
	}
	\subfloat[NCUT algorithm]{
		\begin{minipage}{.32\linewidth}
			\includegraphics[width=\textwidth]{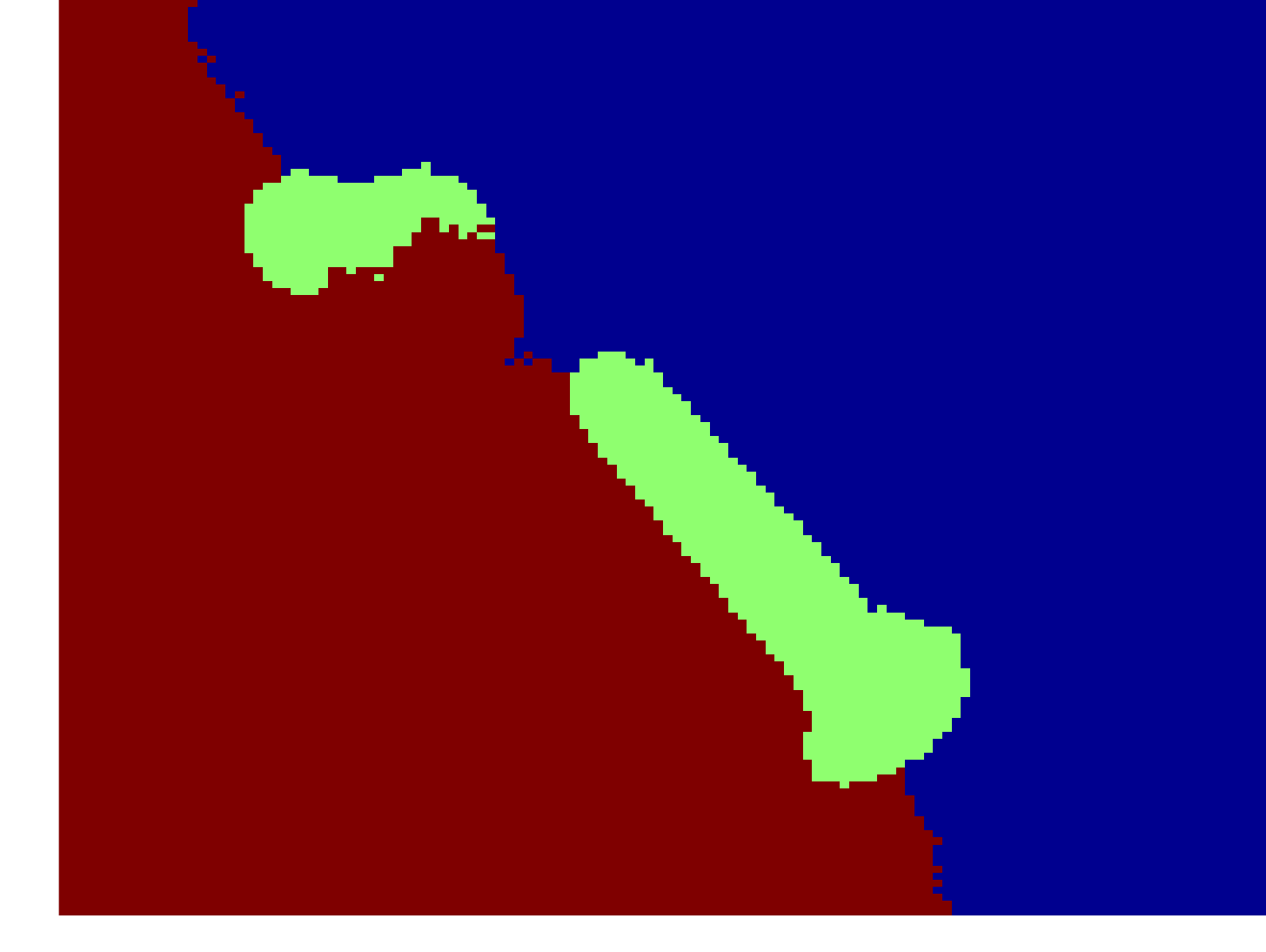} \\ \vfill
			\includegraphics[width=\textwidth]{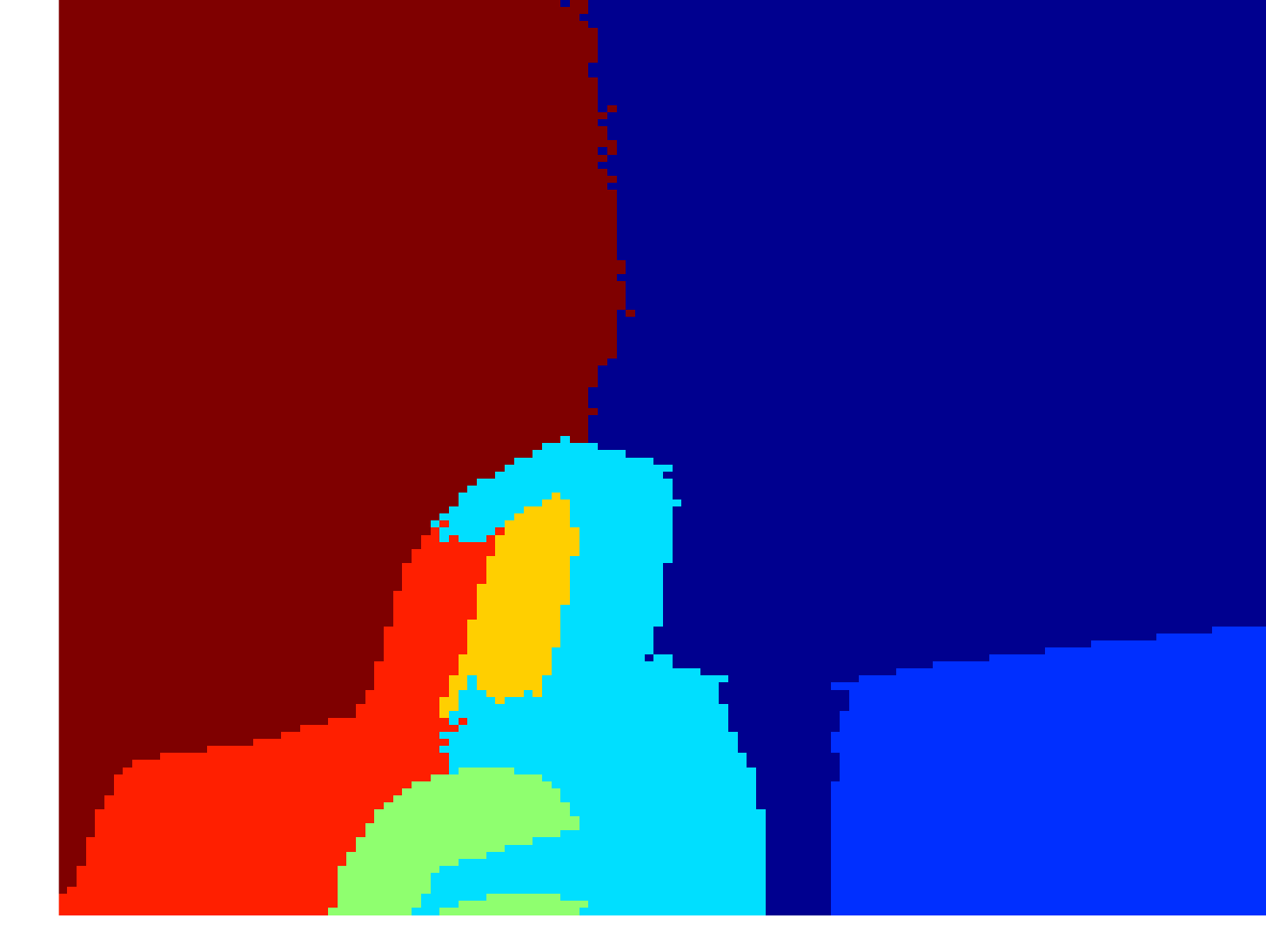} \\ \vfill
			\includegraphics[width=\textwidth]{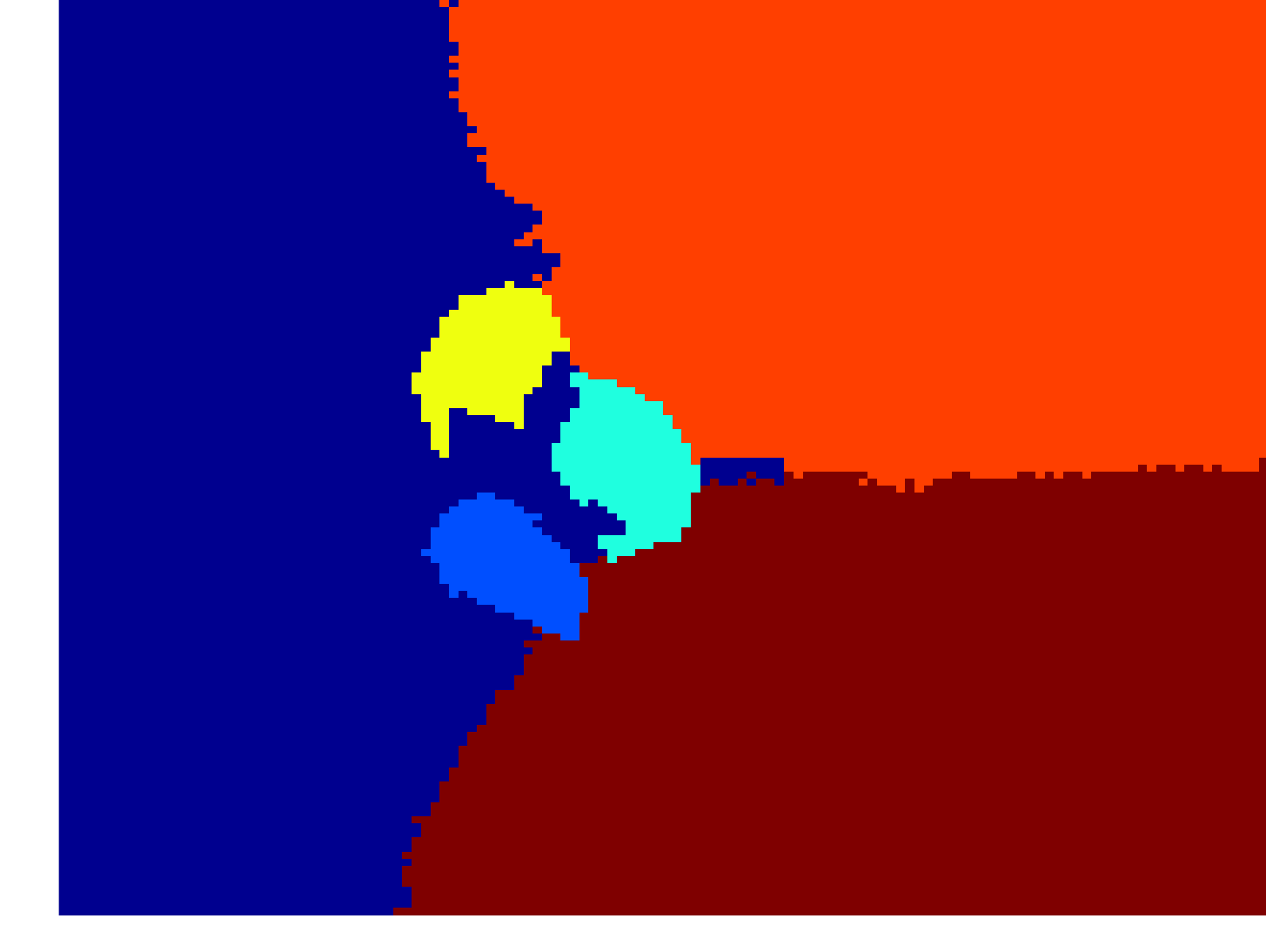} \\ \vfill
			\includegraphics[width=\textwidth]{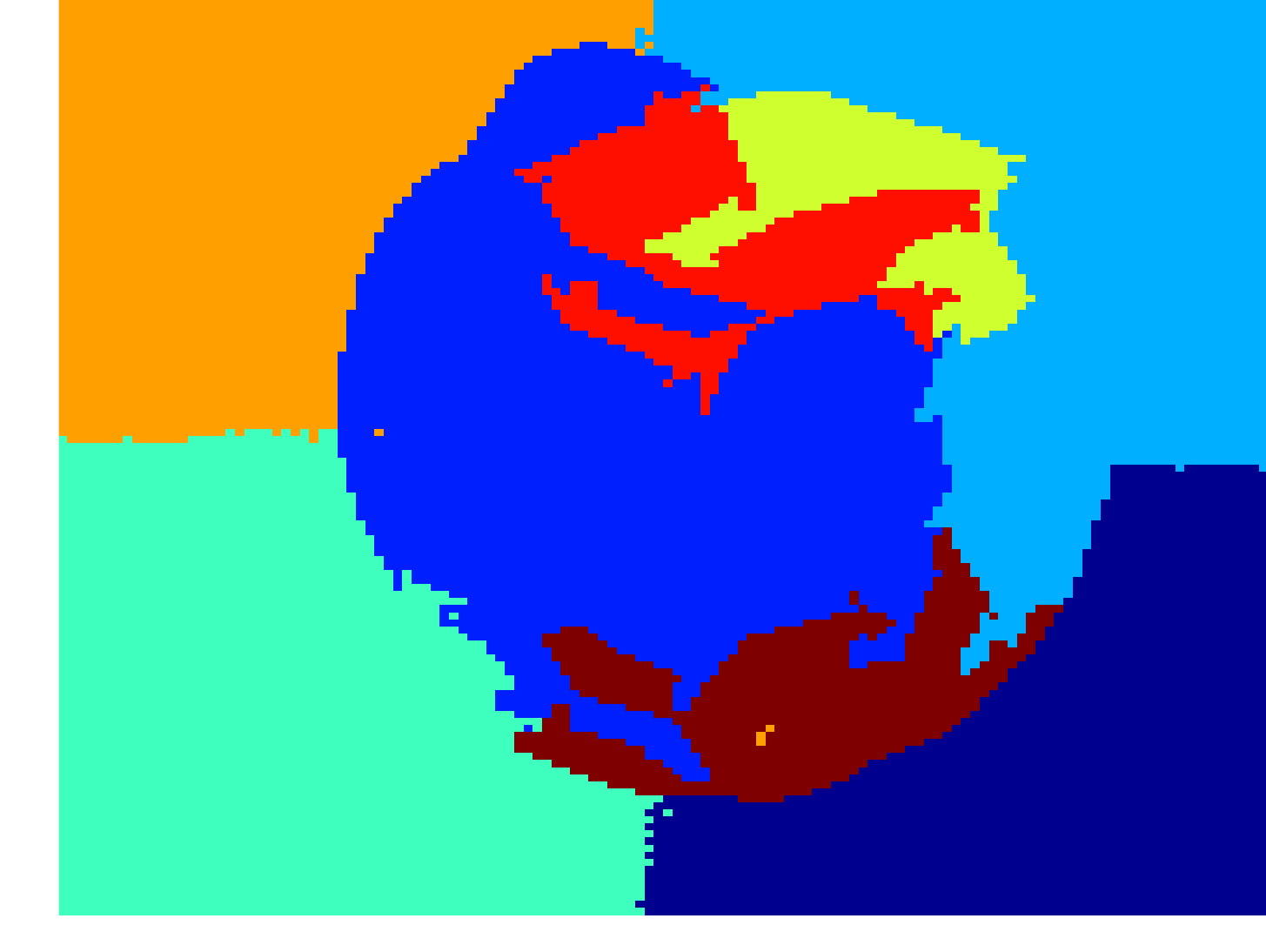} \\ \vfill
			\includegraphics[width=\textwidth]{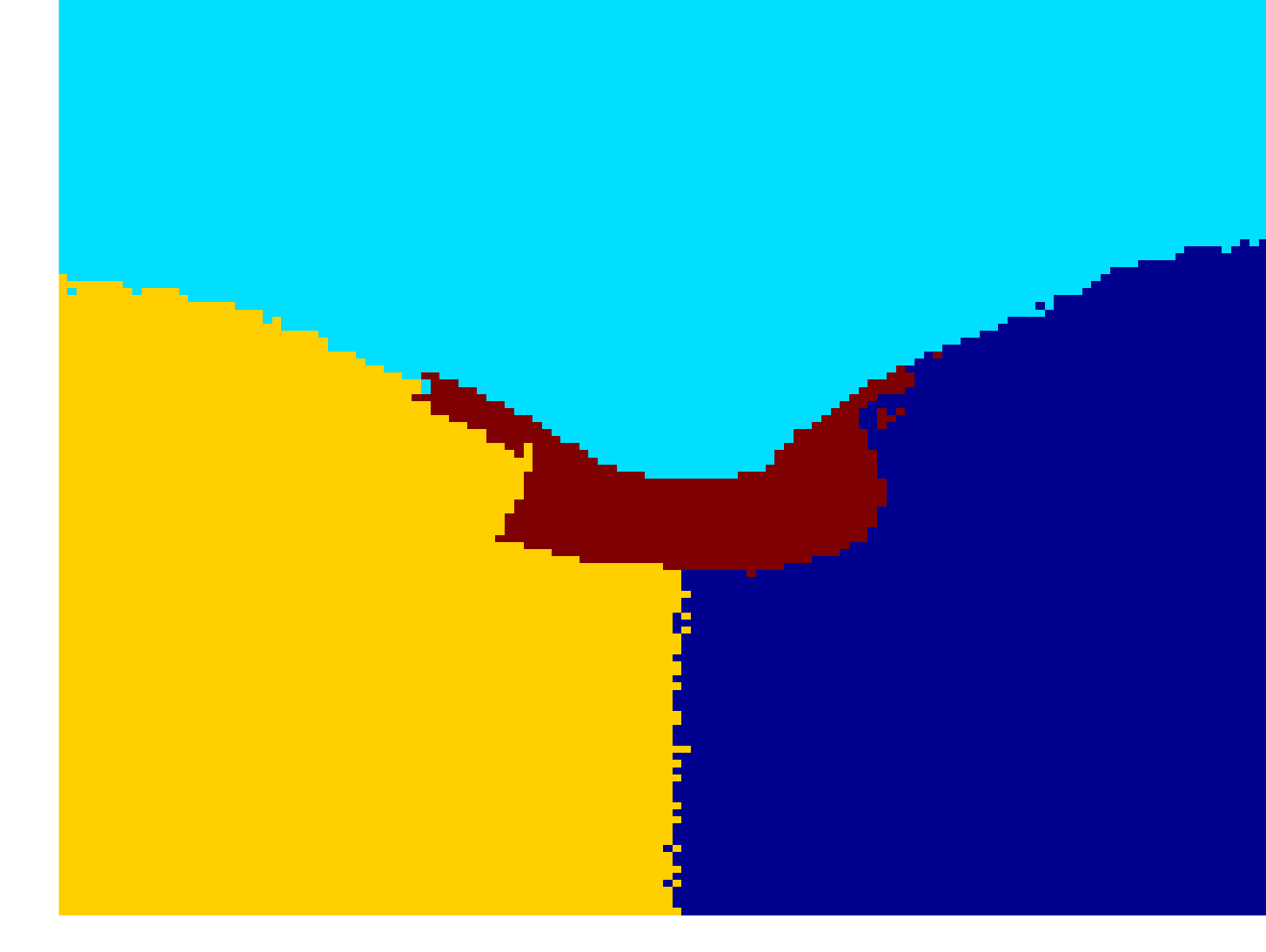} \\ \vfill
			\includegraphics[width=\textwidth]{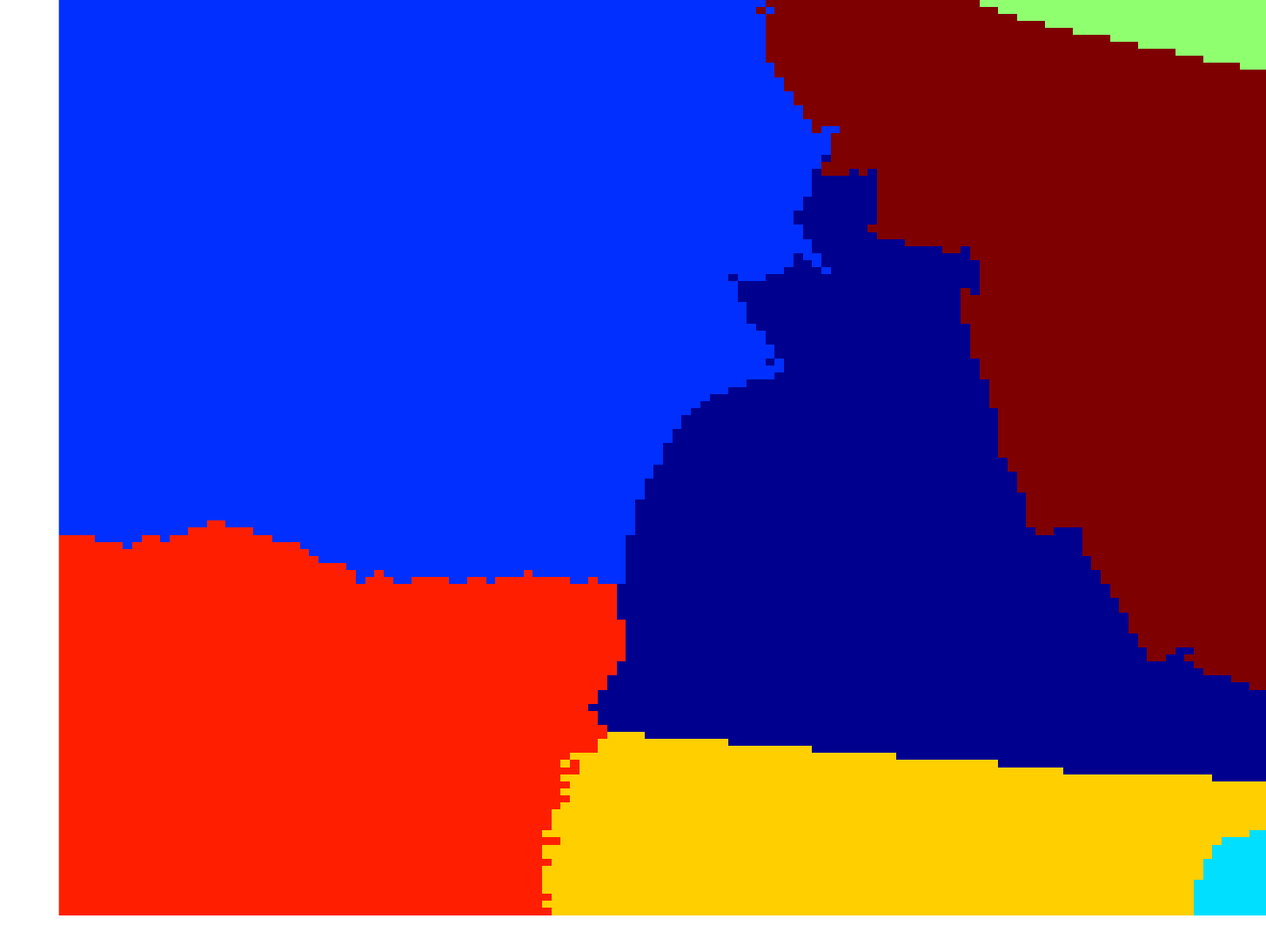} \\ \vfill
			\includegraphics[width=\textwidth]{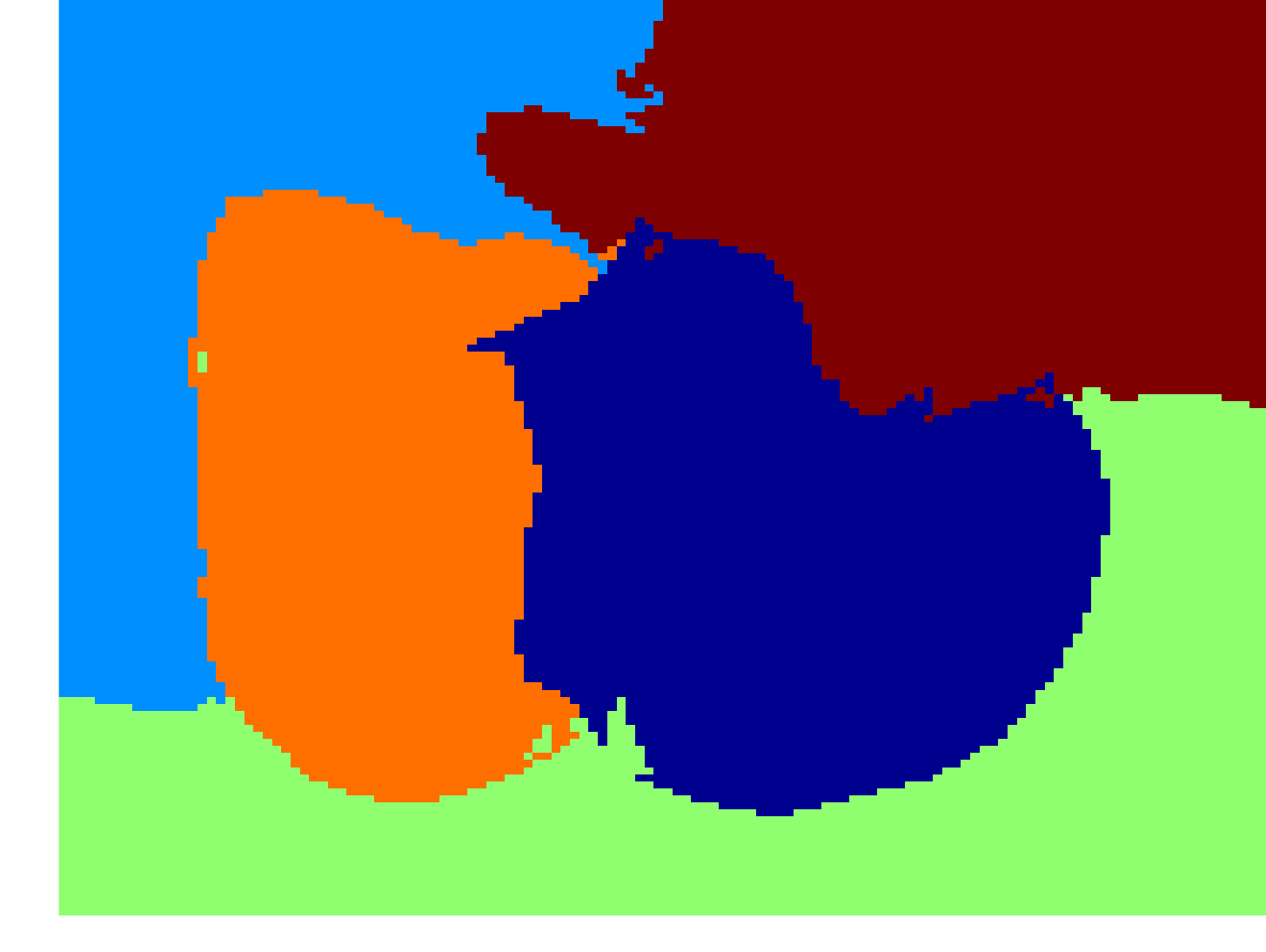} \\ \vfill
			\includegraphics[width=\textwidth]{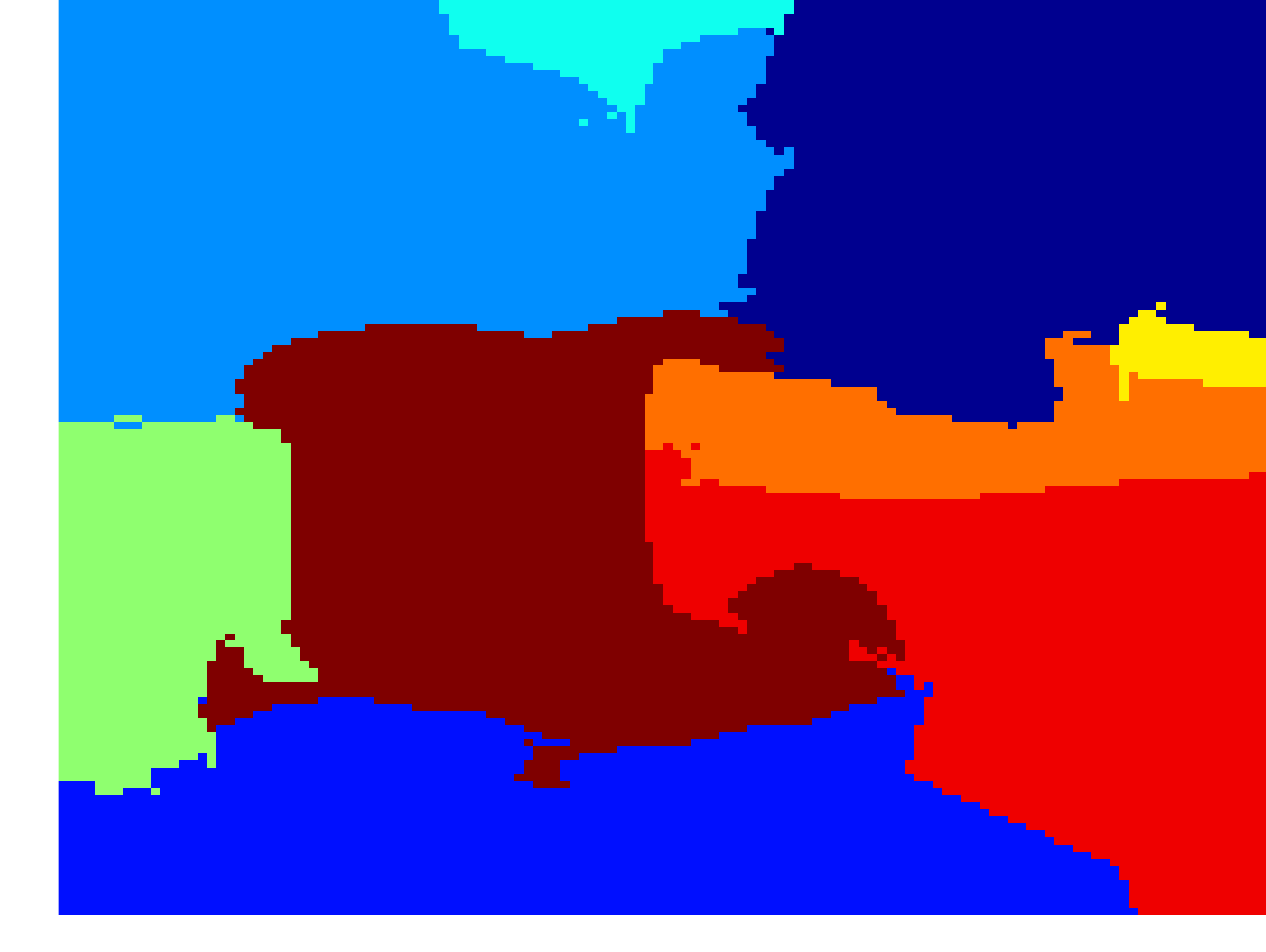} \\ \vfill
			\includegraphics[width=\textwidth]{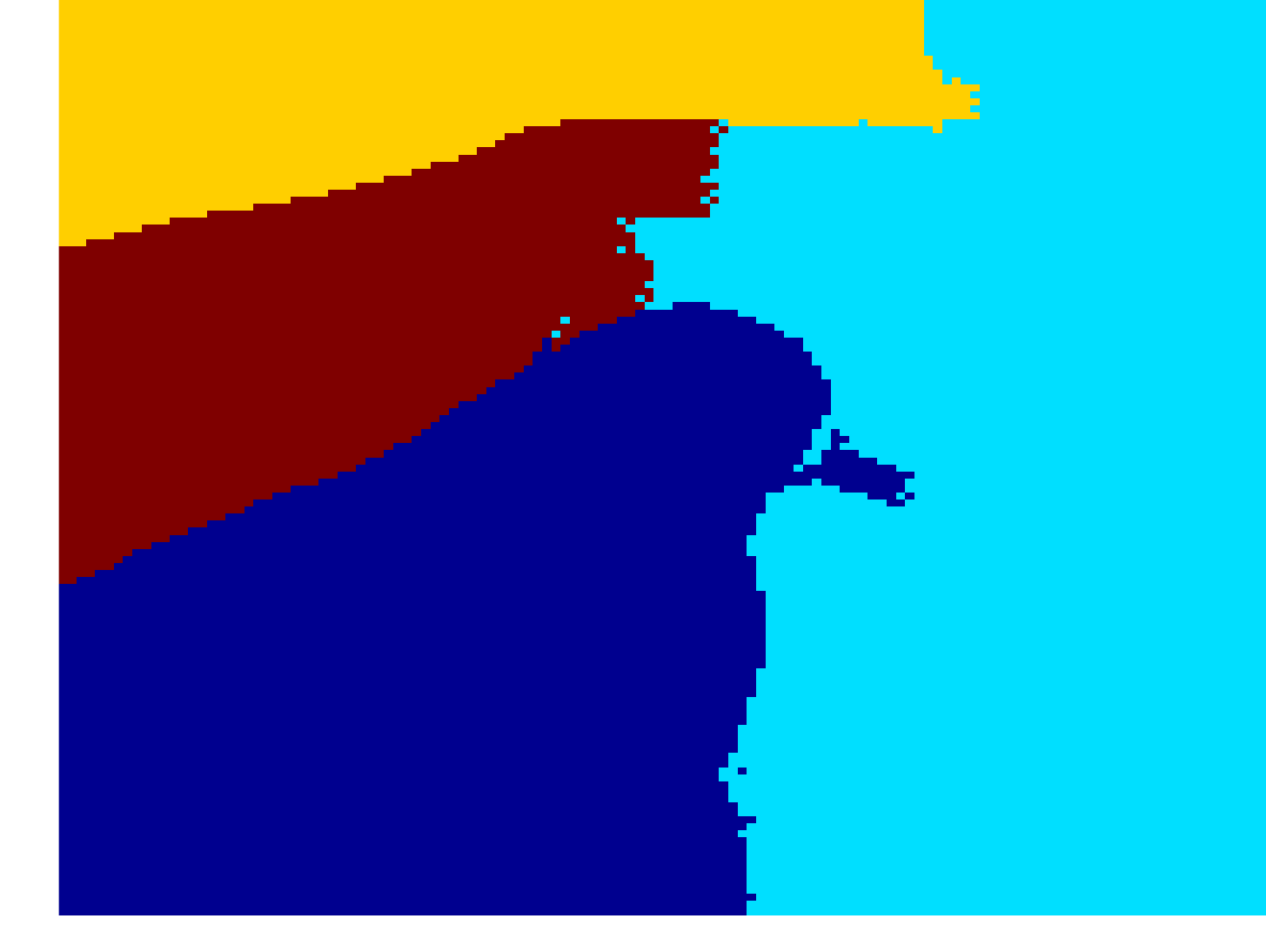}
		\end{minipage}
	}
	\caption{Image segmentation evaluation.}
	\label{image segmentation}
\end{figure}

\section{Conclusion}
\label{sec4}
As a fundamental procedure for data analysis, clustering has been widely used across the sciences. Existing clustering algorithms have limited effectiveness at high dimensions and often require tuning parameters for different domains and datasets~\cite{Shah9814}. Many real-world problems, such as images, videos, text, DNA microarray data, etc., are actually collections of high-dimensional data. Usually, such high-dimensional data lie close to low-dimensional structures corresponding to several classes or categories to which the data belong~\cite{6482137}. Because data points in a subspace are often distributed arbitrarily and not around a centroid, standard clustering methods, like k-means, are no longer generally applicable.

To solve the above problem, we proposed a GOPC algorithm based on the minimax distance. The minimax distance is a special case of ultrametrics in which any three points can form an isosceles triangle. Based on this property, we find that medoids have special constraints for the minimax distance; see \Cref{P1}, \Cref{P2}, and \Cref{P3} for details. Our main contribution in this paper is the proposal of a method that gives a global optimal solution for the objective function defined in \cref{criterion}. Under the condition $nn(x)\in M$, many unqualified candidates were filtered out, which made the entire clustering process much faster than $O(k\times n^2)$.

The minimax distance has a close relationship with the MST, but it is more flexible and convenient to use than the MST. Compared with Euclidean distance, the minimax distance can capture the global structure of data. Therefore, the GOPC algorithm is good at identifying arbitrarily shaped clusters. In addition, the GOPC algorithm needs only one parameter, i.e., the number of clusters, which arises intuitively with the help of a decision graph. The advantages mentioned above make the GOPC algorithm a good candidate for a general clustering algorithm. However, the GOPC algorithm cannot separate connected clusters. Thus, a general and efficient robust minimax distance is our main research area in future work.

\ifCLASSOPTIONcompsoc
  % The Computer Society usually uses the plural form
  \section*{Acknowledgments}
\else
  % regular IEEE prefers the singular form
  \section*{Acknowledgment}
\fi

This work is supported by the National Natural Science Foundation of China (Project No.61906174).

% Can use something like this to put references on a page
% by themselves when using endfloat and the captionsoff option.
\ifCLASSOPTIONcaptionsoff
  \newpage
\fi

% trigger a \newpage just before the given reference
% number - used to balance the columns on the last page
% adjust value as needed - may need to be readjusted if
% the document is modified later
%\IEEEtriggeratref{8}
% The "triggered" command can be changed if desired:
%\IEEEtriggercmd{\enlargethispage{-5in}}

% references section

% can use a bibliography generated by BibTeX as a .bbl file
% BibTeX documentation can be easily obtained at:
% http://mirror.ctan.org/biblio/bibtex/contrib/doc/
% The IEEEtran BibTeX style support page is at:
% http://www.michaelshell.org/tex/ieeetran/bibtex/
%\bibliographystyle{IEEEtran}
% argument is your BibTeX string definitions and bibliography database(s)
%\bibliography{IEEEabrv,../bib/paper}
%
% <OR> manually copy in the resultant .bbl file
% set second argument of \begin to the number of references
% (used to reserve space for the reference number labels box)
%\begin{thebibliography}{1}

%\bibitem{IEEEhowto:kopka}
%H.~Kopka and P.~W. Daly, \emph{A Guide to \LaTeX}, 3rd~ed.\hskip 1em plus
%  0.5em minus 0.4em\relax Harlow, England: Addison-Wesley, 1999.

%\end{thebibliography}
\bibliographystyle{./IEEEtran}
\bibliography{sample}

\vfill

% Can be used to pull up biographies so that the bottom of the last one
% is flush with the other column.
%\enlargethispage{-5in}

% that's all folks
\end{document}